\newtheorem{theorem}{Theorem}
\newtheorem{corollary}{Corollary}
\newtheorem{lemma}{Lemma}
\newtheorem{proposition}{Proposition}
\newtheorem{remark}{Remark}
\newtheorem{definition}{Definition}
\newtheorem{assumption}{Assumption}
\newcommand{\diag}{\text{diag}}
\newcommand{\norm}[1]{\|#1\|}
\newcommand{\Norm}[1]{\left\|#1\right\|}
\newcommand{\reals}{{\mbox{$\mathbb{R}$}}}
\newcommand{\inn}[2]{\left\langle#1, #2\right\rangle}
\newcommand{\size}[1]{\left|#1\right|}
\newcommand{\abs}[1]{\size{#1}}
\newcommand{\bigo}[1]{\mathcal{O}\left(#1\right)}
\newcommand{\littleo}[1]{o\left(#1\right)}
\newcommand{\st}{{\it s.t.}}
\newcommand{\ie}{{\it i.e.}}
\newcommand{\iid}{\textit{i.i.d.}}
\newcommand{\Atype}{\mathsf{A}}
\newcommand{\Gtype}{\mathsf{G}}
\newcommand{\Htype}{\mathsf{H}}
\newcommand{\Gaus}{\mathcal{N}}
\newcommand{\eeref}[1]{Eq.~(\ref{#1})}
\definecolor{purple}{RGB}{128, 0, 128}
\newcommand{\tianxiang}[1]{{\color{purple}#1}}
\def\eqref#1{equation~\ref{#1}}
\def\1{\bm{1}}
\def\vtheta{{\bm{\theta}}}
\def\vlambda{{\bm{\lambda}}}
\def\vf{{\bm{f}}}
\def\vg{{\bm{g}}}
\def\vh{{\bm{h}}}
\def\vu{{\bm{u}}}
\def\vv{{\bm{v}}}
\def\vx{{\bm{x}}}
\def\vy{{\bm{y}}}
\def\vz{{\bm{z}}}
\def\evu{{u}}
\def\mG{{\bm{G}}}
\def\mH{{\bm{H}}}
\def\mI{{\bm{I}}}
\def\mJ{{\bm{J}}}
\def\mU{{\bm{U}}}
\def\mW{{\bm{W}}}
\def\mX{{\bm{X}}}
\def\mPhi{{\bm{\Phi}}}
\DeclareMathAlphabet{\mathsfit}{\encodingdefault}{\sfdefault}{m}{sl}
\SetMathAlphabet{\mathsfit}{bold}{\encodingdefault}{\sfdefault}{bx}{n}
\def\gB{{\mathcal{B}}}
\newcommand{\E}{\mathbb{E}}
\newcommand{\Cov}{\mathrm{Cov}}
\title{Global Convergence in Neural ODEs: Impact of Activation Functions}
\author{Tianxiang Gao\\
   DePaul University \\
   \texttt{tgao9@depaul.edu} \\
   \And
   Siyuan Sun\\
   Iowa State University \\
   \texttt{sxs14473@iastate.edu} \\
   \And
   Hailiang Liu \\
   Iowa State University\\
   \texttt{hliu@iastate.edu} \\
   \And
   Hongyang Gao \\
   Iowa State University \\
   \texttt{hygao@iastate.edu} \\
}
\begin{document}

\maketitle


\begin{abstract}
Neural Ordinary Differential Equations (ODEs) have been successful in various applications due to their continuous nature and parameter-sharing efficiency. However, these unique characteristics also introduce challenges in training, particularly with respect to gradient computation accuracy and convergence analysis. In this paper, we address these challenges by investigating the impact of activation functions. We demonstrate that the properties of activation functions—specifically smoothness and nonlinearity—are critical to the training dynamics. Smooth activation functions guarantee globally unique solutions for both forward and backward ODEs, while sufficient nonlinearity is essential for maintaining the spectral properties of the Neural Tangent Kernel (NTK) during training. Together, these properties enable us to establish the global convergence of Neural ODEs under gradient descent in overparameterized regimes. Our theoretical findings are validated by numerical experiments, which not only support our analysis but also provide practical guidelines for scaling Neural ODEs, potentially leading to faster training and improved performance in real-world applications.
\end{abstract}

\section{Introduction}
In recent years, deep neural networks have achieved remarkable success across a wide range of applications. Among these advancements, Neural Ordinary Differential Equations (ODEs) \citep{chen2018neural} stand out due to their continuous nature and parameter efficiency through shared parameters. Unlike conventional neural networks with discrete layers, Neural ODEs model the evolution of hidden states as a continuous-time differential equation, allowing them to better capture dynamic systems. This parameter-sharing mechanism ensures consistent dynamics throughout the continuous transformation and reduces the number of parameters, improving both memory efficiency and computational complexity. These unique properties make Neural ODEs particularly effective not only for traditional machine learning tasks like image classification \citep{chen2018neural} and natural language processing \citep{rubanova2019latent}, but also for more complex tasks involving continuous processes, such as time series analysis \citep{kidger2020neural}, reinforcement learning \citep{du2020model}, and diffusion models \citep{song2020score}. However, while these features offer flexibility and efficiency, they also introduce significant challenges during training, particularly in gradient computation and convergence analysis.

One of the key challenges in training Neural ODEs is accurately computing gradients. Unlike traditional networks, where backpropagation can be computed through a discrete chain of layers, Neural ODEs require solving forward and backward ODEs using numerical solvers. These solvers introduce numerical errors, which can lead to inaccurate gradients and slow convergence or even suboptimal model performance \citep{rodriguez2022lyanet}. Moreover, ensuring the well-posedness of ODE solutions during training is nontrivial. According to the Picard–Lindelöf Theorem, solutions may not always exist or may only exist locally, which can cause training divergence or significant numerical errors \citep{gholami2019anode,ott2020resnet,sander2022residual}. Even with advanced solvers \citep{zhuang2020adaptive,zhuang2020mali,matsubara2021symplectic,ko2023homotopy}, it remains an \textit{open} problem whether simple first-order methods, such as stochastic gradient descent (SGD), can reliably train Neural ODEs to convergence. While discretizing Neural ODEs as finite-depth networks offers a potential solution, it results in a deeper computation graph \citep{zhuang2020adaptive,zhuang2020mali}, raising questions about whether the gradients computed in this manner truly match those of the continuous model.

Another essential challenge lies in analyzing the training dynamics of Neural ODEs. The optimization problem in training neural networks is inherently nonconvex, making theoretical analysis difficult. Recent work by \citet{jacot2018neural} has shown that the training dynamics of overparameterized networks can be understood through the lens of the Neural Tangent Kernel (NTK), which converges to a deterministic limit as network width increases. This convergence has enabled researchers to establish global convergence guarantees for gradient-based methods in overparameterized regimes, provided the NTK remains \textit{strictly positive definite (SPD)} \citep{du2019gradient,allen2019convergence,nguyen2021proof,gao2021global}. The analysis of the NTK’s strict positive definiteness began with \citet{daniely2016toward}, who introduced the concept of dual activation for two-layer networks, later extended to deeper, finite networks \citep{jacot2018neural,du2019gradient}. However, these results are limited to networks with discrete layers, raising the question of whether the same properties hold for continuous models like Neural ODEs.

In this paper, we address these challenges by exploring the impact of activation functions on training Neural ODEs. We show that activation function properties—specifically, smoothness and nonlinearity—play critical roles in determining the well-posedness of ODE solutions and the spectral properties of the NTK. Through our analysis, we demonstrate that smooth activation functions lead to globally unique solutions for both forward and backward ODEs, ensuring the stability of the training process. Additionally, we extend existing results on the NTK from discrete-layered neural networks to continuous models, demonstrating that the NTK for Neural ODEs is well-defined. Importantly, we find that a higher degree of nonlinearity in the activation function not only helps maintain the SPD property of the limiting NTK, but also practically speeds up Neural ODE convergence. 

\begin{enumerate}[leftmargin=*]
\item We investigate the significance of the smoothness of activation functions for the well-posedness of forward and backward ODEs in Neural ODEs. Using random matrix theory, we demonstrate the existence of globally unique solutions. Additionally, we show that no additional regularity is needed if forward and backward ODEs are combined in a weakly coupled ODE system.
\item We propose a new mathematical framework for studying continuous models from the approximation theory perspective. By using a \textit{sequence} of finite-depth neural networks to approximate Neural ODEs, we show that key properties like activation and gradient propagation are preserved as depth approaches infinity. This allows us to apply the Moore-Osgood theorem from functional analysis to prove that the NTK of Neural ODEs is well-defined.

\item Unfortunately, the SPD property of the NTK may not hold at infinite depth, even we can show every finite-depth approximation satisfies it. To address this, we conduct a fine-grained analysis and derive an \textit{integral form} for the limiting NTK of Neural ODEs. This form reveals that the NTK remains SPD if the activation function is non-polynomial. Leveraging this integral representation provides valuable insights into continuous models and may inspire further research.

\item We conduct a series of numerical experiments to support our theoretical findings. Beyond validating our analysis, these experiments also provide \textit{practical} guidelines for training Neural ODEs. We show that activation function smoothness and nonlinearity accelerate convergence and improve performance (see Figure~\ref{fig:smooth_vs_nonsmooth}-\ref{fig:polynomial_activations}). Conversely, improper ODE scaling leads to damping from accumulated numerical errors (see Figure\ref{fig:Scaling for Long-Time Horizons}), while adaptive solvers struggle with efficiency in large-scale Neural ODEs (see Figure~\ref{fig:opt_then_discrete sensitivity}), causing instability and high computational overhead.
\end{enumerate}


\section{Preliminaries}\label{sec:preliminary}
\subsection{Neural ODEs}
In this paper, we consider a simple Neural ODE $f(\vx;\vtheta)$ \footnote{The general form of Neural ODEs is discussed in Appendix \ref{app sec:discussion}.} defined as follows
\begin{align}
    f(\vx;\vtheta) = \frac{\sigma_v}{\sqrt{n}} \vv^T \phi(\vh_T),\label{eq:neural ode}
\end{align}
where $\vh_t\in \mathbb{R}^{n}$ is the hidden state that satisfies the following ordinary differential equation
\begin{align}\label{eq:forward ode} 
    \vh_0 = \sigma_u\mU \vx /\sqrt{d},
    \quad\text{and}\quad
    \dot{\vh}_t = \sigma_w \mW \phi(\vh_t)/\sqrt{n},\quad \forall t\in[0,T],
\end{align}
where $\phi$ is the activation function\footnote{To simplify the analysis, we assume the activation threshold value for $\phi$ is at $0$ and further $\phi(0)=0$.}, $\vx\in\mathbb{R}^{d}$ is input, $\mU\in\mathbb{R}^{n\times d}$, $\mW\in\mathbb{R}^{n\times n}$, and $\vv\in \mathbb{R}^{n}$ are learnable parameters. These parameters, denoted by $\vtheta:=\text{vec}(\mU,\mW,\vv)$, are randomly initialized \citep{glorot2010understanding,he2015delving} from standard Gaussian distribution:
\begin{align}
    \mU_{ij}, \quad \mW_{ij}, \quad \vv_{i}\overset{\iid}{\sim}\Gaus(0,1),\label{eq:random initialization}
\end{align}
with variance hyperparameters $\sigma_u,\sigma_w,\sigma_v> 0$. Due to the continuous nature of Neural ODEs, computing gradients through standard backpropagation is not feasible. Instead, we use the adjoint method \citep{chen2018neural} to compute gradients by solving the backward ODE:
    \begin{align}
    \vlambda_T = \sigma_v\diag(\phi^{\prime}(\vh_t)) \vv/\sqrt{n},
    \quad\text{and}\quad
    \dot{\vlambda}_t =-\sigma_w\diag(\phi^{\prime}(\vh_t))\mW^T\vlambda_t/\sqrt{n},
    \label{eq:backward ode}
\end{align}
where $\vlambda_t$ is the adjoint state. When both forward and backward ODEs are well-posed, we can compute the gradients of $f_{\vtheta}$ with respect to (w.r.t.) the parameters $\vtheta$ as follows 
\begin{align}
    \nabla_{\vv} f(\vx;\vtheta) = \frac{\sigma_v}{\sqrt{n}} \phi(\vh_t),
    \;
    \nabla_{\mW} f(\vx;\vtheta) = \int_0^T \frac{\sigma_w}{\sqrt{n}} \vlambda_t \phi(\vh_t)^{\top} dt,\label{eq:gradient of W}
    \;
    \nabla_{\mU} f(\vx;\vtheta) = & \frac{\sigma_u}{\sqrt{d}}\vlambda_0 {\vx}^{\top}.
\end{align}
Further details on these derivations are provided in Appendix~\ref{app:derivation gradients}. In Section~\ref{sec:well posedness}, we demonstrate that if $\phi$ is Lipschitz continuous, the forward and backward ODEs have globally unique solutions. Additionally, in Section~\ref{sec:global convergence}, we prove this well-posedness holds throughout the entire training process.

\subsection{Neural Tangent Kernel}
Given a training dataset $\{(\vx_i, y_i)\}_{i=1}^{N}$, the objective is to learn a parameter $\vtheta$ that minimizes the empirical loss:
\begin{align}
    L(\vtheta) = \sum_{i=1}^{N}\frac{1}{2}(f(\vx_i;\vtheta) - y_i)^2
    =\frac{1}{2}\norm{\vu - \vy}^2,
\end{align}
where $\vu = (\evu_1,\cdots, \evu_N)$ is the prediction vector with $\evu_i = f(\vx_i;\vtheta)$, and $\vy = (y_1,\cdots, y_N)$ is the output vector. Gradient descent with a learning rate $\eta>0$ is used to minimize the loss:
\begin{align}
    \vtheta^{k+1} = \vtheta^{k} - \eta\nabla_{\vtheta}L(\vtheta^{k}).
\end{align}
Following \citep{du2019gradient} and \citep{jacot2018neural}, under some required conditions, the evolution of the prediction vector $\vu^{k}$ can be approximated as follows:
\begin{align}
    \vu^{k+1} - \vy
    \approx \left(\mI-\eta \mH^{k}\right) (\vu^{k} - \vy),
    \label{eq:train dynamics}
\end{align}
where $\mH^{k}\in \reals^{N\times N}$ is a Gram matrix defined through the NTK \citep{jacot2018neural}:
\begin{align}
    K(\vx, \bar{\vx};\theta):=\inn{\nabla_{\theta} f(\vx;\vtheta)}{\nabla_{\vtheta} f(\bar{\vx};\vtheta)}.
    \label{eq:NTK for Neural ODE}
\end{align}
The training dynamics \eeref{eq:train dynamics} is governed by the spectral property of the NTK Gram matrix $\mH^{k}$. If there exists a strictly positive constant $\lambda_0>0$ s.t. $\lambda_{\min}(\mH^{k})\geq \lambda_0$ for all $k$, then the residual $\vu^{k} - \vy$ decreases to zero exponentially, provided the learning rate $\eta>0$ is sufficiently small \citep{allen2019convergence,nguyen2021proof}. As the parameters $\vtheta^{k}$ are updated during training, the NTK $K_{\vtheta}$ changes over time, making its spectral analysis challenging. Fortunately, previous research \citep{yang2020tensor,jacot2018neural} shows that the time-varying $K_{\theta}$ converges to a deterministic limiting NTK $K_{\infty}$ as the network width $n\rightarrow\infty$. By leveraging this result and the concept of dual activation \citep{daniely2016toward}, we can study the spectral properties of $K_{\theta}$ during training through $K_{\infty}$ using perturbation theory.

However, these prior results apply only to finite-depth neural networks, while Neural ODEs are infinite-depth networks due to their continuous nature. Moreover, as prior analyses are based on induction techniques, there is no guarantee that these essential properties will also hold as depth tends to infinity. In Section~\ref{sec:NNGP} and Section~\ref{sec:NTK}, we introduce a new framework to study Neural ODEs as infinite-depth networks. We demonstrate that the smoothness of the $\phi$ is crucial to ensure these essential properties hold in Neural ODEs. Additionally, to study the spectral properties of the limiting NTK $K_{\infty}$, we provide a fine-grained analysis by expressing the limiting NTK of Neural ODEs in an integral form, which we conclude that the nonlinearity of activation plays a critical role in ensuring the strict positive definiteness of $K_{\infty}$.

\section{Well-Posedness of Neural ODEs and Its Gradients}\label{sec:well posedness}
As continuous models, Neural ODEs pose a significant challenge in accurately computing gradients. In this section, we explore the challenges associated with two methods for computing gradients: \textit{optimize-then-discretize} and \textit{discretize-then-optimize} \citep{gholami2019anode,finlay2020train,onken2021ot}. Through our exploration, we emphasize the essential role of smoothness in activation functions to guarantee the well-posedness of Neural ODEs and their gradients. 

\subsection{Optimize-Then-Discretize Method}
As discussed in Section~\ref{sec:preliminary}, Neural ODEs require numerical ODE solvers to solve the forward and backward ODEs \eeref{eq:forward ode} and \eeref{eq:backward ode} to compute the gradients, employing a method known as the \textit{optimize-then-discretize} method. When solving ODEs, ensuring their well-posedness is of primary concern.  In 
Proposition~\ref{prop:well posed of forward ode},  we demonstrate that if $\phi$ is Lipschitz continuous, the forward and backward ODEs have globally unique solutions, thus ensuring the well-posedness. The detailed proofs are provided in Appendix~\ref{app:well posedness of neural odes}.
\begin{proposition}\label{prop:well posed of forward ode}
    For any given $T>0$, if the activation function $\phi$ is Lipschitz continuous with Lipschitz constant $L_1$, then the forward ODE \eeref{eq:forward ode} and the backward ODE \eeref{eq:backward ode} have unique solutions $\vh_t$ and $\vlambda_t$ for all $t\in[0,T]$ and $\vx\in \mathbb{R}^{d}$ almost surely over random initialization \eeref{eq:random initialization}. In addition, $\vlambda_t(\vx) = \partial f(\vx;\vtheta))/\partial \vh_t$ is the solution to the backward ODE.
\end{proposition}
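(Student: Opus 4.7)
The plan is to realize both ODEs as globally Lipschitz (or linear with bounded coefficient) systems and invoke classical existence--uniqueness, with the random initialization entering only through the operator norms of $\mU$ and $\mW$. First I would note that since the entries of $\mU$ and $\mW$ are i.i.d.\ standard Gaussians, their spectral norms $\|\mU\|$ and $\|\mW\|$ are almost surely finite (in fact $\|\mW\|/\sqrt{n}\to 2$ a.s.\ by standard random matrix theory). Fix any realization on the probability-one event on which both norms are finite; everything that follows is deterministic in that realization.

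For the forward ODE~\eqref{eq:forward ode}, define the vector field $F(\vh):=(\sigma_w/\sqrt{n})\mW\phi(\vh)$. Because $\phi(0)=0$ and $\phi$ is $L_1$-Lipschitz, one has $\|F(\vh)-F(\vh')\|\le (\sigma_w L_1\|\mW\|/\sqrt{n})\|\vh-\vh'\|$ on all of $\mathbb{R}^n$, so $F$ is globally Lipschitz with a (random but) finite constant. The Cauchy--Lipschitz / Picard--Lindel\"of theorem then yields a unique global solution $\vh_t$ on $[0,T]$ for every initial state $\vh_0=\sigma_u\mU\vx/\sqrt{d}$, and hence simultaneously for every $\vx\in\mathbb{R}^d$ on the same probability-one event.

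For the backward ODE~\eqref{eq:backward ode}, the right-hand side is \emph{linear} in $\vlambda_t$: $\dot{\vlambda}_t=A(t)\vlambda_t$ with $A(t)=-(\sigma_w/\sqrt{n})\diag(\phi'(\vh_t))\mW^\top$. Since $\phi$ is $L_1$-Lipschitz, $|\phi'|\le L_1$ wherever it exists (Rademacher), so $\|A(t)\|\le \sigma_w L_1\|\mW\|/\sqrt{n}$ is uniformly bounded on $[0,T]$; the Carath\'eodory theorem for linear ODEs with essentially bounded coefficients then gives a unique solution $\vlambda_t$ propagating backward from the prescribed terminal value at $t=T$. To identify this solution with $\partial f(\vx;\vtheta)/\partial\vh_t$, I would differentiate the forward flow in its initial condition, giving the variational equation $\dot{\mJ}_t=(\sigma_w/\sqrt{n})\mW\diag(\phi'(\vh_t))\mJ_t$ for $\mJ_t:=\partial\vh_t/\partial\vh_0$, then combine it with the chain rule $\partial f/\partial\vh_t=(\partial\vh_T/\partial\vh_t)^\top(\sigma_v/\sqrt{n})\diag(\phi'(\vh_T))\vv$ to check that $\partial f/\partial\vh_t$ satisfies the very backward ODE and terminal condition in~\eqref{eq:backward ode}; uniqueness from the previous step then forces $\vlambda_t(\vx)=\partial f(\vx;\vtheta)/\partial\vh_t$.

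The step I expect to be the main obstacle is the measure-theoretic subtlety in the backward argument: $\phi'$ is defined only almost everywhere, so one must show that along the random trajectory the set $\{t\in[0,T]:\phi'(\vh_t)\text{ undefined}\}$ has Lebesgue measure zero almost surely. I would handle this by using that the law of $\vh_0=\sigma_u\mU\vx/\sqrt{d}$ is absolutely continuous (as a non-degenerate Gaussian in $\vx$-directions, or by conditioning on $\mW$ and using absolute continuity of $\mU$) and that the forward flow is a $C^0$-diffeomorphism that preserves absolute continuity; Fubini then rules out a positive-measure coincidence with the exceptional set of $\phi$. This is precisely the friction that justifies the stronger smoothness assumptions made elsewhere in the paper, under which the whole argument collapses to the classical Picard--Lindel\"of statement with no caveats.
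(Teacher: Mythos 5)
Your proposal is correct and follows the same overall architecture as the paper's proof: control $\norm{\mW}$ and $\norm{\mU}$ almost surely via random matrix theory, show the forward vector field $\vh\mapsto(\sigma_w/\sqrt{n})\mW\phi(\vh)$ is globally Lipschitz with a finite (random) constant, obtain global existence/uniqueness, then observe the backward ODE is linear with bounded coefficient, and finally identify $\vlambda_t$ with $\partial f/\partial \vh_t$ by differentiating the flow. Two minor but genuine departures are worth flagging. First, you invoke the \emph{global} form of Picard--Lindel\"of directly (globally Lipschitz field on all of $\reals^n$ implies a global solution on $[0,T]$), whereas the paper states only the local Picard--Lindel\"of theorem and then globalizes by repeatedly restarting the flow on intervals of length $<1/(\sigma_w L_1 \norm{\mW}/\sqrt{n})$; your route is more economical and saves a lemma, but both are valid. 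Second, for the identification $\vlambda_t=\partial f/\partial \vh_t$ you propose the variational equation for $\mJ_t=\partial\vh_t/\partial\vh_0$, whereas the paper expands the semigroup $\partial\vh_{t+\varepsilon}/\partial\vh_t$ to first order in $\varepsilon$; these are the same computation phrased differently.

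The place where you add something the paper does not have is the measure-theoretic caveat about $\phi'$. The paper applies its Picard--Lindel\"of theorem to the backward equation, which requires the coefficient $t\mapsto\diag(\phi'(\vh_t))\mW^\top$ to be \emph{continuous} in $t$. Under the stated hypothesis that $\phi$ is merely Lipschitz, $\phi'$ exists only a.e.\ and need not be continuous (e.g.\ ReLU), so the coefficient may fail to be defined at isolated times and certainly need not be continuous; the paper silently passes over this. You correctly notice the gap and propose the standard repair: pass to a Carath\'eodory formulation (measurable, essentially bounded linear coefficient, solution absolutely continuous and satisfying the equation a.e.), and argue that the exceptional set $\{t:\phi'(\vh_{t,j})\text{ undefined for some }j\}$ has Lebesgue measure zero almost surely because the law of $\vh_0$ is absolutely continuous and the bi-Lipschitz flow map preserves null sets, so Fubini applies. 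This argument is sound and is actually needed if Proposition~\ref{prop:well posed of forward ode} is to be read literally under a bare Lipschitz hypothesis; in the rest of the paper (e.g.\ Proposition~\ref{prop:discretize-then-optimize} onward) the stronger assumption that $\phi'$ is itself Lipschitz makes the coefficient genuinely continuous in $t$ and dissolves the issue, which matches your closing remark.
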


Although Neural ODEs and their gradients are well-defined under these conditions, this does not guarantee that gradients can be computed accurately by solving the ODEs numerically. One primary issue is that the magnitudes of the hidden state $\vh_t$ and adjoint state $\vlambda_t$ can grow exponentially over the time horizon $T$, leading to accumulated numerical errors. This issue is illustrated in Figure~\ref{fig:Scaling for Long-Time Horizons}, where long-time horizon leads to damping in the early stages of training. To mitigate this problem, we suggest scaling the dynamics by setting  $\sigma_w =\bigo{1/T}$, which ensures that the magnitudes of $\vh_t$ and $\vlambda_t$ remain mild and independent of $T$. This scaling stabilizes the norms, thereby allowing numerical solvers to produce much more accurate gradient estimates.

Additionally, calculating gradients using \eeref{eq:gradient of W} requires storing the values of $\vh_t$ and $\vlambda_t$  at every time step  $t \in [0, T]$, which can consume a significant amount of memory in practice. To address this, \citet{chen2018neural} propose solving an augmented backward ODE (defined in Appendix~\ref{app eq:augmented ode} for our setup), which combines an additional gradient state with both the backward ODE and the reversed forward ODE. This approach eliminates the need for storing intermediate states. However, since the hidden state $\vh_t$ is no longer constant in the augmented ODE, additional regularization conditions on the dynamics are typically required to ensure the stability of the solution. Fortunately, we demonstrate that such regularization conditions are unnecessary for Neural ODEs because the Lipschitz continuity of $\phi$ ensures that $\vh_t$ is well defined for all $t \in [0, T]$. Therefore, the augmented ODE approach can be used without the need for additional regularization. A detailed discussion of this result can be found in Appendix~\ref{app:well posedness of neural odes}.

\subsection{Discretize-Then-Optimize Method}
Alternatively, we can discretize the ODE using Euler’s method, treating the continuous ODE as a finite-depth Residual Network (ResNet) $f^{L}(\vx;\vtheta)$\footnote{Here the superscript in $f_{\vtheta}^{L}$ indicates it has $L$ time steps, while actually $f_{\vtheta}^{L}$ has totally $L+2$ layers.} with shared parameters across layers:
\begin{subequations}\label{eq:finite-depth resnet}
    \begin{align}
    f^{L}(\vx;\vtheta) &= \frac{\sigma_v}{\sqrt{n}} \vv^{\top} \phi(h^{L}(\vx)),\\
    \vh^{\ell} & = \vh^{\ell-1} + \kappa \cdot\frac{\sigma_w}{\sqrt{n}} \mW\phi(\vh^{\ell-1}),\quad\forall \ell\in \{1,2,\cdots, L\}\\
    \vh^{0} & = \frac{\sigma_u}{\sqrt{d}} \mU \vx,
\end{align}
\end{subequations}
where $\kappa = T/L$ represents the time step. The gradient can then be estimated using backpropagation through the finite depth ResNet $f^{L}(\vx;\vtheta)$, referred to as the \textit{discretize-then-optimize} approach. 

As a finite-depth ResNet, the gradients of $f^{L}(\vx;\vtheta)$ are always well defined. However, it remains an open question whether the gradients of the finite-depth approximation $f^{L}(\vx;\vtheta)$ converge to the gradients of the continuous Neural ODE $f(\vx;\vtheta)$ as the depth $L\rightarrow\infty$. In Proposition~\ref{prop:discretize-then-optimize}, we demonstrate that the smoothness of $\phi$ ensures this convergence. Thus, in the limit of infinite depth (or infinitesimally small time steps), both the optimize-then-discretize and discretize-then-optimize methods yield the same gradients, provided that the activation function is sufficiently smooth. The detailed proofs are provided in Appendix~\ref{app sec:NTK for Neural ODE}.

\begin{proposition}\label{prop:discretize-then-optimize}
    Given $\vx\in \reals^{d}$, if the activation function $\phi$ and its derivative $\phi^{\prime}$ are $L_1$- and $L_2$-Lipschitz continuous, respectively, the following inequalities hold a.s. over random initialization: 
    \begin{align}
         \norm{\nabla_{\vtheta} f^{L}(\vx) - \nabla_{\vtheta} f(\vx)}\leq CL^{-1},
         \quad\forall\ell\in\{0,1,\cdots, L\},
    \end{align}
    where 
    $C>0$ is a constant depending only on $L_1$, $L_2$, $T$, $\sigma_v$, $\sigma_w$, $\sigma_u$, and $\norm{\vx}$.
\end{proposition}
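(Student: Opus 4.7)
The strategy is to bound the three pieces $\nabla_{\vv} f$, $\nabla_{\mU} f$, and $\nabla_{\mW} f$ separately. The first two reduce to controlling $\|\phi(\vh^{L})-\phi(\vh_T)\|$ and $\|\vlambda^0-\vlambda_0\|$, respectively, while the third is the difference between a Riemann sum and the integral $\int_0^T \vlambda_t\phi(\vh_t)^\top\,dt$. All three fall out once I have the Euler errors for both the forward and backward states and uniform a priori bounds on their magnitudes.

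First I would use Gaussian random matrix concentration to assert that the operator norms of $\mU$ and $\mW$ and the Euclidean norm of $\vv$ are all $O(\sqrt{n})$ almost surely, and then combine Lipschitzness of $\phi$ with $\phi(0)=0$ and Gronwall's inequality on the forward ODE \eqref{eq:forward ode} and its Euler scheme to obtain uniform bounds $\|\vh_t\|,\|\vh^\ell\|\le C_1\sqrt{n}$ for all $t\in[0,T]$ and $\ell\in\{0,\dots,L\}$. Using the terminal condition $\|\vlambda_T\|\le \sigma_v L_1\|\vv\|/\sqrt{n}=O(1)$ and a Gronwall argument on the linear backward ODE \eqref{eq:backward ode}, I would also obtain $\|\vlambda_t\|,\|\vlambda^\ell\|\le C_2$ uniformly, where $C_1,C_2$ depend only on $L_1, T, \sigma_u,\sigma_w,\sigma_v,\|\vx\|$.

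Next, the classical global Euler bound on the forward ODE --- whose vector field $\vh\mapsto (\sigma_w/\sqrt{n})\mW\phi(\vh)$ is continuously differentiable, with $\|\ddot{\vh}_t\|=O(\sqrt{n})$ supplied by $\phi'$ being $L_2$-Lipschitz --- yields $\|\vh^\ell-\vh_{\ell\kappa}\|\le C_3\sqrt{n}/L$. The backpropagation recurrence on $f^{L}$ reads $\vlambda^{\ell-1}=\vlambda^\ell+\kappa(\sigma_w/\sqrt{n})\diag(\phi'(\vh^{\ell-1}))\mW^\top\vlambda^\ell$, which is exactly the Euler step for the continuous backward ODE but evaluated at the \emph{discrete} state $\vh^{\ell-1}$. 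Subtracting it from the true backward dynamics introduces two errors: the standard Euler local truncation error, and the perturbation $\phi'(\vh^{\ell-1})-\phi'(\vh_{(\ell-1)\kappa})$, bounded by $L_2\|\vh^{\ell-1}-\vh_{(\ell-1)\kappa}\|$ from the previous step. A discrete Gronwall argument then produces $\|\vlambda^\ell-\vlambda_{\ell\kappa}\|\le C_4/L$.

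Finally I would assemble the three pieces: $\|\nabla_{\vv} f^{L}-\nabla_{\vv} f\|\le (\sigma_v L_1/\sqrt{n})\|\vh^L-\vh_T\|=O(1/L)$ and $\|\nabla_{\mU} f^{L}-\nabla_{\mU} f\|\le (\sigma_u/\sqrt{d})\|\vx\|\,\|\vlambda^0-\vlambda_0\|=O(1/L)$ are immediate. For $\mW$, I would write the discrete gradient as the Riemann sum $\sum_{\ell}\kappa(\sigma_w/\sqrt{n})\vlambda^\ell\phi(\vh^{\ell-1})^\top$, insert and subtract $\sum_{\ell}\kappa(\sigma_w/\sqrt{n})\vlambda_{\ell\kappa}\phi(\vh_{(\ell-1)\kappa})^\top$, bound the first difference via the state errors from the previous step, and bound the remaining quadrature error via the $C^1$ regularity of $t\mapsto \vlambda_t\phi(\vh_t)^\top$ (whose derivative is bounded in magnitude by $O(\sqrt{n})$ precisely because $\phi'$ is $L_2$-Lipschitz). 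Each contribution scales like $\sqrt{n}/L$, and the $1/\sqrt{n}$ prefactor in the gradient cancels it. The principal obstacle is this backward step: propagating a forward Euler error of size $O(\sqrt{n}/L)$ through the $\vh$-dependence of the adjoint dynamics while producing an $n$-independent final bound requires exactly the Lipschitzness of $\phi'$; without the $L_2$ hypothesis, the perturbation $\phi'(\vh^{\ell-1})-\phi'(\vh_{(\ell-1)\kappa})$ cannot be absorbed into the discrete Gronwall estimate, and no $O(L^{-1})$ rate follows.
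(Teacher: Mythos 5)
Your proposal matches the paper's proof in Appendix D essentially step for step: a priori bounds on the states via Bai--Yin and Gronwall, the forward Euler rate $\norm{\vh^\ell-\vh_{t_\ell}}\le C\sqrt{n}L^{-1}$ (Lemma~\ref{app lemma:Euler rate in forward ode}), the backward discrete-Gronwall recursion with the perturbation $\phi'(\vh^\ell)-\phi'(\vh_{t_\ell})$ controlled by $L_2$ (Lemma~\ref{app lemma:Euler rate in backward ode}), and the same three-piece assembly with a Riemann-sum estimate for $\nabla_{\mW}$. One small misattribution: the bounds $\norm{\ddot{\vh}_t}=O(\sqrt{n})$ and the $C^1$ control of $t\mapsto\vlambda_t\phi(\vh_t)^\top$ in the paper use only $\abs{\phi'}\le L_1$, not the $L_2$-Lipschitz hypothesis; $L_2$ enters solely in the backward-error recursion, as you rightly emphasize at the end.
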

To further validate our theoretical findings, we conduct experiments that compare training efficiency and gradient accuracy with and without the Lipschitz continuity of $\phi^{\prime}$. These experiment results, illustrated in Figure~\ref{fig:smooth_vs_nonsmooth}, demonstrate the necessity of Lipschitz continuity for ensuring smooth gradient computation and achieving faster convergence during training.

\section{NNGP and NTK for Neural ODEs}\label{sec:NNGP and NTK}
Understanding how activation and gradient propagate through neural networks is crucial for analyzing their training dynamics and generalization abilities, as emphasized in previous studies \citep{glorot2010understanding, poole2016exponential, schoenholz2017deep}. The frameworks of Neural Network Gaussian Processes (NNGP) \citep{lee2018deep} and Neural Tangent Kernels (NTK) \citep{jacot2018neural}, grounded in mean-field theory, provide powerful analytical tools to study these dynamics. In this section, we establish theoretical results that demonstrate the well-defined nature of NNGP and NTK for Neural ODEs and explore their properties with respect to information propagation.

\subsection{NNGP: Forward Propagation of Inputs}\label{sec:NNGP}
Previous work has shown that in the infinite-width limit, randomly initialized \textit{finite-depth} neural networks converge to Gaussian processes with mean zero and recursively defined covariance functions, known as NNGP kernels \citep{neal2012bayesian, lee2018deep, daniely2016toward, yang2019wide,gao2023wide,gao2024mastering}. When approximating Neural ODEs using a sequence of finite-depth ResNets  $f_{\vtheta}^{L}$, we can establish the NNGP for $f_{\vtheta}^{L}$. Detailed proofs are provided in Appendix~\ref{app sec:neural ODE as Gaussian process}.

\begin{proposition}\label{prop:finite-depth as Gaussian process}
    Suppose $\phi$ is $L_1$-Lipschitz continuous. Then, as width $n \rightarrow \infty$, the finite-depth neural network $f_{\vtheta}^{L}$ defined in \eeref{eq:finite-depth resnet} converges in distribution to a centered Gaussian Process with a covariance function or NNGP kernel $\Sigma^{L+1}:=C^{L+1,L+1}$ defined recursively:
        \begin{align}
            &C^{0,k}(\vx, \bar{\vx}) = \delta_{0,k} \frac{\sigma_u^2}{d}\vx^T\bar{\vx},
            &&\forall k\in\{0,1,\cdots, L+1\}\\
            &C^{\ell,k}(\vx,\bar{\vx}) = \sigma_w^2\E\phi(u^{\ell-1})\phi(\bar{u}^{k-1}),
            &&\forall \ell,k\in\{1,2,\cdots, L+1\}
        \end{align}
        where $\kappa=T/L$ and $(u^{\ell}, \bar{u}^{k})$ are centered Gaussian random variables with covariance
        \begin{align}
            \E (u^{\ell}\bar{u}^{k})
            =C^{0,0}(\vx, \bar{\vx}) + \kappa^2 \sum_{i,j=1}^{\ell,k} C^{i,j}(\vx, \bar{\vx}),\quad\forall\ell,k\in\{0,1,\cdots, L\}.
            \label{eq:covariance of u and u_bar}
        \end{align}
\end{proposition}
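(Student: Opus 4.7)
The plan is to prove the proposition by induction on the depth index $\ell$, jointly over any finite collection of inputs, establishing that the sequence of hidden states $(\vh^0(\vx), \vh^1(\vx), \ldots, \vh^L(\vx))$ converges coordinate-wise in distribution as $n\to\infty$ to a centered Gaussian field with the covariance structure prescribed by the recursion. Once joint Gaussianity of the $\vh^\ell$'s is in place, the output $f^{L}(\vx;\vtheta) = \sigma_v \vv^\top \phi(\vh^L)/\sqrt{n}$ is, conditional on $\vh^L$, an empirical average over the i.i.d.\ entries of $\vv$ (independent of $\mU$ and $\mW$), so a standard CLT combined with the continuous mapping theorem yields convergence of $f^L$ to a centered Gaussian process whose covariance is $\Sigma^{L+1} = C^{L+1,L+1}$.

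For the base case, $\vh^0 = \sigma_u \mU \vx/\sqrt{d}$ is exactly Gaussian at every width, with i.i.d.\ coordinates whose cross-input covariance is $\sigma_u^2 \vx^\top \bar{\vx}/d = C^{0,0}(\vx,\bar{\vx})$. The Kronecker factor in $C^{0,k}$ records the fact that $\vh^0$ is independent of every subsequent residual increment, because those increments are driven by $\mW$, which is drawn independently of $\mU$.

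The inductive step is where the main technical obstacle lies: the matrix $\mW$ is \emph{shared across layers}, so conditional on $\vh^0, \ldots, \vh^{\ell-1}$, the fresh residual $\mW\phi(\vh^{\ell-1})$ is not an independent Gaussian vector but is correlated through $\mW$ with every prior residual $\mW\phi(\vh^{i-1})$, $i<\ell$. To handle this, I would adopt the Gaussian conditioning technique from the Tensor Program framework of \citet{yang2019wide}: decompose $\mW$ into its projection onto the subspace spanned by the previously used activations $\{\phi(\vh^{i-1}(\vx^{(q)}))\}$ across all inputs $\vx^{(q)}$ and indices $i\leq \ell-1$, and an independent orthogonal complement. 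In the wide limit, (i) the complementary component behaves as a fresh Gaussian vector of the appropriate variance and is responsible for the Gaussian law of the new hidden coordinate, while (ii) the in-span component reproduces exactly the cross-covariance contributions $\kappa^2 \sum_{i,j} C^{i,j}(\vx,\bar{\vx})$ appearing in \eeref{eq:covariance of u and u_bar}. The Lipschitz continuity of $\phi$ assumed in the statement is essential at this stage: it supplies uniform moment bounds that let convergence in distribution propagate across the nonlinearity (via the continuous mapping theorem) and keeps the $L^2$ norms of $\phi(\vh^{\ell-1})$ concentrated, so that the projection step above is well controlled.

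To close, the covariance recursion is recovered by expanding $\vh^\ell = \vh^0 + (\kappa\sigma_w/\sqrt{n})\,\mW \sum_{i=1}^{\ell} \phi(\vh^{i-1})$ in the inner product $\vh^\ell(\vx)_\alpha\,\vh^k(\bar{\vx})_\alpha$, taking expectations coordinate-wise, and using: the cross terms pairing $\vh^0$ with $\mW$-driven increments vanish in the limit (by independence of $\mU,\mW$ together with $\E[\mW_{\alpha\beta}]=0$ applied to the already-analyzed Gaussian field from the inductive hypothesis); while each pair of $\mW$-driven increments contributes $\kappa^2 \sigma_w^2 \,\E[\phi(u^{i-1})\phi(\bar{u}^{j-1})] = \kappa^2 C^{i,j}(\vx,\bar{\vx})$ by the asymptotic isometry of the action of $\mW$ on independent inputs. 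Summing over $i,j$ yields exactly \eeref{eq:covariance of u and u_bar}. The hardest part will be the rigorous matrix-conditioning argument in the inductive step; the residual/shared-weight setup here falls within the cases already covered by the Tensor Programs machinery, so I would cite and adapt those results rather than rederive them from scratch.
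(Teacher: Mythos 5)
Your proposal is correct and takes essentially the same route as the paper: both proceed by induction over depth, invoke the Gaussian-conditioning decomposition of the shared weight matrix $\mW$ (projection onto the span of previously produced activations plus an independent fresh-Gaussian complement), and appeal to the NETSOR/Master-Theorem machinery of \citet{yang2019wide} to propagate convergence across the nonlinearity and recover the covariance recursion. The only cosmetic difference is your appeal to a CLT for the final readout layer, whereas the paper exploits that $\vv$ is exactly Gaussian and independent of the preceding $\sigma$-algebra, so $f^L$ is conditionally Gaussian at every finite $n$ and it suffices to show the conditional variance $\sigma_v^2\|\phi(\vh^L)\|^2/n$ converges almost surely; this is a minor simplification, not a genuine divergence in approach.
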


Although Proposition~\ref{prop:finite-depth as Gaussian process} shows that a sequence of Gaussian processes (GPs) can be derived for the sequence of $f_{\vtheta}^{L}$, this does not necessarily mean that the Neural ODE  $f_{\vtheta}$  will also converge to a Gaussian Process as  $L \rightarrow \infty$. The challenge lies in the difference between two convergence patterns: \textit{infinite-width-then-depth} and \textit{infinite-depth-then-width}. These often lead to different limits. For example, consider the simple double sequence $a_{n,\ell} := n/(\ell+n)$. This double sequence demonstrates how taking different convergence paths—first in width, then in depth, or vice versa—can yield different results. Specifically, we observe that 
\begin{align*}
    \lim_{\ell\rightarrow\infty}(\lim_{n\rightarrow\infty} a_{n,\ell}) = 1,
    \quad\text{while}\quad
    \lim_{n\rightarrow\infty}(\lim_{\ell\rightarrow\infty} a_{n,\ell}) = 0.
\end{align*}
This phenomenon has been noted in several recent studies \citep{hayou2023width, yang2024tensor, gao2023wide,gao2024mastering} across various neural network architectures. Specifically, for commonly used neural networks, the two convergence patterns often do not coincide, leading to different limits for infinite-depth networks. Hence, the NNGP correspondence does not generally hold for infinite-depth neural networks. For Neural ODEs, the \textit{infinite-depth-then-width} convergence pattern is more relevant, as Neural ODEs are equivalent to infinite-depth neural networks from the standpoint of numerical discretization. However, the continuous nature and parameter sharing in Neural ODEs present unique challenges that make previous mathematical tools inapplicable directly.

Fortunately, if the activation function $\phi$ is sufficiently smooth, we can show that these two limits \textit{commute}, and both convergence patterns share the same limit. One crucial intermediate result involves proving that the double sequence  $\langle \phi(\vh^{L}), \phi(\bar{\vh}^{L}) \rangle / n$  converges in depth $L$ \textit{uniformly} in width  $n$ (almost surely). This uniform convergence ensures that as depth increases, the behavior of the system remains stable regardless of width, which is crucial for showing that the limits commute and establishing the well-posedness of the NNGP for Neural ODEs. The proof relies on Euler’s convergence theorem and is provided in Appendix~\ref{app sec:neural ODE as Gaussian process}.
\begin{lemma}\label{lemma:depth uniform convergence}
Let $\phi$ be $L_1$-Lipschitz continuous. For any $\vx, \bar{\vx} \in \mathbb{S}^{d-1}$, 
the double sequence $\langle \phi(\vh^{L}), \phi(\bar{\vh}^{L}) \rangle / n$ satisfies
\begin{align}
\left| \langle\phi(\vh^{L}), \phi(\bar{\vh}^{L})\rangle - \langle \phi(\vh_T), \phi(\bar{\vh}_T) \rangle \right| /n\leq C L^{-1},
\end{align}
where $C > 0$ is a constant depending solely on $L_1$, $\sigma_w$, $\sigma_u$, and $T$.
\end{lemma}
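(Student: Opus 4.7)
The plan is to reduce the claim to the classical convergence rate of the forward Euler scheme applied to the continuous ODE, and then pass from the trajectory-level error to the normalized inner product by Cauchy--Schwarz and the Lipschitzness of $\phi$. The non-routine point is that every norm grows like $\sqrt{n}$, so I must track the $\sqrt{n}$-scaling carefully and exploit the $1/n$ normalization to cancel it, producing a constant that depends only on $L_1,\sigma_w,\sigma_u,T$.

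First, I would establish a priori bounds that are almost-surely uniform in $n$. Standard Gaussian random matrix theory (Bai--Yin) gives $\|\mW\|_{\mathrm{op}}/\sqrt{n}\leq C_W$ for all large $n$ almost surely, and a similar concentration argument yields $\|\vh_0\|/\sqrt{n}\leq C_0$ and $\|\bar{\vh}_0\|/\sqrt{n}\leq C_0$ since $\vx,\bar{\vx}\in\mathbb{S}^{d-1}$. Because $\phi(0)=0$ and $\phi$ is $L_1$-Lipschitz, $\|\phi(\vh)\|\leq L_1\|\vh\|$. Continuous Gr\"onwall applied to the forward ODE, together with its discrete analogue applied to \eqref{eq:finite-depth resnet}, then gives $\sup_{t\in[0,T]}\|\vh_t\|\leq C_1\sqrt{n}$ and $\sup_{0\leq\ell\leq L}\|\vh^{\ell}\|\leq C_1\sqrt{n}$ with $C_1$ depending only on $L_1,\sigma_w,T,C_W,C_0$; the same bounds hold for the barred trajectories.

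Next, I would bound the local truncation error. Since $\phi$ is only assumed Lipschitz (not $C^{1}$), I avoid a Taylor expansion and instead write, on $[t_{\ell-1},t_{\ell}]$ with $t_{\ell}=\ell\kappa$,
\begin{align*}
\vh_{t_{\ell}}-\vh_{t_{\ell-1}}-\kappa\,\frac{\sigma_w}{\sqrt{n}}\mW\phi(\vh_{t_{\ell-1}})
=\frac{\sigma_w}{\sqrt{n}}\mW\!\int_{t_{\ell-1}}^{t_{\ell}}\!\bigl(\phi(\vh_s)-\phi(\vh_{t_{\ell-1}})\bigr)\,ds.
\end{align*}
Using $\|\mW\|_{\mathrm{op}}\leq C_W\sqrt{n}$, the $L_1$-Lipschitzness of $\phi$, and the Lipschitz-in-time bound $\|\vh_s-\vh_{t_{\ell-1}}\|\leq C_2\sqrt{n}\,(s-t_{\ell-1})$ obtained from the ODE and the a priori bound, the truncation error has norm at most $C_3\sqrt{n}\,\kappa^{2}$. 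Setting $\ve^{\ell}:=\vh^{\ell}-\vh_{t_{\ell}}$, the standard error recursion
\begin{align*}
\|\ve^{\ell}\|\leq(1+C_{\phi}\kappa)\|\ve^{\ell-1}\|+C_3\sqrt{n}\,\kappa^{2}
\end{align*}
together with discrete Gr\"onwall gives $\|\ve^{L}\|\leq C_{4}\sqrt{n}/L$, and similarly for $\bar{\ve}^{L}$.

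Finally, I would pass to the inner product via the telescoping identity
\begin{align*}
\bigl\langle\phi(\vh^{L}),\phi(\bar{\vh}^{L})\bigr\rangle-\bigl\langle\phi(\vh_T),\phi(\bar{\vh}_T)\bigr\rangle
=\bigl\langle\phi(\vh^{L})-\phi(\vh_T),\phi(\bar{\vh}^{L})\bigr\rangle+\bigl\langle\phi(\vh_T),\phi(\bar{\vh}^{L})-\phi(\bar{\vh}_T)\bigr\rangle,
\end{align*}
Cauchy--Schwarz, and the Lipschitz bound $\|\phi(\vh^{L})-\phi(\vh_T)\|\leq L_1\|\ve^{L}\|$. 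The two resulting terms are bounded by $L_1^{2}\,\|\ve^{L}\|\,\|\bar{\vh}^{L}\|$ and $L_1^{2}\,\|\vh_T\|\,\|\bar{\ve}^{L}\|$, each of order $\sqrt{n}\cdot\sqrt{n}/L=n/L$; dividing by $n$ yields the desired $CL^{-1}$ bound with a constant that depends only on $L_1,\sigma_w,\sigma_u,T$. The main obstacle is the bookkeeping of the $\sqrt{n}$ factors: the trajectory errors themselves are not small in absolute norm (they scale like $\sqrt{n}/L$), and only the deliberate $1/n$ normalization recovers a width-free rate; keeping every constant independent of $n$ requires the almost-sure operator-norm and initial-state concentrations stated in the first step and is what makes the convergence of $f_{\vtheta}^{L}$ to $f_{\vtheta}$ compatible with the subsequent infinite-width limit.
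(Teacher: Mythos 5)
Your proof is correct and follows essentially the same route as the paper's: first derive a global Euler-scheme error bound $\|\vh^{L}-\vh_{t_L}\|\leq C\sqrt{n}/L$ (uniform in $n$, via Bai--Yin and Gr\"onwall), then telescope the inner-product difference and apply Cauchy--Schwarz together with the Lipschitz bound $\|\phi(\vh^{L})-\phi(\vh_T)\|\leq L_1\|\vh^{L}-\vh_T\|$, letting the $1/n$ normalization cancel the two $\sqrt{n}$ factors. The one place you diverge is in how the local truncation error is obtained: the paper's Lemma~\ref{app lemma:Euler rate in forward ode} invokes the classical Euler convergence theorem by differentiating the dynamics and bounding $\ddot{\vh}$ through $\phi'$, which implicitly assumes $\phi$ is differentiable even though the hypothesis is only $L_1$-Lipschitz; your integral-remainder formulation $\frac{\sigma_w}{\sqrt{n}}\mW\int_{t_{\ell-1}}^{t_\ell}(\phi(\vh_s)-\phi(\vh_{t_{\ell-1}}))\,ds$ avoids any appeal to $\phi'$ and so matches the stated regularity exactly. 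That is a modest but genuine improvement in rigor; the final telescoping and bookkeeping are identical to the paper's.
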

By combining Lemma~\ref{lemma:depth uniform convergence} and Proposition~\ref{prop:finite-depth as Gaussian process} with Moore-Osgood theorem, as stated in Theorem~\ref{thm:Moore-Osgood Theorem} of Appendix~\ref{app:Useful Mathematical Results}, we can establish the NNGP correspondence for the Neural ODE $f_{\vtheta}$.

\begin{theorem}\label{thm:neural ode as gaussian process}
    Suppose $L_1$-Lipschitz continuous $\phi$. As width $n\rightarrow\infty$, the Neural ODE $f_{\vtheta}$ defined in \eeref{eq:neural ode} converges in distribution to a centered Gaussian process with covariance function $\Sigma^{*}$ defined as the limit of $\Sigma^{L}$ given in Proposition~\ref{prop:finite-depth as Gaussian process}.
\end{theorem}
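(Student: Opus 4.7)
The plan is to reduce the statement to a double-limit question on characteristic functions of the finite-dimensional distributions and then swap the two limits using the Moore--Osgood theorem referenced just above. Fix any finite collection $\vx_1,\dots,\vx_k\in\mathbb{R}^{d}$ and define
\begin{align*}
\mF^{L,n}:=\bigl(f^{L}(\vx_1;\vtheta),\ldots,f^{L}(\vx_k;\vtheta)\bigr),
\quad
\mF^{\infty,n}:=\bigl(f(\vx_1;\vtheta),\ldots,f(\vx_k;\vtheta)\bigr),
\end{align*}
together with characteristic functions $\varphi^{L,n}(\xi):=\E\exp(i\xi^{\top}\mF^{L,n})$ and $\varphi^{\infty,n}(\xi):=\E\exp(i\xi^{\top}\mF^{\infty,n})$ for $\xi\in\mathbb{R}^{k}$. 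Since $\vv\sim\Gaus(0,\mI)$ is independent of $(\mU,\mW)$, conditioning on $(\mU,\mW)$ makes $\mF^{L,n}$ Gaussian with covariance matrix $\mK^{L,n}_{ij}:=\sigma_v^{2}\langle\phi(\vh^{L}(\vx_i)),\phi(\vh^{L}(\vx_j))\rangle/n$, and similarly $\mF^{\infty,n}$ is Gaussian with covariance $\mK^{\infty,n}_{ij}:=\sigma_v^{2}\langle\phi(\vh_T(\vx_i)),\phi(\vh_T(\vx_j))\rangle/n$, so that $\varphi^{L,n}(\xi)=\E\exp(-\tfrac12\xi^{\top}\mK^{L,n}\xi)$ and analogously for $\varphi^{\infty,n}$.

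Next I would verify the two hypotheses of Moore--Osgood for $\{\varphi^{L,n}(\xi)\}$. For each fixed $L$, Proposition~\ref{prop:finite-depth as Gaussian process} supplies $\lim_{n\to\infty}\varphi^{L,n}(\xi)=\exp(-\tfrac12\xi^{\top}\Sigma^{L}\xi)$, where $\Sigma^{L}$ is the $k\times k$ matrix of pairwise NNGP kernel values from that proposition. For each fixed $n$, Lemma~\ref{lemma:depth uniform convergence} applied entrywise gives the almost-sure bound $|\mK^{L,n}_{ij}-\mK^{\infty,n}_{ij}|\leq\sigma_v^{2}CL^{-1}$ with $C$ depending only on $L_1,\sigma_w,\sigma_u,T$---\emph{crucially, not on} $n$. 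Combined with the inequality $|\xi^{\top}(\mK^{L,n}-\mK^{\infty,n})\xi|\leq k\sigma_v^{2}\|\xi\|^{2}CL^{-1}$ and the $\tfrac12$-Lipschitz property of $a\mapsto e^{-a/2}$ on $[0,\infty)$, this yields
\begin{align*}
\sup_{n\geq 1}\bigl|\varphi^{L,n}(\xi)-\varphi^{\infty,n}(\xi)\bigr|\;\leq\; \tfrac{1}{2}k\sigma_v^{2}\|\xi\|^{2}CL^{-1}\;\xrightarrow{L\to\infty}\;0,
\end{align*}
i.e.\ $\varphi^{L,n}\to\varphi^{\infty,n}$ as $L\to\infty$ \emph{uniformly in} $n$.

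Moore--Osgood then forces the two iterated limits to coincide:
\begin{align*}
\lim_{n\to\infty}\varphi^{\infty,n}(\xi)\;=\;\lim_{L\to\infty}\lim_{n\to\infty}\varphi^{L,n}(\xi)\;=\;\lim_{L\to\infty}\exp\bigl(-\tfrac12\xi^{\top}\Sigma^{L}\xi\bigr),
\end{align*}
which simultaneously guarantees existence of the right-hand limit, defines $\Sigma^{*}(\vx,\bar\vx):=\lim_{L\to\infty}\Sigma^{L}(\vx,\bar\vx)$, and yields $\lim_{n\to\infty}\varphi^{\infty,n}(\xi)=\exp(-\tfrac12\xi^{\top}\Sigma^{*}\xi)$. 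L\'evy's continuity theorem then gives $\mF^{\infty,n}\Rightarrow\Gaus(0,\Sigma^{*})$, and since the collection $\{\vx_i\}$ was arbitrary, this is exactly convergence of the finite-dimensional distributions of $f_{\vtheta}$ to those of a centered Gaussian process with covariance $\Sigma^{*}$.

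The hard part will be the uniform-in-$n$ convergence step: without the $n$-independent constant that Lemma~\ref{lemma:depth uniform convergence} supplies, the Lipschitz estimate on $e^{-a/2}$ would not decouple the depth and width directions, and Moore--Osgood could not be applied---exactly the pathology the authors illustrate with the double sequence $n/(\ell+n)$ whose iterated limits disagree. A secondary, routine point is verifying that $\mK^{L,n}$ is indeed the conditional covariance, which uses only the Gaussianity of $\vv$ and its independence from $(\mU,\mW)$ at initialization.
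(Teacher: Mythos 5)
Your proposal is correct and follows essentially the same strategy as the paper: condition on $(\mU,\mW)$ to make the output conditionally Gaussian, and then use Lemma~\ref{lemma:depth uniform convergence} together with Proposition~\ref{prop:finite-depth as Gaussian process} and Moore--Osgood to interchange the depth and width limits. The paper applies Moore--Osgood directly to the double sequence of conditional covariances $a_{n,L}=\langle\phi(\vh^{L}),\phi(\bar\vh^{L})\rangle/n$ and then argues that a.s.\ convergence of these random covariances to the deterministic $\Sigma^{*}$ yields weak convergence (proving the one-input case and remarking the multi-input case is analogous), whereas you apply Moore--Osgood one level up, to the characteristic functions $\varphi^{L,n}(\xi)$ of the finite-dimensional distributions, and close with L\'evy's continuity theorem. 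Your packaging is a cleaner way to handle the full finite-dimensional law in one pass, and the $\tfrac12$-Lipschitz observation for $a\mapsto e^{-a/2}$ on $[0,\infty)$ is exactly what transports the $n$-independent $CL^{-1}$ bound from conditional covariances to characteristic functions; the underlying mechanism is identical. One small point you should state explicitly (the paper also glosses over it): the a.s.\ bound from Lemma~\ref{lemma:depth uniform convergence} comes from $\|\mW\|\lesssim\sqrt{n}$ via Bai--Yin, which holds on an event of probability $1-o(1)$ rather than uniformly over all $n$; when passing to $\sup_{n}|\varphi^{L,n}-\varphi^{\infty,n}|$ you should restrict to a tail $n\geq N$ (which is harmless for Moore--Osgood) or invoke a non-asymptotic operator-norm concentration bound, so that the exceptional event's contribution to the expectation is controlled.
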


\begin{remark}\label{remark:forward propogation}
    Thanks to the uniform convergence result established in Lemma~\ref{lemma:depth uniform convergence}, the covariance function $\Sigma^{L}$ converges to $\Sigma^{*}$ with a rate of $|\Sigma^{L}(x, \bar{x})-\Sigma^{*}(\vx, \bar{\vx})|\sim CL^{-1}$. This polynomial rate of convergence preserves the geometry of the input space \citep{yang2017mean}. This stands in contrast to classical feedforward networks, 
    where the input space geometry often collapses unless the variance hyperparameters are set precisely on the edge of chaos  \citep{poole2016exponential}.
\end{remark}

\subsection{NTK: Backpropagation of Gradients}\label{sec:NTK}
While NNGP governs the forward propagation of inputs, the NTK \citep{jacot2018neural} governs the backward propagation of gradients. Understanding both is key to comprehending the full dynamics of Neural ODEs during training. As defined for Neural ODEs in \eeref{eq:NTK for Neural ODE}, we can also define the NTK  $K_{\vtheta}^{L}$ for the finite-depth network  $f_{\vtheta}^{L}$ in \eeref{eq:finite-depth resnet} as follows:
\begin{align}
    K^{L}(\vx, \bar{\vx};\vtheta):=\inn{\nabla_{\vtheta} f^{L}(\vx;\vtheta)}{\nabla_{\vtheta}f^{L}(\bar{\vx};\vtheta)}.
\end{align}
In the same infinite-width limit, as highlighted in previous works \citep{jacot2018neural,yang2020tensor}, the NTK $K_{\vtheta}^{L}$ converges to a deterministic kernel $K_{\infty}^{L}$ that remains constant throughout training. Notably, this deterministic limiting NTK $K_{\infty}^{L}$ (and $K_{\infty}$ defined in Theorem~\ref{thm:NTK for neural ODE}) governs the training dynamics of Neural ODEs under gradient descent.

Below are the results for our setup, with proofs provided in Appendix~\ref{app sec:NTK for Neural ODE}.
\begin{proposition}\label{prop:NTK for finite-depth}
    Suppose $\phi$ is $L_1$-Lipschitz continuous. Then, as the network width $n\rightarrow\infty$, the NTK $K_{\vtheta}^{L}$ converges almost surely to a deterministic limiting kernel: $\forall L\geq 0$
    \begin{align}
        K_{\infty}^{L}(\vx,\bar{\vx})
        =C^{L+1,L+1}(\vx, \bar{\vx})
    +\kappa^2\sum_{\ell,k=1}^{L}C^{\ell,k}(\vx, \bar{\vx})D^{\ell,k}(\vx, \bar{\vx})
    +C^{0,0}(\vx, \bar{\vx}) D^{0,0}(\vx, \bar{\vx}),
    \label{eq:NTK_L}
    \end{align}
    where $C^{\ell,k}$ are defined in Proposition~\ref{prop:finite-depth as Gaussian process} and $D^{\ell,k}$ are defined recursively:
    \begin{align}
        &D^{L,k}(\vx,\bar{\vx})
        =\sigma_w^2\E\phi^{\prime}(u^{L})\phi^{\prime}(\bar{u}^{L}), 
        &&\forall k\in\{0,1,\cdots, L\},\\
        &D^{\ell, k}(\vx, \bar{\vx})
        =\kappa^2 \sum_{i,j=L}^{\ell+1,k+1} D^{i,j}(\vx, \bar{\vx}) \E[\phi^{\prime}(u^{i}) \phi^{\prime}(\bar{u}^{j})]
        &&\forall \ell,k\in\{1,2,\cdots, L-1\}.
    \end{align}
\end{proposition}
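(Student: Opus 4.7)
The plan is to split $K^L_{\vtheta}(\vx,\bar{\vx})$ according to the three parameter blocks $\vv$, $\mW$, and $\mU$, compute each gradient explicitly by backpropagating through the discrete recursion \eeref{eq:finite-depth resnet}, and pass each resulting inner product to its $n\to\infty$ limit using Proposition~\ref{prop:finite-depth as Gaussian process} on the forward side together with a downward induction on the adjoint states. Backpropagation yields
\begin{align*}
\nabla_{\vv} f^L(\vx) = \tfrac{\sigma_v}{\sqrt n}\phi(\vh^L),\qquad
\nabla_{\mU} f^L(\vx) = \tfrac{\sigma_u}{\sqrt d}\vlambda^0\vx^\top,\qquad
\nabla_{\mW} f^L(\vx) = \tfrac{\kappa\sigma_w}{\sqrt n}\sum_{\ell=1}^L \vlambda^\ell\phi(\vh^{\ell-1})^\top,
\end{align*}
where the discrete adjoint $\vlambda^\ell := \partial f^L/\partial\vh^\ell$ satisfies $\vlambda^{\ell-1} = \vlambda^\ell + \tfrac{\kappa\sigma_w}{\sqrt n}\diag(\phi'(\vh^{\ell-1}))\mW^\top\vlambda^\ell$ with terminal condition $\vlambda^L = \tfrac{\sigma_v}{\sqrt n}\diag(\phi'(\vh^L))\vv$. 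Taking the Frobenius inner product with the analogous quantities for $\bar{\vx}$ splits $K^L_{\vtheta}(\vx,\bar{\vx})$ into three summands that I claim match, in order, the three terms on the right-hand side of \eeref{eq:NTK_L}.

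The $\vv$-summand is $\tfrac{\sigma_v^2}{n}\langle\phi(\vh^L),\phi(\bar{\vh}^L)\rangle$ and converges almost surely to $C^{L+1,L+1}(\vx,\bar{\vx})$ by the Gaussian law of large numbers implied by Proposition~\ref{prop:finite-depth as Gaussian process}. The $\mU$-summand factors cleanly as $C^{0,0}(\vx,\bar{\vx})\cdot\langle\vlambda^0,\bar{\vlambda}^0\rangle$, reducing to the joint limit of the adjoints at $\ell=0$. The main work is the $\mW$-summand
\begin{align*}
\tfrac{\kappa^2\sigma_w^2}{n}\sum_{\ell,k=1}^L\langle\vlambda^\ell,\bar{\vlambda}^k\rangle\cdot\langle\phi(\vh^{\ell-1}),\phi(\bar{\vh}^{k-1})\rangle,
\end{align*}
whose cross-layer double sum is precisely the footprint of parameter sharing in Neural ODEs. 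Proposition~\ref{prop:finite-depth as Gaussian process}, applied jointly to the pair $(\vx,\bar{\vx})$ and depths $(\ell-1,k-1)$, handles the forward factor $\tfrac{1}{n}\langle\phi(\vh^{\ell-1}),\phi(\bar{\vh}^{k-1})\rangle\to C^{\ell,k}(\vx,\bar{\vx})/\sigma_w^2$, and matching \eeref{eq:NTK_L} therefore amounts to identifying the joint limit of $\langle\vlambda^\ell,\bar{\vlambda}^k\rangle$ with $D^{\ell,k}(\vx,\bar{\vx})$.

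To establish this backward limit I would proceed by downward induction on $(\ell,k)$. The base case $\ell=k=L$ follows by substituting $\vlambda^L=\tfrac{\sigma_v}{\sqrt n}\diag(\phi'(\vh^L))\vv$ and applying an LLN over the i.i.d.\ entries of $\vv$ (which are independent of $\vh^L,\bar{\vh}^L$), giving $\langle\vlambda^L,\bar{\vlambda}^L\rangle\to\sigma_v^2\,\E[\phi'(u^L)\phi'(\bar{u}^L)]=D^{L,L}$ under the paper's convention identifying the output variance with $\sigma_w$. Each inductive step unrolls one copy of $\tfrac{\kappa\sigma_w}{\sqrt n}\mW^\top$ in the adjoint recursion, applies Gaussian concentration for the resulting quadratic forms in the entries of $\mW$, and uses Proposition~\ref{prop:finite-depth as Gaussian process} to replace $\tfrac{1}{n}\sum_i\phi'(\vh^i)\phi'(\bar{\vh}^j)$ by $\E[\phi'(u^i)\phi'(\bar{u}^j)]$; this produces the stated double recursion $D^{\ell,k}=\kappa^2\sum_{i,j}D^{i,j}\,\E[\phi'(u^i)\phi'(\bar{u}^j)]$. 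Lipschitz continuity of $\phi$ is what guarantees essential boundedness of $\phi'$ almost everywhere, keeping all Gaussian expectations and LLN bounds well-defined.

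The main obstacle, and the reason this analysis is substantially more delicate than the feedforward NTK, is that the $\mW^\top$ appearing in the adjoint recursion is the \emph{same} random matrix $\mW$ that drove the forward states $\vh^\ell$, so naive Gaussian integration double-counts the dependence. I would resolve this via the gradient-independence ansatz, rigorously justified for ResNet-style recursions by the Tensor Programs master theorem of \citet{yang2020tensor}: in the $n\to\infty$ limit one may replace $\mW^\top$ in the backward pass by an independent Gaussian copy without affecting the limit of any tensor-program-representable scalar observable such as $\tfrac{1}{n}\langle\vlambda^\ell,\bar{\vlambda}^k\rangle$. This decouples the forward and backward sequences just enough for the joint CLT/LLN step in the induction to go through, and collecting the three limiting summands then assembles the formula \eeref{eq:NTK_L}.
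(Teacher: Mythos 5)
Your proposal is correct and follows essentially the same route as the paper's proof: both split $K^L_{\vtheta}$ by parameter block ($\vv$, $\mW$, $\mU$), identify the forward inner products $\tfrac1n\langle\phi(\vh^{\ell-1}),\phi(\bar\vh^{k-1})\rangle$ with the $C^{\ell,k}$ of Proposition~\ref{prop:finite-depth as Gaussian process}, identify the backward (adjoint/$dg$) covariances with $D^{\ell,k}$ via a downward recursion, and invoke the Tensor Programs master theorem of \citet{yang2020tensor} to justify treating the backward pass's $\mW^\top$ as if independent of the forward pass. The only cosmetic differences are that the paper spells out the NETSOR$^\top$ program explicitly (Algorithm~\ref{alg:fintie-depth network backward}) rather than describing the recursion via $\vlambda^\ell$ directly, and that your base case is stated as a plain LLN over $\vv$, which is strictly speaking also an application of the master theorem since the summands $\phi'(\vh^L_i)\phi'(\bar\vh^L_i)\vv_i^2$ are exchangeable but not i.i.d.
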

The same problem of different convergence patterns converging to different limits, observed in the NNGP kernel $\Sigma^{*}$, also arises when computing the NTK of Neural ODEs. While the Lipschitz continuity of $\phi$ enables well-posed forward propagation of inputs in Neural ODEs, \textit{additional regularity} is required for backward propagation of gradients. Specifically, Lipschitz continuity of $\phi^{\prime}$ is sufficient to ensure uniform convergence in the NTK. With $\phi$ and $\phi^{\prime}$ both being Lipschitz continuous, we can obtain a uniform convergence result similar to Lemma~\ref{lemma:depth uniform convergence}.
\begin{lemma}\label{lemma:depth uniform convergence 2}
    If $\phi$ is $L_1$-Lipschitz continuous and \tianxiang{$\phi^{\prime}$ is $L_2$-Lipschitz continuous}, then the following inequality holds almost surely: 
    \begin{align}
        \abs{K_{\vtheta}^{L}(\vx, \bar{\vx}) - K_{\vtheta}(\vx, \bar{\vx})}\leq C L^{-1}, \quad\forall \vx,\bar{\vx}\in \mathbb{S}^{d-1},
    \end{align}
    where $C>0$ is a constant dependent only on the constants $\sigma_v$, $\sigma_w$, $\sigma_u$, $L_1$, $L_2$, and $T$.
\end{lemma}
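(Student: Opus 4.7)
The plan is to decompose $K_{\vtheta}^{L}(\vx,\bar{\vx}) - K_{\vtheta}(\vx,\bar{\vx})$ into three pieces $\Delta_{\vv} + \Delta_{\mW} + \Delta_{\mU}$ according to the three parameter blocks, and bound each one via Euler convergence estimates for the forward and backward ODEs combined with Riemann-sum approximation. Writing the continuous gradients as in \eeref{eq:gradient of W} and the discrete gradients via standard backpropagation through $f^{L}$, one obtains $\nabla_{\mW} f^{L} = \kappa\sum_{\ell=0}^{L-1}\frac{\sigma_w}{\sqrt{n}}\vlambda^{\ell+1}\phi(\vh^{\ell})^{\top}$ and $\nabla_{\mU} f^{L} = \frac{\sigma_u}{\sqrt{d}}\vlambda^{0}\vx^{\top}$, where the discrete adjoint obeys $\vlambda^{\ell} = \vlambda^{\ell+1} + \kappa\frac{\sigma_w}{\sqrt{n}}\diag(\phi^{\prime}(\vh^{\ell}))\mW^{\top}\vlambda^{\ell+1}$. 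Each $\Delta_{\bullet}$ then measures the discrepancy between a continuous integral/inner product and its discrete counterpart.

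The $\vv$-term is $\Delta_{\vv} = \frac{\sigma_v^2}{n}(\inn{\phi(\vh^{L}(\vx))}{\phi(\vh^{L}(\bar{\vx}))} - \inn{\phi(\vh_T(\vx))}{\phi(\vh_T(\bar{\vx}))})$, which is $O(L^{-1})$ a.s.\ directly by Lemma~\ref{lemma:depth uniform convergence}. The $\mU$-term reduces to controlling $\|\vlambda^{0}(\vx) - \vlambda_{0}(\vx)\|/\sqrt{n}$ uniformly in $\vx$. I would apply standard Euler convergence to the backward ODE \eeref{eq:backward ode}: its vector field $-\frac{\sigma_w}{\sqrt{n}}\diag(\phi^{\prime}(\vh_t))\mW^{\top}\vlambda$ is $O(1)$-Lipschitz in $\vlambda$ using $L_1$-boundedness of $\phi^{\prime}$ together with the a.s.\ spectral-norm bound $\|\mW\|/\sqrt{n}\leq C$ from random matrix theory, and it is $O(1)$-Lipschitz in $t$ because $\|\vh_t-\vh_s\|=O(|t-s|)$ combined with the $L_2$-Lipschitzness of $\phi^{\prime}$ gives $\|\phi^{\prime}(\vh_t)-\phi^{\prime}(\vh_s)\|=O(|t-s|)$. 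A Gronwall argument then yields $\|\vlambda^{\ell}-\vlambda_{\ell\kappa}\|/\sqrt{n}\leq C\kappa$ uniformly in $\ell$. This is precisely where Lipschitz continuity of $\phi^{\prime}$ is indispensable.

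For the $\mW$-term, the discrepancy takes the form of a double integral minus a double Riemann sum:
\begin{align*}
\Delta_{\mW} = \frac{\sigma_w^2}{n}\Bigl[\int_0^T\!\!\int_0^T \inn{\vlambda_t(\vx)}{\vlambda_s(\bar{\vx})}\inn{\phi(\vh_t(\vx))}{\phi(\vh_s(\bar{\vx}))}\,ds\,dt - \kappa^2\!\!\sum_{\ell,k=0}^{L-1}\inn{\vlambda^{\ell+1}(\vx)}{\vlambda^{k+1}(\bar{\vx})}\inn{\phi(\vh^{\ell}(\vx))}{\phi(\vh^{k}(\bar{\vx}))}\Bigr].
\end{align*}
I would split this into two pieces: (i) an Euler-discretization error replacing each $(\vh^{\ell},\vlambda^{\ell+1})$ in the Riemann sum by $(\vh_{\ell\kappa},\vlambda_{(\ell+1)\kappa})$, controlled by the $O(\kappa)$ bounds from Lemma~\ref{lemma:depth uniform convergence} and the preceding paragraph, and (ii) a Riemann-sum error comparing the integral to its left-rectangle sum of the smooth integrand $(t,s)\mapsto \inn{\vlambda_t}{\vlambda_s}\inn{\phi(\vh_t)}{\phi(\vh_s)}/n$. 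The integrand is $C^1$ in $(t,s)$ because $\dot{\vh}_t$ and $\dot{\vlambda}_t$ are a.s.\ bounded and $\phi,\phi^{\prime}$ are Lipschitz, yielding the standard $O(\kappa)$ Riemann error. Uniform a.s.\ bounds on $\|\vh_t\|,\|\vlambda_t\|$ via forward/backward Gronwall against the bounded spectral norm of $\mW/\sqrt{n}$ provide the necessary uniform control of the integrand.

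The hard part will be making every error bound almost sure and uniform in the continuous time variable, which is required since $\Delta_{\mW}$ contains a double integral and the randomness of $\mU, \mW, \vv$ enters nontrivially. The key enabling ingredients are (i) the almost-sure uniform-in-$n$ bound $\|\mW\|/\sqrt{n}\leq C$ for Gaussian ensembles, which makes the forward and backward vector fields uniformly Lipschitz along the trajectories, and (ii) the $L_2$-Lipschitzness of $\phi^{\prime}$, which upgrades the one-step Euler error from $o(\kappa)$ to $O(\kappa^2)$ and hence the cumulative error to $O(\kappa)=O(L^{-1})$; without this extra regularity the rate would degrade and the lemma would fail. Combining $\Delta_{\vv}$, $\Delta_{\mW}$, and $\Delta_{\mU}$ then delivers the claimed $CL^{-1}$ bound with the stated dependence on $\sigma_v,\sigma_w,\sigma_u,L_1,L_2,T$.
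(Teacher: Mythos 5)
Your decomposition into $\Delta_{\vv}+\Delta_{\mW}+\Delta_{\mU}$ and your treatment of $\Delta_{\vv}$ (via Lemma~\ref{lemma:depth uniform convergence}) and $\Delta_{\mU}$ (via Euler convergence of the discrete adjoint $\vlambda^{\ell}$ to $\vlambda_{t_\ell}$, which indeed is where the $L_2$-Lipschitzness of $\phi'$ enters) agree with the paper's proof. Where you diverge is the $\mW$-block: you expand the trace inner product $\inn{\nabla_{\mW}f^{L}(\vx)}{\nabla_{\mW}f^{L}(\bar\vx)}$ into a \emph{double} Riemann sum of the scalar integrand $(\ell,k)\mapsto\inn{\vlambda^{\ell+1}}{\vlambda^{k+1}}\inn{\phi(\vh^{\ell})}{\phi(\vh^{k})}/n$ and compare it to the corresponding double time integral, splitting the error into an Euler-discretization piece and a left-rectangle Riemann-sum piece. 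The paper instead works at the level of the matrix-valued gradients: it first bounds $\norm{\nabla_{\mW}f^{L}(\vx)-\nabla_{\mW}f_{\vtheta}(\vx)}\leq CL^{-1}$ via a \emph{single} Riemann-sum comparison of the vector integrand $t\mapsto\vlambda_t\phi(\vh_t)^{\top}/\sqrt{n}$, and shows both $\norm{\nabla_{\mW}f^{L}}$ and $\norm{\nabla_{\mW}f_{\vtheta}}$ are $O(1)$; then a Cauchy--Schwarz/triangle-inequality step converts the $O(L^{-1})$ gradient-norm bound into an $O(L^{-1})$ inner-product bound. Both routes rest on the identical Euler estimates (the paper's Lemmas on $\norm{\vh^\ell-\vh_{t_\ell}}=O(\sqrt{n}\,\kappa)$ and $\norm{\vlambda^\ell-\vlambda_{t_\ell}}=O(\kappa)$, plus the a.s.\ bound $\norm{\mW}\lesssim\sqrt{n}$) and give the same rate; the paper's version is slightly lighter in bookkeeping because it sidesteps the double sum, whereas yours makes the NTK's integral structure more explicit. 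One small slip: you write the $\mU$-control as $\norm{\vlambda^{0}-\vlambda_0}/\sqrt{n}$, but the adjoint state already carries an $O(1)$ scale (because $\vlambda_T=\sigma_v\diag(\phi'(\vh_T))\vv/\sqrt{n}$ and $\norm{\vv}=O(\sqrt{n})$), so the relevant quantity is $\norm{\vlambda^{0}-\vlambda_0}$ itself — this does not affect the argument's validity.
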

Combining Lemma~\ref{lemma:depth uniform convergence 2} with Proposition~\ref{prop:NTK for finite-depth} and Moore-Osgood Theorem~\ref{thm:Moore-Osgood Theorem}, we can \textit{interchange}  the limits $L$ and $n$ in the double sequence $K^{L}_{\vtheta}(\vx,\bar{\vx})$ and show that the NTK $K_{\theta}$ of Neural ODE converges to a deterministic limiting kernel.

\begin{theorem}\label{thm:NTK for neural ODE}
    Suppose $\phi$ is $L_1$-Lipschitz continuous and \tianxiang{$\phi^{\prime}$ is $L_2$-Lipschitz continuous}. As the network width $n\rightarrow\infty$, the NTK $K_{\vtheta}$ converges almost surely to a deterministic limiting kernel:
    \begin{align}
        K_{\vtheta}\rightarrow K_{\infty},\quad\text{as $n\rightarrow\infty$,}
    \end{align}
    where $K_{\infty}$ is the limit of the NTK $K_{\infty}^{L}$ defined in Proposition~\ref{prop:NTK for finite-depth}, as depth $L\rightarrow\infty$.
\end{theorem}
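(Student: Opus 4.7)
The plan is to establish Theorem~\ref{thm:NTK for neural ODE} via an interchange-of-limits argument applied to the double sequence $K^{L}_{\vtheta}(\vx,\bar{\vx})$ indexed by width $n$ and discretization depth $L$. The central tool is the Moore-Osgood theorem (Theorem~\ref{thm:Moore-Osgood Theorem}): if one direction of convergence is merely pointwise but the other is uniform, the two iterated limits agree. Concretely, I would aim to justify the chain
\begin{align*}
    \lim_{n\to\infty} K_{\vtheta}
    \;=\; \lim_{n\to\infty}\lim_{L\to\infty} K^{L}_{\vtheta}
    \;=\; \lim_{L\to\infty}\lim_{n\to\infty} K^{L}_{\vtheta}
    \;=\; \lim_{L\to\infty} K^{L}_{\infty}
    \;=:\; K_{\infty},
\end{align*}
using the two results that precede the theorem. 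Proposition~\ref{prop:NTK for finite-depth} supplies pointwise convergence in width for each fixed depth, $K^{L}_{\vtheta}\to K^{L}_{\infty}$ almost surely as $n\to\infty$. Lemma~\ref{lemma:depth uniform convergence 2}, in turn, supplies the width-uniform rate $|K^{L}_{\vtheta}(\vx,\bar{\vx})-K_{\vtheta}(\vx,\bar{\vx})|\leq C L^{-1}$ almost surely, with $C$ independent of $n$. This uniformity in $n$ of the depth-convergence is precisely what Moore-Osgood requires.

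Executing the plan, I would first establish that $K_{\infty}:=\lim_{L\to\infty} K^{L}_{\infty}$ exists as a deterministic real number. Because the bound in Lemma~\ref{lemma:depth uniform convergence 2} is uniform in $n$, for any $L^{\prime}\geq L$ one has $|K^{L}_{\vtheta}-K^{L^{\prime}}_{\vtheta}|\leq 2CL^{-1}$ almost surely; sending $n\to\infty$ via Proposition~\ref{prop:NTK for finite-depth} yields $|K^{L}_{\infty}-K^{L^{\prime}}_{\infty}|\leq 2C L^{-1}$, so $\{K^{L}_{\infty}\}_{L}$ is Cauchy in $\reals$ and thus convergent to some $K_{\infty}$. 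Next, for any $\varepsilon>0$ pick $L_0$ large enough that both $|K^{L_0}_{\vtheta}-K_{\vtheta}|<\varepsilon/3$ almost surely and $|K^{L_0}_{\infty}-K_{\infty}|<\varepsilon/3$, and then pick $n$ large enough (via Proposition~\ref{prop:NTK for finite-depth}) so that $|K^{L_0}_{\vtheta}-K^{L_0}_{\infty}|<\varepsilon/3$. A triangle inequality delivers $|K_{\vtheta}-K_{\infty}|<\varepsilon$ on a full-measure event, which is the desired conclusion $K_{\vtheta}\to K_{\infty}$ almost surely.

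The main obstacle is already absorbed into Lemma~\ref{lemma:depth uniform convergence 2}: producing a depth-discretization error whose constant is independent of width. Without the extra regularity that $\phi^{\prime}$ is $L_2$-Lipschitz, a Gr\"{o}nwall-type control of the joint evolution of forward states $\vh^{\ell}$ and backward adjoints along the $L$ Euler steps would accumulate constants that scale with $n$, and the iterated limits would genuinely disagree (as illustrated by the toy double sequence $n/(n+\ell)$ in the introduction of Section~\ref{sec:NNGP}). A secondary subtlety worth flagging is the almost-sure nature of the statement: both Proposition~\ref{prop:NTK for finite-depth} and Lemma~\ref{lemma:depth uniform convergence 2} hold on probability-one events, and since the argument above uses only countably many choices of $(L_0,n,\varepsilon)$, the intersection of those events remains measure-one, so the conclusion is genuinely almost sure rather than merely in probability.
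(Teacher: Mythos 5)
Your proof is correct and follows the same strategy as the paper: invoking the Moore--Osgood theorem to interchange the width and depth limits of the double sequence $K^{L}_{\vtheta}$, with Proposition~\ref{prop:NTK for finite-depth} supplying the pointwise-in-$n$ convergence and Lemma~\ref{lemma:depth uniform convergence 2} supplying the width-uniform depth-discretization rate. You additionally unpack the Moore--Osgood argument explicitly (Cauchyness of $K^{L}_{\infty}$ and the three-epsilon estimate) and carefully track the almost-sure events, but the underlying decomposition and key lemmas are the same ones the paper uses.
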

\begin{remark}
     Using the uniform convergence from Lemma~\ref{lemma:depth uniform convergence 2}, we observe that $\norm{K_{\infty}^{L}(\vx,\bar{\vx}) - K_{\infty}(\vx,\bar{\vx})} \sim CL^{-1}$. This polynomial convergence not only guarantees that gradients neither explode nor vanish as $L \rightarrow \infty$ \citep{yang2017mean,schoenholz2017deep}, but also implies that the limiting NTK, $K_{\infty}$, has an \textbf{integral form}, as suggested by \eeref{eq:NTK_L} and provided in Appendix~\ref{app sec: integral form}. This integral form provides a key insight for studying the spectral properties of the NTK $K_{\infty}$ directly, without relying on the inductive techniques used in previous works.
\end{remark}
\vspace{-1em}
\section{Global Convergence Analysis for Neural ODEs}\label{sec:global convergence}
As discussed in \eeref{eq:train dynamics}, the dynamics of the residual $\vu^{k} - \vy$ under gradient descent can be characterized using the NTK $K_{\vtheta}$. In the infinite-width limit, as shown in Theorem~\ref{thm:NTK for neural ODE}, this time-varying kernel $K_{\vtheta}$ converges to a deterministic limiting kernel $K_{\infty}$, provided the activation function $\phi$ is sufficiently smooth. Therefore, in this section, we establish the global convergence of Neural ODEs under gradient descent by examining the spectral property of the NTK $K_{\vtheta}$ and its limit $K_{\infty}$.

The limiting NTK $K_{\infty}$ is a deterministic kernel function, and its spectral properties are key to understanding global convergence. Previous studies \citep{jacot2018neural,nguyen2021proof} have highlighted that \textit{the strictly positive definiteness (SPD)} of the NNGP kernel $\Sigma^{*}$ is sufficient to guarantee the SPD property of $K_{\infty}$. Since $\Sigma^{*}$ is a component of $K_{\infty}$ defined in \eeref{eq:NTK for Neural ODE}, demonstrating the SPD property of $\Sigma^{*}$ is critical for proving convergence.

However, prior analyses have relied on inductive proofs for finite-depth neural networks, which are not directly applicable to infinite-depth and continuous networks like Neural ODEs. That is because, as depth increases, information propagation can become trivial (i.e., gradients vanishing or exploding), potentially diminishing the SPD property at the infinite-depth limit \citep{poole2016exponential,schoenholz2017deep,hayou2023width}. Fortunately, results in Section~\ref{sec:NNGP and NTK} demonstrated stable information propagation in both forward and backward directions, regardless of the choice of $\sigma_v$, $\sigma_w$, and $\sigma_u$. This allows us to retain the SPD property of the NTK of Neural ODEs as the depth $L$ approaches infinity. 

Specifically, recall from Theorem~\ref{thm:neural ode as gaussian process} that we can express $\Sigma^{*}$ as:
\begin{align*}
    \Sigma^{*}(\vx, \bar{\vx}) = \E[\phi(u) \phi(\bar{u})],
\end{align*}
where $(u, \bar{u})$ are centered Gaussian random variables with covariance $S^{*}(x,\bar{x})$ defined by
\small
\begin{align*}
    S^{*}(\vx, \bar{\vx})=\lim_{L\rightarrow\infty} C^{0,0}(\vx,\bar{\vx}) + \kappa^2 \sum_{\ell,k=1}^{L} C^{\ell,k}(\vx, \bar{\vx})
\end{align*}
\normalsize
with $\kappa = T/L$. This expression $S^{*}$ can be interpreted as a \textit{double integral form} whose explicit form is included in Appendix~\ref{app sec: integral form}. By leveraging the results from Section~\ref{sec:NNGP and NTK}, we can derive key properties of $S^{*}$ in Lemma~\ref{lemma:S start}. These properties serve as a fundamental basis for analyzing the SPD properties of the NNGP and NTK.
\begin{lemma}\label{lemma:S start}
    For any $\vx,\bar{\vx}\in \mathbb{S}^{d-1}$, we have
    \begin{enumerate}[leftmargin=*]
        \item $S^{*}(\vx, \bar{\vx})$ is well defined, and $0  <S^{*}(\vx, \vx) = S^{*}(\bar{\vx},\bar{\vx}) < \infty$,
        \item $S^{*}(\vx, \vx)\geq S^{*}(\vx,\bar{\vx})$ and the equality holds if and only if $\vx=\bar{\vx}$.
    \end{enumerate}
\end{lemma}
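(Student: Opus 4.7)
The plan is to use the identity $S^{*}(\vx,\bar{\vx}) = \lim_{L\to\infty}\E[u^L\bar{u}^L]$, where $(u^L,\bar{u}^L)$ are the jointly Gaussian variables of Proposition~\ref{prop:finite-depth as Gaussian process}, and to exploit the rotational symmetry of the NNGP together with a single Cauchy--Schwarz applied to the \emph{entire} accumulated sum rather than to each $C^{\ell,k}$ term by term.

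For well-definedness and finiteness, I first establish a uniform-in-$L$ bound on $a_L := \E[(u^L)^2] = C^{0,0}(\vx,\vx) + \kappa^2\sum_{i,j=1}^L C^{i,j}(\vx,\vx)$. Setting $X_L := \kappa\sum_{i=1}^L\phi(u^{i-1})$, a direct computation gives $\kappa^2\sum_{i,j=1}^L C^{i,j}(\vx,\vx) = \sigma_w^2\,\E[X_L^2]$. The Lipschitz estimate $|\phi(u)|\leq L_1|u|$ and Minkowski's inequality in $L^2(\Omega)$ yield $\sqrt{\E[X_L^2]}\leq L_1\kappa\sum_{i=1}^L\sqrt{a_{i-1}}$, and sub-additivity of $\sqrt{\cdot}$ then gives $\sqrt{a_L}\leq \sigma_u/\sqrt{d}+\sigma_w L_1\kappa\sum_{i=1}^L\sqrt{a_{i-1}}$, so discrete Gr\"onwall produces $a_L\leq(\sigma_u^2/d)\,e^{2\sigma_w L_1 T}$ uniformly in $L$. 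To promote boundedness to convergence, I invoke Lemma~\ref{lemma:depth uniform convergence}: after passing $n\to\infty$ via Theorem~\ref{thm:neural ode as gaussian process} it identifies the partial sums as Euler discretizations of a fixed-point integral equation for the limiting covariance and supplies a depth-uniform $\bigo{L^{-1}}$ convergence rate. Strict positivity $S^{*}(\vx,\vx)>0$ is immediate from $a_L\geq \sigma_u^2/d$, and $S^{*}(\vx,\vx)=S^{*}(\bar{\vx},\bar{\vx})$ follows because the joint law of $(u^0,\ldots,u^L)$ depends on the input only through $\|\vx\|^2=1$, so the diagonal partial sums coincide at every $L$.

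For the dominance $S^{*}(\vx,\vx)\geq S^{*}(\vx,\bar{\vx})$ with strict inequality whenever $\vx\neq\bar{\vx}$, I keep $X_L$ as above and set $Y_L := \kappa\sum_{j=1}^L\phi(\bar{u}^{j-1})$, so that $\kappa^2\sum_{i,j=1}^L C^{i,j}(\vx,\bar{\vx}) = \sigma_w^2\E[X_L Y_L]$. Equality of laws from the previous paragraph gives $\E[Y_L^2]=\E[X_L^2]$, and Cauchy--Schwarz yields $\E[X_L Y_L]\leq\sqrt{\E[X_L^2]\E[Y_L^2]}=\E[X_L^2]$. Consequently,
\[
a_L - \E[u^L\bar{u}^L] = \bigl(C^{0,0}(\vx,\vx)-C^{0,0}(\vx,\bar{\vx})\bigr) + \sigma_w^2\bigl(\E[X_L^2]-\E[X_L Y_L]\bigr) \geq \frac{\sigma_u^2}{d}\bigl(1-\vx^T\bar{\vx}\bigr).
\]
Letting $L\to\infty$ gives $S^{*}(\vx,\vx)-S^{*}(\vx,\bar{\vx})\geq(\sigma_u^2/d)(1-\vx^T\bar{\vx})$, which is strictly positive for $\vx\neq\bar{\vx}$ on $\mathbb{S}^{d-1}$ and vanishes trivially when $\vx=\bar{\vx}$.

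The main obstacle is the convergence step in the first part: unlike in finite-depth NNGP analyses, the per-step increment of $a_L$ is not manifestly signed, so Gr\"onwall alone does not suffice to conclude convergence. My plan is to lean on the depth-uniform estimate of Lemma~\ref{lemma:depth uniform convergence}. The subsequent dominance argument is then clean precisely because the entire-sum Cauchy--Schwarz bypasses the need to compare $\E[\phi(u^i)\phi(u^j)]$ against $\E[\phi(u^i)\phi(\bar{u}^j)]$ termwise, which would generally fail for non-monotone Lipschitz activations.
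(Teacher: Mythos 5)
Your proof is correct and follows essentially the same path as the paper's. The single Cauchy--Schwarz on the accumulated sums $X_L,Y_L$ is precisely the paper's polarization identity $S^{L}(\vx,\vx) - S^{L}(\vx,\bar{\vx}) = \tfrac{1}{2}\norm{\vx-\bar{\vx}}^{2} + \tfrac{1}{2}\E\abs{g^{L}(\vx)-g^{L}(\bar{\vx})}^{2}$ written as $\E[X_L^2]-\E[X_L Y_L]\ge 0$; the diagonal equality and boundedness arguments likewise mirror the paper's induction on $C^{\ell,k}(\vx,\vx)=C^{\ell,k}(\bar{\vx},\bar{\vx})$ and its uniform bound on $g^{L}$.
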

Lemma~\ref{lemma:S start} implies $S^{*}(\vx, \vx)=\Theta(1)$ for all $\vx\in \mathbb{S}^{d-1}$. This allows us to study the SPD property of the NNGP kernel $\Sigma^{*}$ using its Hermitian expansion from the perspective of dual activation \citep{daniely2016toward}. Detailed analysis and proofs are provided in Appendix~\ref{app sec:SPD}.
Additionally, $S^{*}(\vx, \vx) > S^{*}(\vx, \bar{\vx})$ for all $\vx\neq \bar{\vx}$ implies that the pathology known as \textit{the loss of input dependence}, observed in other large-depth networks such as feedforward \citep{poole2016exponential}, ResNet \citep{hayou2023width}, and RNN \citep{gao2023wide}, does not occur here. This stability results from a combination of several factors, including skip connections, scaling $\kappa$, and smoothness and nonlinearity of $\phi$. With stable information propagation in Neural ODEs, we can use the nonlinearity of $\phi$ to show that the NNGP kernel $\Sigma^{*}$ and the limiting NTK $K_{\infty}$ are SPD.
\begin{proposition}\label{prop:SPD for NNGP}
    If $\phi$ is Lipschitz,  nonlinear but non-polynomial, then the NNGP kernel $\Sigma^{*}$ is SPD.
\end{proposition}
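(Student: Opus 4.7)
The plan is to apply the Hermite (dual activation) expansion in the spirit of \citet{daniely2016toward}, leveraging the geometric control provided by Lemma~\ref{lemma:S start} on $S^{*}$ and the non-polynomial hypothesis on $\phi$. First, by Lemma~\ref{lemma:S start}, $\sigma^{2}:=S^{*}(\vx,\vx)$ is a strictly positive constant for every $\vx\in\mathbb{S}^{d-1}$, and the normalized correlation
\[
\rho(\vx,\bar{\vx}) \;:=\; S^{*}(\vx,\bar{\vx})/\sigma^{2}
\]
satisfies $\rho(\vx,\vx)=1$ and $\rho(\vx,\bar{\vx})<1$ whenever $\vx\neq\bar{\vx}$. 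Let $g(z):=\phi(\sigma z)$; since $\phi$ is Lipschitz, $g\in L^{2}(\gamma)$ for the standard Gaussian measure $\gamma$, and one may expand $g=\sum_{k\ge 0}a_{k}H_{k}$ in the orthonormal probabilists' Hermite basis. Because $\phi$ (and thus $g$) is non-polynomial, \emph{infinitely many} Hermite coefficients $a_{k}$ are nonzero. Mehler's identity applied to the jointly standard Gaussian $(u(\vx)/\sigma,\,u(\bar{\vx})/\sigma)$ with correlation $\rho(\vx,\bar{\vx})$ then yields
\[
\Sigma^{*}(\vx,\bar{\vx}) \;=\; \E\bigl[\phi(u(\vx))\,\phi(u(\bar{\vx}))\bigr] \;=\; \sum_{k=0}^{\infty} a_{k}^{2}\,\rho(\vx,\bar{\vx})^{k}.
\]

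Second, I would translate strict positive definiteness of $\Sigma^{*}$ into a Hadamard-power statement. For any distinct $\vx_{1},\ldots,\vx_{N}\in\mathbb{S}^{d-1}$ and any nonzero $\vc\in\reals^{N}$, set $R:=[\rho(\vx_{i},\vx_{j})]_{i,j}$, which is PSD since it is the covariance matrix of the Gaussian vector $(u(\vx_{1})/\sigma,\ldots,u(\vx_{N})/\sigma)$. Then
\[
\vc^{\top}\!\bigl[\Sigma^{*}(\vx_{i},\vx_{j})\bigr]\vc \;=\; \sum_{k=0}^{\infty} a_{k}^{2}\,\vc^{\top} R^{\odot k}\vc,
\]
where $R^{\odot k}$ denotes the entrywise $k$-th Hadamard power. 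By the Schur product theorem each $R^{\odot k}$ is PSD, so the sum is nonnegative. To upgrade to strict positivity, note that $(R^{\odot k})_{ii}=1$ while $(R^{\odot k})_{ij}=\rho(\vx_{i},\vx_{j})^{k}\to 0$ as $k\to\infty$ whenever $|\rho(\vx_{i},\vx_{j})|<1$; consequently $R^{\odot k}\to \mI$ entrywise, and $R^{\odot k}$ is strictly positive definite (indeed diagonally dominant) for all $k$ beyond some threshold $k_{0}$. Since infinitely many $a_{k}$ are nonzero, one can choose some $k^{*}\ge k_{0}$ with $a_{k^{*}}\neq 0$, which gives $\vc^{\top} R^{\odot k^{*}}\vc>0$, and hence $\vc^{\top}\Sigma^{*}\vc>0$.

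The main obstacle I anticipate is securing the \emph{two-sided} strict inequality $|\rho(\vx_{i},\vx_{j})|<1$ for all $i\neq j$. The upper bound $\rho<1$ follows directly from Lemma~\ref{lemma:S start}, but the lower bound $\rho>-1$ is more delicate: perfect anti-correlation $u(\vx_{i})=-u(\vx_{j})$ a.s.\ would stall the convergence $R^{\odot k}\to\mI$ along even indices and break the diagonal-dominance argument. To handle this, I would invoke the integral representation of $S^{*}$ from Appendix~\ref{app sec: integral form}, noting that the initial layer contributes $C^{0,0}(\vx,\bar{\vx})=\sigma_{u}^{2}\vx^{\top}\bar{\vx}/d$ and the accumulated nonlinear mixing through the forward ODE is continuous in the inputs; a compactness/continuity argument on $\mathbb{S}^{d-1}\times\mathbb{S}^{d-1}$ combined with the strict inequality at the diagonal then bounds $\rho$ away from $-1$ on any finite set of distinct points. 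Any residual antipodal case can be treated by splitting the Hermite series into even and odd components, since non-polynomiality of $\phi$ guarantees that nonzero $a_{k}$ occur for infinitely many $k$ of at least one parity.
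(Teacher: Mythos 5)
Your route through Hermite expansion, Schur products, and Hadamard powers is sound and genuinely different from the paper's. The paper peels $\vx_i^{\top}\vx_j$ out of $S^{*}$ via the inequality $(A+B)^{\odot n}\succeq A^{\odot n}$ for $A,B\succeq 0$, lower-bounds $\vc^{\top}[\Sigma^{*}]\vc\geq\vc^{\top}[\Sigma^{1}]\vc$, and then invokes the Gneiting/Jacot criterion (Theorem~\ref{thm:kernel PSD}) on the depth-one kernel $\Sigma^{1}$; you instead use strict diagonal dominance of $R^{\odot k}$ for large $k$ together with non-polynomiality to select a tail index $k^{*}$ with $a_{k^{*}}\neq 0$. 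Both arguments work once $|\rho_{ij}|<1$ strictly for $i\neq j$, and you are right to flag that Lemma~\ref{lemma:S start} only supplies the one-sided bound $\rho<1$.

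Neither of your two proposed fixes actually secures $\rho>-1$. The compactness argument is incoherent: the strict inequality of Lemma~\ref{lemma:S start} lives on the diagonal $\vx=\bar{\vx}$ and says nothing about the anti-diagonal $\bar{\vx}=-\vx$, and compactness of $\mathbb{S}^{d-1}\times\mathbb{S}^{d-1}$ cannot prevent $\rho$ from attaining $-1$ at particular pairs. It does attain it: for odd $\phi$ (e.g.\ $\tanh$), one shows inductively that $u_t(-\vx)=-u_t(\vx)$ a.s., hence $g(-\vx)=-g(\vx)$ a.s.\ and $\rho(\vx,-\vx)=-1$. The parity split then also fails: with $\rho_{ij}=-1$ and $\vc=e_i+e_j$ one gets $\vc^{\top}R^{\odot k}\vc=2\bigl(1+(-1)^k\bigr)=0$ for every odd $k$, and for an odd $\phi$ all even Hermite coefficients vanish, so ``nonzero $a_k$ for infinitely many $k$ of at least one parity'' contributes nothing — indeed $\Sigma^{*}(\vx,-\vx)=-\Sigma^{*}(\vx,\vx)$ and the Gram matrix is genuinely singular. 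What is required is nonzero $a_k$ for infinitely many $k$ of \emph{both} parities (precisely the hypothesis of Theorem~\ref{thm:kernel PSD}), which non-polynomiality alone does not deliver. It is worth noting that the paper's own proof of Lemma~\ref{app lemma:SPD for basic case} makes the identical unjustified leap (``non-polynomial \dots hence $b_n>0$ for infinitely many even and odd $n$''), so the gap is shared; your argument closes cleanly if you either assume $\vx_i\neq\pm\vx_j$ for $i\neq j$, or strengthen the hypothesis on $\phi$ to guarantee nonzero Hermite coefficients of both parities (as Softplus, GELU, and ReLU do, but $\tanh$ does not).
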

\begin{corollary}\label{coro:SPD for NTK}
    Suppose $\phi$ and $\phi^{\prime}$ are Lipschitz continuous. If $\phi$ is nonlinear but non-polynomial, then the limiting NTK $K_{\infty}$ is SPD.
\end{corollary}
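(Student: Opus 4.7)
The plan is to reduce the SPD property of the limiting NTK $K_\infty$ to the SPD property of the NNGP kernel $\Sigma^*$, which is already provided by Proposition~\ref{prop:SPD for NNGP}. The key observation is that $K_\vtheta$ admits a natural parameter-block decomposition coming from the three parameter groups $(\vv, \mW, \mU)$, and the contribution from the readout parameter $\vv$ is precisely $\Sigma^*$. This will let us write $K_\infty$ as $\Sigma^*$ plus a sum of kernels that only need to be positive semidefinite (PSD), so positive definiteness of a single summand is enough.

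First I would split the NTK by parameter blocks. Using the gradients in \eeref{eq:gradient of W}, for every finite $n$
\begin{align*}
K_\vtheta(\vx, \bar\vx)
= \inn{\nabla_\vv f(\vx;\vtheta)}{\nabla_\vv f(\bar\vx;\vtheta)}
+ \inn{\nabla_\mW f(\vx;\vtheta)}{\nabla_\mW f(\bar\vx;\vtheta)}
+ \inn{\nabla_\mU f(\vx;\vtheta)}{\nabla_\mU f(\bar\vx;\vtheta)}.
\end{align*}
The $\vv$-term equals $\sigma_v^2\,\phi(\vh_T(\vx))^\top \phi(\vh_T(\bar\vx))/n$, which by Theorem~\ref{thm:neural ode as gaussian process} converges a.s. to $\Sigma^*(\vx,\bar\vx)$. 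Theorem~\ref{thm:NTK for neural ODE} ensures that the remaining two terms also converge to deterministic limits, call them $K_\infty^{\mW}$ and $K_\infty^{\mU}$; identifying these with the appropriate subsums in the integral form of \eeref{eq:NTK_L} is routine once the uniform convergence from Lemma~\ref{lemma:depth uniform convergence 2} is in hand. Thus $K_\infty = \Sigma^* + K_\infty^{\mW} + K_\infty^{\mU}$.

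Next I would argue that $K_\infty^{\mW}$ and $K_\infty^{\mU}$ are PSD kernels. For any finite set of inputs $\{\vx_i\}_{i=1}^N$ and any coefficients $\{c_i\}\in \reals^N$, the finite-width $\mW$-block kernel evaluated on $\{\vx_i\}$ equals $\bigl\|\sum_i c_i \nabla_\mW f(\vx_i;\vtheta)\bigr\|^2 \geq 0$, and the same for the $\mU$-block. Passing to the limit $n \to \infty$ preserves nonnegativity, so the Gram matrices induced by $K_\infty^{\mW}$ and $K_\infty^{\mU}$ on any finite sample are PSD. Combined with Proposition~\ref{prop:SPD for NNGP}, which gives that the Gram matrix of $\Sigma^*$ on any set of distinct unit inputs is SPD, the sum of an SPD matrix and two PSD matrices is SPD, yielding $\lambda_{\min}\bigl(K_\infty(\vx_i,\vx_j)_{i,j}\bigr) > 0$.

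The only nontrivial step here is the clean identification of the three limits; once this is granted, the conclusion is immediate because positive semidefinite contributions cannot cancel the strict positivity inherited from the $\vv$-block. The main conceptual obstacle has therefore already been handled by Proposition~\ref{prop:SPD for NNGP}: that result uses the non-polynomial nonlinearity of $\phi$ together with the Hermite/dual-activation expansion (and the fact that $S^*(\vx,\vx)=\Theta(1)$ from Lemma~\ref{lemma:S start}) to rule out eigen-degeneracies that could otherwise arise in the infinite-depth limit. Everything else in the proof of the corollary is linear-algebraic bookkeeping on top of the decomposition.
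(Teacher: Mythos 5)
Your proposal is correct and follows essentially the same route as the paper: Appendix~\ref{app sec:SPD} opens by writing $K_\vtheta$ as the sum of the three parameter-block inner products, observes that the $\vv$-block converges to $\Sigma^*$, and then asserts that SPD of $\Sigma^*$ suffices for SPD of $K_\infty$ (the PSD-ness of the remaining blocks being implicit). You make the PSD step explicit via the Gram quadratic form, but this is the same argument with the bookkeeping written out.
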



With these results, we can establish the global convergence of Neural ODEs under gradient descent with appropriate assumptions about the activation function $\phi$ and the training data.
\begin{assumption}\label{assume:main assumptions}
    Let $\{\vx_i, y_i\}_{i=1}^{N}$ be a training set. Assume
    \begin{enumerate}[leftmargin=*]
        \item Training set: $\vx_i\in \mathbb{S}^{d-1}$ and $\vx_i\neq \vx_j$ for all $i\neq j$; $\abs{y_i}=\bigo{1}$,
        \item Smoothness: $\phi$ and $\phi^{\prime}$ are $L_1$- and $L_2$-Lipschitz continuous, respectively,
        \item Nonlinearity: $\phi$ is nonlinear and non-polynomial.
    \end{enumerate}
\end{assumption}

Under Assumption~\ref{assume:main assumptions}, we can employ inductive proofs to show that in the overparameterized regime, the parameters $\vtheta^{k}$ remain close to their initialization $\vtheta^{0}$. This proximity ensures that the Neural ODE and its gradients are well-posed not only at initialization, as proved in Proposition~\ref{prop:well posed of forward ode}, but also throughout the entire training. This consistency in parameter updates enables us to prove that the NTK $K_{\vtheta}$ retains SPD during training, ensuring that the training errors of Neural ODEs consistently decrease to zero at a linear rate. Detailed analysis and proofs are provided in Appendix~\ref{app sec:convergence}.

\begin{theorem}\label{thm:global convergence}
    Suppose Assumption~\ref{assume:main assumptions} holds and the learning rate $\eta$ is chosen such that $0 < \eta \leq 1/\norm{\mX}^2$. Then for any $\delta > 0$, there exists a natural number $n_{\delta}$ such that for all widths $n\geq n_{\delta}$ the following results hold with probability at least $1-\delta$ over random initialization \eeref{eq:random initialization}:
    \begin{enumerate}[leftmargin=*]
        \item The parameters $\vtheta^k$ stay in a neighborhood of $\vtheta^0$, \ie, 
            \begin{align}
                \norm{\vtheta^{k}-\vtheta^0}\leq C\norm{\mX}\sqrt{L(\vtheta_0)}/\lambda_0.
            \end{align}
        \item The loss function $L(\vtheta_k)$ consistently decreases to zero at an exponential rate, \ie, 
    \begin{align}
        L(\vtheta_k)\leq \left(1-\frac{\eta \vlambda_0}{16}\right)^{k}L(\vtheta_0),
    \end{align}
    \end{enumerate}
    where $\lambda_0:= \lambda_{\min}(K_{\infty}) > 0$, and the constant $C>0$ only depends on $L_1$, $L_2$, $\sigma_v$, $\sigma_w$, $\sigma_u$, and $T$. 
\end{theorem}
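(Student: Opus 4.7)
The plan is to prove Theorem~\ref{thm:global convergence} via the standard NTK-based induction scheme, adapted to the Neural ODE setting. Three ingredients are needed: (i) at initialization the Gram matrix $\mH^0$ of the NTK is close to the limiting Gram matrix $\mH^{\infty}$ formed from $K_{\infty}$, hence $\lambda_{\min}(\mH^0)\geq \lambda_0/2$; (ii) there is a ball of radius $R>0$ around $\vtheta^0$ inside which $\lambda_{\min}(\mH(\vtheta))\geq \lambda_0/4$; and (iii) within this ball the gradient-descent iterates contract geometrically. Ingredient~(i) is immediate from Theorem~\ref{thm:NTK for neural ODE} (almost-sure convergence of $K_{\vtheta}$ to $K_{\infty}$ as $n\to\infty$) combined with Corollary~\ref{coro:SPD for NTK} ($\lambda_{\min}(\mH^{\infty})\geq \lambda_0>0$), so choosing $n_{\delta}$ large enough makes $\norm{\mH^0-\mH^{\infty}}\leq \lambda_0/2$ with probability at least $1-\delta$.

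For ingredient~(ii) I would establish a local Lipschitz estimate for the per-sample gradients: for any $\vtheta$ with $\norm{\vtheta-\vtheta^0}\leq R$ and any $\vx\in \mathbb{S}^{d-1}$,
\begin{align*}
\norm{\nabla_{\vtheta}f(\vx;\vtheta)-\nabla_{\vtheta}f(\vx;\vtheta^0)}\leq C\,\norm{\vtheta-\vtheta^0}/\sqrt{n}.
\end{align*}
This reduces, via the gradient formulas in~\eeref{eq:gradient of W} and Proposition~\ref{prop:well posed of forward ode}, to controlling how $\vh_t$ and $\vlambda_t$ depend on $\vtheta$. Using the Lipschitz continuity of $\phi$ (for the forward ODE) and of $\phi^{\prime}$ (for the backward ODE), Grönwall's inequality propagates parameter perturbations uniformly on $[0,T]$ with a constant depending only on $L_1,L_2,\sigma_u,\sigma_w,\sigma_v,T$. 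Summing the resulting $N\times N$ entrywise bounds via Cauchy--Schwarz yields $\norm{\mH(\vtheta)-\mH^0}\leq \lambda_0/4$ provided $R$ is chosen on the order of $\sqrt{n}\,\lambda_0/(C\norm{\mX})$, independent of depth.

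Ingredient~(iii) is then the usual NTK induction. Assume $\vtheta^j$ lies in the ball of radius $R$ for all $j\leq k$. The one-step update gives
\begin{align*}
\vu^{k+1}-\vy=(\mI-\eta\mH^k)(\vu^k-\vy)+\ve^k,
\end{align*}
where the higher-order residual satisfies $\norm{\ve^k}\leq \eta^2\norm{\mX}^2\cdot C'\cdot L(\vtheta^k)$ by a second-order Taylor bound combined with the local gradient-Lipschitz estimate from step~(ii). Together with $\lambda_{\min}(\mH^k)\geq \lambda_0/4$ and $\eta\leq 1/\norm{\mX}^2$, this produces a geometric contraction $L(\vtheta^{k+1})\leq (1-\eta\lambda_0/16)\,L(\vtheta^k)$ (the slightly slower rate absorbs $\ve^k$). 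Finally, the parameter bound follows from telescoping:
\begin{align*}
\norm{\vtheta^{k+1}-\vtheta^0}\leq \sum_{j=0}^{k}\eta\norm{\nabla L(\vtheta^j)}\leq \eta\norm{\mX}\sum_{j=0}^{k}\sqrt{2L(\vtheta^j)},
\end{align*}
whose geometric sum is bounded by $C\norm{\mX}\sqrt{L(\vtheta^0)}/\lambda_0$; this is strictly less than $R$ for $n\geq n_{\delta}$, closing the induction and delivering both claims.

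The main obstacle is ingredient~(ii). Unlike finite-depth networks where the NTK's Lipschitz dependence on $\vtheta$ follows by a finite chain of layerwise bounds, here the gradient involves a time integral of the product $\vlambda_t\phi(\vh_t)^{\top}$ over $[0,T]$, and both $\vh_t$ and $\vlambda_t$ solve ODEs whose right-hand sides depend on $\vtheta$ through $\phi$ and $\phi^{\prime}$. Controlling the joint perturbation of this coupled forward--backward system uniformly in $t$ (and uniformly in depth in the discretized approximation) is precisely why Assumption~\ref{assume:main assumptions} requires $\phi^{\prime}$ to be Lipschitz, going beyond what Proposition~\ref{prop:well posed of forward ode} already needs for well-posedness. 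Once this uniform Grönwall estimate is in place, the remainder of the argument parallels \citet{du2019gradient} and \citet{nguyen2021proof}.
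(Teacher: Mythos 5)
Your proposal follows essentially the same route as the paper's proof in Appendix~\ref{app sec:convergence}: it combines (a) closeness of the initial Gram matrix to $\mH^{\infty}$ from Theorem~\ref{thm:NTK for neural ODE}, (b) Gr\"onwall-based Lipschitz control of the coupled forward--backward ODEs to show that $\vh_t$, $\vlambda_t$, and hence the per-sample gradients depend Lipschitz-continuously on $\vtheta$ (Lemmas~\ref{app lemma:bounds of h_t and lambda_t} and the subsequent perturbation lemma in the paper), and (c) the standard NTK induction with a Weyl-inequality argument and a higher-order Taylor remainder that picks up a $1/\sqrt{n}$ factor. You also correctly identify that the Lipschitz continuity of $\phi'$ is precisely what the backward perturbation estimate requires beyond well-posedness. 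The only cosmetic difference is that the paper's inductive argument lower-bounds $\sigma_{\min}(\partial \vu / \partial\vtheta)$ through the feature matrix $\Phi^0$ (\ie, the $\nabla_{\vv}$ block of the NTK alone) rather than the full Gram matrix; since the full NTK dominates this block, the two are interchangeable for the lower bound.

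One small imprecision worth flagging: in your step~(iii) the stated remainder bound $\norm{\ve^k}\leq \eta^2\norm{\mX}^2\,C'\,L(\vtheta^k)$ must have $C'$ carry a hidden $\Theta(1/\sqrt{n})$ factor (coming from the per-sample gradient Lipschitz estimate you derived in step~(ii)), otherwise the error term cannot be absorbed into the contraction for large $n$. The paper makes this explicit: its remainder is of order $\eta^2\,\frac{C}{\sqrt{n}}\,e^{C\sigma T}\norm{\mX}^3\norm{\vu^k-\vy}^2$, and the width threshold $n_{\delta}$ is chosen precisely so this term is dominated by $\eta\lambda_0\norm{\vu^k-\vy}/8$. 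With that clarification your argument closes the induction exactly as the paper does.
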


\section{Experiments}
\begin{figure}[ht]
    \centering
    \begin{minipage}{0.25\linewidth}
        \centering
        \includegraphics[width=\linewidth]{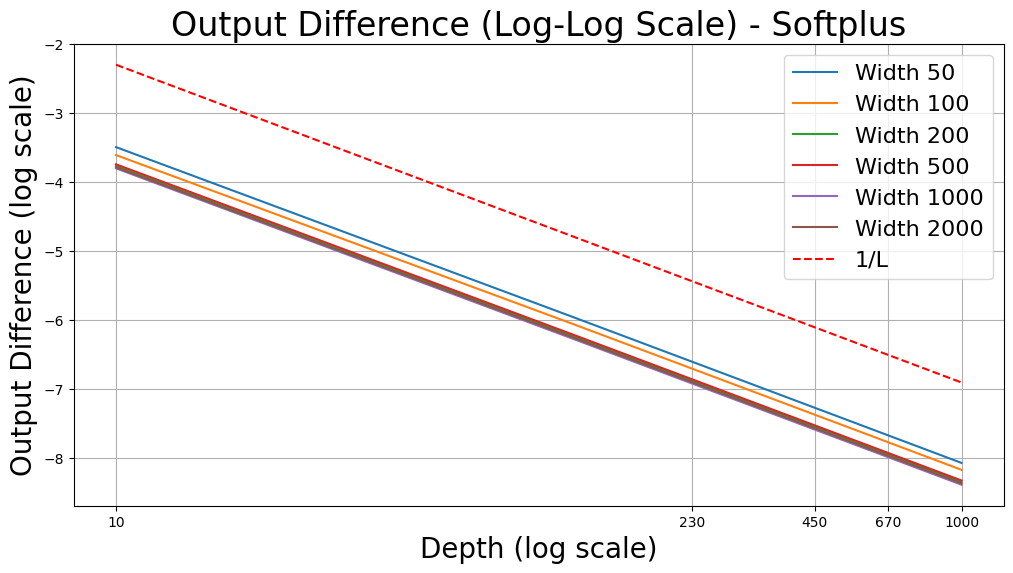}
        (a) 
    \end{minipage}%
    \hfill
    \begin{minipage}{0.25\linewidth}
        \centering
        \includegraphics[width=\linewidth]{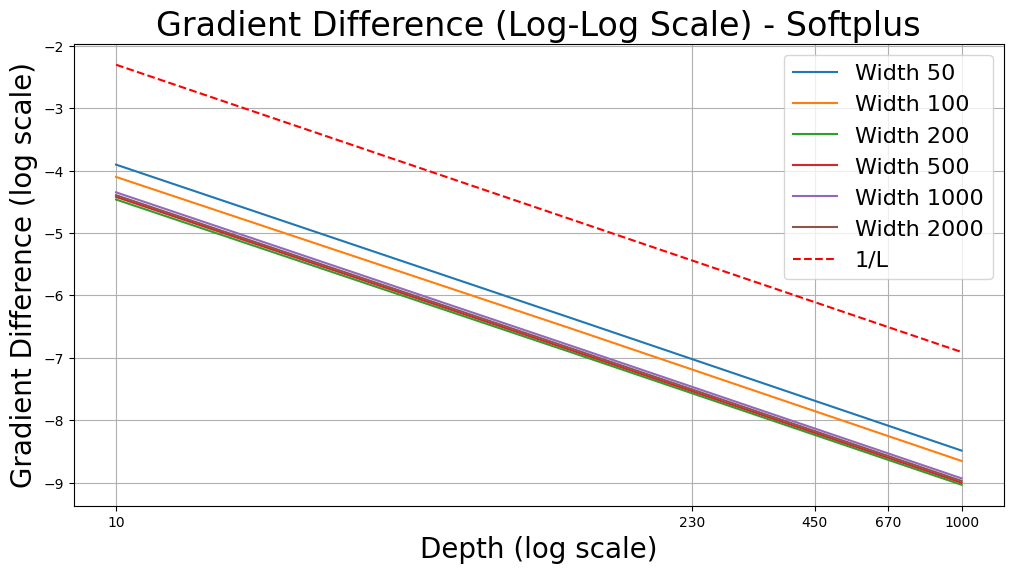}
        (b) 
    \end{minipage}
    \hfill
    \begin{minipage}{0.24\linewidth}
        \centering
        \includegraphics[width=\linewidth]{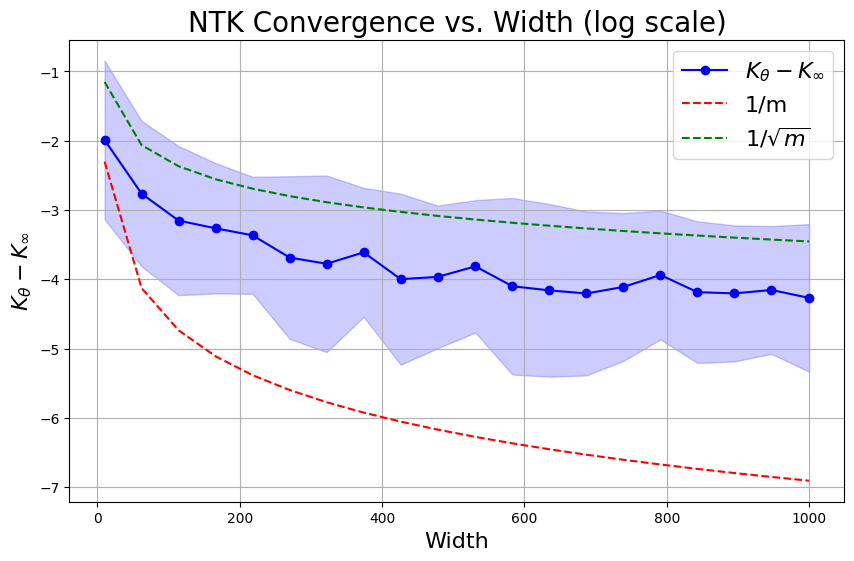}
        (c) 
    \end{minipage}
    \begin{minipage}{0.24\linewidth}
        \centering
        \includegraphics[width=\linewidth]{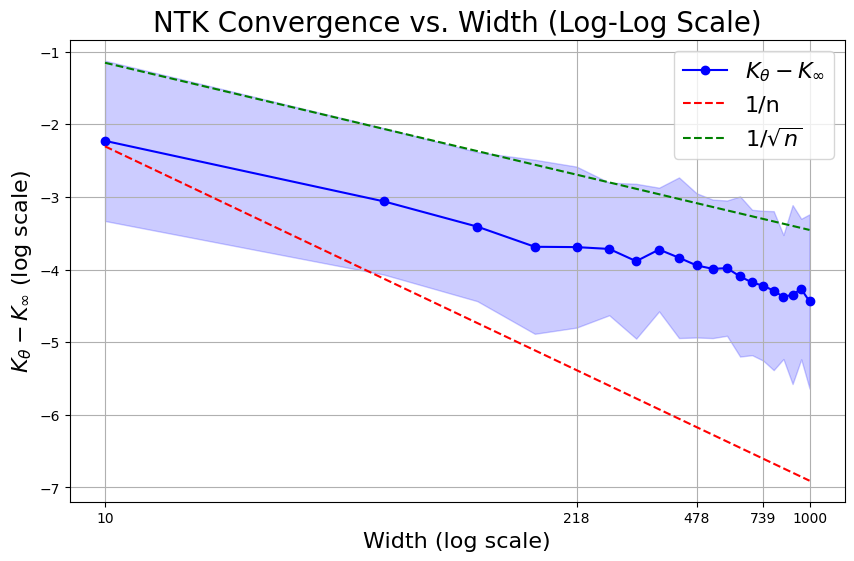}
        (d) 
    \end{minipage}%
    \caption{Analysis of Neural ODE output, gradient differences, and NTK convergence. (a) Output differences between Neural ODE and finite-depth ResNet across different widths using Softplus activation. (b) Gradient differences for Neural ODE and ResNet models under Softplus activation. (c) NTK convergence behavior across different widths, showing the NTK approximation converging to the limiting NTK as width increases. (d) NTK convergence behavior on a log-log scale, further emphasizing the rapid convergence at larger widths.}
    \label{fig:information propogation}
\end{figure}

In this section, we validate our theoretical findings through several experiments on Neural ODEs. We focus on the approximation errors between the continuous Neural ODE and its finite-depth ResNet approximations, the NTK behavior, and the empirical convergence properties of Neural ODEs under gradient descent. Further experimental details, including additional experiments on smooth vs. non-smooth activations and scaling for long-horizon stability, can be found in the appendix.

\textbf{Gradient and Output Approximation by Finite-Depth ResNet.}
As established in Section 3, Neural ODE outputs and gradients can be approximated by a finite-depth ResNet, with an error rate that decays as 1/L, where L is the depth of the ResNet. We empirically verify this by measuring the output and gradient differences between the continuous Neural ODE and its finite-depth approximation at initialization. Both the Neural ODE and ResNet were initialized with the same random weights and evaluated on the MNIST dataset, with ResNet depths L ranging from 10 to 1,000. We used Softplus activation to ensure smoothness. Figure~\ref{fig:information propogation}(a)-(b) demonstrates that the approximation error for both outputs and gradients decreases as 1/L, with convergence being \textit{uniform} across different widths, consistent with our theoretical results. These findings confirm that smooth activation functions lead to well-posed ODE solutions, with accurate approximations by finite-depth networks.

\textbf{NTK Approximation Error from Finite-Depth ResNet.}
As discussed in Section 4, the NTK of Neural ODEs can be approximated by the NTK of a finite-depth ResNet, with the approximation error decaying as $1/L$. This follows from the fact that the NTK is the inner product of gradients, and as shown in Proposition~\ref{prop:discretize-then-optimize} and Figure~\ref{fig:information propogation}(a)-(b), the gradient difference between Neural ODEs and finite-depth ResNets also decays as $1/L$. By applying the triangle inequality to the gradient differences, it is straightforward to conclude that the NTK approximation error inherits the same $1/L$ decay rate. Given this reasoning and the page limit, we skip this experiment, but refer readers to Section 4 and Theorem 3 for a detailed theoretical analysis.

\textbf{NTK Convergence to Deterministic Limiting NTK.}
In Theorem~\ref{thm:NTK for neural ODE}, we prove that as the width of Neural ODEs tends to infinity, the NTK converges to a \textit{deterministic} limiting NTK. While no theoretical convergence rate is provided, we conducted experiments to empirically investigate this convergence. We evaluated Neural ODE models with increasing widths, ranging from $10$ to $1,000$, and computed the NTK for each width. These NTKs were then compared to an approximate limiting NTK derived from random matrix theory. As shown in Figure~\ref{fig:information propogation}(c)-(d), the NTK converges to the limiting NTK as the width increases. The empirical convergence rate falls between $1/m$ and $1/\sqrt{m}$, with a tendency closer to $1/\sqrt{m}$ when plotted on a logarithmic scale. This indicates that Neural ODEs exhibit rapid convergence to their limiting NTK, validating the theoretical analysis.

\textbf{NTK’s SPD and Global Convergence.}
In Proposition~\ref{prop:SPD for NNGP} and Corollary~\ref{coro:SPD for NTK}, we established that the NTK of Neural ODEs is SPD when the activation function is nonlinear but not polynomial, which guarantees global convergence under gradient descent. Specifically, the NTK’s smallest eigenvalue remains positive, ensuring the well-conditioning of the model during training. Additionally, we showed that the model parameters remain close to their initial values during training, further supporting the global convergence claim.

To empirically verify these results, we conducted experiments with Neural ODE models of varying widths—500, 1000, 2000, and 4000—while monitoring both the NTK’s smallest eigenvalue and the distance of the model parameters from their initial values over 100 epochs. Softplus was used as the activation function to ensure smoothness and non-polynomial nonlinearity. At each epoch, we computed the smallest eigenvalue of the NTK and the Euclidean distance between the current and initial parameter values.

\textit{The Smallest Eigenvalue of NTK:} As shown in Figure~\ref{fig:ntk_spg}(a), we observed that as the width of the Neural ODE increases, the smallest eigenvalue of the NTK becomes larger. For widths greater than the number of training samples (i.e., which is $1000$ in our experiments), the smallest eigenvalue remains strictly positive throughout the training process, confirming the NTK’s strict positive definiteness and ensuring that the model is well-conditioned for gradient descent. However, for widths smaller than the number of training samples, the smallest eigenvalue becomes negative, indicating poor conditioning at smaller widths.

\textit{Parameter Distance:} The results also confirm that the parameter distance remains stable as training progresses, staying within a manageable bound of $\bigo{1}$, as shown in Figure~\ref{fig:ntk_spg}(b). As the width increases, the parameter distance grows, but the growth remains stable and does not deviate significantly. This supports the theoretical result that the parameters do not stray far from their initialization, ensuring stable training and global convergence.

\textit{Train and Test Loss:} Finally, as depicted in Figure~\ref{fig:ntk_spg}(c)-(d), we observed that larger widths lead to faster convergence of the gradient descent. Models with larger widths ($2000$ and $4000$) exhibited lower test losses and faster convergence rates compared to smaller widths ($500$ and $1000$). This behavior demonstrates that larger widths allow the model to generalize better and converge more efficiently during training.

\begin{figure}[ht]
    \centering
    \begin{minipage}{0.25\linewidth}
        \centering
        \includegraphics[width=\linewidth]{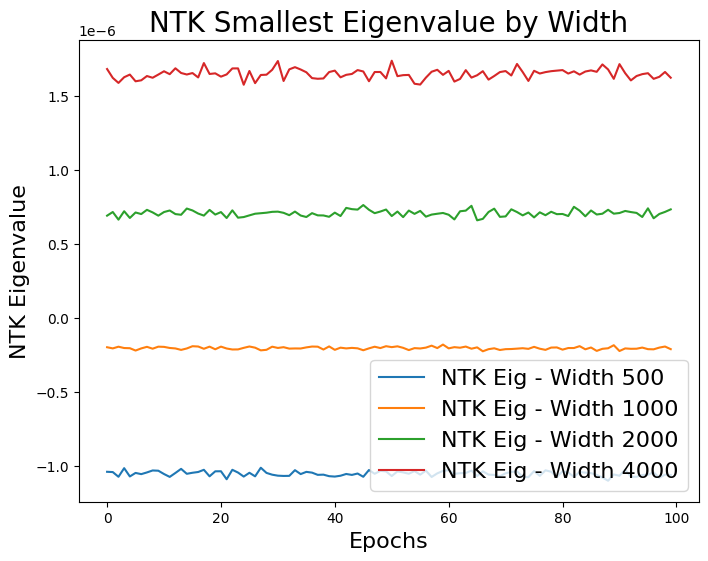}
        (a) 
    \end{minipage}%
    \hfill
    \begin{minipage}{0.25\linewidth}
        \centering
        \includegraphics[width=\linewidth]{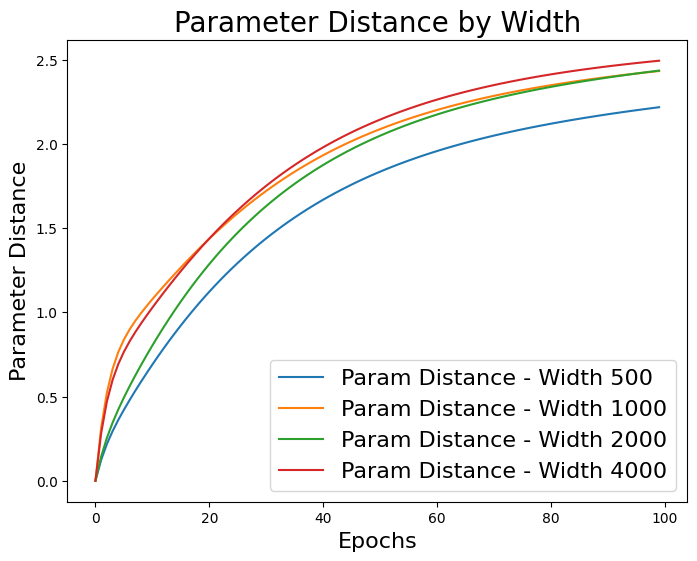}
        (b) 
    \end{minipage}
    \hfill
    \begin{minipage}{0.24\linewidth}
        \centering
        \includegraphics[width=\linewidth]{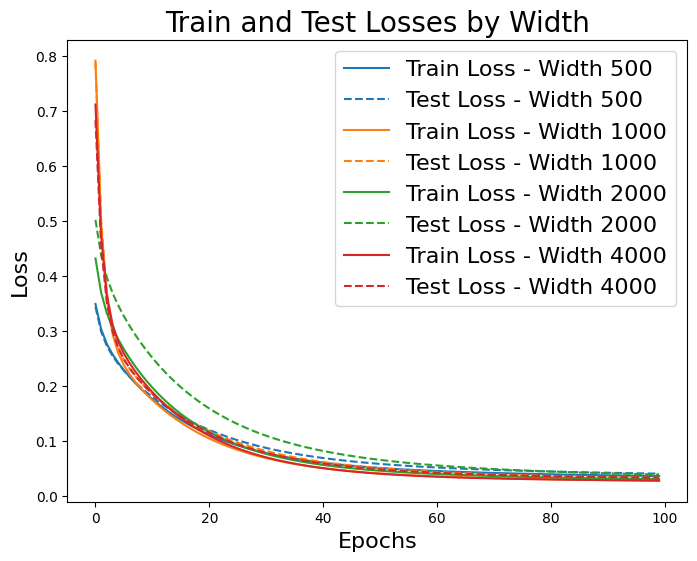}
        (c) 
    \end{minipage}
    \begin{minipage}{0.24\linewidth}
        \centering
        \includegraphics[width=\linewidth]{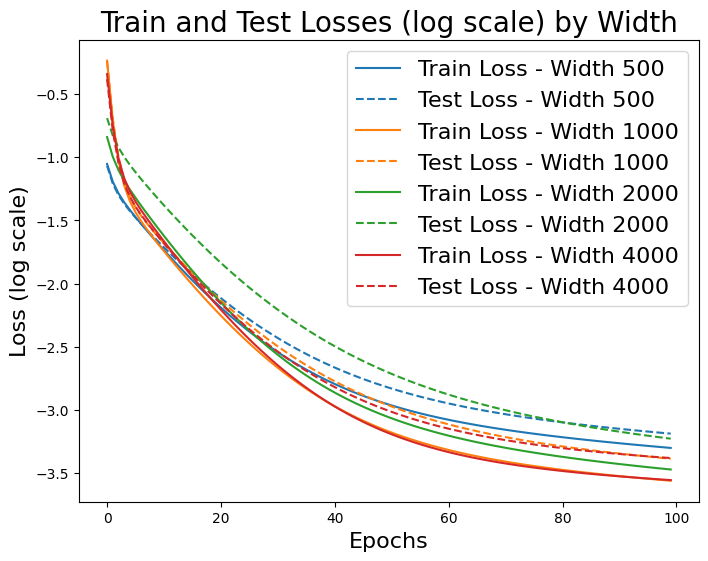}
        (d) 
    \end{minipage}%
    \caption{Empirical results of Neural ODEs with varying widths: (a) NTK smallest eigenvalue grows and stabilizes as the width increases, with negative values for widths below the training size. (b) Parameter distances stay stable and bounded within $\bigo{1}$. (c) Linear-scale train and test losses show faster convergence for larger widths. (d) Log-scale losses further confirm improved generalization for wider models.}
    \label{fig:ntk_spg}
\end{figure}

\textbf{Additional Experimental Results.} In the appendix, we present supplementary experiments that validate and extend our findings. Without proper scaling (e.g., $\sigma_w \sim 1/T$), Neural ODEs exhibit early-stage damping during training over long-time horizons (see Figure~\ref{fig:Scaling for Long-Time Horizons}). Smooth activations like Softplus converge faster than non-smooth ones like ReLU, likely due to more accurate gradient computation (see Figure~\ref{fig:smooth_vs_nonsmooth}). Additionally, while non-polynomial nonlinearity is sufficient for an SPD NTK, our experiments show that quadratic activations also yield SPD NTKs, though with slower convergence (see Figure~\ref{fig:polynomial_activations}). These results highlight the importance of activation functions and model design for Neural ODE performance. We also include convergence analysis on diverse datasets, such as CIFAR-10, AG News, and Daily Climate, as well as additional activations like GELU, further demonstrating the generalizability of our findings.
\vspace{-1em}
\section{Conclusions}
\vspace{-1em}
In this paper, we examined the crucial role of activation functions in the training dynamics of Neural ODEs. Our findings demonstrate that the choice of activation function  significantly impacts the dynamics,  stability, and global convergence of the Neural ODE models under gradient descent. Specifically, we found that using smooth activations like Softplus ensures that the forward and backward dynamics in Neural ODEs are well-posed, allowing for accurate approximation by finite-depth ResNets. As a result, the NTK of Neural ODEs converges to a deterministic limiting NTK that governs the model’s training dynamics. Additionally, we demonstrated that when using nonlinear but non-polynomial activations, the NTK remains SPD, ensuring well-conditioned training and global convergence. Through extensive experiments, we verified that suitable  activation functions, Neural ODEs exhibit stable parameter behavior, rapid NTK convergence, and faster optimization, particularly at larger widths. These findings highlight the importance of selecting activation functions with appropriate smoothness and nonlinearity to ensure the robustness and scalability of Neural ODEs, establishing them as a powerful approach for continuous-time deep learning.

\clearpage
\bibliography{iclr2025_conference}
\bibliographystyle{iclr2025_conference}

\clearpage
\appendix
\section{Useful Mathematical Results}\label{app:Useful Mathematical Results}
\begin{theorem}[Bai-Yin law, see \cite{vershynin2010introduction,bai2008limit}]\label{thm:Bai-Yin law}
    Let $A$ be an $N\times n$ random matrix whose entries and independent copies of a random variable with zero mean, unit variance, and finite fourth moment. Suppose that $N$ and $n$ grow to infinity while the aspect ratio $n/N$ converges to a constant in $[0,1]$. Then
    \begin{align}
        s_{\min}(A) = \sqrt{N}-\sqrt{n} + \littleo{\sqrt{n}},
        \quad
        s_{\max}(A) = \sqrt{N} + \sqrt{n} + \littleo{\sqrt{n}}, \quad\text{almost surely}.
    \end{align}
\end{theorem}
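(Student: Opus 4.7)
The plan is to establish the Bai--Yin law via the classical three-step programme: truncation/recentering, the method of moments for the largest singular value, and a resolvent/negative-moment argument for the smallest singular value.

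\textbf{Step 1: reduction to bounded, centered, unit-variance entries.} I would first truncate each entry at level $\delta_N\sqrt{N}$ with $\delta_N\to 0$ slowly, replacing $A$ by a matrix $\tilde A$ whose entries are $A_{ij}\mathbf 1_{\{|A_{ij}|\le \delta_N\sqrt{N}\}}$, then recenter and rescale to unit variance. Using only the finite fourth moment, a direct computation shows $\E\|A-\tilde A\|_F^2 = o(N)$ and in fact the operator norm of the difference is $o(\sqrt{N})$ almost surely (via a Bernstein-type bound for the bounded part of $A-\tilde A$ and a fourth-moment tail bound for the unbounded part). Weyl's inequality then lets us transfer the sharp edge asymptotics from the bounded model to the original $A$. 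The recentering removes a rank-one perturbation of operator norm at most $\sqrt{nN}\,|\E A_{11}\mathbf 1_{\{|A_{11}|>\delta_N\sqrt N\}}| = o(\sqrt N)$, again negligible.

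\textbf{Step 2: upper bound on $s_{\max}(A)$ by the method of moments.} With $W:=N^{-1}AA^\top$, I would expand
\[
\E\,\Tr(W^k) \;=\; N^{-k}\!\!\!\sum_{\substack{i_1,\dots,i_k\\ j_1,\dots,j_k}} \E\bigl[A_{i_1 j_1}A_{i_2 j_1}A_{i_2 j_2}\cdots A_{i_1 j_k}\bigr].
\]
Nonzero terms come from closed walks on the complete bipartite graph $K_{N,n}$ in which every edge is traversed an even number of times. With bounded entries, the dominant contribution as $N,n\to\infty$ with $n/N\to c$ and $k=k_N\to\infty$ sufficiently slowly comes from planar (non-crossing) pair partitions, giving
\[
\E\,\Tr(W^k) \;=\; n\,\bigl(1+\sqrt c\bigr)^{2k}\bigl(1+o(1)\bigr),
\]
i.e.\ the $2k$-th moment of the Marchenko--Pastur law with parameter $c$. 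Markov's inequality yields
\[
\Pr\!\Bigl(s_{\max}(A)^2 \ge N(1+\sqrt c)^2(1+\varepsilon)\Bigr) \le n\,(1+\varepsilon)^{-k_N},
\]
summable in $N$ for $k_N\gg\log N$, so Borel--Cantelli closes the $\limsup$. The matching lower bound on $s_{\max}$ is immediate from bulk Marchenko--Pastur convergence, proved by the same moment computation with fixed $k$.

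\textbf{Step 3: sharp lower bound on $s_{\min}(A)$.} This is the delicate part, because positive moments see only the top of the spectrum. I would work with the Stieltjes transform $m_N(z)=n^{-1}\Tr(W-zI)^{-1}$ for $z<(1-\sqrt c)^2$, show (again by the moment method applied to resolvent expansions, together with concentration of $m_N$ around its mean via the Burkholder/martingale argument along the rows) that $m_N(z)\to m_{\mathrm{MP}}(z)$ almost surely on compacts disjoint from the Marchenko--Pastur support, and then exploit the fact that $m_{\mathrm{MP}}$ extends continuously to the left edge. Equivalently, one can bound negative moments $\E\Tr(W+\varepsilon I)^{-k_N}$ and apply Markov as above, relying on the finite fourth moment to rule out an anomalously small eigenvalue falling below $(1-\sqrt c)^2$ by a nonvanishing amount.

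\textbf{Main obstacle.} Step 3 is the genuine hurdle and is the reason the finite fourth moment hypothesis is needed: for the largest singular value a second moment suffices (up to log factors), and the combinatorics of non-crossing pair partitions control everything neatly, but the smallest singular value cannot be captured by finitely many positive moments, and a single outlier eigenvalue near $0$ would ruin the bound. Controlling resolvent moments near the left edge, and propagating almost-sure convergence from the Stieltjes transform to the location of the smallest eigenvalue, is where the original Bai--Yin argument does its real work and where I would spend most of the effort.
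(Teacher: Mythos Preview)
The paper does not prove this theorem at all: it is stated in the appendix under ``Useful Mathematical Results'' with explicit attribution (``see \cite{vershynin2010introduction,bai2008limit}'') and is used purely as a black box, e.g.\ to assert $\norm{W}\sim\sqrt{n}$ almost surely when establishing Lipschitz constants for the forward dynamics. There is therefore no proof in the paper to compare your proposal against.

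That said, your three-step outline (truncation to bounded entries, moment method for the top edge, resolvent/negative-moment control for the bottom edge) is indeed the classical Bai--Yin programme as carried out in the cited references, and your identification of Step~3 as the genuine obstacle --- and the reason for the fourth-moment hypothesis --- is accurate. If the intent of the exercise was to supply the missing proof, your sketch is on the right track; if the intent was to match what the paper does, the correct answer is simply that the paper invokes the result without proof.
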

\begin{theorem}[Picard-Lindelöf theorem]\label{thm:Picard-Lindelöf theorem}
	Let $f:[a,b]\times \reals^{n}\rightarrow\reals^{n}$ be a function. If $f$ is continuous in the first argument and Lipschitz continuous with coefficient $L$ in the second argument, then the ODE
	\begin{align}
		\bar{x}(t) = f(t, x(t)), \label{app eq:ode}
	\end{align}
	possesses a unique solution on $[a-\varepsilon,a+\varepsilon]$ for each possible initial value $x(a) = x_0\in \reals^{n}$, where $\varepsilon < 1/L$.
\end{theorem}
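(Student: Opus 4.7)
The plan is to prove Picard-Lindelöf by reducing the initial value problem to a fixed-point equation and then applying Banach's contraction mapping theorem. First, I would recast the ODE as an integral equation: a continuous $x : [a-\varepsilon, a+\varepsilon] \to \reals^n$ satisfies $\dot x(t) = f(t, x(t))$ with $x(a) = x_0$ if and only if
\begin{align*}
x(t) = x_0 + \int_a^t f(s, x(s))\,ds.
\end{align*}
The forward implication follows from the fundamental theorem of calculus; the converse uses continuity of $s \mapsto f(s, x(s))$, which holds because $f$ is continuous in its first argument and Lipschitz (hence continuous) in its second, composed with the continuous path $x$.

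Next, I would introduce the Picard operator $T : X \to X$ on the Banach space $X := C([a-\varepsilon, a+\varepsilon], \reals^n)$ with supremum norm $\|\cdot\|_{\infty}$, defined by $(Tx)(t) := x_0 + \int_a^t f(s, x(s))\,ds$. The goal is to verify the hypotheses of Banach's fixed-point theorem on a closed ball $B_r(\tilde x_0) \subset X$ centered at the constant function $\tilde x_0 \equiv x_0$. The self-map property $T(B_r) \subset B_r$ follows from the bound $\|(Tx)(t) - x_0\| \leq M \varepsilon$, where $M := \sup\{\|f(s, y)\| : s \in [a-\varepsilon,a+\varepsilon],\, y \in B_r\}$, and is achieved by taking $\varepsilon$ small enough that $M\varepsilon \leq r$. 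The contraction estimate follows directly from the Lipschitz assumption: for any $x, y \in B_r$ and any $t$,
\begin{align*}
\|(Tx)(t) - (Ty)(t)\| \leq \int_a^t L \|x(s)-y(s)\|\,ds \leq L\varepsilon \|x - y\|_{\infty},
\end{align*}
so $\|Tx - Ty\|_{\infty} \leq L\varepsilon \|x-y\|_{\infty}$, and since $L\varepsilon < 1$ by hypothesis, $T$ is a strict contraction. Banach's theorem then supplies a unique fixed point $x^\star \in B_r$, which by the equivalence above is the unique solution of the ODE on $[a-\varepsilon, a+\varepsilon]$.

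The main obstacle will be handling two competing smallness constraints on $\varepsilon$: the contraction condition $L\varepsilon < 1$ (explicit in the statement) and the invariance condition $M\varepsilon \leq r$ (implicit, needed so $T$ actually maps the closed ball to itself). A clean way to sidestep the second constraint is to replace the supremum norm by a Bielecki-type weighted norm $\|x\|_B := \sup_t e^{-\alpha(t-a)} \|x(t)\|$ with $\alpha > L$, under which $T$ becomes a contraction on the entire Banach space with ratio $L/\alpha < 1$ and no ball restriction is required; this trades the local boundedness argument for a choice of weight. Either route yields existence and uniqueness on $[a-\varepsilon, a+\varepsilon]$ as stated.
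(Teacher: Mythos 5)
The paper cites Picard--Lindel\"of as a classical background result (Theorem~\ref{thm:Picard-Lindelöf theorem} in the appendix of ``Useful Mathematical Results'') and does not supply a proof, so there is no in-paper argument to compare against. Your proposal is the standard contraction-mapping proof and it is correct in substance: recast the IVP as the integral fixed-point equation, define the Picard operator $T$ on $C([a-\varepsilon,a+\varepsilon],\reals^n)$, obtain the contraction estimate $\|Tx - Ty\|_\infty \leq L\varepsilon\,\|x-y\|_\infty$ from the Lipschitz hypothesis, and invoke Banach.

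One simplification is worth flagging, because it removes the ``main obstacle'' you identify. In this formulation $f$ is assumed Lipschitz in its second argument on \emph{all} of $\reals^n$ (not merely on a neighborhood of $x_0$), so the Picard operator $T$ is well defined and a contraction on the \emph{entire} Banach space $X = C([a-\varepsilon,a+\varepsilon],\reals^n)$ under the sup norm, not merely on a closed ball. Indeed $\|f(s,x(s))\| \leq \|f(s,x_0)\| + L\|x(s)-x_0\|$ is bounded for any continuous $x$, so $Tx \in X$, and your contraction estimate already holds for arbitrary $x,y \in X$. Consequently the self-map constraint $M\varepsilon \leq r$ never arises, and neither the ball restriction nor the Bielecki weighted norm is needed: Banach's theorem applies directly on $X$ with the single hypothesis $L\varepsilon < 1$, which is exactly what the statement asserts. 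The ball argument you sketch would be the right move if $f$ were only \emph{locally} Lipschitz (Lipschitz on a tube around $x_0$), and the Bielecki norm would be the right move if you wanted existence on all of $[a,b]$ without the $\varepsilon < 1/L$ restriction; but with the hypotheses as stated, neither device is required.
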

\begin{theorem}[Peano Existence Theorem]
    If the function $f$ is continuous in a neighborhood of $(t_0,x_0)$, then the ODE \eqref{app eq:ode} has at least one solution defined in a neighborhood of $t_0$.
\end{theorem}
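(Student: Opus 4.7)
The plan is to prove Peano's theorem via the classical Cauchy--Euler polygon method together with the Arzelà--Ascoli theorem, since we have no Lipschitz hypothesis and therefore cannot invoke the Banach fixed-point argument used for Picard--Lindelöf. First I would localize: pick $a, b > 0$ so that the closed rectangle $R = [t_0-a, t_0+a] \times \overline{B(x_0, b)}$ lies in the neighborhood where $f$ is continuous. By compactness of $R$, $f$ is bounded on $R$, say $\norm{f} \leq M$, and uniformly continuous there. Set $\alpha = \min(a, b/M)$; the goal is to produce a solution on $I := [t_0-\alpha, t_0+\alpha]$.

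Next I would construct an approximating sequence by Euler polygons. For each integer $n \geq 1$, partition $I$ into $n$ equal subintervals and define $x_n : I \to \reals^{n}$ recursively as the piecewise affine function with $x_n(t_0)=x_0$ and slope $f(t_k, x_n(t_k))$ on each subinterval $[t_k, t_{k+1}]$. The choice of $\alpha$ ensures by induction that $(t, x_n(t)) \in R$ for all $t \in I$, so the construction does not escape the rectangle. Each $x_n$ is $M$-Lipschitz on $I$, hence the family $\{x_n\}$ is uniformly bounded and equicontinuous. Arzelà--Ascoli then yields a subsequence, still denoted $x_n$, converging uniformly on $I$ to some continuous function $x$.

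The main technical step, and the real obstacle, is to pass the ODE to the limit. I would rewrite each polygon as
\begin{align*}
    x_n(t) = x_0 + \int_{t_0}^{t} f(s, x_n(\tau_n(s))) \, ds + E_n(t),
\end{align*}
where $\tau_n(s)$ is the left endpoint of the subinterval containing $s$ and $E_n$ collects the discrepancy between the true slope $f(s, x_n(\tau_n(s)))$ and the constant slope used on each subinterval; one checks $E_n \equiv 0$ by the very definition of the polygon. The hard part is controlling $f(s, x_n(\tau_n(s))) - f(s, x(s))$. Using uniform continuity of $f$ on $R$, together with the uniform Lipschitz bound $\norm{x_n(s) - x_n(\tau_n(s))} \leq M|s - \tau_n(s)| \leq M\alpha/n$ and the uniform convergence $x_n \to x$, the composed integrand converges uniformly to $f(s, x(s))$ on $I$. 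Passing the limit under the integral gives
\begin{align*}
    x(t) = x_0 + \int_{t_0}^{t} f(s, x(s)) \, ds, \quad t \in I,
\end{align*}
so $x$ is $C^{1}$ and solves the ODE with $x(t_0)=x_0$.

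Finally I would note that uniqueness of the extracted subsequence is not claimed, in line with the weaker conclusion of Peano's theorem (only existence). The only subtlety I expect beyond the compactness argument is the care needed in the vector-valued Arzelà--Ascoli application and in verifying that the rectangle is never exited; both are routine once $\alpha \leq b/M$ is enforced.
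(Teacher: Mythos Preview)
Your proof is correct and follows the classical Euler polygon plus Arzelà--Ascoli route to Peano's theorem. However, the paper does not actually prove this statement: it is listed in the appendix of ``Useful Mathematical Results'' as a standard fact without proof, alongside Picard--Lindelöf, Gronwall, and the Moore--Osgood theorem. So there is nothing in the paper to compare your argument against; you have supplied a full proof where the paper simply cites the result.

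Two minor cosmetic points. First, you use $n$ both for the ambient dimension (the ODE lives in $\reals^n$ per the paper's Picard--Lindelöf statement) and for the Euler polygon index; in a write-up you would want to rename one of them. Second, introducing the error term $E_n$ only to immediately declare $E_n\equiv 0$ is a detour --- the integral identity for the polygon holds exactly by construction, so you can simply write $x_n(t)=x_0+\int_{t_0}^{t} f(\tau_n(s), x_n(\tau_n(s)))\,ds$ and proceed. Neither of these affects the validity of the argument.
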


\begin{theorem}[Convergence for Euler's Method]\label{thm:Euler method}
    Let $x_n$ be the result of applying Euler's method to the ordinary differential equation defined as follows
    \begin{align}
        \dot{x} = f(x,t),
        \quad t\in[t_0,t_1],
        \quad\text{and}\quad
        x(0)=x_0.
    \end{align}
    If the solution $x$ has a bounded second derivative and $f$ is $L$-Lipschitz continuous in $x$, then the global truncation error is bounded by
    \begin{align}
        \norm{x(t_n)-x_n}\leq \frac{hM}{2L}(e^{L(t_n-t_0)} - 1),
    \end{align}
    where $h$ is the time step, and $M$ is an upper bound on the second derivative of $x$ on the given interval.
\end{theorem}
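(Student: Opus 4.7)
The plan is to combine a local (one-step) Taylor expansion of the exact solution with the Lipschitz continuity of $f$ to produce a linear recurrence for the global error, and then solve that recurrence using the standard inequality $(1+hL)^n \leq e^{hLn}$.

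First I would set up notation. Let $t_n = t_0 + nh$ and let $x_n$ be the Euler iterates, defined by $x_{n+1} = x_n + h f(x_n, t_n)$ with $x_0 = x(t_0)$. Denote the global error by $e_n := \|x(t_n) - x_n\|$, so $e_0 = 0$. Because the exact solution $x(\cdot)$ has a bounded second derivative with $\|x''(t)\|\leq M$ on $[t_0,t_1]$, Taylor's theorem with Lagrange remainder gives, for each step, $x(t_{n+1}) = x(t_n) + h\,\dot{x}(t_n) + \tfrac{h^2}{2}x''(\xi_n)$ for some $\xi_n \in (t_n, t_{n+1})$. Substituting the ODE $\dot{x}(t_n) = f(x(t_n), t_n)$, this yields $x(t_{n+1}) = x(t_n) + h f(x(t_n), t_n) + R_n$ with $\|R_n\|\leq \tfrac{h^2 M}{2}$.

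Next I would subtract the Euler update from this exact relation to obtain $x(t_{n+1}) - x_{n+1} = \bigl(x(t_n) - x_n\bigr) + h\bigl[f(x(t_n), t_n) - f(x_n, t_n)\bigr] + R_n$. Taking norms and applying the Lipschitz hypothesis $\|f(x(t_n), t_n) - f(x_n, t_n)\|\leq L\, e_n$, together with the triangle inequality, produces the linear recurrence
\begin{equation*}
e_{n+1} \leq (1 + hL)\, e_n + \frac{h^2 M}{2}.
\end{equation*}

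Finally I would solve this recurrence with $e_0 = 0$ by iteration, which gives $e_n \leq \tfrac{h^2 M}{2}\sum_{k=0}^{n-1}(1+hL)^k = \tfrac{h^2 M}{2}\cdot\tfrac{(1+hL)^n - 1}{hL} = \tfrac{hM}{2L}\bigl((1+hL)^n - 1\bigr)$. Applying the elementary bound $1 + hL \leq e^{hL}$ (valid since $hL \geq 0$) yields $(1+hL)^n \leq e^{hLn} = e^{L(t_n - t_0)}$, and therefore $e_n \leq \tfrac{hM}{2L}\bigl(e^{L(t_n - t_0)} - 1\bigr)$, which is the claimed bound. The only mild obstacle is the discrete Grönwall-type summation, but since the recurrence is linear with constant coefficients it admits an explicit closed form, so no further work is required; the proof is essentially self-contained once the Taylor remainder and Lipschitz estimates are in place.
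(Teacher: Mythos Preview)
The paper does not actually prove this theorem; it is listed in Appendix~A (``Useful Mathematical Results'') as a classical fact, stated without proof and then invoked elsewhere. Your argument is the standard textbook proof: local Taylor expansion to get the one-step truncation error $\tfrac{h^2 M}{2}$, Lipschitz continuity to propagate the error via the recurrence $e_{n+1}\le(1+hL)e_n+\tfrac{h^2M}{2}$, and the discrete Gr\"onwall-type summation combined with $(1+hL)^n\le e^{hLn}$. It is correct and complete.
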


\begin{lemma}[Gronwall’s inequality]
    Let $I=[a,b]$ for an interval such that $a < b < \infty$.
	Let $u$, $\alpha$, $\beta$ be real-valued continuous functions that satisfies the integral inequality
	\begin{align}
		u(t)\leq \alpha(t) + \int_0^t\beta(s) u(s) ds,\quad\forall t\in I.
	\end{align}
	Then 
	\begin{align}
		u(t)\leq \alpha(t) + \int_0^t\alpha(s)\beta(s)\exp\left(\int_s^t \beta(r) dr\right), \quad\forall t\in I.
	\end{align}
	If, in addition, $\alpha(t)$ is non-decreasing, then
	\begin{align*}
		u(t)\leq \alpha(t)\exp\left(\int_0^t\beta(s) ds\right),\quad \forall t\in I.
	\end{align*}
\end{lemma}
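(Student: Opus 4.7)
The plan is to use the standard integrating-factor trick applied to the auxiliary function that collects the cumulative integral term. First I would define
\begin{align*}
v(t) := \int_{a}^{t} \beta(s)\, u(s)\, ds,
\end{align*}
which is absolutely continuous with $v(a) = 0$ and $v'(t) = \beta(t)\, u(t)$ almost everywhere. Substituting the hypothesized integral inequality $u(t) \leq \alpha(t) + v(t)$ into this derivative yields (under the customary sign assumption $\beta \geq 0$, which I will flag at the outset since the statement as written is silent about it) the linear differential inequality
\begin{align*}
v'(t) - \beta(t)\, v(t) \;\leq\; \alpha(t)\, \beta(t).
\end{align*}

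Next I would multiply by the integrating factor $\mu(t) = \exp\!\bigl(-\int_{a}^{t} \beta(r)\, dr\bigr)$, which is strictly positive, so the inequality is preserved and the left-hand side collapses to an exact derivative:
\begin{align*}
\frac{d}{dt}\!\Bigl[\mu(t)\, v(t)\Bigr] \;\leq\; \mu(t)\, \alpha(t)\, \beta(t).
\end{align*}
Integrating from $a$ to $t$ and using $v(a) = 0$, then dividing through by $\mu(t)$ and using $\mu(t)/\mu(s) = \exp\!\bigl(\int_{s}^{t}\beta(r)\,dr\bigr)$ for $s\leq t$, gives
\begin{align*}
v(t) \;\leq\; \int_{a}^{t} \alpha(s)\, \beta(s)\, \exp\!\Bigl(\int_{s}^{t} \beta(r)\, dr\Bigr) ds.
\end{align*}
Adding $\alpha(t)$ to both sides and using $u(t) \leq \alpha(t) + v(t)$ delivers the first conclusion.

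For the second conclusion, assuming $\alpha$ is non-decreasing, I would bound $\alpha(s) \leq \alpha(t)$ inside the integral (again using $\beta \geq 0$ so the direction of inequality is preserved) and then observe the telescoping identity
\begin{align*}
\beta(s)\, \exp\!\Bigl(\int_{s}^{t}\beta(r)\,dr\Bigr) \;=\; -\,\frac{d}{ds}\!\exp\!\Bigl(\int_{s}^{t}\beta(r)\,dr\Bigr),
\end{align*}
so that the remaining integral evaluates explicitly to $\exp\!\bigl(\int_{a}^{t}\beta(r)\,dr\bigr) - 1$. Combining with the leading $\alpha(t)$ term yields the exponential bound $u(t) \leq \alpha(t)\exp\!\bigl(\int_{a}^{t}\beta(s)\,ds\bigr)$.

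The main obstacle is not technical — this is a classical computation — but rather one of hypothesis hygiene: the statement omits the usual non-negativity assumption on $\beta$, and also writes the integrals from $0$ to $t$ rather than from $a$ to $t$. I would resolve this by noting both issues explicitly at the start of the proof (treating $a = 0$ as the intended convention and assuming $\beta \geq 0$, which is needed both to sign-control the differential inequality and to replace $\alpha(s)$ by $\alpha(t)$ in the monotone case). A secondary concern is the regularity of $v$: strictly speaking $v'(t) = \beta(t)u(t)$ holds almost everywhere since $u$, $\beta$ are only assumed continuous, but because $v$ is absolutely continuous the fundamental theorem of calculus still justifies the integration step without modification.
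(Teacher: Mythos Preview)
Your argument is the standard integrating-factor proof of Gr\"onwall's inequality and is correct, including your observations that the statement tacitly requires $\beta \geq 0$ and that the lower limit should be $a$ rather than $0$. The paper itself does not supply a proof of this lemma: it is listed in Appendix~A among classical reference results (Bai--Yin law, Picard--Lindel\"of, Euler's method convergence, Moore--Osgood) that are simply quoted without argument, so there is no paper-side proof to compare against.
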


\begin{theorem}[Moore-Osgood Theorem]\label{thm:Moore-Osgood Theorem}
    If $\lim\limits_{n\rightarrow\infty} a_{n,m} = b_m$ uniformly in $m$, and $\lim\limits_{m\rightarrow\infty} a_{n,m} = c_n$ for each $n$, then both $\lim_{m\rightarrow\infty}b_m$ and $\lim_{n\rightarrow\infty} c_n$ exists and are equal to the double limit, \ie,
    \begin{align}
        \lim_{m\rightarrow\infty}(\lim_{n\rightarrow\infty} a_{n,m})
        =\lim_{n\rightarrow\infty}(\lim_{m\rightarrow\infty} a_{n,m})
        =\lim_{\begin{smallmatrix}
n \to \infty \\ m \to \infty
\end{smallmatrix}} a_{n,m}
    \end{align}
\end{theorem}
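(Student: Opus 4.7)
}
The plan is a standard $\varepsilon/3$ argument built around Cauchy completeness of $\reals$. The key observation is that uniform convergence in $n$ lets us transfer Cauchy-ness from the double sequence $a_{n,m}$ (in the index $m$, for a fixed large $n$) to the sequence of limits $b_m$. The pointwise hypothesis $\lim_{m\to\infty} a_{n,m}=c_n$ is only needed to know that, for each fixed $n$, the sequence $(a_{n,m})_{m}$ is Cauchy in $m$.

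First I would show that $(b_m)_m$ converges. Fix $\varepsilon>0$. By the uniform convergence hypothesis, there exists $N$ such that $|a_{n,m}-b_m|<\varepsilon/3$ for all $n\geq N$ and all $m$. Fix any such $n=N$; since $\lim_{m\to\infty}a_{N,m}=c_N$ exists, $(a_{N,m})_m$ is Cauchy, so there exists $M$ with $|a_{N,m}-a_{N,m'}|<\varepsilon/3$ whenever $m,m'\geq M$. Inserting $a_{N,m}$ and $a_{N,m'}$ in the triangle inequality,
\begin{align*}
|b_m-b_{m'}|\leq |b_m-a_{N,m}|+|a_{N,m}-a_{N,m'}|+|a_{N,m'}-b_{m'}|<\varepsilon,
\end{align*}
so $(b_m)$ is Cauchy and converges to some $L\in\reals$.

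Next I would show $c_n\to L$ and that the double limit equals $L$. Given $\varepsilon>0$, pick $N$ so that $|a_{n,m}-b_m|<\varepsilon/3$ for all $n\geq N$ and all $m$, and pick $M$ so that $|b_m-L|<\varepsilon/3$ for $m\geq M$. For any fixed $n\geq N$, choose $m\geq M$ large enough (depending on $n$) so that $|a_{n,m}-c_n|<\varepsilon/3$; then
\begin{align*}
|c_n-L|\leq |c_n-a_{n,m}|+|a_{n,m}-b_m|+|b_m-L|<\varepsilon,
\end{align*}
which gives $c_n\to L$. The same two-term split $|a_{n,m}-L|\leq |a_{n,m}-b_m|+|b_m-L|$ yields $|a_{n,m}-L|<2\varepsilon/3$ for all $n\geq N$ and $m\geq M$, which is exactly the definition of the double limit $\lim_{n,m\to\infty}a_{n,m}=L$.

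The only subtle point, and the place where care is required, is the \emph{uniformity} of the convergence $a_{n,m}\to b_m$: without it, one cannot choose a single $N$ that works for all $m$ simultaneously, and then the bound $|b_m-a_{N,m}|<\varepsilon/3$ used both in the Cauchy estimate and in the convergence of $(c_n)$ would fail. Since both the forward-propagation (Lemma~\ref{lemma:depth uniform convergence}) and backward-propagation (Lemma~\ref{lemma:depth uniform convergence 2}) estimates in the paper are proved with explicit $O(L^{-1})$ rates independent of width $n$, this uniformity hypothesis is exactly what the applications in Theorems~\ref{thm:neural ode as gaussian process} and~\ref{thm:NTK for neural ODE} supply. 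Beyond this observation the argument is purely the $\varepsilon/3$ trick and Cauchy completeness, with no other substantive obstacle.
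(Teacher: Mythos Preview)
Your proof is correct and is the standard $\varepsilon/3$ argument for the Moore--Osgood theorem. Note, however, that the paper does not actually prove Theorem~\ref{thm:Moore-Osgood Theorem}: it is listed in Appendix~\ref{app:Useful Mathematical Results} alongside Gronwall's inequality, the Picard--Lindel\"of theorem, and the Bai--Yin law as a known auxiliary result quoted without proof, so there is no paper proof to compare against.
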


\section{Derivation of Gradient Through Adjoint Method}\label{app:derivation gradients}
In this section, we provide the detailed derivation of the adjoint method to compute the gradients. We first recall the forward ODE as follows:
\begin{align*}
    \dot{h}_t &= \frac{\sigma_w}{\sqrt{n}} W\phi(h_t), \\
    h_0 &= \frac{\sigma_u}{\sqrt{d}}Ux.
\end{align*}
To compute the gradients, we introduce the Lagrange function 
\begin{align*}
	\mathcal{L}(\tilde{h}, \theta, \lambda, \mu)
	=f_{\theta}(x) 
	+\int_0^{T} \lambda_t^{\top} \left(\frac{\sigma_w}{\sqrt{n}}W\phi(\tilde{h})- \dot{\tilde{h}} \right)dt
	+\mu^{\top} \left(\frac{\sigma_u}{\sqrt{d}}Ux- \tilde{h}(0)]\right)
\end{align*}
where $\tilde{h}$ is an extra variable that are independent from $\theta$ and $(\lambda,\mu)$ are Lagrangian multipliers. Observe that with $\tilde{h} = h$, we have 
\begin{align*}
	\mathcal{L}(h, \theta, \lambda, \mu) = f_{\theta}(x), \quad\forall (\lambda,\mu).
\end{align*}
Thus, the derivatives of $\mathcal{L}$ w.r.t. $\theta$ is equal to gradients of $f_{\theta}$ w.r.t. $\theta$. 

Now, we consider a variation $(\delta h, \delta\theta)$ at point $(h, \theta)$. Then the correspondence variation of $\mathcal{L}$ is given by
\begin{align*}
	\delta \mathcal{L}(h, \theta, \lambda, \mu)
	=&\frac{\sigma_v}{\sqrt{n}}(\delta v)^{\top} \phi(h(T))
	+ \frac{\sigma_v}{\sqrt{n}}v^{\top} \diag (\phi^{\prime}(h(T))) \delta h(T)
	+ \mu^{\top}\left[\frac{\sigma_u}{\sqrt{d}}(\delta U)x - \delta h(0) \right]\\
	&+\int_0^{T} \lambda^{\top} \left[ \frac{\sigma_w}{\sqrt{n}}(\delta W)  \phi(h) +\frac{\sigma_w}{\sqrt{n}} W \diag(\phi^{\prime}(h)) \delta h- \delta \dot{h}\right]dt\\
	=&\frac{\sigma_v}{\sqrt{n}}(\delta v)^{\top} \phi(h(T))
	+ \frac{\sigma_v}{\sqrt{n}}v^{\top} \diag (\phi^{\prime}(h(T))) \delta h(T)
	+ \mu^{\top}\left[\frac{\sigma_u}{\sqrt{d}}(\delta U)x - \delta h(0) \right]\\
	&-\lambda^{\top} \delta h |_{0}^{T}  +\int_0^{T} \dot{\lambda}^{\top} \delta h dt
	+\int_0^{T} \lambda^{\top} \left[ \frac{\sigma_w}{\sqrt{n}}(\delta W)  \phi(h) + \frac{\sigma_w}{\sqrt{n}} W \diag(\phi^{\prime}(h)) \delta h\right]dt\\
	=&	\frac{\sigma_v}{\sqrt{n}}(\delta v)^{\top} \phi(h(T))
	+ \left[\frac{\sigma_v}{\sqrt{n}}v^{\top} \diag (\phi^{\prime}(h(T)))  - \lambda(T)^{T}\right]\delta h(T)\\
	&+ \mu^{\top}\left[\frac{\sigma_u}{\sqrt{d}}(\delta U)x\right]
	+\left(\lambda(0)-\mu\right)^{\top}\delta h(0)\\
	&+\int_0^{T} \left[\dot{\lambda}^{\top}   + \frac{\sigma_w}{\sqrt{n}} \lambda^{\top} W\diag(\phi^{\prime}(h)) \right]\delta h dt
	+\int_0^{T} \frac{\sigma_w}{\sqrt{n}} \lambda^{\top}  (\delta W)  \phi(h)  dt,
\end{align*}
where we use integration by parts in the second equality. Then we choose $(\lambda, \mu) $ such that 
\begin{align*}
	\mu = & \lambda(0),\\
	\lambda(T) = & \frac{\sigma_v}{\sqrt{n}} \diag(\phi^\prime(h(T))) v,\\
	\dot{\lambda}(t) = & -\frac{\sigma_w}{\sqrt{n}} \diag(\phi^{\prime}(h(t))) W^{\top}\lambda(t).
\end{align*}
Then the variation of $\mathcal{L}$ becomes
\begin{align*}
	\delta\mathcal{L}(h, \theta, \lambda,\mu)
	= \frac{\sigma_v}{\sqrt{n}} \phi(h(T))^{\top} \delta v
	+\frac{\sigma_u}{\sqrt{d}}\mu^{\top} (\delta U) x
	+\int_0^{T} \frac{\sigma_w}{\sqrt{n}} \lambda^{\top} (\delta W) \phi(h) dt.
\end{align*}
Thus, we obtain the gradients of $f_{\theta}$ as 
\begin{align*}
	\nabla_v f_{\theta}(x) =& \frac{\sigma_v}{\sqrt{n}} \phi(h(T))\\
	\nabla_W f_{\theta}(x) = & \int_0^{T} \frac{\sigma_w}{\sqrt{n}} \lambda_t \phi(h_t)^{\top} dt\\
	\nabla_U f_{\theta}(x) = & \frac{\sigma_u}{\sqrt{d}}\lambda(0) x^{\top}.
\end{align*}

\section{Well Posedness of Neural ODEs and its Gradients}\label{app:well posedness of neural odes}
To show the existence and uniqueness, we first recall the Picard-Lindelöf theorem as follows.

\subsection{Forward ODE is well-posed}
As we assume the activation function is Lipschitz continuous, we can immediately obtain the local result that the hidden state $h_t$ exists near the initial time.
\begin{lemma}[Local solution]\label{lemma: local solution for forward ode}
    If the activation function $\phi$ is $L_1$-Lipschitz continuous, then $h_t$ uniquely exists for all $\abs{t}\leq \varepsilon$, where $\varepsilon < 1/\sigma_wL_1$.
\end{lemma}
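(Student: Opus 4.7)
The plan is a direct application of the Picard--Lindelöf theorem (Theorem~\ref{thm:Picard-Lindelöf theorem}) to the autonomous ODE
\begin{align*}
    \dot{h}_t = F(h_t), \qquad F(h) := \frac{\sigma_w}{\sqrt{n}} W \phi(h),
\end{align*}
with initial condition $h_0 = \sigma_u U x/\sqrt{d}$. Since $F$ does not depend on $t$, continuity in the first argument is immediate, so the only nontrivial hypothesis to verify is Lipschitz continuity of $F$ in its second argument.

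For any $h_1, h_2 \in \mathbb{R}^{n}$, I would estimate
\begin{align*}
    \|F(h_1)-F(h_2)\|
    = \frac{\sigma_w}{\sqrt{n}}\, \|W(\phi(h_1)-\phi(h_2))\|
    \le \frac{\sigma_w \|W\|_{op}}{\sqrt{n}} \,\|\phi(h_1)-\phi(h_2)\|
    \le \frac{\sigma_w L_1 \|W\|_{op}}{\sqrt{n}}\,\|h_1-h_2\|,
\end{align*}
where the first inequality uses the definition of the operator norm and the second uses the $L_1$-Lipschitz continuity of $\phi$ (componentwise, which is preserved by the $\ell^2$ norm). Hence $F$ is globally Lipschitz in $h$ with constant $L := \sigma_w L_1 \|W\|_{op}/\sqrt{n}$.

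Since $W$ has i.i.d.\ $\mathcal{N}(0,1)$ entries, the Bai--Yin law (Theorem~\ref{thm:Bai-Yin law}) ensures $\|W\|_{op}/\sqrt{n}$ is almost surely bounded (in fact $\to 2$), so $L$ is almost surely finite; in particular, absorbing the $\mathcal{O}(1)$ factor $\|W\|_{op}/\sqrt{n}$ into the constants, Picard--Lindelöf guarantees a unique solution on the interval $[-\varepsilon,\varepsilon]$ for every $\varepsilon < 1/L$, which yields the stated neighborhood.

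The only subtlety I anticipate is bookkeeping on the Lipschitz constant: the statement writes $\varepsilon < 1/(\sigma_w L_1)$, which implicitly treats the random factor $\|W\|_{op}/\sqrt{n}$ as an absolute $\mathcal{O}(1)$ constant. This is harmless almost surely thanks to Bai--Yin, but it is the one place where care is needed, since the conclusion holds ``almost surely over random initialization'' rather than deterministically. Everything else is a textbook application of Picard--Lindelöf.
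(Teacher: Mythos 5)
Your proof is essentially identical to the paper's: both verify that $h\mapsto \frac{\sigma_w}{\sqrt{n}}W\phi(h)$ is Lipschitz by combining the Bai--Yin bound on $\|W\|$ with the $L_1$-Lipschitz continuity of $\phi$, and then invoke Picard--Lindelöf. You are in fact slightly more careful than the paper in flagging that the factor $\|W\|_{\mathrm{op}}/\sqrt{n}$ is an almost-sure $\mathcal{O}(1)$ constant (limiting to $2$) rather than exactly $1$, which is the same implicit absorption the paper makes.
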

\begin{proof}
    By using Bai-Yin law \ref{thm:Bai-Yin law}, we know $\norm{W}\sim \sqrt{n}$ a.s. Accordingly, we can show the mapping $f: x\mapsto \frac{\sigma_w}{\sqrt{n}}W \phi(x)$ is Lipschitz continuous:
	\begin{align*}
		\norm{f(x) - f(z)}
		=&\norm{\frac{\sigma_w}{\sqrt{n}} W\phi(x) - \frac{\sigma_w}{\sqrt{n}} W\phi(z)}\\
		\leq& \sigma_w \norm{\phi(x)-\phi(z)}\\
		\leq & \sigma_w L_1\norm{x-z}.
	\end{align*}
	Hence $f$ is $\sigma_w L_1$-Lipschitz continuous a.s. As $t_0=0$, it follows from Picard-Lindelöf theorem that unique $h_t$ exists locally for all $\abs{t}\leq \varepsilon$, where $\varepsilon <1/\sigma_wL_1$.
\end{proof}

\begin{lemma}[Global solution]\label{lemma: global solution for forward ode}
	For any given $T>0$, if $\phi$ is $L_1$-Lipschitz continuous, then $h_t$ uniquely exists for all $\abs{t} \leq T$.
\end{lemma}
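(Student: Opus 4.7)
The plan is to extend the local existence result in Lemma~\ref{lemma: local solution for forward ode} to the whole interval $[0,T]$ by patching together local solutions. The key observation is that the Lipschitz constant of the vector field $f(h) = \sigma_w W \phi(h)/\sqrt{n}$ is \emph{global}: by Bai--Yin (Theorem~\ref{thm:Bai-Yin law}) we have $\norm{W}/\sqrt{n} = \mathcal{O}(1)$ a.s., and combined with the $L_1$-Lipschitz continuity of $\phi$, we get that $f$ is $\sigma_w L_1$-Lipschitz on all of $\mathbb{R}^n$, with a constant independent of where we linearize. Therefore the local existence window $\varepsilon < 1/(\sigma_w L_1)$ given by the Picard--Lindel\"of theorem does not shrink as we move along the trajectory.

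First I would invoke Lemma~\ref{lemma: local solution for forward ode} to obtain a unique solution $h_t$ on $[0,\varepsilon]$ for any fixed $\varepsilon_0 < 1/(\sigma_w L_1)$. Then, treating $h_{\varepsilon_0}$ as a new initial condition and applying Picard--Lindel\"of again, I obtain a unique extension on $[\varepsilon_0, 2\varepsilon_0]$; iterating this at most $\lceil T/\varepsilon_0 \rceil$ times covers the whole interval $[0,T]$. The iterates glue consistently because at each restart the initial value of the new segment equals the terminal value of the previous segment, so the concatenated trajectory is continuously differentiable and solves the ODE on $[0,T]$.

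Uniqueness on $[0,T]$ follows from a standard Gronwall argument: if $h_t$ and $\tilde h_t$ are two solutions with $h_0 = \tilde h_0$, then
\begin{align*}
\norm{h_t - \tilde h_t} \leq \int_0^t \sigma_w L_1 \norm{h_s - \tilde h_s}\, ds,
\end{align*}
and Gronwall's inequality forces $h_t = \tilde h_t$ for all $t \in [0,T]$. The same reasoning applies for negative times by reversing $t$.

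The only subtle point, and what I would expect to be the main obstacle, is to justify that the constant-step iteration really does reach $T$ without the solution escaping to infinity in finite time. This is automatic here precisely because $f$ is \emph{globally} Lipschitz: a Gronwall estimate gives $\norm{h_t} \leq \norm{h_0} e^{\sigma_w L_1 t}$ a.s.\ (using $\phi(0)=0$), so the solution stays in a bounded set on any compact time interval, ruling out finite-time blow-up. Once this is in hand, the patching argument works uniformly on $[0,T]$, and the conclusion holds almost surely over the random initialization \eqref{eq:random initialization} because the Bai--Yin bound on $\norm{W}$ holds almost surely.
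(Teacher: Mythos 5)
Your proof is correct and takes essentially the same approach as the paper: iterate the local Picard--Lindel\"of existence from Lemma~\ref{lemma: local solution for forward ode}, observing that the Lipschitz constant $\sigma_w L_1$ is uniform (via the Bai--Yin bound on $\norm{W}/\sqrt{n}$) so the step size $\varepsilon$ does not shrink, and patch at most $\lceil T/\varepsilon \rceil$ local solutions to cover $[0,T]$. The Gronwall uniqueness argument and the a priori bound ruling out finite-time blow-up are nice supplementary observations, though the latter is actually automatic once the Lipschitz constant is global and so is not needed for the patching to go through.
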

\begin{proof}
We have shown unique $h_t$ exists locally. Specifically, let $\phi_{t}(x)$ be the solution flow from initial condition $x$ to the solution at $t$. For any $h_0$, we chose $\varepsilon  < 1/\sigma_w L_1$. Then the solution $h_1:=\phi_{\varepsilon}(x_0)$ is well-defined based on the local solution result. As the dynamics is the same and the Lipschitz coefficient is uniform, we have $h_2:=\phi_{\varepsilon}(h_1)$ is also well-defined. By repeating this process for any finite steps $N$, we have $h_N=\phi_{\varepsilon}(h_{N-1})$ is well-defined. Hence, as $T<\infty$, there exist $N$ such that $\varepsilon N\geq T$. Therefore, $h_t$ is well-defined for all $\abs{t}\leq T$ and the desired result is obtained.
\end{proof}

Then the result for the global solution simply implies the result in Proposition~\ref{prop:well posed of forward ode}.

\subsection{Backward ODE is well posed}
Recall the backward ODE as follows
\begin{align}
    \lambda_T =& \frac{\sigma_v}{\sqrt{n}}\diag(\phi^{\prime}(h_T) v,\\
    \dot{\lambda}_t =&-\frac{\sigma_w}{\sqrt{n}}\diag(\phi^{\prime}(h_t))W^\top\lambda_t.
\end{align}
Observe that if $h_t$ is well defined in $t\in[0,T]$, then the dynamics of $\lambda_t$ become a linear dynamics. Hence, with a similar argument, we can easily show the corresponding VIP of $\lambda_t$ is well posed.
\begin{lemma}
    Given $T$, if the activation function $\phi$ is $L_1$-Lipschitz continuous, then $\lambda_t$ is uniquely determined for all $\abs{t}\leq T$ and $\lambda_t = \partial f_{\theta}/\partial h_t$ is the solution.
\end{lemma}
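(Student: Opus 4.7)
The plan is to exploit the fact that, conditional on the already-established forward trajectory $\vh_t$, the backward ODE is \emph{linear} in $\vlambda_t$ and therefore amenable to the same Picard-Lindel\"of argument used in Lemma~\ref{lemma: global solution for forward ode}, extended backward in time from $T$ to $0$. The identification $\vlambda_t = \partial f(\vx;\vtheta)/\partial \vh_t$ then follows by inspecting the variational calculation that produced the adjoint equation in the first place.

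First, I invoke the global existence result for $\vh_t$ to guarantee that the trajectory is well-defined and continuous on $[0,T]$ almost surely. Then I define the matrix-valued coefficient $\mA(t) := -(\sigma_w/\sqrt{n}) \diag(\phi^{\prime}(\vh_t)) \mW^\top$. Lipschitz continuity of $\phi$ gives $\abs{\phi^{\prime}} \leq L_1$ almost everywhere, while the Bai-Yin law (Theorem~\ref{thm:Bai-Yin law}) gives $\norm{\mW}\leq C\sqrt{n}$ a.s., so together these bound $\norm{\mA(t)}$ by a constant depending only on $\sigma_w$ and $L_1$, uniformly in $t\in[0,T]$. Hence the backward ODE $\dot{\vlambda}_t = \mA(t)\vlambda_t$ has right-hand side globally Lipschitz in $\vlambda_t$ with a uniform Lipschitz constant. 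Reversing time via $\vmu_s := \vlambda_{T-s}$ turns it into a forward initial value problem $\dot{\vmu}_s = -\mA(T-s)\vmu_s$ with $\vmu_0 = (\sigma_v/\sqrt{n}) \diag(\phi^{\prime}(\vh_T))\vv$, to which I apply Theorem~\ref{thm:Picard-Lindelöf theorem} locally and then patch finitely many sub-intervals of length $\varepsilon < 1/\sup_t\norm{\mA(t)}$, exactly as in the forward-ODE proof. This yields a unique absolutely continuous solution $\vlambda_t$ on all of $[0,T]$.

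For the identification $\vlambda_t = \partial f(\vx;\vtheta)/\partial \vh_t$, I would recall the Lagrangian derivation in Appendix~\ref{app:derivation gradients}: one introduces a multiplier function, integrates by parts, and then chooses the multiplier so that the coefficient of $\delta\vh(t)$ in the variation $\delta\mathcal{L}$ vanishes pointwise, together with the boundary term at $t=T$. These two vanishing conditions are precisely the backward ODE and its terminal value. By construction, with this choice of multiplier the remaining terms in $\delta\mathcal{L}$ express the gradient of $f$ with respect to the parameters via $\vlambda_t$, so $\vlambda_t$ coincides with the sensitivity $\partial f(\vx;\vtheta)/\partial \vh_t$.

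The main obstacle, I expect, is purely a regularity subtlety: Lipschitz continuity of $\phi$ ensures $\phi^{\prime}$ exists only almost everywhere, so $\mA(t)$ is only measurable and bounded rather than continuous in $t$. The Picard-Lindel\"of argument must then be read in the Carath\'eodory sense, producing an absolutely continuous solution; no genuinely new idea beyond the forward-ODE proof is needed, which is why the author calls it a ``similar argument.'' If one further assumes $\phi^{\prime}$ is continuous (as Proposition~\ref{prop:discretize-then-optimize} already does), then $\mA(t)$ is continuous and the classical statement of Theorem~\ref{thm:Picard-Lindelöf theorem} applies directly.
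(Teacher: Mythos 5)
Your existence/uniqueness argument is the same as the paper's: condition on the forward trajectory $\vh_t$ to make the backward ODE linear, bound $\norm{\mW}\lesssim\sqrt{n}$ via Bai--Yin and $\abs{\phi'}\leq L_1$ to get a uniform Lipschitz constant $\sigma_w L_1$ for the right-hand side, apply Picard--Lindel\"of locally, and patch sub-intervals. The one place you diverge is the identification $\vlambda_t=\partial f(\vx;\vtheta)/\partial\vh_t$: you appeal back to the Lagrangian/adjoint construction of Appendix~B, whereas the paper gives a direct, self-contained verification --- it computes $\partial f_\vtheta/\partial\vh(T)$ from the output layer, then shows via the chain rule and a finite-difference limit that $t\mapsto\partial f_\vtheta/\partial\vh(t)$ satisfies $\dot{\vlambda}_t=-(\sigma_w/\sqrt{n})\diag(\phi'(\vh_t))\mW^\top\vlambda_t$, so uniqueness forces the identification. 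Your route is logically valid, but worth noting that the Lagrangian derivation as written only shows the chosen multiplier kills the $\delta\vh$ terms; equating it to the state sensitivity is an extra (standard) step that the paper's direct computation makes explicit.

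Your closing observation about regularity is a genuine subtlety and, if anything, is more careful than the paper's own proof: Lipschitz $\phi$ gives $\phi'$ only a.e., so $\mA(t)=-(\sigma_w/\sqrt{n})\diag(\phi'(\vh_t))\mW^\top$ need only be measurable and bounded in $t$, not continuous, and Theorem~\ref{thm:Picard-Lindelöf theorem} as stated in the paper assumes continuity in the time variable. Reading the argument in the Carath\'eodory sense (measurable in $t$, uniformly Lipschitz in $\vlambda$) closes this gap and yields an absolutely continuous solution; the paper silently assumes this, so your explicit flagging of it is an improvement rather than an error.
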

\begin{proof}
    It follows Lemma~\ref{lemma: local solution for forward ode} and ~\ref{lemma: global solution for forward ode} that $h_t$ is well defined for all $t\in[0,T]$ a.s. By Theorem~\ref{thm:Picard-Lindelöf theorem}, it suffices to show $g: x\mapsto -\frac{\sigma}{\sqrt{n}}\diag[\phi^{\prime}(h_t)]W^{\top} x$ is Lipschitz continuous:
    \begin{align*}
        \norm{g(x)-g(z)}
        =\norm{\frac{\sigma_w}{\sqrt{n}} \diag[\phi^{\prime}(h_t)] W^{\top}(x - z)}
        \leq \sigma_w L_1 \norm{x-z},
    \end{align*}
    where we use the fact $\norm{W}\sim \sqrt{n}$ a.s. by Theorem~\ref{thm:Bai-Yin law} and $\abs{\phi^{\prime}}\leq L_1$. Hence, the mapping $g$ is $\sigma_w L_1$ Lipschitz continuous. It follows from Theorem~\ref{thm:Picard-Lindelöf theorem} that $\lambda_t$ uniquely exist for $t\in[T-\varepsilon, T+\varepsilon]$ for $\varepsilon < 1/\sigma_w L_1$. Then, with a similar argument, we can show that the existence of a local solution can be extended to a global solution since $\phi$ is uniformly Lipschitz continuous. Therefore, $\lambda_t$ is well defined for all $t\in[0,T]$.

Additionally, we can show $\lambda(t) = \frac{\partial f_{\theta}(x)}{\partial h(t)}$ is a solution. 
Specifically, the differential of $f_{\theta}$ is given by 
\begin{align*}
	df_{\theta} = dv^{\top} \phi(h(T))/\sqrt{n}
	=\frac{1}{\sqrt{n}} v^{\top} \diag(\phi^{\prime}(h(T))) dh(T).
\end{align*}
Then we have
\begin{align}
	\frac{\partial f_{\theta}(x)}{\partial h(T)} = \frac{1}{\sqrt{n}} \diag(\phi^{\prime}(h(T))) v.
\end{align}
Moreover, for any $\varepsilon > 0$, it follows the chain rule that 
\begin{align*}
	\frac{\partial f_{\theta}(x)}{\partial h(t)}
	=\frac{\partial h(t+\varepsilon)}{\partial h(t)} \frac{\partial f_{\theta}(x)}{\partial h(t+\varepsilon)}.
\end{align*}
where we have 
\begin{align}
	h(t+\varepsilon) = h(t) + \int_{t}^{t+\varepsilon} \frac{1}{\sqrt{n}} W \phi(h(s)) ds.
\end{align}
Then we have
\begin{align*}
	\frac{d}{dt}\left(\frac{\partial f_{\theta}(x)}{\partial h(t)}\right)
	=&\lim\limits_{\varepsilon\rightarrow 0^+}
	\frac{\frac{\partial f_{\theta}}{\partial h(t+\varepsilon)}- \frac{\partial f_{\theta}}{\partial h(t)}}{\varepsilon}\\
	=&\lim\limits_{\varepsilon\rightarrow 0^+}
	\frac{\frac{\partial f_{\theta}}{\partial h(t+\varepsilon)}   - \frac{\partial h(t+\varepsilon)}{\partial h(t)} \frac{\partial f_{\theta}}{\partial h(t+\varepsilon)}}{\varepsilon}\\
	=&\lim\limits_{\varepsilon\rightarrow 0^+}
	\frac{\frac{\partial f_{\theta}}{\partial h(t+\varepsilon)}   - \frac{\partial }{\partial h(t)}\left(h(t) + \frac{1}{\sqrt{n}} W\phi(h(t)) \varepsilon + \bigo{\varepsilon^2}\right) \frac{\partial f_{\theta}}{\partial h(t+\varepsilon)}}{\varepsilon}\\
	=&\lim\limits_{\varepsilon\rightarrow 0^+}
\frac{\frac{\partial f_{\theta}}{\partial h(t+\varepsilon)}   - \left(I + \frac{\varepsilon}{\sqrt{n}} \diag(\phi^{\prime}(h(t))) W^{\top}+ \bigo{\varepsilon^2}\right) \frac{\partial f_{\theta}}{\partial h(t+\varepsilon)}}{\varepsilon}\\
=& -\frac{1}{\sqrt{n}} \diag(\phi^{\prime}(h(t))) W^{\top} \frac{\partial f_{\theta}}{\partial h(t)}.
\end{align*}
Thus, we can see $\partial f_{\theta}(x)/\partial h(t)$ is the solution to backward ODEs.
\end{proof}

\subsection{Augmented backward ODE is well posed under the same regularity}
It can be seen from \eqref{eq:gradient of W} that to compute the gradient of $f_{\theta}$ w.r.t. $W$, we need to evaluate values of $h_t$ and $\lambda_t$ for every $t\in[0,T]$. However, \cite{chen2018neural} suggested solving an augmented backward ODE. As a result, there is no need to store the intermediate values of $h_t$ and $\lambda_t$. 

We first recall the gradients of $f_{\theta}$ w.r.t. $\theta$ in a vectorization form:
\begin{subequations}
\label{eq:gradients vect form}
    \begin{align}
    \partial_v f_{\theta}(x) =& \frac{\sigma_v}{\sqrt{n}} \phi(h(T))\\
	\partial_W f_{\theta}(x) = & \int_0^{T} \frac{\sigma_w}{\sqrt{n}} (\phi(h_t)\otimes \lambda_t) dt\\
	\partial_U f_{\theta}(x) = & \frac{\sigma_u}{\sqrt{d}}\left[x \otimes \lambda(0)\right].
\end{align}
\end{subequations}

The augmented backward ODE is given by
\begin{align}
    \begin{bmatrix}
        \dot{h}_t \\
        \dot{\lambda}_t\\
        \dot{g}_t 
    \end{bmatrix}
    =\frac{\sigma_w}{\sqrt{n}}
    \begin{bmatrix}
        W\phi(h_t)\\
        - \diag[\phi^{\prime}(h_t) W^{\top} ]\lambda_t\\
        -\phi(h_t) \otimes \lambda_t
    \end{bmatrix},
    \quad \forall t\in[0,T]
    \label{app eq:augmented ode}
\end{align}
where $g_t\in\mathbb{R}^{n^2\times 1}$ and the initial condition is $h_T$ and $\lambda_T$ combined with $g_T=0$. 

Once this augmented backward ODE is solved, the gradients of $f_{\theta}(x)$ w.r.t. $W$ can be obtained by 
\begin{align*}
    \nabla_{W} f_{\theta}(x) =& g(0) \\
    =& g(T)+\int_T^{0} \dot{g}_t dt\\
    = &g(T) + \int_T^{0} -\frac{\sigma_w}{\sqrt{n}} \phi(h_t)\otimes \lambda_t dt\\
    =&\int_0^T \frac{\sigma_w}{\sqrt{n}} \left[\phi(h_t)\otimes \lambda_t \right]dt,
\end{align*}
where we use the fact $g_T=0$. Unlike $h_t$ is known in \eqref{eq:backward ode}, $h_t$ is an unknown state in the augmented backward ODE \eqref{app eq:augmented ode}. Hence, it follows from Theorem~\ref{thm:Picard-Lindelöf theorem} that extra smoothness is generally required to ensure the well-posedness, such as $\phi^{\prime}$ probably needs to be Lipschitz continuous. However, the dynamics of $h_t$ is decoupled from the dynamics of $\lambda_t$ and $g_t$. Hence, one can solve $h_t$ first (in the backward manner), then solve the dynamics system for $\lambda_t$ and $g_t$. In this manner, $h_t$ is still a known state. Hence, one can use the same regularity condition in Proposition~\ref{prop:discretize-then-optimize} to show the existence of unique solutions for $t\in[0,T]$. Therefore, no additional smoothness is needed to solve the augmented backward ODE.

\section{NNGP Correspondence for Neural ODEs}\label{app sec:neural ODE as Gaussian process}
In this section, we establish the NNGP correspondence for Neural ODEs. It follows from the Euler method that Neural ODE can be approximated by a finite-depth neural network $f_{\theta}^{L}$ \eqref{eq:finite-depth resnet}. From the asymptotic perspective, Neural ODEs are equivalent to an infinite-depth ResNet with shared parameters in all its hidden layers and a special depth-dependent scaling hyperparameter $T/L$.

\subsection{Finite-Depth Neural Networks as Gaussian Processes}
As the finite-depth neural network $f_{\theta}^{L}$ can be considered as an approximation to the Neural ODE $f_{\theta}$, we first study its signal propagation by establishing the NNGP correspondence for $f_{\theta}^{L}$. 

We define vectors $g^{\ell}\in\mathbb{R}^{n}$
\begin{align}
    g^{0}(x):=& \frac{\sigma_v}{\sqrt{d}} Ux, \\
    g^{\ell}(x):=& \frac{\sigma_w}{\sqrt{n}} W\phi(h^{\ell-1}),\quad\forall\ell\in [1,2,\cdots, L].
\end{align}
The vectors $g^{\ell}$ are G-vars in Tensor program \cite{yang2019wide}. Tensor program is a representation of the neural network computations that only involves linear and element-wise nonlinear operations. In the paper \cite{yang2019wide}, the authors claim that a computation using G-vars is equivalent to another computation that uses a corresponding list of one-dimensional Gaussian variables in the infinite-width limit, as long as the computation only involves controllable nonlinear functions. The corresponding definitions and Theorems are reformulated as follows.
\begin{definition}\cite[Simplified version of Definition~5.3]{yang2019wide}
    A real-valued function $\psi:\mathbb{R}^{k}\rightarrow\mathbb{R}$ is called controllable if there exists some absolute constants $C,c>0$ such that $\abs{\psi(x)}\leq Ce^{c\sum_{i=1}^{k}\abs{x_i}}$.
\end{definition}

\begin{theorem}\label{thm:master theorem}\cite[Theorem~5.4]{yang2019wide}
    Consider a NETSOR program that has \textbf{forward} computation for a given finite-depth neural network.
    Suppose the Gaussian random initialization and controllable activation functions for the given neural network. 
    For any controllable $\psi:\mathbb{R}^{M}\rightarrow \mathbb{R}$, as width $n\rightarrow\infty$, any finite collection of G-vars $g_{\alpha}$ with size $M$ satisfies
    \begin{align}
        \frac{1}{n}\sum_{\alpha=1}^{n} \psi(g_{\alpha}^{0},\dots, g_{\alpha}^{M})
        \overset{a.s.}{\rightarrow} \E \psi(z^{0},\cdots, z^{M}),
    \end{align}
    where $\{z^{0},\cdots, z^{M}\}$ are Gaussian random variables whose mean and covariance are computed by the corresponding NETSOR Program.
\end{theorem}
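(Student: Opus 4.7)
The plan is to prove the master theorem by induction on the length of the NETSOR program, using the Gaussian conditioning trick to control the fresh randomness introduced at each matrix-vector multiplication. The ordering of the G-vars is fixed by the program: $g^0, g^1, \ldots, g^M$, where each $g^\ell = W^{(i_\ell)} v^{(\ell)}$ and $v^{(\ell)} = \varphi_\ell(g^0, \ldots, g^{\ell-1})$ for some controllable coordinate-wise nonlinearity $\varphi_\ell$, with $W^{(i_\ell)}$ one of the Gaussian weight matrices (entries $\mathcal{N}(0, 1)$ scaled by $1/\sqrt{n}$).

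For the base case $M = 0$, one has $g^0 = W v^{(0)}$ with $v^{(0)}$ deterministic, so $g^0_\alpha$ are i.i.d.\ Gaussian with variance $\|v^{(0)}\|^2/n$; the strong law of large numbers, applied to the composition with $\psi$, yields the claim. For the inductive step I would use the \emph{Gaussian conditioning trick}: when a matrix $W$ has previously been applied to vectors $v^{(j_1)}, \ldots, v^{(j_k)}$ producing G-vars $g^{j_1}, \ldots, g^{j_k}$, its conditional distribution given the $\sigma$-algebra generated by these products decomposes as a determined projection onto $\mathrm{span}(v^{(j_s)})$ plus a fresh independent Gaussian component on the orthogonal complement. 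Applying $W$ to the new $v^{(\ell)}$ then splits $g^\ell$ into a deterministic combination of previous G-vars (whose coefficients depend on the Gram matrix of the $v^{(j_s)}$'s, which converges by the induction hypothesis with test function $\psi(x,y) = xy$) plus a fresh Gaussian component whose covariance with prior G-vars and with itself is computable in closed form. These covariances identify the limiting jointly Gaussian vector $(z^0, \ldots, z^M)$.

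To upgrade convergence in probability to the almost sure convergence required, I would combine a concentration inequality for Lipschitz functions of Gaussians (Borell--TIS) with a Borel--Cantelli argument along the width $n$, showing that the empirical average concentrates exponentially around its mean. The main obstacle, and the place where the hypothesis of \emph{controllable} $\psi$ matters, is handling the exponential growth $|\psi(x)| \leq C e^{c \sum_i |x_i|}$. The plan here is a truncation argument: approximate $\psi$ by bounded Lipschitz $\psi_R := \psi \cdot \chi_R$ for which the previous step gives a.s.\ convergence, and then control the remainder $\psi - \psi_R$ via Gaussian tail bounds on $g^0_\alpha, \ldots, g^M_\alpha$ (obtainable since each G-var has uniformly bounded second moment as $n \to \infty$, again by induction) to conclude uniform integrability and hence passage to the limit.

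The delicate point throughout is keeping the induction \emph{uniform}: every quantity appearing in the covariance recursion (norms $\|v^{(\ell)}\|^2/n$, cross-inner-products $\langle v^{(\ell)}, v^{(\ell')}\rangle/n$) must be shown to converge almost surely before it is used in the conditioning decomposition for the next G-var. This is exactly what makes the induction self-consistent, and it is where one really exploits that the total number of operations $M$ is finite, so only finitely many conditioning steps and finitely many null sets need to be aggregated.
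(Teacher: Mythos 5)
The paper itself does not prove this statement; it is reproduced from \citet{yang2019wide}, Theorem~5.4, and invoked as an external tool. Comparing your sketch to the cited reference's proof: the central device you identify --- induction on program length together with the Gaussian conditioning trick, i.e.\ decomposing the conditional law of a reused weight matrix $W$ given prior matrix--vector products into a determined projection onto the span of previously applied vectors plus an independent fresh Gaussian on the orthocomplement --- is exactly Yang's argument, and your point about using the inductive hypothesis with $\psi(x,y)=xy$ to obtain the Gram entries before each new conditioning step is precisely the self-consistency mechanism that makes the induction close. The place where your route genuinely departs from the reference is the upgrade to almost-sure convergence. Yang bounds higher moments of the deviation $\frac{1}{n}\sum_{\alpha}\psi(g_{\alpha}^0,\ldots,g_{\alpha}^M)-\E\psi(z^0,\ldots,z^M)$ and applies Markov plus Borel--Cantelli, rather than Borell--TIS. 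The Borell--TIS route is delicate as you have stated it: even after truncating $\psi$ to a bounded Lipschitz $\psi_R$, the empirical average is a function of the raw Gaussian weight entries whose Lipschitz constant involves operator norms $\|W^{(i)}\|$, which are random and only $O(\sqrt n)$ with high (not certain) probability; you would therefore need to condition on a norm-bounded event first and handle the complement by tail estimates --- at which point you are essentially running the moment/Borel--Cantelli scheme anyway. Your truncation and uniform-integrability plan for controllable $\psi$ is sound and does appear in essentially this form in Yang's treatment, where the moment bounds on the G-vars are established within the same induction.
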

Notably, controllable functions are not necessarily smooth, although smooth functions can be easily shown to be controllable. Moreover, controllable functions, as defined in \cite[Definition~5.3]{yang2019wide}, can grow faster than exponential but remain $L^1$ and $L^2$-integrable with respect to the Gaussian measure. However, the simplified definition presented here encompasses almost most functions encountered in practice. Moreover, the vectors $g^{\ell}$ or G-vars are not necessary to encode the same input $x$. Hence, $g^{\ell}(x)$ and $g^{\ell}(\bar{x})$ are two different G-vars in Tensor program. However, Theorem~\ref{thm:master theorem} still holds for any finite collection of G-vars, even if they have different inputs encoded. Therefore, by utilizing Theorem~\ref{thm:master theorem}, we can show as $n\rightarrow\infty$, the finite-depth network $f_{\theta}^{L}$ tends to a Gaussian Process weakly and the result is stated in Proposition~\ref{prop:finite-depth as Gaussian process} and the associated Tensor program for $f_{\theta}^{L}$ is provided in Algorithm~\ref{alg:fintie-depth network}.

\begin{algorithm}[t]
    \caption{ResNet $f_{\theta}^{L}$ Forward Computation on Input $x$}\label{alg:fintie-depth network}
    \begin{algorithmic}[1]
        \Require $Ux/\sqrt{d}: \Gtype(n)$
        \Require $W: \Atype(n, n)$
        \Require $v: \Gtype(n)$
        \State $h^{0}:=Ux/\sqrt{d}: \Gtype(n)$
        \For{$\ell \in[L]$}
            \State $x^{\ell}:= \phi(h^{\ell}): \Htype(n)$
            \State $g^{\ell} := W x^{\ell-1}/\sqrt{n}: \Gtype(n)$
            \State $h^{\ell} := h^{\ell-1} + \kappa \cdot g^{\ell}: \Gtype(n)$
        \EndFor
        \State $x^{L} = \phi(h^{L}): \Htype(n)$
        \Ensure $v^T x^{L}/\sqrt{n}$
    \end{algorithmic}
\end{algorithm}

In the rest of this subsection, we will provide rigorous proof to show the NNGP correspondence for $f_{\theta}^{L}$ through induction. For simplicity, the proof assumes only one input $x$ is given, while the result for multiple inputs is similar. Additionally, we also assume $\sigma_v=\sigma_w=\sigma_u = 1$ since their values are not significant in the proof as long as their values are strictly positive.

\subsubsection*{Basic case $L=0$}
As $L=0$, we have $f_{\theta}^0(x) = v^T\phi(h^0)/\sqrt{n}$. Hence, we don't have the hidden layers. Based on the random initialization \eqref{eq:random initialization}, we have
\begin{align*}
    g_k^0
    \overset{\iid}{\sim} 
    _{:=\Sigma^0(x,x)}
\end{align*}
Let $\gB^0$ be the smallest $\sigma$-algebra generated by $g^0$. By condition on $\mathcal{B}^0$, we have
\begin{align*}
    f_{\theta}^0|\mathcal{B}^0\sim \mathcal{N}(0, \norm{\phi^0}^2/n),
\end{align*}
where $\phi^0:=\phi(h^0)$. It follows from the law of large numbers that
\begin{align*}
    \sigma_v^2 \norm{\phi^0}^2/n
    =&\frac{\sigma_v^2}{n}\sum_{k=1}^{n} \abs{\phi(h_k^0)}^2
    =\frac{\sigma_v^2}{n}\sum_{k=1}^{n} \abs{\phi(g_k^0)}^2
    \overset{a.s.}{\longrightarrow}\E \phi(z^0)^2:=\Sigma^1(x,x),  
\end{align*}
where $z^0\sim \mathcal{N}(0, \Sigma^0(x,x))$. As the limit is deterministic, the conditional and unconditional distributions converge to the same limit. Therefore, we have
\begin{align*}
    f_{\theta}^0\rightarrow\mathcal{GP}(0, \Sigma^1),
\end{align*}
where
\begin{align*}
    \Sigma^1(x,\bar{x}) = \E_{z^0\sim\Sigma^{0}}\phi(z^0(x)) \phi(z^0(\bar{x})).
\end{align*}
where we use $z^0\sim \Sigma^{0}$ to denote centered Gaussian random variable(s) whose (co)variances can be computed using the covariance function $\Sigma^0$.

\subsubsection*{General case $L$}

Now consider $f_{\theta}^{L}(x) = v^T \phi(h^L)/\sqrt{n}$. Here we have $h^{L} = h^{L-1} + \beta g^{L}$ and $g^L=W\phi(h^{L-1})$, where $\beta:=\frac{T}{L}$. As $W$ is used before, let $\gB^{L-1}$ be the smallest $\sigma$-algebra generated by $\{g^0,\cdots, g^{L-1}\}$. Then we can have
\begin{align*}
    g^{\ell} = W\phi(h^{\ell-1}), \quad\forall \ell \in \{1,2,\cdots, L-1\}
\end{align*}
or equivalently
\begin{align*}
    \underbrace{\begin{bmatrix}
            g^1 & \cdots & g^{L-1}
    \end{bmatrix}}_{:=G}
    =W \underbrace{\begin{bmatrix}
            \phi^0 & \cdots & \phi^{L-2}
    \end{bmatrix}}_{:=\Phi}
\end{align*}
where $G\in \mathbb{R}^{n\times (L-1)}$ and $\Phi\in \mathbb{R}^{n\times (L-1)}$.

We can obtain the conditional distribution of $W$ by solving the following optimization problem
\begin{align*}
    \min_{\mW} \; 
    \frac{1}{2}\norm{\mW}_F^2, 
    \; \st\; 
    \mG=\mW\mPhi.
\end{align*}
The Lagrange function is given by
\begin{align*}
    L(W,V) = \frac{1}{2}\norm{W}_F^2 + \inn{V}{G-W\Phi}
\end{align*}
Then
\begin{align*}
    \nabla_W L(W,V) = W - V\Phi^T=0\Longrightarrow W^*=V\Phi^T.
\end{align*}
As $G=W\Phi$, we have
\begin{align*}
    G=W\Phi = V\Phi^T\Phi\Longrightarrow V = G(\Phi^T\Phi)^{\dagger}
    \Longrightarrow
    W^* = G(\Phi^T\Phi)^{\dagger} \Phi^T.
\end{align*}
Thus, we have
\begin{align*}
    W|\mathcal{B}
    =W^*+\tilde{W} \Pi^T
    =G(\Phi^T\Phi)^{\dagger} \Phi^T + \tilde{W} \left(I_n - \Phi\Phi^\dagger\right),
\end{align*}
where $\Pi = I_n - \Phi\Phi^\dagger$, $\tilde{W}$ is \iid copy of $W$, and $\Phi^{\dagger} = (\Phi^T\Phi)^{\dagger}\Phi^T$. 
 
Since $g^L=W\phi(h^{L-1})$, we have the conditional distribution of $g^{L}_k$ as follows 
\begin{align*}
    g_k^L|\mathcal{B}\overset{independent}{\sim}\mathcal{N}(G_{k*}(\Phi^T\Phi)^{\dagger}\Phi^T\phi, \norm{\Pi^T\phi}^2/n).
\end{align*}
where $G_{k*}$ denotes the $k$-th row of matrix $G$ and $\phi=\phi^{L-1}$ for simplicity.
 
As Lipschitz continuous activation is a controllable function, it follows from Theorem~\ref{thm:master theorem} and the inductive hypothesis that
\begin{align*}
    \inn{\phi^i}{\phi^j}/n
    =&\frac{1}{n}\sum_{k=1}^{n} \phi(h^i_k) \phi(h^j_k)\\
    =&\frac{1}{n}\sum_{k=1}^{n} \phi(g_k^0 + \beta g_k^1 + \cdots +\beta g^i_k) \phi(g_k^0 + \beta g_k^1 + \cdots +\beta g^j_k)\\
    \overset{a.s.}{\rightarrow}& \E \phi(z^0+\beta z^1 + \cdots + \beta z^{i})  \phi(z^0+\beta z^1 + \cdots + \beta z^{j})\\
    =:&\E\phi(u^{i}) \phi(u^j),
\end{align*}
where we define another Gaussian random variable $u^i$ to simplify the notation:
\begin{align*}
    u^i = z^0 +\beta z^1  + \cdots + \beta z^{i} .
\end{align*}
Therefore, we have
\begin{align*}
    &(\Phi^T\Phi)_{ij}/n = \inn{\phi^i}{\phi^{j}}/n\overset{a.s.}{\rightarrow} \E\phi(u^i)\phi(u^{j}),\\
    &(\Phi^T\phi)_{i}/n = \inn{\phi^i}{\phi}/n\overset{a.s.}{\rightarrow}\E\phi(u^i)\phi(u^{L-1}).
\end{align*}

For $\ell\in\{0,1,\cdots, L-1\}$, let $U^{\ell} = \{u^0,\cdots, u^{\ell}\}$ be a collection of $u^i$. We define $\Sigma(U^{\ell}, U^{k})\in \reals^{(\ell+1)\times (k+1)}$ as 
\begin{align*}
    \Sigma(U^{\ell}, U^{k})_{ij} = \Sigma(u^i, u^j)
    =\E\phi(u^i)\phi(u^{j}),\quad\forall i\in\{0,1,\cdots, \ell\}, j\in \{0,1,\cdots, k\}.
\end{align*}

Therefore, we have
\begin{align*}
    (\Phi^T\Phi)^{\dagger}\Phi^T\phi
    =(\Phi^T\Phi/n)^{\dagger}\left(\Phi^T\phi/n\right)
    \rightarrow \Sigma(U^{L-2},U^{L-2})^{\dagger}\Sigma(U^{L-2}, u^{L-1}).
\end{align*}
 
Moreover, observe that
\begin{align*}
    \norm{\Pi^T\phi}^2/n
    =&\frac{1}{n} \phi^T (I_n - \Phi\Phi^\dagger) \phi\\
    =&\frac{1}{n}\phi^T\phi
    -\frac{1}{n} \phi^T\Phi(\Phi^T\Phi)^{\dagger}\Phi^T \phi\\
    =&\phi^T\phi/n
    -(\phi^T\Phi/n)(\Phi^T\Phi/n)^{\dagger}(\Phi^T \phi/n)\\
    \rightarrow& \Sigma(u^{L-1},u^{L-1}) - \Sigma(u^{L-1}, U^{L-2})\Sigma(U^{L-2}, U^{L-2})^{\dagger}\Sigma(U^{L-2}, u^{L-1})
\end{align*}

Therefore, for any controllable function $\psi$, it follows from Theorem~\ref{thm:master theorem} that
\begin{align*}
    \frac{1}{n}\sum_{k=1}^{n} \psi(g_k^0, g_k^1, \cdots, g_k^{L})
    \rightarrow \E \left[\psi(z^0, z^1, \cdots, z^{L})\right],
\end{align*}
where
\begin{align*}
    &\Cov(z^0(x), z^{\ell}(\bar{x})) = 0, \quad \forall \ell\geq 1\\
    &\Cov(z^{\ell}(x), z^{k}(\bar{x})) = \E\left[ \phi\left(u^{\ell-1}(x)\right)\phi\left(u^{k-1}(\bar{x})\right)\right], \quad \forall \ell, k\geq 1
\end{align*}

Let $\gB^{L}$ be the smallest $\sigma$-algebra generated by $\{g_0,\cdots, g_L\}$. By condition on $\mathcal{B}^{L}$, we have
\begin{align}
    f_{\theta}^{L}(x)|\gB^{L}\sim \mathcal{N}(0, \norm{\phi^L}^2/n)
\end{align}
where
\begin{align*}
     \norm{\phi^L}^2/n
    =&\frac{1}{n} \sum_{k=1}^{n} \phi(h_k^{L})^2\\
    =&\frac{1}{n} \sum_{k=1}^{n} \left[\phi\left(g_k^0 + \beta\sum_{i=1}^{L} g_k^{i}\right)\right]^2\\
    \overset{a.s.}{\rightarrow} &  \E \left[\phi\left(z^0 + \beta\sum_{i=1}^{L}z^{i}\right)\right]^2 =\E [\phi(u^{L})]^2:=\Sigma^{L+1}(x,x)
\end{align*}
Thus, we obtain
\begin{align*}
    f_{\theta}^{L}\rightarrow \mathcal{GP}(0, \Sigma^{L+1})
\end{align*}
where
\begin{align*}
    \Sigma^{L+1}(x,\bar{x}) = \E \left[\phi\left(u^{L}(x)\right)\phi\left(u^L(\bar{x})\right)\right].
\end{align*}

\subsection{Neural ODEs as Gaussian Processes}
In this subsection, we prove Neural ODEs tends to a Gaussian process as the width $n\rightarrow\infty$. As the output parameter $v$ is independent from all previous weights, by conditioning on the previous hidden layers, the Neural ODEs becomes a Gaussian random variable with covariance $\norm{\phi^{L}(x)}^2/n$, \ie,
\begin{align*}
    f_{\theta}(x)|\gB
    \sim \Gaus
\end{align*}
where we denote $\phi_T(x):=\phi(h_T(x))$ to simplify the notation, $h_T$ is the exact solution from the forward ODE, and $\gB$ is the smallest $\sigma$-algebra generated by previous hidden layers. Here we also assume $\sigma_v=\sigma_w=\sigma_u=\sigma$ as their values are not important in the proof as long as they are strictly positive.

It follows from the convergence analysis of Euler's method, stated in Theorem~\ref{thm:Euler method}, that
\begin{align*}
    \phi^{L}(x)\rightarrow \phi_T(x), \quad \text{as $L\rightarrow\infty$},
\end{align*}
where we denote $\phi^{\ell}(x):=\phi(h^{\ell}(x))$. 

Thus, the focus of analysis becomes to study the convergence of this double sequence
\begin{align*}
    a_{n,L}:=\inn{\phi^{L}(x)}{\phi^{L}(\bar{x})}/n.
\end{align*}
By leveraging the convergence result for Euler's method in Theorem~\ref{thm:Euler method}, we can show the double sequence $a_{n,L}$ converges as $L\rightarrow\infty$ and this convergence is \textit{uniform} in $n$ a.s.

\begin{lemma}\label{app lemma:Euler rate in forward ode}
    If $\phi$ is $L_1$-Lipschitz continuous, then the following inequalities hold for every $x\in \mathbb{S}^{d-1}$ a.s.:
    \begin{align}
        \norm{h_t}\leq C \sqrt{n}e^{C\sigma L_1 t}, \quad\forall t\in[0,T]
        \label{eq:h_t upper bounds}
    \end{align}
    and 
    \begin{align}
    	\norm{h^{\ell} - h(t_{\ell})}
	\leq \frac{A}{2B} \left(e^{B t_{\ell}} - 1\right)\frac{T}{L}\sqrt{n},
    \end{align}
    where $A:=C\sigma^2 L_1^2 e^{C\sigma L_1 T} $ and $B:=C\sigma L_1$ for some absolute constant $C>0$.
\end{lemma}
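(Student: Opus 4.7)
The strategy is to obtain the two bounds sequentially: first use Gronwall's inequality on the integral form of the forward ODE to bound $\|\vh_t\|$, then feed this bound into the convergence theorem for Euler's method (Theorem~\ref{thm:Euler method}) after bounding the relevant second-order quantity of the continuous trajectory. Throughout the argument, the randomness enters only through the spectral norms of $\mU$ and $\mW$, which by the Bai-Yin law (Theorem~\ref{thm:Bai-Yin law}) satisfy $\|\mU\|, \|\mW\| \leq C\sqrt{n}$ almost surely for some absolute $C>0$ once $n$ is large enough. All subsequent bounds therefore hold almost surely.

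First I would establish \eqref{eq:h_t upper bounds}. Writing the forward ODE in integral form,
\begin{align*}
    \vh_t = \vh_0 + \int_0^t \frac{\sigma}{\sqrt{n}}\mW\phi(\vh_s)\,ds,
\end{align*}
and using $\phi(0)=0$ with the Lipschitz property, we get $\|\phi(\vh_s)\|\leq L_1\|\vh_s\|$. Combined with $\|\vh_0\|=\|\sigma\mU\vx/\sqrt{d}\|\leq C\sigma\sqrt{n/d}\cdot\|\vx\|\leq C\sigma\sqrt{n}$ for $\vx\in\mathbb{S}^{d-1}$, this yields
\begin{align*}
    \|\vh_t\| \leq C\sigma\sqrt{n} + \int_0^t C\sigma L_1 \|\vh_s\|\,ds,
\end{align*}
and Gronwall's inequality gives $\|\vh_t\|\leq C\sigma\sqrt{n}\,e^{C\sigma L_1 t}$, which is the claimed bound (absorbing $\sigma$ into $C$).

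For the Euler error bound, I would invoke Theorem~\ref{thm:Euler method} with step size $\kappa = T/L$. The required Lipschitz constant of the right-hand side $\vh\mapsto \sigma\mW\phi(\vh)/\sqrt{n}$ is at most $\sigma L_1\|\mW\|/\sqrt{n}\leq C\sigma L_1 = B$. To control the analogue of the second derivative, differentiate $\dot{\vh}_t = \sigma\mW\phi(\vh_t)/\sqrt{n}$ formally to obtain $\ddot{\vh}_t = (\sigma/\sqrt{n})^2\mW\,\diag(\phi'(\vh_t))\mW\phi(\vh_t)$; using $|\phi'|\leq L_1$ almost everywhere (since $\phi$ is Lipschitz), the spectral bound on $\mW$, and the first estimate on $\|\vh_t\|$, this is bounded by $C\sigma^2 L_1^2\sqrt{n}\,e^{C\sigma L_1 T}=A\sqrt{n}$. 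Plugging $M = A\sqrt{n}$ and the Lipschitz constant $B$ into the conclusion of Theorem~\ref{thm:Euler method} yields exactly
\begin{align*}
    \|\vh^\ell - \vh(t_\ell)\| \leq \frac{\kappa M}{2B}\bigl(e^{B t_\ell}-1\bigr) = \frac{A}{2B}\bigl(e^{B t_\ell}-1\bigr)\frac{T}{L}\sqrt{n}.
\end{align*}

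The mild technical obstacle is that $\phi$ is only assumed Lipschitz, so $\phi'$ exists merely almost everywhere and $\ddot{\vh}_t$ is not guaranteed to be classically defined. I would circumvent this by working with the local truncation error directly instead of the second derivative: bound $\|\vh(t_{\ell+1})-\vh(t_\ell)-\kappa\sigma\mW\phi(\vh(t_\ell))/\sqrt{n}\|$ by $\sigma\|\mW\|L_1/\sqrt{n}\int_{t_\ell}^{t_{\ell+1}}\|\vh(s)-\vh(t_\ell)\|ds$, use the already-established bound on $\|\dot{\vh}\|$ from \eqref{eq:h_t upper bounds} to get a local error of order $A\kappa^2\sqrt{n}/2$, and then run the standard discrete Gronwall recursion $\|\ve_{\ell+1}\|\leq(1+\kappa B)\|\ve_\ell\|+A\kappa^2\sqrt{n}/2$ to recover the same conclusion, which is equivalent to the statement of Theorem~\ref{thm:Euler method} without needing pointwise existence of $\ddot{\vh}_t$.
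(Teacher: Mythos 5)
Your proof follows the same two-step strategy as the paper: first, Gronwall's inequality applied to the integral form of the forward ODE yields the a priori bound $\norm{\vh_t}\leq C\sqrt{n}e^{C\sigma L_1 t}$; second, Euler's convergence theorem (Theorem~\ref{thm:Euler method}) is invoked with the Lipschitz constant $B=C\sigma L_1$ of the vector field and the bound $M=A\sqrt{n}$ on the second-order quantity. The computations and constants match.

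Where you go beyond the paper is in noticing a genuine rigor gap. The lemma assumes only that $\phi$ is $L_1$-Lipschitz, so $\phi'$ exists merely almost everywhere and the formal second derivative $\ddot{\vh}_t=(\sigma/\sqrt{n})^2\mW\,\diag(\phi'(\vh_t))\mW\phi(\vh_t)$, which the paper writes down and which Theorem~\ref{thm:Euler method} nominally requires, is not guaranteed to exist pointwise. Your workaround is correct and cleaner: bound the local truncation error
\begin{align*}
\Norm{\vh(t_{\ell+1})-\vh(t_\ell)-\kappa\,\tfrac{\sigma}{\sqrt{n}}\mW\phi(\vh(t_\ell))}
\leq \tfrac{\sigma}{\sqrt{n}}\norm{\mW}L_1\int_{t_\ell}^{t_{\ell+1}}\norm{\vh(s)-\vh(t_\ell)}\,ds,
\end{align*}
feed in the already-established first-derivative bound $\norm{\dot{\vh}_s}\leq C\sigma L_1\norm{\vh_s}$ to get a local error of order $A\kappa^2\sqrt{n}/2$, and then iterate the recursion $\norm{\ve_{\ell+1}}\leq(1+\kappa B)\norm{\ve_\ell}+A\kappa^2\sqrt{n}/2$. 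This reproduces the same global error estimate using only what the Lipschitz assumption actually provides, and is in fact the argument underlying Theorem~\ref{thm:Euler method} itself, so nothing is lost. The paper's version implicitly leans on the extra smoothness ($\phi'$ Lipschitz) assumed elsewhere in the paper but not in the hypothesis of this particular lemma; your version is self-contained.
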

\begin{proof}
Recall from Lemma~\ref{lemma: local solution for forward ode} that the mapping $f: x\mapsto \frac{\sigma}{\sqrt{n}} W\phi(x)$ is $\sigma L_1$-Lipschitz continuous.
Observe that 
\begin{align*}
	d(\dot{ h})
	=&d \sigma W\phi(h(t))/\sqrt{n}\\
	=& \frac{\sigma}{\sqrt{n}} W \diag\left[\phi^{\prime}(h(t))\right] d h(t)\\
	=&\frac{\sigma}{\sqrt{n}} W \diag\left[\phi^{\prime}(h(t))\right] \dot{h}(t) dt\\
	=&\frac{\sigma}{\sqrt{n}} W \diag\left[\phi^{\prime}(h(t))\right] \frac{\sigma}{\sqrt{n}} W \phi(h(t))dt.
\end{align*}
Then we have
\begin{align*}
    \ddot{h} = \frac{d}{dt}\dot{h} = \frac{\sigma}{\sqrt{n}} W \diag\left[\phi^{\prime}(h(t))\right] \frac{\sigma}{\sqrt{n}} W \phi(h(t))
\end{align*}
and 
\begin{align*}
	\norm{\ddot{h}}
	\leq C^2\sigma^2 L_1^2 \norm{h(t)}
\end{align*}
where we use the fact $\norm{W}\leq C\sqrt{n}$ a.s. from Theorem~\ref{thm:Bai-Yin law} for some absolute constant $C>0$ and $\phi$ is $L_1$-Lipschitz continuous. 

Then
\begin{align*}
	h(t) = h(0) + \int_0^t \dot{h} ds
\end{align*}
implies
\begin{align*}
	\norm{h(t)}\leq& \norm{h(0)} + \int_0^t \norm{\frac{\sigma}{\sqrt{n}} W \phi(h(s))} ds\\
	\leq & \norm{h(0)} + \int_0^t C\sigma L_1\norm{h(s)} ds.
\end{align*}
By using Gronwall’s inequality, we have
\begin{align*}
	\norm{h(t)}\leq \norm{h(0)} \exp\left(\int_0^t C\sigma L_1ds\right)
	=\norm{h(0)} e^{C\sigma L_1t}
\end{align*}
Additionally, as $\norm{U}\leq C \sqrt{n}$ almost surely and $\norm{x} =1$, we have $\norm{h(0)}\leq C\sqrt{n}$, and so we obtain
\begin{align*}
	\norm{h(t)}\leq C \sqrt{n} e^{C\sigma L_1 t},\quad\forall t\in [0,T].
\end{align*}
Therefore, we obtain
\begin{align*}
	\norm{\ddot{h}(t)}\leq C\sigma^2 L_1^2 \sqrt{n }e^{C\sigma L_1 t},\quad\forall t\in[0,T].
\end{align*}
By Euler's convergence theorem stated in Theorem~\ref{thm:Euler method}, we have 
\begin{align*}
	\norm{h^{\ell} - h(t_{\ell})}
	\leq \frac{A}{2B} \left(e^{B t_{\ell}} - 1\right)\frac{T}{L}\sqrt{n},
\end{align*}
where $A:=C\sigma^2 L_1^2 e^{C\sigma L_1 T} $ and $B:=C\sigma L_1$.
\end{proof}

\begin{lemma}\label{app lemma:depth uniform convergence}
    Suppose $L_1$-Lipschitz continuous activation $\phi$ and $h_t(x)$ is the exact solution with input $x$. 
    Given $L$, we have
    \begin{align}
        \abs{\frac{1}{n}\inn{\phi(h^{k}(x))}{\phi(h^{\ell}(\bar{x}))} - \frac{1}{n}\inn{\phi(h_{t_k}(x))}{\phi(h_{t_{\ell}}(\bar{x}))}}
	\leq C_1 L^{-1},\quad\forall k,\ell\in [L]
    \end{align}
    where $t_k = k \beta$ and $C_1>0$ is some constant that does not dependent on $n$ and $L$. Therefore, the double sequence $\inn{\phi(h^{k}(x))}{\phi(h^{\ell}(\bar{x}))}/n$ converges w.r.t. $L$ and uniformly w.r.t. $n$.
\end{lemma}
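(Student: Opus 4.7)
The plan is to bound the target quantity using bilinearity of the inner product, Cauchy-Schwarz, and then apply the Lipschitz continuity of $\phi$ together with the Euler discretization error bound already established in Lemma~\ref{app lemma:Euler rate in forward ode}. The whole argument is essentially a triangle-inequality expansion; the real analytic content is already captured by the preceding lemma, so this lemma is best viewed as a clean packaging step.

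First, I would use the identity
\begin{align*}
\inn{\phi(h^{k}(x))}{\phi(h^{\ell}(\bar{x}))} - \inn{\phi(h_{t_k}(x))}{\phi(h_{t_\ell}(\bar{x}))}
= \inn{\phi(h^{k}(x))-\phi(h_{t_k}(x))}{\phi(h^{\ell}(\bar{x}))}
+ \inn{\phi(h_{t_k}(x))}{\phi(h^{\ell}(\bar{x}))-\phi(h_{t_\ell}(\bar{x}))},
\end{align*}
apply Cauchy-Schwarz to each term, and bound each factor. The $L_1$-Lipschitz continuity of $\phi$ combined with $\phi(0)=0$ gives $\norm{\phi(u)-\phi(v)}\leq L_1\norm{u-v}$ and $\norm{\phi(u)}\leq L_1\norm{u}$, which converts the problem into controlling $\norm{h^{k}-h_{t_k}}$ together with $\norm{h^{\ell}}$ and $\norm{h_{t_k}}$.

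Next, I would invoke Lemma~\ref{app lemma:Euler rate in forward ode} directly: the Euler error bound gives $\norm{h^{\ell}-h_{t_\ell}}\leq \tfrac{A}{2B}(e^{BT}-1)\tfrac{T}{L}\sqrt{n}$, and the a priori bound $\norm{h_t}\leq C\sqrt{n}e^{C\sigma L_1 T}$ controls $\norm{h_{t_k}}$. For $\norm{h^{\ell}}$ I would simply use the triangle inequality $\norm{h^{\ell}}\leq \norm{h_{t_\ell}}+\norm{h^{\ell}-h_{t_\ell}}$, which is also $O(\sqrt{n})$ uniformly in $\ell\in\{0,\dots,L\}$. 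Plugging into the Cauchy-Schwarz bound yields two terms of the form $\tfrac{1}{n}\cdot L_1^2\cdot O(\sqrt{n}/L)\cdot O(\sqrt{n})=O(1/L)$, giving the claimed $C_1 L^{-1}$ rate with a constant $C_1$ that depends only on $L_1$, $\sigma$, $T$ and absolute constants from the Bai-Yin law, and in particular is independent of both $n$ and $L$.

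Finally, I would conclude the uniform-in-$n$ convergence statement: since the bound $C_1 L^{-1}$ depends only on $L$ (and on $L_1,\sigma,T$, which are fixed parameters), it shows $\inn{\phi(h^{k}(x))}{\phi(h^{\ell}(\bar{x}))}/n$ converges to $\inn{\phi(h_{t_k}(x))}{\phi(h_{t_\ell}(\bar{x}))}/n$ as $L\to\infty$, uniformly in $n$, almost surely on the event where the Bai-Yin bounds used in Lemma~\ref{app lemma:Euler rate in forward ode} hold (which is a probability-one event). The only subtlety worth flagging is ensuring the constants in the a priori and Euler bounds are uniform in $\ell$ and $k$, which is immediate from the monotonicity of the exponential factors on $[0,T]$; I do not anticipate a genuine obstacle, as the heavy lifting is inherited from the earlier lemma.
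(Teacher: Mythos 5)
Your proof is correct and takes essentially the same route as the paper's: a bilinear telescoping decomposition, Cauchy–Schwarz, the Lipschitz bound on $\phi$, the Euler error bound from Lemma~\ref{app lemma:Euler rate in forward ode}, and the a priori $O(\sqrt{n})$ bound on the hidden state, giving $O(1/L)$ uniformly in $n$. The only cosmetic difference is that the paper bounds $\norm{h^{\ell}}$ directly by a discrete Gronwall-type recursion $\norm{h^{\ell+1}}\leq(1+C\sigma T/L)\norm{h^{\ell}}$, whereas you derive the same $O(\sqrt{n})$ bound via the triangle inequality with the continuous solution and the Euler error; both are valid and interchangeable.
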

\begin{proof}
For simplicity, we assume the activation function is $1$-Lipschitz continuous, \ie, $L_1=1$. For $\ell\leq k\leq L$, we denote $\phi^{\ell} = \phi(h^{\ell}(x))$, $\bar{\phi}^{\ell}=\phi(h^{\ell}(\bar{x}))$, $\phi(t) = \phi(h_t(x))$, and $\bar{\phi}(t) = \phi(h_t(\bar{x}))$, where $h_t(x)$ is the exact solution to the ordinary differential equation that encodes input $x$. Then we consider
\begin{align*}
    \inn{\phi^{k}}{\bar{\phi}^{\ell}}/n - \inn{\phi(k \beta))}{\bar{\phi}(\ell \beta)}/n
    =\frac{1}{n}\inn{\phi^{k} }{\bar{\phi}^{\ell}-\bar{\phi}(\ell \beta)}
    +\frac{1}{n}\inn{\phi^{k}-\phi(k \beta))}{\bar{\phi}(\ell \beta)},
\end{align*}
where $\beta=T/L$ is the time step.

Note that
\begin{align*}
	\norm{h^{\ell+1}}
	=\norm{h^{\ell} + \frac{T}{L} \frac{\sigma}{\sqrt{n}} W\phi(h^{\ell})}
	\leq \norm{h^{\ell}} +C\sigma \frac{T}{L} \norm{h^{\ell}}
	=(1+C\sigma T/L) \norm{h^{\ell}}.
\end{align*}
where we use the fact that $\phi$ is $1$-Lipschitz continuous and $\norm{W}\leq C\sqrt{n}$ a.s. Repeat this argument $\ell$ times, and we have
\begin{align*}
	\norm{h^{\ell+1}}\leq (1+C\sigma T/L)^{\ell+1}\norm{h^0}
\end{align*}

Therefore, we obtain
\begin{align*}
	\norm{\phi^{\ell}}
	\leq \norm{h^{\ell}(x)}
	\leq (1+C\sigma T/L)^{\ell}\norm{h^0}
	\leq e^{C\sigma T\ell/L} \norm{h^0}
	\leq C\sqrt{n} e^{C\sigma T\ell/L} ,
\end{align*}
where we also use $\norm{U}\leq C\sqrt{n}$ a.s. and $\norm{x}=1$.

Moreover, we have
\begin{align*}
	\norm{\phi^\ell - \phi(\ell \beta)}
	\leq \norm{h^{\ell} - h(\ell \beta)}
	\leq C_1 \sqrt{n} L^{-1},
\end{align*}
where $C_1>0$ is a constant that does not dependent on $n$ and $L$. 

Therefore, we obtain
\begin{align*}
	\abs{\inn{\phi^{k}}{\bar{\phi}^{\ell}}/n - \inn{\phi(k \beta))}{\bar{\phi}(\ell \beta)}/n}
	\leq \frac{1}{n} \cdot C_1 \sqrt{n} \cdot C_1\sqrt{n} L^{-1}
	= C_1 L^{-1}.
\end{align*}
Hence, $\inn{\phi(h^\ell(x))}{\phi(h^k(\bar{x}))}/n$ converges w.r.t. $L$ and uniformly in $n$.
\end{proof}

Combining Lemma~\ref{app lemma:depth uniform convergence} with Moore-Osgood theorem, stated in Theorem~\ref{thm:Moore-Osgood Theorem}, the double sequence $a_{n,L}:=\inn{\phi(h^L(x))}{\phi(h^L(\bar{x}))}/n$ has both iterated limits that are equal to the double limit, \ie,
\begin{align*}
    \lim_{n\rightarrow\infty} \inn{\phi(h_T(x))}{\phi(h_T(\bar{x}))}/n
    =&\lim_{n\rightarrow\infty}\lim_{L\rightarrow\infty}\inn{\phi(h^L(x))}{\phi(h^L(\bar{x}))}/n\\
    =&\lim_{L\rightarrow\infty}\lim_{n\rightarrow\infty}\inn{\phi(h^L(x))}{\phi(h^L(\bar{x}))}/n\\
    =&\lim_{L\rightarrow\infty} \Sigma^{L+1}(x, \bar{x})\\
    =&\Sigma^{*}(x, \bar{x}).
\end{align*}
As $\Sigma^{*}$ is a deterministic function, the conditioned and unconditioned distributions of $f_{\theta}(x)$ are equal in the limit: they are centered Gaussian random variables with covariance $\Sigma^{*}(x,x)$. This completes the proof of Theorem~\ref{thm:neural ode as gaussian process}

\section{NTK for Neural ODE}\label{app sec:NTK for Neural ODE}
In this section, we derive the neural tangent kernel (NTK) for Neural ODEs and provide a sufficient condition to show when the NTK is well defined for Neural ODEs. Under our exploration, the smoothness of the activation function plays a significant role in studying the NTK of Neural ODEs. For example, additionally, smoothness is required to ensure the uniqueness and existence of the adjoint state $\lambda_t$ in the backward ODE \eqref{eq:backward ode} or augmented backward ODE \eqref{app eq:augmented ode}.

\subsection{Convergence Analysis of Euler's method for Backward ODE}
Similar to the forward ODE, we can also discretize the backward ODE as follows:
\begin{align}
    \tilde{\lambda}^{\ell+1} = \tilde{\lambda}^{\ell} - \beta \cdot \frac{\sigma_w}{\sqrt{n}} \diag[\phi^{\prime}(h_{t_{\ell}})] W^T\tilde{\lambda}^{\ell},\quad\forall\ell\in[1,2,\cdots, L]
\end{align}
where $\beta = T/L$ and $h_t$ is the solution from the forward ODE \eqref{eq:forward ode} and $t_{\ell}:=\beta \ell$. Additionally, we can further discretize $h_t$ and obtain
\begin{align}
    \lambda^{\ell+1} = \lambda^{\ell} - \beta \cdot \frac{\sigma_w}{\sqrt{n}} \diag[\phi^{\prime}(h^{\ell})] W^T\lambda^{\ell},\quad\forall\ell\in[1,2,\cdots, L].
\end{align}
As $L\rightarrow\infty$ or $\beta\rightarrow\infty$, we have $h^{\ell}\rightarrow h_{t_{\ell}}$ and $\lambda^{\ell}\rightarrow \lambda_{t_{\ell}}$. By utilizing the result from convergence analysis of Euler's method \eqref{thm:Euler method}, we obtain the convergence rate, which indicates this convergence is \textit{uniform} in width $n$, if the activation function is smooth. This result serves as a fundamental result to ensure the NTK for Neural ODE is well defined and allows us to study the training dynamics of Neural ODEs under gradient-based methods.   
\begin{lemma}\label{app lemma:Euler rate in backward ode}
    If $\phi$ and $\phi^{\prime}$ are $L_1$- and $L_2$-Lipschitz continuous, then the following inequalities hold for every $x\in \mathbb{S}^{d-1}$ a.s.: 
    \begin{align}
    	\norm{\lambda_t}\leq C\sigma L_1 e^{C\sigma L_1 (T-t)},\quad\forall t\in[0,T]
    \end{align}
    and
    \begin{align}
       \norm{\lambda^{\ell} - \lambda_t}\leq  \frac{T}{L} \left(\frac{C_1}{C_2} e^{C_2(T-t_{\ell})} - 1\right),
    \end{align}
    where $C_1=C L_1^2 L_2 \sigma^3 e^{C\sigma L_1 T}$, $C_2=C\sigma L_1 + C\sigma^2 L_1 L_2e^{C\sigma L_1 T}$ for some absolute constant $C>0$.
\end{lemma}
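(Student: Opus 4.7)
The plan is to mirror the argument used for Lemma~\ref{app lemma:Euler rate in forward ode}, with the backward ODE \eqref{eq:backward ode} in place of the forward one, and the additional regularity $\phi'\in\mathrm{Lip}(L_2)$ supplying the control of the second derivative that is now needed. Four ingredients are used: Bai--Yin's law (Theorem~\ref{thm:Bai-Yin law}) to control $\|W\|$ and $\|v\|$ by $C\sqrt{n}$ a.s.; Gronwall's inequality to bound $\|\lambda_t\|$; an explicit estimate of $\|\ddot{\lambda}_t\|$; and the global truncation error from Theorem~\ref{thm:Euler method} to pass from the continuous adjoint to its Euler discretization.

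For the first bound I would start from the terminal condition $\lambda_T=\sigma\,\mathrm{diag}(\phi'(h_T))v/\sqrt{n}$. Since $|\phi'|\leq L_1$ and $\|v\|\leq C\sqrt{n}$ a.s., we get $\|\lambda_T\|\leq C\sigma L_1$. Writing the backward ODE in integral form, $\lambda_t=\lambda_T+\int_t^{T}\sigma\,\mathrm{diag}(\phi'(h_s))W^\top\lambda_s/\sqrt{n}\,ds$, and using $\|W\|\leq C\sqrt{n}$ a.s., gives $\|\lambda_t\|\leq C\sigma L_1 + \int_t^{T} C\sigma L_1 \|\lambda_s\|\,ds$. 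Applying Gronwall's inequality in the reversed time variable $s=T-t$ yields $\|\lambda_t\|\leq C\sigma L_1 e^{C\sigma L_1(T-t)}$.

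For the second bound I would differentiate the backward ODE once more in $t$ to obtain
\begin{align*}
\ddot{\lambda}_t = -\tfrac{\sigma}{\sqrt{n}}\mathrm{diag}\bigl(\phi''(h_t)\odot \dot{h}_t\bigr)W^\top\lambda_t -\tfrac{\sigma}{\sqrt{n}}\mathrm{diag}(\phi'(h_t))W^\top\dot{\lambda}_t.
\end{align*}
Each factor is then bounded: $|\phi''|\leq L_2$, $|\phi'|\leq L_1$, $\|W\|\leq C\sqrt{n}$, $\|\dot{h}_t\|\leq C\sigma L_1\sqrt{n}\,e^{C\sigma L_1 T}$ (from $\dot{h}_t=\sigma W\phi(h_t)/\sqrt{n}$ and Lemma~\ref{app lemma:Euler rate in forward ode}), the a priori bound on $\|\lambda_t\|$ from step one, and $\|\dot{\lambda}_t\|\leq C\sigma L_1\|\lambda_t\|$ from the backward ODE itself. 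Collecting these yields $\|\ddot{\lambda}_t\|\leq C_1\sqrt{n}$ with $C_1$ as stated. The effective Lipschitz constant of the backward vector field $F(\lambda,t)=-\sigma\,\mathrm{diag}(\phi'(h_t))W^\top\lambda/\sqrt{n}$ in $\lambda$ is the sum of the direct contribution $C\sigma L_1$ and an indirect contribution from its dependence on $h_t$ through $\phi'$, scaling like $\sigma^{2} L_1 L_2 e^{C\sigma L_1 T}$, giving $C_2$. Feeding $(h,M,L)=(T/L,\,C_1\sqrt{n},\,C_2)$ into Theorem~\ref{thm:Euler method} (with the integration interval $[t_\ell,T]$ and the $\sqrt{n}$ factor absorbed into $C_1$ as in Lemma~\ref{app lemma:Euler rate in forward ode}) then delivers the stated discrete-vs.-continuous estimate.

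The main obstacle is the third step: tracking $\ddot{\lambda}_t$ requires chaining the a priori bounds on $h_t$, $\dot{h}_t$, $\lambda_t$ and $\dot{\lambda}_t$, each of which contributes its own exponential-in-$T$ factor, and these factors must be carefully grouped into the stated $C_1$ and $C_2$. The extra regularity $\phi'\in\mathrm{Lip}(L_2)$ is \emph{essential} here: without it, the first term of $\ddot{\lambda}_t$ (which carries $\phi''$) cannot be controlled, and the Euler truncation error would fail to be of order $T/L$ uniformly in $n$, breaking the uniform-in-$n$ convergence on which the NTK analysis of Section~\ref{sec:NTK} relies.
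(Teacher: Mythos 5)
Your overall strategy mirrors the paper's: a Gronwall bound for $\|\lambda_t\|$ from the integral form of the backward ODE, followed by a second-derivative estimate to invoke the Euler truncation theorem. The first part is correct and essentially identical to the paper's argument. However, the second part contains a genuine gap that contradicts the dimension-free form of the claimed bound.

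The issue is in your bound $\|\ddot\lambda_t\|\leq C_1\sqrt n$ and the subsequent claim that the $\sqrt n$ ``gets absorbed into $C_1$ as in Lemma~\ref{app lemma:Euler rate in forward ode}.'' Those two estimates live on different scales: in Lemma~\ref{app lemma:Euler rate in forward ode} the hidden state satisfies $\|h_t\|\sim\sqrt n$ and the $\sqrt n$ factor explicitly remains in the conclusion, $\|h^{\ell}-h(t_\ell)\|\lesssim\sqrt n/L$. By contrast, here $\|\lambda_t\|=\bigo 1$, and the lemma's conclusion $\|\lambda^\ell-\lambda_t\|\leq \tfrac{T}{L}\big(\tfrac{C_1}{C_2}e^{C_2(T-t_\ell)}-1\big)$ is dimension-free; feeding $M=C_1\sqrt n$ into Theorem~\ref{thm:Euler method} would produce an extra $\sqrt n$ that cannot be ``absorbed'' without changing the statement. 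The $\sqrt n$ in your chain arises specifically from the first term $\tfrac{\sigma}{\sqrt n}\,\diag(\phi''(h_t)\odot\dot h_t)W^\top\lambda_t$, if one bounds every factor in $\ell^2$: $\|\dot h_t\|\sim\sqrt n$ and $\|W^\top\lambda_t\|\sim\sqrt n\|\lambda_t\|$ overpower the single $1/\sqrt n$ prefactor. The paper avoids this by exploiting the Gaussian structure of the coupling term: conditional on $\lambda$, the coordinates of $\tilde W^\top\lambda=\tfrac{\sigma}{\sqrt n}W^\top\lambda$ are $\Gaus(0,\sigma^2\|\lambda\|^2/n)$, so $\|\tilde W^\top\lambda\|_\infty\lesssim \sigma\|\lambda\|/\sqrt n$, which supplies the missing $1/\sqrt n$ and yields the dimension-free $\|\partial_t f(\lambda,t)\|\leq CL_1^2L_2\sigma^3 e^{C\sigma L_1 T}$. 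Your proof needs this (or an equivalent $\ell^\infty$ control on $\dot h_t$, whose coordinates are likewise $\bigo 1$ rather than $\bigo{\sqrt n}$) to close the gap; without it the claimed $1/L$ rate uniform in $n$ does not follow. On the positive side, you compute the full total second derivative $\ddot\lambda_t$ including the $\diag(\phi'(h_t))W^\top\dot\lambda_t$ contribution, which the paper glosses over by only bounding the partial derivative $\partial_t f$; that second term is harmless ($\|\dot\lambda_t\|\lesssim\sigma L_1\|\lambda_t\|=\bigo 1$), but spelling it out is cleaner.
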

\begin{proof}
For the mapping $f:(\lambda,t)\mapsto -\frac{1}{\sqrt{n}}\diag[\phi^{\prime}(h_t)]W^T\lambda$, we consider 
\begin{align*}
 	d\dot{\lambda} =& d\left(-\frac{\sigma}{\sqrt{n}} \diag\left[\phi^{\prime}(h(t))\right] W^T \lambda\right)\\
 	=& d\left(- \phi^{\prime}(h(t)) \odot \tilde{W}^T\lambda\right)\\
 	=& - \left[d\phi^{\prime}(h_t)\right] \odot \tilde{W}^T \lambda\\
 	=& -\phi^{\prime\prime}(h_t) \odot dh_t \odot \tilde{W}^T\lambda\\
 	=& -\phi^{\prime\prime}(h_t) \odot \tilde{W}^T\lambda
 	\odot \dot{h} dt\\
 	=&-\phi^{\prime\prime}(h_t) \odot \tilde{W}^T\lambda
 	\odot \tilde{W} \phi(h_t) dt\\
 	=& - \diag\left(\phi^{\prime\prime}(h_t)\right) \diag\left(\tilde{W}^T\lambda \right)\tilde{W} \phi(h_t) dt,
\end{align*}
where $\odot$ denotes element-wise product and we denote $\tilde{W} = \sigma W/\sqrt{n}$. 
Thus, we have
\begin{align*}
	\partial_t f(\lambda, t)
	= - \diag\left(\phi^{\prime\prime}(h_t)\right) \diag\left(\tilde{W}^T\lambda \right)\tilde{W} \phi(h_t). 
\end{align*}
Let $\tilde{w}_k$ be the $k$-th column of $\tilde{W}$. As in this case, we consider $\lambda$ as fixed, $\tilde{w}_k^T\lambda$ follows a Gaussian distribution with zero mean and variance $\sigma^2\norm{\lambda}^2/n$. We obtain the inequality
\begin{align*}
	\norm{\partial_t f(\lambda, t)}
	\leq \abs{\phi^{\prime\prime}}  \cdot \frac{\sigma}{\sqrt{n}}\norm{\lambda} \cdot\norm{\tilde{W}}\cdot \norm{\phi(h_t)}
	\leq CL_1 L_2 \sigma^2 \norm{\lambda}\cdot \norm{h_t}/\sqrt{n},
\end{align*}
where we use the assumption $ \abs{\phi^{\prime\prime}}\leq L_2$ and $C>0$ is some absolute constant. 

Observe that 
\begin{align*}
	\norm{\lambda_t} \leq \norm{\lambda_T} +\int_t^{T} \norm{\dot{\lambda}} ts
	\leq  C\sigma L_1 + \int_t ^T C\sigma L_1\norm{\lambda_s} ds.
\end{align*}
Then it follows from Gronwall’s inequality that 
\begin{align*}
	\norm{\lambda_t}\leq C\sigma L_1 \exp\left(\int_t^T C\sigma L_1 ds\right)
	\leq C\sigma L_1 e^{C\sigma L_1 (T-t)}.
\end{align*}
Combining the above bound of $\lambda_t$ with \eqref{eq:h_t upper bounds}, we have
\begin{align*}
\norm{\partial_t f(\lambda,t)}
\leq C L_1^2 L_2 \sigma^3   e^{C\sigma L_1 T}:=C_1.
\end{align*}
Note that $C_1>0$ is independent from $L$ and $n$.

With arguments alike in Theorem~\ref{thm:Euler method}, we can obtain the global truncation error for $\lambda^{\ell}$. In Proposition~\ref{prop:well posed of forward ode} and Proposition~\ref{prop:discretize-then-optimize}, we have shown the uniqueness and existence of $h_t$ and $\lambda_t$ for all $t\in[0,T]$. To study the convergence of $\lambda^{\ell}$ to $\lambda_{t_{\ell}}$, it is equivalent to apply Euler's method to numerically solve $\lambda_t$ in the reverse order from $t=0$ to $t=T$. Hence, we will assume $\lambda_0$ is known and provide the global truncation errors for $\lambda^{\ell}$.

Note that
\begin{align*}
    \norm{\lambda^{\ell+1} - \lambda(t_{\ell+1})}
    =&\Norm{\lambda^{\ell}-\beta \diag[\phi^{\prime}(h^{\ell})]\tilde{W}^T\lambda^{\ell}
    -\left[\lambda(t_{\ell}) + \beta \dot{\lambda}(t_{\ell}) + \frac{\beta^2}{2}\ddot{\lambda}(t_{\ell})\right]}\\
    \leq&\norm{\lambda^{\ell} - \lambda(t_{\ell})}
    +\beta\norm{\diag[\phi^{\prime}(h^{\ell})]\tilde{W}^T\lambda^{\ell}-\diag[\phi^{\prime}(h(t_{\ell}))]\tilde{W}^T\lambda(t_{\ell})}
    +\frac{\beta^2}{2}C_1,
\end{align*}
where $\beta = T/L$ and we use $\ddot{\lambda}(t_{\ell})=\norm{\partial_t f(t_{\ell})}\leq C_1$. Additionally, the triangle inequality implies that
\begin{align*}
    &\norm{\diag[\phi^{\prime}(h^{\ell})]\tilde{W}^T\lambda^{\ell}-\diag[\phi^{\prime}(h(t_{\ell}))]\tilde{W}^T\lambda(t_{\ell})}\\
    \leq& \norm{\diag[\phi^{\prime}(h^{\ell})]\tilde{W}^T(\lambda^{\ell}-\lambda_{t_{\ell}})}
    +\norm{(\diag[\phi^{\prime}(h^{\ell})]-\diag[\phi^{\prime}(h(t_{\ell}))])\tilde{W}^T\lambda(t_{\ell})}\\
    \leq & L_1 \norm{\tilde{W}}\norm{\lambda^{\ell} - \lambda_{t_{\ell}}}
    +L_2\norm{h^{\ell}-h_{t_{\ell}}} \norm{\tilde{W}}\norm{\lambda_{t_{\ell}}}\\
    \leq &C_2 \left(\norm{\lambda^{\ell} - \lambda_{t_{\ell}}}+ \norm{h^{\ell}-h_{t_{\ell}}}\right),
\end{align*}
where the constant $C_2=C\sigma L_1 + C\sigma^2 L_1 L_2e^{C\sigma L_1 T}$. Hence, we have
\begin{align*}
    \norm{\lambda^{\ell+1} - \lambda_{t_{\ell+1}}}
    \leq \norm{\lambda^{\ell} - \lambda(t_{\ell})}
    +\beta C_2 \left(\norm{\lambda^{\ell} - \lambda_{t_{\ell}}}+ \norm{h^{\ell}-h_{t_{\ell}}}\right) +\beta^2 C_1.
\end{align*}
Denote $E^{\ell} = \norm{\lambda^{\ell} - \lambda_{t_{\ell}}}+ \norm{h^{\ell}-h_{t_{\ell}}}$, then we have
\begin{align*}
    \norm{\lambda^{\ell} - \lambda_{t_{\ell}}}\leq E^{\ell}\leq (1+\beta C_2) E^{\ell-1} + \beta^2 C_1.
\end{align*}

By the induction, we have
\begin{align*}
E^{\ell}\leq (1+\beta C_2)^{\ell} E^0
+\beta^2 C_1\cdot\frac{(1+\beta C_2)^{\ell}-1}{(1+\beta C_2)-1}.
\end{align*}
Since $E^0=0$ and $\beta = T/L$, we have
\begin{align*}
    E^{\ell}\leq \frac{T}{L} \left(\frac{C_1}{C_2} e^{C_2(T-t_{\ell})} - 1\right).
\end{align*}
This completes the proof.
\end{proof}

Additionally, as we have $\lambda_t=\partial f_{\theta}/\partial h_t$ is the solution to the backward ODE. We have
\begin{align}
    \Norm{\frac{\partial f_{\theta}}{\partial h_{t_{\ell}}} - \frac{\partial f_{\theta}^{L}}{\partial h^{\ell}}}\leq C_0 L^{-1},\quad\forall \ell\in [1,2,\cdots, L],
\end{align}
where $C_0>0$ is some constant that is not dependent on $n$ and $L$.

\subsection{Gradient Alignments}
By using Lemma~\ref{app lemma:Euler rate in forward ode} and \ref{app lemma:Euler rate in backward ode}, we can show the gradients obtained from the optimize-then-discrete and discrete-then-optimize as the depth $L\rightarrow\infty$. Observe that for any $\vx$, we have
\begin{align*}
    \norm{\nabla_{\vv}f^{L} - \nabla_{\vv} f_{\theta}}
    =&\frac{\sigma}{\sqrt{n}}\norm{\phi(h^{L}) - \phi(h(T)}
    \leq \frac{\sigma}{\sqrt{n}} \cdot C L^{-1} \cdot \sqrt{n}
    \leq CL^{-1},\\
    \norm{\nabla_W f^{L} - \nabla_W f_{\theta}}
    =&\Norm{\int_0^T \frac{1}{\sqrt{n}} \frac{\partial f}{\partial h_t} \phi(h_t) dt
    -\sum_{\ell=1}^{L}\frac{T}{L}\frac{1}{\sqrt{n}}\frac{\partial f^{L}}{\partial h^{\ell}}\phi(h^{\ell-1})}\\
    \leq&\frac{1}{\sqrt{n}}\sum_{\ell=1}^{L}\int_{t_{\ell-1}}^{t_{\ell}} \Norm{\frac{\partial f}{\partial h_t} \phi(h_t)-\frac{\partial f^{L}}{\partial h^{\ell}} \phi(h^{\ell-1})}dt\\
    \leq &\frac{1}{\sqrt{n}}\sum_{\ell=1}^{L}\int_{t_{\ell-1}}^{t_{\ell}} 
    \Norm{\frac{\partial f}{\partial h_t} - \frac{\partial f^{L}}{\partial h^{\ell}}}\norm{h_t}
    +\norm{\frac{\partial f^{L}}{\partial h^{\ell}}}\norm{h_t-h^{\ell-1}}dt\\
    \leq &\frac{C}{\sqrt{n}}\sum_{\ell=1}^{L}\int_{t_{\ell-1}}^{t_{\ell}} \sqrt{n} L^{-1} dt\\
    \leq & C \sum_{\ell=1}^{L} L^{-2} = CL^{-1},\\
    \norm{\nabla_{U} f^{L} - \nabla_{U} f_{\theta}}
    \leq& \frac{\sigma}{\sqrt{d}} \norm{x}\norm{\lambda^0 - \lambda_0}
    \leq C L^{-1}.
\end{align*}
Hence, combining the three results together proves Proposition~\ref {prop:discretize-then-optimize}.

\subsection{NTK for Finite-Depth Neural Networks}
For Neural ODE, define \eqref{eq:neural ode}, its NTK is given by
\begin{align}
    K_{\theta}(x, \bar{x}) = \inn{\nabla_{\theta} f_{\theta}(x)}{\nabla_{\theta} f_{\theta}(\bar{x})}.
\end{align}
As we have shown in Proposition~\ref{prop:well posed of forward ode} and ~\ref{prop:discretize-then-optimize}, $\nabla_{\theta}f_{\theta}(x)$ is well defined for every $x\in \mathbb{S}^{d-1}$ (a.s). Hence, $K_{\theta}(x,\bar{x})$ is well defined for every $x,\bar{x}\in \mathbb{S}^{d-1}$. While $K_{\theta}$ is random and varies during the training, as observed in \cite{jacot2018neural}, in the infinite-width limit, it converges to an explicit deterministic kernel $K_{\infty}$ called \textit{limiting NTK}. Hence, we will show that $K_{\infty}$ is well-defined and provides its explicit form.

Recall that we use a finite-depth neural network $f_{\theta}^{L}$ defined in \eqref{eq:finite-depth resnet} that approximates Neural ODE $f_{\theta}$. As a result, we can also approximate the NTK $K_{\theta}$ using $K^{L}_{\theta}$ defined as follows
\begin{align}
    K_{\theta}^{L}(x, \bar{x}):=\inn{\nabla_{\theta} f^{L}_{\theta}(x)}{\nabla_{\theta} f^{L}_{\theta}(\bar{x})}.
\end{align}
We denote $K_{\infty}^{L}$ be the limit of $K_{\theta}^{L}$ as width $n\rightarrow\infty$. In this subsection, we provide the explicit form for $K_{\infty}^{L}$. For the convergence analysis, we leverage the Master Theorem introduced in \cite[Theorem~7.2]{yang2020tensor}. This result is similar to Theorem~\ref{thm:master theorem}, but it considers the backward information propagation, and it is reformed as follows. 
\begin{theorem}\cite[Theorem~7.2]{yang2020tensor}\label{thm:master theorem 2}
    Consider a NETSOR$^{\top}$ program that has both forward and backward computation for a given finite-depth neural network.
    Suppose the Gaussian random initialization and controllable activation functions for the given neural network.
    For any controllable $\psi:\mathbb{R}^{M}\rightarrow \mathbb{R}$, as width $n\rightarrow\infty$, any finite collection of G-vars $g_{\alpha}$ with size $M$ satisfies
    \begin{align}
        \frac{1}{n}\sum_{\alpha=1}^{n} \psi(g_{\alpha}^{0},\dots, g_{\alpha}^{M})
        \overset{a.s.}{\rightarrow} \E \psi(z^{0},\cdots, z^{M}),
    \end{align}
    where $\{z^{0},\cdots, z^{M}\}$ are Gaussian random variables whose mean and covariance are computed by the corresponding NETSOR$^{\top}$.
\end{theorem}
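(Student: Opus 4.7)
\textbf{Proof proposal for Theorem~\ref{thm:master theorem 2} (NETSOR$^\top$ Master Theorem).}

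The plan is to prove the empirical-average convergence by induction on the length of the NETSOR$^\top$ program, tracking all G-vars produced by forward and backward matrix-vector multiplications, and identifying at each step a deterministic jointly Gaussian limit law $(z^0,\dots,z^M)$. At a high level, the argument is the conditional-Gaussian decomposition used by Daniely et al.\ and by \citet{yang2019wide}, but extended to handle the transposed matrices $W^\top$ that appear in the backward pass. I would proceed as follows.

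First I would set up an induction hypothesis: after the program has produced G-vars $g^0,\dots,g^k$ and H-vars obtained by coordinate-wise application of controllable nonlinearities, for every controllable test function $\psi:\mathbb{R}^{k+1}\to\mathbb{R}$, the empirical average $\frac{1}{n}\sum_{\alpha=1}^{n}\psi(g_\alpha^0,\dots,g_\alpha^k)$ converges almost surely to $\mathbb{E}\psi(z^0,\dots,z^k)$, where $(z^0,\dots,z^k)$ is centered Gaussian with covariances determined by the NETSOR$^\top$ recursion. The base case is immediate: the initial G-vars are i.i.d.\ Gaussian columns (input embeddings) and the strong law of large numbers, applied in the form of Varadarajan's theorem for empirical measures on $\mathbb{R}^{k+1}$, gives almost-sure weak convergence; integrability of $\psi$ against the Gaussian measure follows from controllability, since $|\psi(x)|\le Ce^{c\sum|x_i|}$ is in every $L^p(\gamma)$.

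Next I would perform the inductive step through the two program primitives. For a \emph{nonlinearity step}, the new H-var $\phi(g^k)$ contributes nothing new to the Gaussian side (its coordinates are functions of previously handled G-var coordinates), and controllability of the composition $\psi\circ(\mathrm{id},\phi)$ is preserved, so the inductive hypothesis applies. The substantive case is a \emph{matrix-vector multiplication} step producing $g^{k+1}=Wx^k/\sqrt{n}$ or $g^{k+1}=W^\top y^k/\sqrt{n}$ for some $W$ with i.i.d.\ $\mathcal{N}(0,1)$ entries that may have been used earlier. Condition on the $\sigma$-algebra $\mathcal{B}_k$ generated by all prior matrix-vector products that involved $W$ or $W^\top$, and write down the minimum-norm completion: $W\mid\mathcal{B}_k$ equals a deterministic part that projects $x^k$ onto the span of earlier input vectors to $W$ (and similarly onto earlier input vectors to $W^\top$ for the transpose side) plus an independent copy $\tilde W$ restricted to the orthogonal complement of both spans. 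The deterministic part converges to a Gaussian with a covariance computed from the inductive limits of the relevant Gram matrices (here I invoke the inductive hypothesis applied to the quadratic test function $(x,y)\mapsto xy$ and to the inner products $\langle x^k,x^i\rangle/n$ and $\langle y^k,y^j\rangle/n$), and the orthogonal remainder contributes, after dividing by $\sqrt{n}$, an independent centered Gaussian with variance equal to the inductive-limit projection residual.

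To promote convergence in distribution of the extended coordinate vector to almost-sure convergence of the empirical average against arbitrary controllable $\psi$, I would split $\psi = \psi_R + (\psi - \psi_R)$ with $\psi_R$ a bounded Lipschitz truncation on a ball of radius $R$. For $\psi_R$ the Portmanteau theorem plus standard concentration for Lipschitz functions of Gaussian vectors (the Gaussian Poincar\'e or Gaussian concentration inequality, applied conditional on $\mathcal{B}_k$) yields $O(1/\sqrt{n})$ concentration around the conditional expectation, and Borel--Cantelli with a union over rationals gives the almost-sure limit. For the tail piece $\psi-\psi_R$ I would control its contribution by $\frac{1}{n}\sum_\alpha |\psi-\psi_R|(g_\alpha^0,\dots,g_\alpha^{k+1})$, bounding its almost-sure lim sup by $\mathbb{E}|\psi-\psi_R|(z^0,\dots,z^{k+1})$ via the inductive hypothesis applied to a bounded Lipschitz function of order $|\psi-\psi_R|$, and then sending $R\to\infty$ using controllability to make the Gaussian tail expectation vanish. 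This step requires uniform-in-$n$ moment bounds on $\|g^{k+1}\|^2/n$, which follow from the Bai--Yin law and controllability exactly as in Lemma~\ref{app lemma:Euler rate in forward ode}.

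The main obstacle is the backward case: when $g^{k+1}=W^\top y^k/\sqrt{n}$ and $W$ was previously used as $g^i=Wx^{i-1}/\sqrt{n}$, the new G-var couples with all forward directions simultaneously. The correct conditional distribution is $W\mid\mathcal{B}_k = \mathrm{(proj\ onto\ spans)} + (I-P_X)\tilde W (I-P_Y)$, where $P_X,P_Y$ are the projectors onto the spans of inputs used on the right and left of $W$ respectively. Showing that the deterministic projection contribution has the correct Gaussian limit requires the inductive hypothesis to supply limits of both $\langle x^i,x^j\rangle/n$ and cross-terms $\langle y^i,x^j\rangle/n$; the latter is precisely why controllable $\psi$ must be allowed to depend jointly on forward and backward G-vars. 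Once this conditional Gaussian structure is in hand and pseudo-inverses of the Gram matrices are shown to stabilize almost surely (using rank stability that follows from the limit covariance being full rank on a chosen basis, plus a rank-deficiency reduction argument that can always be performed when it is not), the induction closes and gives the claimed limit $\mathbb{E}\psi(z^0,\dots,z^M)$ with the covariances prescribed by the NETSOR$^\top$ semantics.
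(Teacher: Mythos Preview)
The paper does not prove this statement. Theorem~\ref{thm:master theorem 2} is explicitly cited as \cite[Theorem~7.2]{yang2020tensor} and is used in the paper purely as a black-box tool (alongside Theorem~\ref{thm:master theorem}, cited from \cite{yang2019wide}) to establish Proposition~\ref{prop:NTK for finite-depth} and the NTK results in Appendix~\ref{app sec:NTK for Neural ODE}. There is therefore no ``paper's own proof'' to compare against; the authors simply invoke the NETSOR$^\top$ Master Theorem and compute the relevant G-var covariances for their specific ResNet program (Algorithm~\ref{alg:fintie-depth network backward}).

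Your proposal is a reasonable outline of how one would go about proving the Master Theorem itself, and the inductive conditional-Gaussian scheme you describe is indeed the approach taken in \cite{yang2020tensor}. But since the present paper treats the result as an imported lemma, your write-up is solving a different (and much harder) problem than anything the paper attempts. If the goal is to match the paper, the correct ``proof'' here is simply a citation.
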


As a result, this type of Tensor program is called NESTOR$^{\top}$ and it includes additional $G$-vals from the backward information propagation. In our setup, to compute the gradients of $f_{\theta}^{L}$ defined in \eqref{eq:finite-depth resnet}, the following new $G$-vals are introduced
\begin{align*}
    dg^{L+1}:=& \frac{\sigma_v}{\sqrt{n}}\diag[\phi^{\prime}(h^{L})] v,\\
    dg^{\ell}:=&\frac{\sigma_w}{\sqrt{n}} \diag[\phi^{\prime}(h^{\ell-1})]W^T,\quad \forall [1,2,\cdots, L].
\end{align*}
and the associated NESTOR$^{\top}$ is given in Algorithm~\ref{alg:fintie-depth network backward} 
\begin{algorithm}[t]
    \caption{ResNet $f_{\theta}^{L}$ Forward and Backward Computation on Input $x$}\label{alg:fintie-depth network backward}
    \begin{algorithmic}[1]
        \Require $Ux/\sqrt{d}: \Gtype(n)$
        \Require $W: \Atype(n, n)$
        \Require $v: \Gtype(n)$
        \State $h^{0}:=Ux/\sqrt{d}: \Gtype(n)$
        \For{$\ell \in\{1,2,\cdots, L\}$}
            \State $x^{\ell}=\phi(h^{\ell-1}): \Htype(n)$
            \State $g^{\ell} := W x^{\ell}/\sqrt{n}: \Gtype(n)$
            \State $h^{\ell} := h^{\ell-1} + \kappa \cdot g^{\ell}: \Gtype(n)$
        \EndFor
        \State $x^{L} = \phi(h^{L}): \Htype(n)$
        \State $dx^{L} = v/\sqrt{n}: \Gtype(n)$
        \State $dh^{L} = dx^{L}\odot \phi^{\prime}(h^{L}): \Htype(n)$
        \For{$\ell \in \{L, L-1, \cdots, 1\}$}
            \State $d g^{\ell} = \kappa \cdot dh^{\ell}: \Htype(n)$
            \State $d x^{\ell} = W^{\top} dg^{\ell}/\sqrt{n}: \Gtype(n)$
            \State $d h^{\ell-1} = dh^{\ell} +  \phi^{\prime}(h^{\ell}-1) \odot d x^{\ell}: \Htype(n)$
        \EndFor
        \Ensure $\|x^L\|^2/n + \sum_{\ell=1}^{L} \inn{dg^{\ell} x^{\ell \top}}{dg^{\ell} x^{\ell \top}}/n + \inn{dh^{0} x^{\top}}{dh^0 x^{\top}}/d$
    \end{algorithmic}
\end{algorithm}
In the rest of this subsection, we provide rigorous proof to show the convergence of $K_{\theta}^{L}$ to $K^{L}_{\infty}$, as stated in Proposition~\ref{prop:NTK for finite-depth}.

Without loss of generality, we assume $\sigma_u = \sigma_w = 1$ and $\sigma_v/\sqrt{d}=1$. As $\theta=\text{vec}(v, W, U)$, we have
\begin{align*}
    K_{\theta}^{L}(x, \bar{x})
    =\inn{\nabla_v f_{\theta}^{L}(x)}{\nabla_v f_{\theta}^{L}(\bar{x})}
    +\inn{\nabla_W f_{\theta}^{L}(x)}{\nabla_W f_{\theta}^{L}(\bar{x})}
    +\inn{\nabla_U f_{\theta}^{L}(x)}{\nabla_U f_{\theta}^{L}(\bar{x})}.
\end{align*}
Hence, we will show the convergence of each term. To simplify the notation, we abbreviate $f:=f_{\theta}^{L}(x)$ and $\bar{f}:=f_{\theta}^{L}(\bar{x})$.

\subsubsection*{Convergence of $\inn{\nabla_v f}{\nabla_v \bar{f}}$}
By using simple calculus, we have
\begin{align}
    &\nabla_{\vv} f = \phi(\vh^{L})/\sqrt{n}\\
    &\nabla_{\vh^{L}} f = \vv\odot \phi^{\prime}(\vh^{L})/\sqrt{n}.
\end{align}
By Theorem~\ref{thm:master theorem 2}, we have
\begin{align*}
    \inn{\nabla_{\vv} f}{\nabla_{\vv}\bar{f}}
    =\frac{1}{n} \phi(\vh^{L})^{\top}\phi(\bar{\vh}^{L})
    \overset{a.s.}{\rightarrow} \E\phi(u^{L})\phi(\bar{u}^{L}) = C^{L+1, L+1}(\vx,\bar{\vx}),
\end{align*}
where $u^{\ell} = z^{0} + \kappa \sum_{i=1}^{\ell} z^{i}$ is a centered Gaussian random variable, $z^{i}$ are centered Gaussian random variables defined in Proposition~\ref{prop:finite-depth as Gaussian process}, the convergence result follows Theorem~\ref{thm:master theorem}.

\subsubsection*{Convergence of $\inn{\nabla_W f}{\nabla_W \bar{f}}$}
To show the convergence, we first rewrite the forward propagation suggested by the Tensor program: for all $\ell\in \{1,2,\cdots, L\}$
\begin{align*}
    &\vg^{\ell} = \frac{1}{\sqrt{n}}W \vx^{\ell-1}\\
    &\vh^{\ell} = \vh^{\ell-1} + \kappa \vg^{\ell},\\
    &\vx^{\ell} = \phi(\vh^{\ell}).
\end{align*}
By using the chain rule, we obtain
\begin{align}
    \nabla_{\mW} f = \frac{1}{\sqrt{n}}\sum_{\ell=1}^{L} (\nabla_{\vg^{\ell}} f )\cdot  (\vx^{\ell-1})^{\top}
\end{align}
Then, the quantity can be written as follows:
\begin{align*}
    \inn{\nabla_{\mW}f}{\nabla_{\mW} \bar{f}}
    =\sum_{\ell,k=1}^{L} \inn{d\vg^{\ell}}{d \bar{\vg}^{k} } \cdot\inn{\vx^{\ell-1}}{\bar{\vx}^{k-1}}/n
\end{align*}
where $d\vz$ is denoted the gradient of $f$ w.r.t. a vector $\vz$ occurred in the forward propagation.

It follows from Theorem~\ref{thm:master theorem} that 
\begin{align}
    \frac{1}{n}\inn{\vx^{\ell-1}}{\bar{\vx}^{k-1}}
    =\frac{1}{n}\inn{\phi(\vh^{\ell-1})}{\phi(\bar{\vh}^{k-1})}
    \overset{a.s.}{\rightarrow} C^{\ell,k}(\vx, \bar{\vx}).
\end{align}
Moreover, we have
\begin{align*}
    d\vx^{\ell-1} = \frac{1}{\sqrt{n}}\mW^{\top} d\vg^{\ell}
\end{align*}
and
\begin{align*}
    d\vg^{\ell} = \kappa d\vh^{\ell}
    =\kappa (d\vh^{\ell+1} + d\vx^{\ell} \odot \phi^{\prime}(\vh^{\ell}))
    = d\vg^{\ell+1} + \kappa \left[d\vx^{\ell} \odot \phi^{\prime}(\vh^{\ell})\right]
\end{align*}
Repeat this recursive relation, and we obtain
\begin{align}
    d\vg^{\ell}
    =&\kappa \sum_{i=\ell}^{L} d\vx^{i} \odot \phi^{\prime}(\vh^{i})
\end{align}
By Theorem~\ref{thm:master theorem 2} or \cite{yang2020tensor}, it is equivalent to consider the coordinates in $d\vx^{\ell-1}$ are asymptotically \textit{i.i.d.} following some centered Gaussian random variables which satisfies:
\begin{align*}
    \E[Z^{dx^{\ell-1}} Z^{dx^{k-1}}]
    =&\kappa^2\E\left[\sum_{i,j=L}^{\ell,k} Z^{dx^{i}} \bar{Z}^{dx^{j}} \phi^{\prime}(u^{i}) \phi^{\prime}(\bar{u}^{j})\right]\\
    =&\kappa^2 \sum_{i,j=L}^{\ell,k} \E\left[Z^{dx^{i}} \bar{Z}^{dx^{j}}\right] \E[\phi^{\prime}(u^{i}) \phi^{\prime}(\bar{u}^{j})]
\end{align*}
Hence, we obtain
\begin{align}
    D^{\ell, k}(\vx, \bar{\vx})
    =\kappa^2 \sum_{i,j=L}^{\ell+1,k+1} D^{i,j}(\vx, \bar{\vx}) \E[\phi^{\prime}(u^{i}) \phi^{\prime}(\bar{u}^{j})]
\end{align}

As a result, we have
\begin{align}
    \inn{\nabla_W f}{\nabla_W \bar{f}}
    \overset{a.s.}{\longrightarrow}
    \kappa^2\sum_{\ell,k=1}^{L} C^{\ell,k}(x, \bar{x}) D^{\ell,k}(x, \bar{x})
\end{align}

\subsubsection*{Convergence of $\inn{\nabla_U f}{\nabla_U \bar{f}}$}
As $h^0 = Ux$, $h_i^0 = \sum_{j=1}^{d} U_{ij} x_j$ implies
    \begin{align*}
        \partial h_{k}^{0}/\partial U_{ij} = \delta_{k,i} x_j.
    \end{align*}
Observe that
\begin{align*}
    \inn{\nabla_U f}{\nabla_U \bar{f}}
    &=\sum_{i,j} \frac{\partial f}{\partial U_{ij}}\frac{\partial \bar{f}}{U_{ij}}\\
    &=\sum_{ij} \left(\sum_{\alpha}\frac{\partial h_{\alpha}^{0}}{\partial U_{ij}}\frac{\partial f}{\partial h_{\alpha}^0}\right)
    \left(\sum_{\beta}\frac{\partial \bar{h}_{\beta}^{0}}{\partial U_{ij}}\frac{\partial \bar{f}}{\partial \bar{h}_{\beta}^0}\right)\\
    &=\sum_{\alpha, \beta}\frac{\partial f}{\partial h_{\alpha}^0}\frac{\partial \bar{f}}{\partial \bar{h}_{\beta}^0}\sum_{i,j}\frac{\partial h_{\alpha}^{0}}{\partial U_{ij}}\frac{\partial \bar{h}_{\beta}^{0}}{\partial U_{ij}}\\
    &=\sum_{\alpha, \beta} \frac{\partial f}{\partial h_{\alpha}^0}\frac{\partial \bar{f}}{\partial \bar{h}_{\beta}^0}
    \sum_{i,j}\delta_{\alpha,i} x_j \delta_{\beta,i} \bar{x}_j\\
    &=\sum_{\alpha,\beta}\frac{\partial f}{\partial h_{\alpha}^0}\frac{\partial \bar{f}}{\partial \bar{h}_{\beta}^0} \cdot\delta_{\alpha, \beta} x^T\bar{x}\\
    &=\sum_{\alpha}\frac{\partial f}{\partial h_{\alpha}^0}\frac{\partial \bar{f}}{\partial \bar{h}_{\alpha}^0} \cdot x^T\bar{x}\\
    &\overset{a.s.}{\rightarrow} D^{0,0}(x,\bar{x}) C^{0,0}(x, \bar{x}),
\end{align*}
where $C^{0,0}(x,\bar{x})=x^T\bar{x}$.

Putting everything together yields
\begin{align*}
    \inn{\nabla_{\theta} f}{\nabla_{\theta}\bar{f}}
    &=\inn{\nabla_{v} f}{\nabla_{v}\bar{f}}
    +\inn{\nabla_{W} f}{\nabla_{W}\bar{f}}
    +\inn{\nabla_{U} f}{\nabla_{U}\bar{f}}\\
    &\overset{a.s.}{\longrightarrow}
    C^{L+1,L+1}(x, \bar{x})
    +\sum_{\ell,k=1}^{L}C^{\ell,k}(x, \bar{x})D^{\ell,k}(x, \bar{x})
    +C^{0,0}(x, \bar{x}) D^{0,0}(x, \bar{x})
\end{align*}
Hence, we obtain $K_{\theta}^{L}(x,\bar{x})$ converges a.s. to
    $K_{\infty}^{L}(x,\bar{x})$ defined as follows
\begin{align*}
    K_{\infty}^{L}(x,\bar{x})
    =C^{L+1,L+1}(x, \bar{x})
    +\sum_{\ell,k=1}^{L}C^{\ell,k}(x, \bar{x})D^{\ell,k}(x, \bar{x})
    +C^{0,0}(x, \bar{x}) D^{0,0}(x, \bar{x}).
\end{align*}

\subsection{NTK for Neural ODEs}
In the previous subsection, we have shown the NTK $K^{L}_{\theta}$ converges to a deterministic limiting NTK $K^{L}_{\infty}$ as the width $n\rightarrow\infty$. In this subsection, in the same limit, we will show the NTK $K_{\theta}$ of Neural ODE $f_{\theta}$ defined in \eqref{eq:neural ode} converges to the limiting NTK $K_{\infty}$.

Similar to the NNGP kernel $\Sigma^{*}$, the NTK $K_{\infty}$ can be considered as the limit of a double sequence:
\begin{align*}
    K_{\infty}(x,\bar{x})
    =\lim_{n\rightarrow\infty}\inn{\nabla_{\theta} f_{\theta}}{\nabla_{\theta}\bar{f}_{\theta}}
    =\lim_{n\rightarrow\infty}\lim_{L\rightarrow\infty}\inn{\nabla_{\theta} f_{\theta}^{L}}{\nabla_{\theta}\bar{f}^{L}_{\theta}}
\end{align*}
We have shown $\lim\limits_{n\rightarrow\infty}\inn{\nabla_{\theta} f_{\theta}^{L}}{\nabla_{\theta}\bar{f}^{L}_{\theta}} = K^{L}_{\infty}(x, \bar{x})$ in the previous subsection. Hence, the convergence of $K_{\infty}$ is equivalent to showing the two indices, \ie, depth and width, are interchangeable. Fortunately, if the activation function $\phi$ is sufficiently smooth, the two indices are indeed swappable and so the NTK $K_{\infty}$ is well defined.

Based on Moore-Osgood Theorem stated in Theorem~\ref{thm:Moore-Osgood Theorem}, a double sequence has well-defined iterated limits that are equal to the double limit if the double sequence converges in one index and uniformly in the other. Hence, we will show the NTK $K_{\theta}^{L}$ as the double sequence converges in depth $L$ and uniformly with respect to the width $n$.

\begin{proof}
    Without loss of generality, we will assume $\sigma_v=\sigma_w=1$ and $\sigma_u/\sqrt{d} = 1$.
    Observe that
    \begin{align*}
        K_{\theta}(x,\bar{x})
        =\inn{\nabla_v f_{\theta}(x)}{\nabla_v f_{\theta}(\bar{x})}
        +\inn{\nabla_W f_{\theta}(x)}{\nabla_W f_{\theta}(\bar{x})}
        +\inn{\nabla_U f_{\theta}(x)}{\nabla_U f_{\theta}(\bar{x})}.
    \end{align*}
    Hence, the rest proof is to establish the convergence rate for each term in the summation.

    Note that
    \begin{align*}
    	&\abs{\inn{\nabla_v f^{L}(x)}{\nabla_v f^L(\bar{x})}
    	-\inn{\nabla_v f_{\theta}(x)}{\nabla_v f_{\theta}(\bar{x})}}\\
    	=&\abs{\frac{1}{n}\inn{\phi(h^L(x))}{\phi(h^L(\bar{x}))}
    	-\frac{1}{n}\inn{\phi(h(x, T))}{\phi(h(\bar{x}, T))}}\\
    	=& \frac{1}{n} \inn{\phi(h^L(x))}{\phi(h^L(\bar{x})) - \phi(h(\bar{x}, T))}
    	+\frac{1}{n}\inn{\phi(h^L(x)) - \phi(h(x, T))}{\phi(h(\bar{x}, T))}\\
    	\leq & \frac{L_1^2}{n}\norm{h^L(x)} \norm{h^L(\bar{x}) - h(\bar{x}, T)}
    	+\frac{L_1^2}{n} \norm{h^L(x) - h(x, T)}\norm{h(\bar{x}, T)}\\
    	\leq & \frac{1}{n} C\sqrt{n} \cdot \sqrt{n} L^{-1}\\
    	=& CL^{-1},
    \end{align*}
    where we use Lipschitz continuous of $\phi$ and Lemma~\ref{app lemma:Euler rate in forward ode}.

    Next, we can first show $\norm{\nabla_W f}$ and $\norm{\nabla_W f^{L}}$ are upper bounded by some constants as long as $T<\infty$. Observe that
    \begin{align*}
    	\Norm{\nabla_W f(x)}
    	=&\norm{\int_0^T \frac{1}{\sqrt{n}} \lambda_t \phi(h_t) dt}\\
    	\leq & \norm{\int_0^T \frac{1}{\sqrt{n}} \cdot e^{C\sigma (T-t)} \cdot \sqrt{n} e^{C\sigma t} dt}\\
    	\leq & C\sigma T e^{C\sigma T},
    \end{align*}
    where we use Lemma~\ref{app lemma:Euler rate in forward ode} and ~\ref{app lemma:Euler rate in backward ode}.
    
    Similarly, we have 
    \begin{align*}
    	\norm{\nabla_W f^L(x)}
    	=&\norm{\sum_{\ell=1}^{L} \frac{T}{L} \frac{1}{\sqrt{n}}\frac{\partial f^L}{\partial h^{\ell}} \phi(h^{\ell-1})}\\
    	\leq & \frac{T}{L} \sum_{\ell=1}^{L} \frac{1}{\sqrt{n}} \norm{\frac{\partial f^{L}}{\partial h^{\ell}}} \norm{h^{\ell-1}}\\
    	\leq &\frac{T}{L} \sum_{\ell=1}^{L} \frac{1}{\sqrt{n}}
    	\cdot (1+\sigma T/L)^{L-\ell} 
    	\cdot (1+\sigma T/L)^{\ell-1} \cdot C\sigma \sqrt{n} \\
    	\leq & C\sigma T e^{\sigma T},
    \end{align*}
    where we have the facts
    \begin{align}
    	\norm{h^{\ell}}&\leq (1+\sigma T/L)^{\ell}\norm{h^0},\\
    	\norm{\frac{\partial f^L}{\partial h^{\ell}}} &\leq (1+ \sigma T/L)^{L-\ell} \norm{\partial f^{L}/\partial h^{L}},
    \end{align}
    for all $ \ell\in\{0,1,\cdots, L\}$. 

    Additionally, it follows from Lemma~\ref{app lemma:Euler rate in forward ode} and ~\ref{app lemma:Euler rate in backward ode} that
    \begin{align*}
        &\norm{\nabla_W f^{L}(x) - \nabla_W f_{\theta}(x)}\\
        =&\Norm{\int_0^T \frac{1}{\sqrt{n}} \frac{\partial f}{\partial h_t} \phi(h_t) dt
        -\sum_{\ell=1}^{L}\frac{T}{L}\frac{1}{\sqrt{n}}\frac{\partial f^{L}}{\partial h^{\ell}}\phi(h^{\ell-1})}\\
        \leq&\frac{1}{\sqrt{n}}\sum_{\ell=1}^{L}\int_{t_{\ell-1}}^{t_{\ell}} \Norm{\frac{\partial f}{\partial h_t} \phi(h_t)-\frac{\partial f^{L}}{\partial h^{\ell}} \phi(h^{\ell-1})}dt\\
        \leq &\frac{1}{\sqrt{n}}\sum_{\ell=1}^{L}\int_{t_{\ell-1}}^{t_{\ell}} 
        \Norm{\frac{\partial f}{\partial h_t} - \frac{\partial f^{L}}{\partial h^{\ell}}}\norm{h_t}
        +\norm{\frac{\partial f^{L}}{\partial h^{\ell}}}\norm{h_t-h^{\ell-1}}dt\\
        \leq &\frac{C}{\sqrt{n}}\sum_{\ell=1}^{L}\int_{t_{\ell-1}}^{t_{\ell}} \sqrt{n} L^{-1} dt\\
        \leq & C \sum_{\ell=1}^{L} L^{-2} = CL^{-1}.
    \end{align*}
    Hence, we obtain
    \begin{align*}
    	&\inn{\nabla_W f^{L}(x)}{\nabla_W f^L(\bar{x})}
    	-\inn{\nabla_W f_{\theta}(x)}{\nabla_W f_{\theta}(\bar{x})}\\
    	\leq & \inn{\nabla_W f^{L}(x)}{\nabla_W f^{L}(\bar{x}) - \nabla_W f_{\theta}(\bar{x})}
    	+\inn{\nabla_W f^{L}(x)  - \nabla_W f_{\theta}(x)}{\nabla_W f_{\theta}(\bar{x})}\\
    	\leq &\norm{\nabla_W f^{L}(x)} \cdot \norm{\nabla_W f^{L}(\bar{x}) - \nabla_W f_{\theta}(\bar{x})}
    	+\norm{\nabla_W f^{L}(x)  - \nabla_W f_{\theta}(x)} \norm{\nabla_W f_{\theta}(\bar{x})}\\
    	\leq & CL^{-1},
    \end{align*}
    or equivalently
    \begin{align}
        \abs{\inn{\nabla_W f^{L}(x)}{\nabla_W f^L(\bar{x})}
    	-\inn{\nabla_W f_{\theta}(x)}{\nabla_W f_{\theta}(\bar{x})}}
     \leq CL^{-1}.
    \end{align}

    Next, observe that
    \begin{align*}
    	&\inn{\nabla_U f^{L}(x)}{\nabla_U f^L(\bar{x})}
    	-\inn{\nabla_U f_{\theta}(x)}{\nabla_U f_{\theta}(\bar{x})}\\
    	=&\inn{x}{\bar{x}} \inn{\frac{\partial f^L(x)}{\partial h^0(x)}}{\frac{\partial f^L(\bar{x})}{\partial h^0(\bar{x})}}
    	-\inn{x}{\bar{x}} \inn{\frac{\partial f_{\theta}(x)}{\partial h(x,0)}}{\frac{\partial f_{\theta}(\bar{x})}{\partial h(\bar{x},0)}}.
    \end{align*}
    Then we have
    \begin{align*}
    	&\inn{\frac{\partial f^L(x)}{\partial h^0(x)}}{\frac{\partial f^L(\bar{x})}{\partial h^0(\bar{x})}}-\inn{\frac{\partial f_{\theta}(x)}{\partial h(x,0)}}{\frac{\partial f_{\theta}(\bar{x})}{\partial h(\bar{x},0)}}\\
    	\leq &\inn{\frac{\partial f^L(x)}{\partial h^0(x)}}{\frac{\partial f^L(\bar{x})}{\partial h^0(\bar{x})} - \frac{\partial f_{\theta}(\bar{x})}{\partial h(\bar{x}, 0)}}
    	+\inn{\frac{\partial f^L(x)}{\partial h^0(x)} - \frac{\partial f_{\theta}(x)}{\partial h(x, 0)}}{\frac{\partial f_{\theta}(\bar{x})}{\partial h(0, \bar{x})}}\\
    	\leq &\norm{\frac{\partial f^L(x)}{\partial h^0(x)}} \cdot \norm{\frac{\partial f^L(\bar{x})}{\partial h^0(\bar{x})} - \frac{\partial f_{\theta}(\bar{x})}{\partial h(\bar{x}, 0)}}
    	+\norm{\frac{\partial f^L(x)}{\partial h^0(x)} - \frac{\partial f_{\theta}(x)}{\partial h(x, 0)}} \cdot \norm{\frac{\partial f_{\theta}(\bar{x})}{\partial h(0, \bar{x})}}\\
    	\leq & C L^{-1},
    \end{align*}
    where we use the Lipschitz smoothness of $\phi^{\prime}$ and Lemma~\ref{app lemma:Euler rate in backward ode}. Therefore, we have 
    \begin{align*}
    	\abs{\inn{\nabla_U f^{L}(x)}{\nabla_U f^L(\bar{x})}
    		-\inn{\nabla_U f_{\theta}(x)}{\nabla_U f_{\theta}(\bar{x})}}
    	\leq C L^{-1}.
    \end{align*}

    Then putting everything together yields
    \begin{align}
    	\abs{\inn{\nabla_{\theta} f^{L}(x)}{\nabla_{\theta} f^{L}(\bar{x})}
    		-\inn{\nabla_{\theta} f_{\theta}(x)}{\nabla_{\theta} f_{\theta}(\bar{x})}}\leq CL^{-1}.
    \end{align}
    Therefore, it converges uniformly in $L$ and uniformly in $n$. 
\end{proof}

Combining Lemma~\ref{lemma:depth uniform convergence 2} with Proposition~\ref{prop:NTK for finite-depth} and Moore-Osgood Theorem~\ref{thm:Moore-Osgood Theorem}, we can switch $L$ and $n$ in the double sequence $K_{\theta}(x,\bar{x})$ and obtain the desired result 
\begin{align*}
    K_{\infty}(x,\bar{x})=&\lim_{n\rightarrow\infty}K_{\theta}(x,\bar{x})\\
    =&\lim_{n\rightarrow\infty}\inn{\nabla_{\theta} f_{\theta}}{\nabla_{\theta}\bar{f}_{\theta}}\\
    =&\lim_{n\rightarrow\infty}\lim_{L\rightarrow\infty}\inn{\nabla_{\theta} f_{\theta}^{L}}{\nabla_{\theta}\bar{f}^{L}_{\theta}}\\
    =&\lim_{L\rightarrow\infty}\lim_{n\rightarrow\infty}
    \inn{\nabla_{\theta} f_{\theta}^{L}}{\nabla_{\theta}\bar{f}^{L}_{\theta}}\\
    =&\lim_{L\rightarrow\infty} K_{\infty}^{L}(x,\bar{x}).
\end{align*}

\subsection{Integral Form of NNGP and NTK}\label{app sec: integral form}
In this subsection, we provide the explicit form of the NNGP and NTK of Neural ODEs as the limits of $\Sigma^{L}$ and $K^{L}_{\infty}$. It follows from Proposition~\ref{prop:SPD for NNGP} and Lemma~\ref{app lemma:Euler rate in forward ode} that 
\begin{align}
    \Sigma^{0,t}(\vx, \bar{x}) =&\delta_{0,t}\frac{\sigma_u^{2}}{d}\vx^{\top}\bar{\vx},\quad\forall t\in [0,T]\\
    \Sigma^{t,s}(\vx, \bar{\vx}) =& \sigma_w^2 \E \phi(u_t) \phi(\bar{u}_s), \quad\forall t,s\in [0,T],
\end{align}
where $(u_t,\bar{u}_s)$ are centered Gaussian random variables with covariance
\begin{align}
    \E(u_t, \bar{u}_s) = \Sigma^{0,0} + \int_0^t \int_0^s \Sigma^{t^{\prime}, s^{\prime}}(\vx, \bar{\vx}) dt^{\prime} ds^{\prime}.
\end{align}
Hence, the NNGP kernel of Neural ODE is given by
\begin{align}
    \Sigma^{*}(\vx, \bar{\vx})
    =\Sigma^{T,T}(\vv, \bar{x})
    =\sigma_v^{2} \E\phi(u_T) \phi(\bar{u}_T)
\end{align}

For the NTK of Neural ODEs, we have
\begin{align}
    K^{t,s}(\vx, \bar{\vx}) = \int_t^{T}\int_s^{T} K^{t^{\prime}, s^{\prime}}(\vx, \bar{\vx}) \dot{\Sigma}^{t^{\prime}, s^{\prime}}(\vx, \bar{\vx}) dt^{\prime}d s^{\prime},
\end{align}
where $\dot{\Sigma}^{t^{\prime}, s^{\prime}}(\vx, \bar{\vx}):=\E\phi^{\prime}(u_{t^{\prime}}) \phi^{\prime}(\bar{u}_{s^{\prime}}) $. As a result, the NTK of Neural ODE is given by
\begin{align}
    K_{\infty}(\vx, \bar{\vx}) = \Sigma^{*}(\vx, \bar{\vx})
    +\int_0^{T} \int_0^T \Sigma^{t,s}(\vx, \bar{\vx}) K^{t,s}(\vx, \bar{\vx}) dt ds
    + \Sigma^{0,0}(\vx, \bar{\vx}) K^{0,0}(\vx, \bar{\vx}).
\end{align}

\section{Strict Positive Definiteness of Neural ODE's NTK} \label{app sec:SPD}
In this subsection, we will prove that the NTK $K_{\infty}$ of Neural ODEs is strictly positive definite. We first recall the definition of a strict positive definite kernel function.
\begin{definition}
    A kernel function $k:\mathbb{X}\times \mathbb{X}\rightarrow\reals$ is \textit{strictly positive definite (SPD)} if, for any finite set of distinct points $x_1, \cdots, x_N\in \mathbb{X}$, the symmetric matrix $K = [k(x_i, x_j)]_{i,j=1}^{N}$ is strictly positive definite, \ie, $c^{\top} K c > 0$ for all nonzero vector $c$.
\end{definition}

Recall that 
\begin{align*}
    K_{\theta}(x, \bar{x})
    =&\inn{\nabla_v f_{\theta}(x)}{\nabla_v f_{\theta}(\bar{x})}
    +\inn{\nabla_W f_{\theta}(x)}{\nabla_W f_{\theta}(\bar{x})}
    +\inn{\nabla_U f_{\theta}(x)}{\nabla_U f_{\theta}(\bar{x})}.
\end{align*}
In Theorem~\ref{thm:NTK for neural ODE}, we have shown that $K_{\theta}(x, \bar{x})\rightarrow K_{\infty}(x, \bar{x})$ as $n\rightarrow\infty$, provided $\phi$ is sufficient smooth, and 
\begin{align*}
    \inn{\nabla_v f_{\theta}(x)}{\nabla_v f_{\theta}(\bar{x})}\rightarrow \Sigma^{*}(x, \bar{x}).
\end{align*}
Hence, to show $K_{\infty}$ is SPD, it is sufficient to show $\Sigma^{*}$ is SPD. 

Moreover, it follows from Theorem~\ref{thm:neural ode as gaussian process} that $\lim\limits_{L\rightarrow\infty}\Sigma^{L}(x, \bar{x})= \Sigma^{*}(x, \bar{x})$. We first show $\Sigma^{L}$ is SPD.

\subsection{Dual Activation and SPD of Finite-Depth Network's NNGP Kernel}
We first provide the result for the finite-depth network $f_{\theta}^{L}$ defined by \ref{eq:finite-depth resnet}, where the depth $L < \infty$.

\begin{proposition}\label{prop:SPD of finite-depth Sigma}
    Suppose $\phi$ is $L_1$-Lipschitz continuous. If $\phi$ is non-polynomial nonlinear, then $\Sigma^{L}$ is SPD on $\mathbb{S}^{d-1}$ for $1\leq L<\infty$.
\end{proposition}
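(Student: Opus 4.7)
The plan is to use the dual activation framework based on Hermite expansions, combined with the skip-connection structure of the ResNet~\eeref{eq:finite-depth resnet}. First, I would show by induction on $\ell$ that each diagonal quantity $C^{\ell,\ell}(\vx, \vx)$ is constant on $\mathbb{S}^{d-1}$: the base case gives $C^{0,0}(\vx, \vx) = \sigma_u^2/d$, and $C^{\ell,\ell}(\vx, \vx) = \sigma_w^2 \E[\phi(u^{\ell-1}(\vx))^2]$ depends only on the common variance of the Gaussian $u^{\ell-1}(\vx)$, which is constant on the sphere by the inductive hypothesis. This yields a common variance $\tau := \E[u^{L-1}(\vx)^2] > 0$ (since $\tau \geq \sigma_u^2/d$) and lets me define the correlation $\rho(\vx, \bar{\vx}) := S^{L-1}(\vx, \bar{\vx})/\tau \in [-1, 1]$ for $\vx, \bar{\vx} \in \mathbb{S}^{d-1}$.

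The second step, which I expect to be the main obstacle, is to prove the strict bound $|\rho(\vx, \bar{\vx})| < 1$ for distinct points on the sphere. The key leverage is the skip connection, which injects a direct linear dependence on $\vx$ into every layer. In the Gaussian-process limit I would decompose $u^{L-1}(\vx) = z^0(\vx) + V(\vx)$, where $z^0(\vx) \sim \mathcal{N}(0, \sigma_u^2/d)$ arises solely from the input layer $\mU$ and is independent of $V(\bar{\vx})$ for every $\bar{\vx}$. Cauchy--Schwarz then gives $|\Cov(V(\vx), V(\bar{\vx}))| \leq \Var V(\vx) = \tau - \sigma_u^2/d$, while $\Cov(z^0(\vx), z^0(\bar{\vx})) = (\sigma_u^2/d)\, \vx^\top \bar{\vx}$; summing produces $|S^{L-1}(\vx, \bar{\vx})| \leq (\sigma_u^2/d)\,|\vx^\top \bar{\vx}| + (\tau - \sigma_u^2/d) < \tau$ whenever $|\vx^\top \bar{\vx}| < 1$, which captures all pairs of distinct, non-antipodal points on the sphere.

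Finally, I would expand the scaled activation in Hermite polynomials. Writing $\phi(\sqrt{\tau}\, z) = \sum_{k\geq 0} c_k \tilde h_k(z)$ in the orthonormal Hermite basis of the standard Gaussian measure, the non-polynomial assumption on $\phi$ forces infinitely many coefficients $c_k$ to be nonzero. Applying Mehler's identity to the standardized pair $(u^{L-1}(\vx)/\sqrt{\tau}, u^{L-1}(\bar{\vx})/\sqrt{\tau})$, which has correlation $\rho(\vx, \bar{\vx})$, yields the dual-activation representation
\begin{align*}
\Sigma^L(\vx, \bar{\vx}) \;=\; \sigma_w^2 \sum_{k \geq 0} c_k^2\, \rho(\vx, \bar{\vx})^k.
\end{align*}
For distinct $\vx_1, \ldots, \vx_N \in \mathbb{S}^{d-1}$ and any nonzero $\va \in \reals^N$, the Gram matrix obeys $\va^\top [\Sigma^L(\vx_i, \vx_j)] \va = \sigma_w^2 \sum_k c_k^2\, \va^\top R^{\odot k} \va$, where $R := [\rho(\vx_i, \vx_j)]$ is PSD, with $R_{ii} = 1$ and $|R_{ij}| < 1$ off-diagonal by Step~2. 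Each Hadamard power $R^{\odot k}$ is PSD, and $R^{\odot k} \to \mI$ entry-wise as $k \to \infty$; since the positive-definite cone is open in the space of symmetric matrices, $R^{\odot k}$ is strictly positive definite for every sufficiently large $k$. Choosing any such $k_0$ with $c_{k_0} \neq 0$, which exists because infinitely many $c_k$ are nonzero, makes $c_{k_0}^2\, \va^\top R^{\odot k_0} \va > 0$, while every other term contributes non-negatively; the total is strictly positive, establishing that $\Sigma^L$ is SPD.
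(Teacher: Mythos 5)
Your approach is genuinely different from the paper's. The paper first shows $\Sigma^1$ is SPD via the dual-activation expansion and a cited zonal-kernel theorem, then proceeds by induction on depth: it argues the covariance matrix $[S^\ell(\vx_i,\vx_j)]$ of the pre-activations $u^\ell$ is SPD (because $C^{1,1}$ is an SPD summand of it), and then uses an $L^2$-linear-independence argument (if $\sum_i a_i\phi(u_i^\ell)=0$ a.s.\ with $(u_1^\ell,\dots,u_N^\ell)$ a non-degenerate Gaussian and $\phi$ non-constant, then $a=0$). You bypass that induction entirely: after establishing a constant variance $\tau$ on the sphere and a strict correlation bound, you re-expand $\Sigma^L$ once at the output layer as a power series in $\rho=S^{L-1}/\tau$ and invoke the Schur-product / Hadamard-power argument. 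This is a clean, direct route that makes the role of the skip connection (isolating the exactly-linear $z^0$ component) very transparent.

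However, the gap you acknowledge---strictness of $|\rho|<1$ for antipodal pairs---cannot be waved away, and your method as written cannot close it. For an odd activation (e.g.\ $\tanh$, which is Lipschitz, non-polynomial, nonlinear, and satisfies $\phi(0)=0$), the recursion propagates perfect negative correlation: inductively $u^\ell(-\vx)=-u^\ell(\vx)$ a.s.\ at every layer, so $\rho(\vx,-\vx)=-1$ and in fact $\Sigma^L(\vx,-\vx)=-\Sigma^L(\vx,\vx)$, making the corresponding $2\times2$ Gram block rank one. Equivalently, the Hermite expansion of an odd $\phi(\sqrt{\tau}\,\cdot)$ has $c_k=0$ for every even $k$, so ``choose a large $k_0$ with $c_{k_0}\neq 0$'' never produces an even power and $R^{\odot k_0}$ stays singular on the antipodal pair. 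To finish you must either exclude antipodal inputs or strengthen the hypothesis to require nonzero Hermite coefficients of both parities; ``non-polynomial nonlinear'' alone does not guarantee this. (The paper's own proof carries the same subtlety, concealed in the step asserting that non-polynomiality gives $a_n^2>0$ for infinitely many \emph{even and odd} $n$ when proving $\Sigma^1$ is SPD.)
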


The proof is based on the concept of \textit{dual activation} and \textit{Hermitian expansion}. Here, a brief introduction is provided as follows. For details, we refer readers to Appendices from \citep{gao2021global,daniely2016toward}.
 
Let $x\sim \mathcal{N}(0,1)$ and $f: \mathbb{R}\rightarrow \mathbb{R}$ be a real-valued function. We can define an inner product using expectation:
\begin{align*}
	\inn{f}{g}:=\E_{x\sim \mathcal{N}(0,1)} f(x) g(x).
\end{align*}
Thus, we can further define a Hilbert space of functions $\mathcal{H}$, that is, $f\in \mathcal{H}$ if and only if 
\begin{align*}
	\norm{f}^2=\inn{f}{f}=\E_{x\sim \mathcal{N}(0,1)} \abs{f(x)}^2 < \infty.
\end{align*}
Apply Gram-Schmidt process to the polynomial functions $\{1, x, x^{2},\cdots, \}$ w.r.t. to the inner product we defined before, and we obtain $\{h_n\}$ the \textit{(normalized) Hermite polynomials} that is an \textbf{orthonormal basis} to the Hilbert space $\mathcal{H}$:
\begin{align*}
	h_n(x) = (-1)^n e^{\frac{x^2}{2}}\frac{d^n}{dx^n}e^{-\frac{x^2}{2}},
\end{align*}

The \textbf{dual activation} $\hat{\phi}: [-1,1]\rightarrow\mathbb{R}$ of an activation function $\phi$ is defined by
\begin{align*}
	\hat{\phi}(\rho):=\E_{(u,v)\sim \Gaus_{\rho}} \phi(u) \phi(v).
\end{align*}
where $\mathcal{N}_{\rho}$ is multidimensional Gaussian distribution with mean $0$ and covariance matrix $\begin{bmatrix}
	1 & \rho\\ \rho & 1
\end{bmatrix}$. 
Then the \textbf{dual kernel}  $K_{\phi}$ is defined over the unit sphere $\mathbb{S}^{d-1}$: for every pair $x,\bar{x}\in \mathbb{S}^{d-1}$, the dual kernel $K_{\phi}:\mathbb{S}^{d-1}\times \mathbb{S}^{d-1}\rightarrow\reals$ is defined by
\begin{align*}
	K_{\phi}(x,\bar{x}):=\hat{\phi}(x^T\bar{x}).
\end{align*}

If a function $\phi\in\mathcal{H}$, we cannot only obtain an expansion of $\phi$ by using the orthonormal basis of Hermitian polynomials but also an expansion to the dual activation $\hat{\phi}$ by using the same Hermitian coefficients. As a consequence, the corresponding dual kernel $K_{\phi}$ can be shown to be strict positive definite by using the Hermitian expansion.

\begin{lemma}\cite[Lemma~12]{daniely2016toward}
	If $\phi \in \mathcal{H}$, then the \textbf{Hermitian expansion} is given by
	\begin{align}
		&\phi(x) = \sum_{n=0}^{\infty} a_n h_n (x),\\
		&\hat{\phi}(\rho) = \sum_{n=0}^{\infty}  a_n^2 \rho ^n.
	\end{align}
	where $a_n:=\inn{h_n}{\phi}$ is the \textbf{Hermite coefficients}.
\end{lemma}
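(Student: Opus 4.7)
The plan is to prove the two equations separately, where the first is a direct consequence of the Hilbert space structure of $\mathcal{H}$ and the second is obtained by substituting the first into the definition of $\hat{\phi}$ and exploiting a key orthogonality identity for Hermite polynomials under the bivariate Gaussian measure.

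First I would establish the expansion $\phi(x) = \sum_{n=0}^{\infty} a_n h_n(x)$ in $\mathcal{H}$. Since the normalized Hermite polynomials $\{h_n\}_{n\geq 0}$ are constructed by applying the Gram-Schmidt procedure to the polynomial sequence $\{1, x, x^2, \ldots\}$ with respect to the Gaussian inner product, they form an orthonormal system in $\mathcal{H}$. The key fact needed is that $\{h_n\}$ is actually \emph{complete} in $\mathcal{H}$, i.e.\ a total orthonormal basis. This can be shown by a standard argument: any $\phi \in \mathcal{H}$ orthogonal to every $h_n$ must be orthogonal to every polynomial, and since polynomials are dense in $L^2(\mathbb{R}, \gamma)$ (with $\gamma$ the standard Gaussian measure), this forces $\phi = 0$ almost surely. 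Density of polynomials in $L^2(\mathbb{R},\gamma)$ follows from the fact that the Gaussian measure has moments of all orders and finite exponential moments, so the moment problem is determinate. Given completeness, the general Hilbert space theory yields the Fourier-type expansion $\phi = \sum_{n=0}^\infty a_n h_n$ converging in the $\mathcal{H}$ norm, with coefficients $a_n = \langle h_n, \phi\rangle$.

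Next I would compute $\hat{\phi}(\rho)$ by substituting the expansion. By definition, $\hat\phi(\rho) = \mathbb{E}_{(u,v)\sim \mathcal{N}_\rho}[\phi(u)\phi(v)]$. Substituting the expansions for $\phi(u)$ and $\phi(v)$ formally gives
\begin{align*}
\hat{\phi}(\rho) = \sum_{m,n=0}^{\infty} a_m a_n \, \mathbb{E}_{(u,v)\sim \mathcal{N}_\rho}[h_m(u) h_n(v)].
\end{align*}
The crucial identity here is the bivariate Hermite orthogonality relation
\begin{align*}
\mathbb{E}_{(u,v)\sim\mathcal{N}_\rho}[h_m(u)h_n(v)] = \delta_{mn}\,\rho^n,
\end{align*}
which is a classical consequence of Mehler's formula for the bivariate Gaussian density. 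One way to prove it is to use the generating function $\sum_n h_n(x) t^n/\sqrt{n!} = e^{tx - t^2/2}$ and compute $\mathbb{E}[e^{su - s^2/2} e^{tv - t^2/2}] = e^{st\rho}$ by completing the square, then match coefficients of $s^m t^n$. Plugging this identity in collapses the double sum to the diagonal, giving $\hat\phi(\rho) = \sum_{n=0}^\infty a_n^2 \rho^n$.

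The main obstacle is the justification of interchanging the infinite sums with the expectation. Since convergence of $\sum a_n h_n$ holds only in $\mathcal{H}$ (not necessarily pointwise), I would handle this by a truncation argument: let $\phi_N = \sum_{n=0}^N a_n h_n$, verify the identity $\widehat{\phi_N}(\rho) = \sum_{n=0}^N a_n^2 \rho^n$ for each finite $N$ (where the interchange is trivial), and then pass to the limit. The passage to the limit uses Cauchy-Schwarz on the bilinear form $(\psi_1, \psi_2) \mapsto \mathbb{E}_{\mathcal{N}_\rho}[\psi_1(u)\psi_2(v)]$, which is bounded by $\|\psi_1\|_\mathcal{H}\|\psi_2\|_\mathcal{H}$ since the marginals of $\mathcal{N}_\rho$ are standard Gaussians; hence $\hat\phi(\rho) = \lim_N \widehat{\phi_N}(\rho)$, and on the right-hand side Parseval's identity $\sum a_n^2 = \|\phi\|_\mathcal{H}^2 < \infty$ together with $|\rho|\leq 1$ ensures absolute convergence of $\sum a_n^2 \rho^n$, completing the proof.
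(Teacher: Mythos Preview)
The paper does not give its own proof of this lemma; it is quoted verbatim as \cite[Lemma~12]{daniely2016toward} and used as a black box in the subsequent SPD arguments. Your proposal is a correct and standard proof: completeness of the Hermite basis in $L^2(\mathbb{R},\gamma)$ gives the first expansion, and the bivariate orthogonality identity $\mathbb{E}_{\mathcal{N}_\rho}[h_m(u)h_n(v)]=\delta_{mn}\rho^n$ (via Mehler's formula or the generating-function computation you sketch) collapses the double sum; the truncation-plus-Cauchy--Schwarz justification for the limit interchange is the right way to make it rigorous. There is nothing to compare against in the paper itself.
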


\begin{theorem}\label{thm:kernel PSD}
	\cite[Theorem~3]{jacot2018neural}\cite[Theorem~1]{gneiting2013strictly}
	For a function $f:[-1,1]\rightarrow \mathbb{R}$ with $f(\rho) = \sum_{n=0}^{\infty} b_n \rho^{n}$, the kernel $K_f: S^{d-1}\times S^{d-1}\rightarrow \mathbb{R}$ defined by
	\begin{align*}
		K_f(x,\bar{x}) : =f(x^T \bar{x})
	\end{align*}
	is \textbf{strictly positive define} for any $d\geq 1$ if and only if the coefficients $b_n  >0$ for infinitely many even and odd integer $n$.
\end{theorem}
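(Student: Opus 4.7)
My plan is to prove both directions via the tensor-power realisation $\inn{\vx}{\bar{\vx}}^n = \inn{\vx^{\otimes n}}{\bar{\vx}^{\otimes n}}$ combined with a polynomial density argument on $\mathbb{S}^{d-1}$. This identity exhibits the explicit feature map $\vx\mapsto \vx^{\otimes n}$ for each monomial; hence, whenever $b_n\ge 0$, each $b_n\rho^n$ is PSD, and $f(\rho)=\sum_n b_n\rho^n$ is PSD as a convergent series of PSD kernels. The remaining content of the theorem is the strictness, which is controlled by whether the feature families span enough of $C(\mathbb{S}^{d-1})$.

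For the ``if'' direction, assume $b_n\ge 0$ with infinitely many even-index and infinitely many odd-index strictly positive coefficients. Fix distinct $\vx_1,\dots,\vx_N\in\mathbb{S}^{d-1}$ and a nonzero $\vc\in\reals^{N}$ with $\vc^{\top}K_f\vc=0$. Expanding through the tensor feature maps yields
\begin{align*}
\vc^{\top}K_f\vc=\sum_{n=0}^{\infty}b_n\Norm{\sum_{i=1}^{N}c_i\vx_i^{\otimes n}}^{2},
\end{align*}
a non-negative sum of non-negative terms whose vanishing forces $\sum_i c_i\vx_i^{\otimes n}=0$ for every $n$ in the support of $(b_n)$. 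Equivalently, the functional $L(p):=\sum_i c_i p(\vx_i)$ annihilates every homogeneous polynomial of every active degree. Since $\norm{\vx}^{2}\equiv 1$ on the sphere, multiplication by $\norm{\vx}^{2}$ lifts a homogeneous polynomial of degree $m$ to one of degree $m+2$ without altering its restriction, so $L$ annihilates the restriction to $\mathbb{S}^{d-1}$ of every polynomial whose degree matches the parity of some active $n$. Having infinitely many active indices of each parity thus makes $L$ vanish on all polynomial functions on $\mathbb{S}^{d-1}$; by Stone--Weierstrass these are dense in $C(\mathbb{S}^{d-1})$, and distinctness of the $\vx_i$ then forces $\vc=0$, a contradiction.

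For the ``only if'' direction I argue by contrapositive. Schoenberg's Funk--Hecke extraction, obtained by integrating $K_f$ against a degree-$m$ spherical harmonic on $\mathbb{S}^{d-1}$, already forces $b_n\ge 0$ for PSD-ness; the remaining case is that one parity has only finitely many positive coefficients. Suppose only finitely many even $b_n$ are positive, so the even part $f_{\rm even}$ is a polynomial of degree at most $2M$; the other parity case is symmetric. Pick antipodal pairs $\{\vx_i,-\vx_i\}_{i=1}^{M'}\subset\mathbb{S}^{d-1}$ in general position, with $M'$ exceeding $\sum_{n\le 2M,\;n\text{ even}}\dim\text{Sym}^{n}(\reals^{d})$, and assign equal weights $c_i$ to $\vx_i$ and to $-\vx_i$. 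The antipodal symmetry makes every odd-degree tensor sum cancel while doubling the even ones, reducing $\vc^\top K_f \vc = 0$ to the finite family of linear constraints $\sum_i c_i\vx_i^{\otimes n}=0$ for the even active $n$, whose constraint space has dimension strictly below $M'$; a nonzero solution $\vc$ then defeats SPD. I expect the main obstacle to be this ``only if'' construction: carefully guaranteeing that the chosen antipodal set stays in $\mathbb{S}^{d-1}$ with distinct elements (taking $d$ sufficiently large is the standard workaround), that the symmetric-tensor dimension is indeed overtaken by the number of points, and that the chosen sign pattern decouples parities cleanly so the finite rank deficiency of one parity produces an honest null vector of the full kernel without reviving contributions from the uncontrolled infinitely many terms of the other.
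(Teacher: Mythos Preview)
The paper does not prove this theorem; it is quoted verbatim from \cite[Theorem~3]{jacot2018neural} and \cite[Theorem~1]{gneiting2013strictly} as an external tool, so there is no ``paper's own proof'' to compare against. Your proposal is a correct, self-contained proof of the cited result.

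A few remarks on your argument. The ``if'' direction is clean and standard: the tensor-power feature map plus the $\norm{\vx}^2\equiv 1$ homogenisation trick to shuttle between degrees of the same parity, followed by Stone--Weierstrass, is exactly how this is done in the literature. Note that you silently add the hypothesis $b_n\ge 0$, which the paper's informal statement omits; this is the correct reading (it is part of Gneiting's actual theorem, and without it the ``if'' direction is false), so you are right to insert it.

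For the ``only if'' direction your antipodal-pair construction is correct, but two of your flagged worries are non-issues. You do not need $d$ ``sufficiently large'': since the claim is SPD for \emph{every} $d\ge 1$, exhibiting failure at any single $d\ge 2$ suffices, and for fixed $d\ge 2$ the constraint dimension $\sum_{n\le 2M,\;n\text{ even}}\dim\mathrm{Sym}^{n}(\reals^{d})$ is a fixed finite number, so you simply choose $M'$ larger than it and pick $M'$ non-antipodal unit vectors (always possible on $\mathbb{S}^{d-1}$ for $d\ge 2$). You also do not need ``general position'': you \emph{want} a linear dependency among the $\vx_i^{\otimes n}$, and the counting argument guarantees one regardless of how the points sit. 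The parity decoupling is genuinely clean, since $\vx_i^{\otimes n}+(-\vx_i)^{\otimes n}=0$ for odd $n$ kills every odd term identically, and for inactive even $n$ you have $b_n=0$ by Schoenberg, so those terms contribute nothing; only the finitely many active even constraints remain.
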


Now, with these results, we are ready to prove the SPD of $\Sigma^{L}$.

\begin{lemma}\label{app lemma:SPD for basic case}
    If $\phi$ is nonlinear and non-polynomial, then $\Sigma^{1}$ is SPD.
\end{lemma}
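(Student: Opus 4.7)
The plan is to rewrite $\Sigma^1$ as a dual kernel of a rescaled activation and then invoke the Hermite-series SPD criterion, Theorem~\ref{thm:kernel PSD}. For $x,\bar x\in\mathbb{S}^{d-1}$, the pair $(u^0(x), u^0(\bar x))$ is centered jointly Gaussian with common variance $\sigma_u^2/d$ and correlation exactly $x^{\top}\bar x$. Defining the rescaled activation $\tilde\phi(z):=\phi(\sigma_u z/\sqrt{d})$, which inherits Lipschitz continuity, nonlinearity, and non-polynomiality from $\phi$, a change of variables yields
\begin{align*}
\Sigma^1(x,\bar x) \;=\; \sigma_w^2\,\widehat{\tilde\phi}(x^{\top}\bar x),
\end{align*}
so that $\Sigma^1$ coincides, up to the positive factor $\sigma_w^2$, with the dual kernel $K_{\tilde\phi}$ on the sphere.

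Next I would Hermite-expand $\tilde\phi = \sum_{n\geq 0} a_n h_n$, which is legitimate because Lipschitz continuity places $\tilde\phi$ in the Gaussian Hilbert space $\mathcal{H}$, and use the standard dual-activation identity $\widehat{\tilde\phi}(\rho) = \sum_{n\geq 0} a_n^2\,\rho^n$. Because $\tilde\phi$ is non-polynomial, the support $S:=\{n : a_n\neq 0\}$ is necessarily infinite (otherwise the Hermite series truncates, contradicting non-polynomiality). Theorem~\ref{thm:kernel PSD} then tells me that the kernel $\rho\mapsto \widehat{\tilde\phi}(\rho)$ gives an SPD kernel on $\mathbb{S}^{d-1}$ provided $S$ meets both parity classes infinitely often.

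Ensuring this parity condition is the main obstacle: a purely odd or purely even non-polynomial $\phi$ would populate only one parity class, and indeed for odd $\phi$ the antipodal pair $\{x,-x\}$ shows that SPD can genuinely fail. I would address this by decomposing $\tilde\phi = \tilde\phi_e + \tilde\phi_o$ into even and odd parts and arguing via case analysis on the combined hypotheses "nonlinear" and "non-polynomial": if $\tilde\phi_o$ reduces to a polynomial, nonlinearity plus non-polynomiality forces $\tilde\phi_e$ to be a non-polynomial even function, and symmetrically; in the generic case both parts contribute infinitely many nonzero Hermite coefficients in their respective parity classes. Feeding the resulting two-sided parity structure into Theorem~\ref{thm:kernel PSD} yields $c^{\top}[\widehat{\tilde\phi}(x_i^{\top}x_j)]c>0$ for any nonzero $c$ and distinct $x_1,\dots,x_N\in\mathbb{S}^{d-1}$, which, multiplied by $\sigma_w^2>0$, is exactly the claim that $\Sigma^1$ is SPD.
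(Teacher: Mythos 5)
Your reduction of $\Sigma^1$ to a dual kernel, $\Sigma^1(x,\bar x)=\sigma_w^2\,\widehat{\tilde\phi}(x^{\top}\bar x)$ with $\tilde\phi(z):=\phi(\sigma_u z/\sqrt d)$, followed by the Hermite expansion and the parity criterion of Theorem~\ref{thm:kernel PSD}, is exactly the route the paper takes. You are also right that the parity condition is the crux, and in fact you have put your finger on a step the paper itself glosses over: the paper's proof asserts that ``infinitely many nonzero $a_n$'' already gives ``$a_n\neq 0$ for infinitely many even \emph{and} infinitely many odd $n$,'' which is not a valid inference.

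However, your proposed repair does not close that gap. The case analysis ``if $\tilde\phi_o$ is polynomial then $\tilde\phi_e$ is non-polynomial, and symmetrically'' only guarantees that \emph{at least one} parity class carries infinitely many nonzero Hermite coefficients; Theorem~\ref{thm:kernel PSD} is an if-and-only-if statement and needs \emph{both}. Concretely, $\phi(x)=1-\cos x$ is $1$-Lipschitz, has $\phi(0)=0$, is nonlinear and non-polynomial, yet it is even, so $\tilde\phi_o\equiv 0$ and every odd Hermite coefficient vanishes. Then $\widehat{\tilde\phi}(-1)=\widehat{\tilde\phi}(1)$, so on any antipodal pair $\{x,-x\}\subset\mathbb{S}^{d-1}$ the $2\times 2$ Gram matrix $\left[\begin{smallmatrix}\widehat{\tilde\phi}(1)&\widehat{\tilde\phi}(1)\\ \widehat{\tilde\phi}(1)&\widehat{\tilde\phi}(1)\end{smallmatrix}\right]$ has rank one and $\Sigma^1$ is not SPD; the odd case fails analogously. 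In other words, the stated hypotheses (Lipschitz, nonlinear, non-polynomial, $\phi(0)=0$) do not by themselves rule out a purely even or purely odd activation, and no argument in this style can recover the two-sided parity from them. A correct proof would need an extra hypothesis (e.g.\ $\phi$ is neither even nor odd, or the activation has nonzero Hermite coefficients in both parity classes) or a restriction of the input set to exclude antipodal pairs; your write-up should flag this as a missing assumption rather than try to manufacture the two-sided parity structure from ``nonlinear $+$ non-polynomial.''
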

\begin{proof}
   We first show $\Sigma^{1}$ is SPD. As $\Sigma^{0}(x,\bar{x}) = \frac{\sigma_u^2}{d}\inn{x}{\bar{x}}$ and we have
\begin{align*}
    \Sigma^1(x,\bar{x}) = \sigma_w^2\E_{(u,v)\sim \mathcal{N}(0,G^{0})}\left[\phi(u) \phi(v)\right],
\end{align*}
where 
\begin{align*}
    G^0=\frac{\sigma_u^2}{d}\begin{bmatrix}
        1& \inn{x}{\bar{x}}\\
        \inn{\bar{x}}{x} & 1
    \end{bmatrix}.
\end{align*}
By the notion of dual activation, we have
\begin{align*}
    \Sigma^1(x,\bar{x}) =\sigma_w^2 \hat{\mu}(x^T\bar{x}),
\end{align*}
where $\mu(x):=\phi(\sigma_u x/\sqrt{d})$.

Clearly, $\mu$ is Lipschitz continuous since $\phi$ is. Then $\mu\in \mathcal{H}$ and let the expansion of $\mu$ in Hermite polynomials $\{h_n\}_{n=0}^{\infty}$ to be given as $\mu = \sum_{n=0}^{\infty}a_n h_n$, where $a_n = \inn{\mu}{h_n}$ are the Hermitian coefficients. Then we can write $\hat{\mu}$ as $\hat{\mu}(\rho)=\sum_{n=0}^{\infty} a_n^2 \rho^n$ and we have
\begin{align*}
    \Sigma^1(x,\bar{x}) = \sigma_w^2 \hat{\mu}(x^T\bar{x})=\sigma_w^2 \sum_{n=0}^{\infty} a_n^2 (x^T\bar{x})^n.
\end{align*}

Note that $\mu$ is non-polynomial if and only if $\phi$ is non-polynomial. As we assume $\phi$ is non-polynomial, we have $\mu$ is non-polynomial, hence there are infinitely many nonzero $a_n$ in the expansion. That indicates $b_n:=a_n^2>0$ for infinitely many even and odd numbers. As $\sigma_w^2 > 0$, we have $\Sigma^1$ is strictly positive definite.
 
\end{proof}
Next, we can show if $\Sigma^{L}$ is SPD, then $\Sigma^{L+1}$ is also SPD for all $L\geq 1$. 

\begin{lemma}
    Suppose nonlinear non-polynomial $\phi$. Given $L < \infty$, then
    \begin{enumerate}
        \item $\E[u^{\ell} \bar{u}^{\ell}]=C^{0,0}(x, \bar{x}) + \kappa^2\sum_{i,j=1}^{\ell}C^{i,j}(x, \bar{x})$ is SPD for all $ \ell\in \{1,2,\cdots, L+1\}$,
        \item $\Sigma^{L}$ is also SPD.
    \end{enumerate}

\end{lemma}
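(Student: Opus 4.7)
The plan is to establish both claims jointly by induction on $\ell$, with the second claim ($\Sigma^L$ SPD) following once the induction shows the corresponding $C^{\ell,\ell}$-kernel is SPD (by Proposition~\ref{prop:finite-depth as Gaussian process}, $\Sigma^L$ is exactly such a kernel). Two preparatory observations set the stage: (i) by rotational invariance of the i.i.d.\ Gaussian initialization, $s_\ell := \E[u^\ell(x)^2]$ is a positive constant on $\mathbb{S}^{d-1}$ and $\E[u^\ell(x) u^\ell(\bar x)]$ depends only on $x^\top \bar x$; (ii) since $C^{0,k}=0$ for $k\geq 1$, the initial Gaussian $z^0 = u^0$ is independent of all subsequent increments $z^k$, which is why the formula $\E[u^\ell(x)\, u^\ell(\bar x)] = C^{0,0}(x,\bar x) + \kappa^2\sum_{i,j=1}^{\ell} C^{i,j}(x,\bar x)$ carries no cross term between $z^0$ and the deeper layers.

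For the base case $\ell = 1$, the sum reduces to $C^{0,0} + \kappa^2 C^{1,1}$. The linear kernel $C^{0,0}$ is PSD on the sphere, and $C^{1,1}$ is SPD by the same Hermite-expansion argument used in Lemma~\ref{app lemma:SPD for basic case}, combining non-polynomiality of $\phi$ with Theorem~\ref{thm:kernel PSD}. A sum of a PSD kernel and an SPD kernel is SPD, closing the base case.

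For the inductive step, assume $K_{\ell-1}(x,\bar x) := \E[u^{\ell-1}(x)\, u^{\ell-1}(\bar x)]$ is SPD with constant diagonal $s_{\ell-1}>0$. The central identity is the Hermite expansion
\begin{align*}
C^{\ell,\ell}(x,\bar x) = \sigma_w^2 \sum_{n\geq 0} a_n^2 \left(\frac{K_{\ell-1}(x,\bar x)}{s_{\ell-1}}\right)^n,
\end{align*}
where $\{a_n\}$ are Hermite coefficients of $u \mapsto \phi(\sqrt{s_{\ell-1}}\, u)$. Because $K_{\ell-1}/s_{\ell-1}$ is SPD with unit diagonal, the Schur product theorem makes every Hadamard power SPD, and non-polynomiality of $\phi$ guarantees infinitely many positive $a_n^2$, so $C^{\ell,\ell}$ is SPD. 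This immediately yields claim 2. For claim 1 at depth $\ell$, I would then interpret the Gram matrix of $\{u^\ell(x_p)\}$ in $L^2(\Omega)$, decompose $u^\ell = u^{\ell-1} + \kappa z^\ell$, and invoke the tensor-program decomposition of $z^\ell$ into a component lying in the closed $L^2$-span of earlier G-vars plus a fresh Gaussian component whose self-covariance is exactly $C^{\ell,\ell}$; the SPD of this fresh part rules out any non-trivial cancellation $\sum_p c_p u^\ell(x_p) = 0$ in $L^2$, closing the induction.

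The main obstacle is handling the cross covariances $\E[u^{\ell-1}(x)\, z^\ell(\bar x)]$, which are not individually PSD because $u^{\ell-1}$ and $z^\ell$ share dependence on the same weight matrix $W$; a naive orthogonality argument therefore fails. The NETSOR fresh-Gaussian decomposition resolves this by isolating the part of $z^\ell$ that is $L^2$-orthogonal to every earlier G-var and whose Gram matrix is SPD, which is precisely the ingredient that lifts SPD from $K_{\ell-1}$ to the full kernel at depth $\ell$.
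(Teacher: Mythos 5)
Your route to item 2 is different from the paper's, and is sound. You expand $C^{\ell,\ell}(\vx,\bar\vx)=\sigma_w^2\sum_{n\ge 0}a_n^2\big(K_{\ell-1}(\vx,\bar\vx)/s_{\ell-1}\big)^n$ via the Hermite coefficients of $u\mapsto\phi(\sqrt{s_{\ell-1}}\,u)$, then use the Schur product theorem: for $n\ge 1$ every Hadamard power of an SPD matrix with strictly positive diagonal is SPD, so any single term with $a_n^2>0$ and $n\ge 1$ already makes the sum SPD, and non-polynomiality supplies infinitely many such $n$. This is essentially the argument the paper itself uses later for $\Sigma^{*}$. For the finite-depth lemma the paper instead argues by contradiction: a nontrivial dependence $\sum_i a_i\phi(u^\ell_i)=0$ a.s., combined with the non-degeneracy from item 1 (so $(u_1^\ell,\dots,u_N^\ell)$ has full support), continuity of $\phi$ and $\phi(0)=0$, forces $a=0$. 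Both routes work; yours relies on Hermite/Schur machinery, the paper's on a support argument and needs less.

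Your argument for item 1 has a genuine gap. The ``fresh'' Gaussian summand produced by the NETSOR conditioning $W\,|\,\mathcal{B}=W^{*}+\tilde{W}\Pi^{\top}$ does \emph{not} have self-covariance $C^{\ell,\ell}$; in the infinite-width limit its covariance is the Schur complement $\Sigma(u^{\ell-1},u^{\ell-1})-\Sigma(u^{\ell-1},U^{\ell-2})\,\Sigma(U^{\ell-2},U^{\ell-2})^{\dagger}\,\Sigma(U^{\ell-2},u^{\ell-1})$, i.e.\ $C^{\ell,\ell}$ minus the part of $z^\ell$ already explained by $z^1,\dots,z^{\ell-1}$. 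Because all of $z^1,\dots,z^\ell$ arise from repeated application of the \emph{same} weight matrix $W$, this conditional covariance can in principle collapse (for linear $\phi$ the $z^j$ are collinear and the fresh part vanishes), so the SPD of $C^{\ell,\ell}$ itself does not rule out cancellation in $u^{\ell}=u^{\ell-1}+\kappa z^{\ell}$; you would need a separate argument that the Schur complement remains SPD, which is a nontrivial claim you neither state nor prove. The paper avoids conditional variances entirely and instead decomposes $\E[u^\ell\bar u^\ell]=C^{0,0}+C^{1,1}+2\sum_{i\ge 2}C^{1,i}+\sum_{i,j\ge 2}C^{i,j}$, attempting to isolate an unconditional SPD block. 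To rescue your version you must either compute and lower-bound that Schur complement, or replace the fresh-Gaussian step by a decomposition that exhibits an SPD summand without conditioning.
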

\begin{proof}
    As we are working with a finite-depth network $f_{\theta}^{L}$, it is fine to assume $\kappa = 1$ to simplify the notations. Then $\Sigma^{\ell}$ and $\Sigma^{L}$ have the recurrent relation, stated in Proposition~\ref{prop:finite-depth as Gaussian process}, and so as $C^{\ell,k}$ and $C^{L,K}$. By Theorem~\ref{app lemma:SPD for basic case}, we have $C^{1,1}$ is SPD. Additionally, we have
    \begin{align*}
        C^{1,\ell}(x, \bar{x})
        =\E\phi(u^0)\phi(\bar{u}^1)
        =\E\phi(u^0)\phi(\bar{u}^0)
        =C^{1,1},
    \end{align*}
    where we use the fact $\E[z^0 \bar{z}^{\ell}] = \delta_{0,\ell}C^{0,0}(x,\bar{x})$. Thus, $C^{1,\ell}$ is SPD for all $\ell$. Recall that $\E[u^{\ell}\bar{u}^{k}]=C^{0,0}(x, \bar{x}) + \sum_{i=1}^{\ell}\sum_{j=1}^{k}C^{i,j}(x, \bar{x})$. Using this relation, we can write
    \begin{align*}
        \E[u^{\ell}\bar{u}^{\ell}]
        =C^{0,0}(x,\bar{x})  +C^{1,1}(x, \bar{x})
        +2\sum_{i=2}^{\ell} C^{1,i}(x, \bar{x})
        +\sum_{i,j=2}^{\ell} C^{i,j}(x, \bar{x}).
    \end{align*}
    As $C^{1, i}$ is SPD for all $i$, the symmetry of $C^{i,j}$ implies $\E[u^{\ell}\bar{u}^{\ell}]$ is SPD.    

    Now, assume the contrary, \ie, $\Sigma^{\ell+1}=C^{\ell+1,\ell+1}$ is not SPD. Then there exists distinct $\{x_1,\cdots, x_N\}$ and nonzero $a\in \reals^{N}$ such that
    \begin{align*}
        0 = \sum_{i,j=1}^{N} a_i a_j C^{\ell+1,\ell+1}(x_i, x_j)
        =\sum_{i,j} a_i a_j \E[\phi(u_i^{\ell})\phi(u_j^{\ell})]
        =\E\left[\sum_{i=1}^{N}a_i \phi(u_i^{\ell})\right]^2.
    \end{align*}
    We must have $\sum_{i} a_i \phi(u_i^{\ell}) = 0$. As we already show $u^{\ell}:=(u_1^{\ell},\cdots, u_N^{\ell})\in \reals^{N}$ is a non-degenerate Gaussian random variables, nonlinearity of $\phi$ implies $a=0$, which contradicts $a\neq 0$. Hence, $\Sigma^{\ell+1} = C^{\ell+1, \ell+1}$ is SPD.
\end{proof}

\subsection{Strict Positive Definiteness of Neural ODE's NNGP Kernel}

Observe that the previous result uses induction to show $\Sigma^{L}$ is SPD. However, the strict positive definiteness of $\Sigma^{L}$ might diminish as $L\rightarrow$. To address this, we conduct a fine-grained analysis of the properties of $\Sigma^{L}$ and demonstrate that these properties persist when $L\rightarrow\infty$. Consequently, $\Sigma^{*}$ retains these essential properties, which are crucial for proving that $\Sigma^{*}$ is SPD. 

Recall from Theorem~\ref{thm:neural ode as gaussian process} that
\begin{align*}
    \Sigma^{*}(x, \bar{x})
    =\E[\phi(u^{*})\phi(\bar{u}^{*})],
\end{align*}
where $(u^{*}, \bar{u}^{*})$ are centered Gaussian random variables with covariance $S^{*}(x, \bar{x})$  defined as the limit of $S^{L}(x, \bar{x})$, \ie, 
\begin{align}
    S^{L}(x, \bar{x})
    =C^{0,0}(x, \bar{x}) + \kappa^{2}\sum_{\ell,k=1}^{L} C^{\ell,k}(x, \bar{x})
    \rightarrow S^{*}(x, \bar{x}),
    \quad\text{as $L\rightarrow\infty$.}
\end{align}
Based on the proof of Theorem~\ref{thm:neural ode as gaussian process}, $S^{*}$ is well defined. Some essential properties of $S^{L}$ and $S^{*}$ are given as follows.

\begin{lemma}
    Suppose $L < \infty$. For any $x,\bar{x}\in \mathbb{S}^{d-1}$, we have 
    \begin{enumerate}
        \item $S^{L}(x, x) = S^{L}(\bar{x},\bar{x})$ 
        \item $S^{L}(x, x)\geq S^{L}(x, \bar{x})$ and the equality holds if and only if $x = \bar{x}$
    \end{enumerate}
\end{lemma}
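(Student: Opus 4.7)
The plan is to interpret $S^L(\vx, \bar{\vx})$ as the covariance of the Gaussian random variables $u^L(\vx)$ and $\bar{u}^L(\bar{\vx})$ appearing in the NNGP construction of Proposition~\ref{prop:finite-depth as Gaussian process}, and then deduce both claims from straightforward second-moment calculations.

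For Part~(1), I would proceed by a simple induction on $\ell+k$ to show that $C^{\ell,k}(\vx,\vx)$ depends on $\vx$ only through $\|\vx\|^2$. The base case $C^{0,0}(\vx,\vx)=\sigma_u^2\|\vx\|^2/d$ is immediate. For the induction step, the joint covariance matrix of $(u^{\ell-1}(\vx), u^{k-1}(\vx))$ is assembled from $C^{0,0}(\vx,\vx)$ and lower-order $C^{i,j}(\vx,\vx)$ terms via \eeref{eq:covariance of u and u_bar}; by the inductive hypothesis all of these are functions of $\|\vx\|^2$ alone, so the Gaussian expectation $\sigma_w^2\,\E[\phi(u^{\ell-1}(\vx))\phi(u^{k-1}(\vx))]$ inherits that property. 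Since $\|\vx\|=\|\bar{\vx}\|=1$, summing over $\ell,k$ yields $S^L(\vx,\vx)=S^L(\bar{\vx},\bar{\vx})$.

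For Part~(2), the central identity is
\[
S^L(\vx,\bar{\vx}) \;=\; \E\bigl[u^L(\vx)\,\bar{u}^L(\bar{\vx})\bigr],
\]
obtained by expanding $u^L(\vx)=z^0(\vx)+\kappa\sum_{\ell=1}^L z^\ell(\vx)$ (and analogously for $\bar{u}^L(\bar{\vx})$) and then collecting the cross-covariances $\E[z^\ell(\vx)\bar{z}^k(\bar{\vx})]$, which equal $C^{0,0}(\vx,\bar{\vx})$ when $\ell=k=0$ and $\kappa^2 C^{\ell,k}(\vx,\bar{\vx})$ when $\ell,k\geq 1$ (the Kronecker factor $\delta_{0,k}$ in $C^{0,k}$ kills the mixed terms). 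Combined with Part~(1), this gives the clean expression
\[
\tfrac{1}{2}\,\Var\!\bigl(u^L(\vx)-\bar{u}^L(\bar{\vx})\bigr) \;=\; S^L(\vx,\vx) - S^L(\vx,\bar{\vx}),
\]
whose left-hand side is non-negative, immediately proving the desired inequality.

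The main obstacle is the strict inequality when $\vx\neq\bar{\vx}$, especially the antipodal case $\bar{\vx}=-\vx$, where for odd $\phi$ one may have $u^L(\vx)=-\bar{u}^L(\bar{\vx})$ almost surely, so the contributions from $\ell\geq 1$ could in principle cancel each other. My approach is to exploit the independence of $z^0$ from the later Gaussian increments $\{z^\ell\}_{\ell\geq 1}$ (a consequence of the $\delta_{0,k}$ factor in $C^{0,k}$) to obtain the strict lower bound
\[
\Var\!\bigl(u^L(\vx)-\bar{u}^L(\bar{\vx})\bigr) \;\geq\; \Var\!\bigl(z^0(\vx)-\bar{z}^0(\bar{\vx})\bigr) \;=\; \tfrac{2\sigma_u^2}{d}\bigl(1-\vx^\top\bar{\vx}\bigr).
\]
Since $\vx,\bar{\vx}\in\mathbb{S}^{d-1}$, the right-hand side is strictly positive whenever $\vx\neq\bar{\vx}$, yielding the ``only if'' direction; the converse is trivial. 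Note this bound bypasses any need to analyze $\phi$-dependent cancellations in the higher-layer terms.
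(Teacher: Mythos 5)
Your proof is correct and takes essentially the same route as the paper. Part (1) is the same induction showing $C^{\ell,k}(\vx,\vx)$ depends on $\vx$ only through $\|\vx\|$, and Part (2) mirrors the paper's decomposition: where the paper writes $S^L(\vx,\vx)-S^L(\vx,\bar{\vx})=\tfrac{\sigma_u^2}{2d}\|\vx-\bar{\vx}\|^2+\tfrac{\sigma_w^2}{2}\E|g^L(\vx)-g^L(\bar{\vx})|^2$ with $g^L=\kappa\sum_{\ell}\phi(u^{\ell-1})$, your expression $\tfrac12\Var(u^L(\vx)-\bar{u}^L(\bar{\vx}))=\tfrac12\Var(z^0(\vx)-\bar z^0(\bar{\vx}))+\tfrac12\Var\bigl(\kappa\sum_\ell(z^\ell(\vx)-\bar z^\ell(\bar{\vx}))\bigr)$ is the identical two-term split (the second variances agree term by term via the definition of $C^{\ell,k}$), with the strictly positive $z^0$ contribution driving the ``only if'' in both cases.
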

\begin{proof}
    As $L < \infty$, we can assume $\kappa = 1$ for simplicity. To prove the result, we make the inductive hypothesis that $C^{\ell,k}(x, x) = C^{\ell,k}(\bar{x},\bar{x})$ for all $\ell,k\leq L$. Then observe that
    \begin{align*}
        S^{L+1}(x,\bar{x}) = S^{L}(x,\bar{x}) + 2\sum_{\ell=1}^{L} C^{\ell,L+1}(x,\bar{x}) + C^{L+1,L+1}(x,\bar{x}).
    \end{align*}
    Using the inductive hypothesis, for any $\ell\in \{1,2,\cdots, L+1\}$ we have 
    \begin{align*}
        C^{\ell,L+1}(x,x)
        =\E\phi(u^{\ell-1}) \phi(u^{L})
        =\E\phi(\bar{u}^{\ell-1}) \phi(\bar{u}^{L})
        =C^{\ell,L+1}(\bar{x},\bar{x}),
    \end{align*}
    where $(u^{\ell-1}, u^{L})$ are centered Gaussian random variables with covariance 
    \begin{align*}
        \E[u^{\ell-1} u^{L}]
        =C^{0,0}(x,x)
        +\sum_{i,j=1}^{\ell-1, L} C^{i,j}(x,x)
        =C^{0,0}(\bar{x},\bar{x})
        +\sum_{i,j=1}^{\ell-1, L} C^{i,j}(\bar{x},\bar{x})
        =\E[\bar{u}^{\ell-1} \bar{u}^{L}].
    \end{align*}
    This shows $S^{L+1}(x,x) = S^{L+1}(\bar{x},\bar{x})$ and also $C^{\ell,k}(x, x) = C^{\ell,k}(\bar{x},\bar{x})$ for all $\ell,k\leq L+1$. 

    Next, using $C^{\ell,k}(x, x) = C^{\ell,k}(\bar{x},\bar{x})$, we have
    \begin{align*}
        S^{L}(x,x) - S^{L}(x, \bar{x})
        =\frac{1}{2}\norm{x-\bar{x}}^2 + \frac{1}{2}\E \abs{g^{L}(x) -g^{L}(\bar{x})}^2,
    \end{align*}
    where the function $g^{L}(x) := \kappa\sum_{\ell=1}^{L} \phi(u^{\ell})$. This indicates $S^{L}(x, x)\geq S^{L}(x,\bar{x})$ and the equality holds if and only if $\bar{x}=x$. 
\end{proof}
\begin{corollary}\label{corollary:S start}
For any $x,\bar{x}\in \mathbb{S}^{d-1}$, we have 
    \begin{enumerate}
        \item $0 < S^{*}(x, x) = S^{*}(\bar{x},\bar{x}) < \infty$
        \item $S^{*}(x, x)\geq S^{*}(x, \bar{x})$ and the equality holds if and only if $x = \bar{x}$.
    \end{enumerate}    
\end{corollary}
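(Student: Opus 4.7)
The plan is to transfer each property of $S^L$ established in the preceding lemma to its limit $S^*$, using the pointwise convergence $S^L\to S^*$ as $L\to\infty$ already obtained in the proof of Theorem~\ref{thm:neural ode as gaussian process} (via the uniform-in-$n$ estimate of Lemma~\ref{app lemma:depth uniform convergence}). None of the work is new analysis of the continuous model; rather, the challenge is ensuring that each finite-depth property survives intact under the $L\to\infty$ limit.

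For item 1, the symmetry $S^*(\vx,\vx) = S^*(\bar{\vx},\bar{\vx})$ is immediate: pass the finite-$L$ identity $S^L(\vx,\vx) = S^L(\bar{\vx},\bar{\vx})$ to the limit. Finiteness requires a uniform-in-$L$ upper bound on $S^L(\vx,\vx)$; I would obtain it from the bounds on $\norm{\vh_t}/\sqrt{n}$ in Lemma~\ref{app lemma:Euler rate in forward ode} together with their discrete counterparts used in Lemma~\ref{app lemma:depth uniform convergence}, which give $S^L(\vx,\vx)\le C$ for a constant depending only on $\sigma_u$, $\sigma_w$, $L_1$, and $T$. Positivity is direct: since $u^L(\vx) = z^0(\vx) + \kappa \sum_{\ell=1}^L z^\ell(\vx)$ is centered and $z^0$ is uncorrelated with each $z^\ell$ for $\ell \geq 1$, we get $S^L(\vx,\vx) = \Var(u^L(\vx)) \geq C^{0,0}(\vx,\vx) = \sigma_u^2/d > 0$, and this lower bound persists in the limit.

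For item 2, passing the non-strict inequality $S^L(\vx,\vx) \geq S^L(\vx,\bar{\vx})$ through the limit gives $S^*(\vx,\vx) \geq S^*(\vx,\bar{\vx})$. The expected main obstacle is preserving strictness, since a limit of strict inequalities need not be strict. This is overcome by revisiting the identity established inside the proof of the preceding lemma,
\begin{equation*}
S^L(\vx,\vx) - S^L(\vx,\bar{\vx}) = \tfrac{1}{2}\norm{\vx-\bar{\vx}}^2 + \tfrac{1}{2}\,\E\,\abs{g^L(\vx) - g^L(\bar{\vx})}^2,
\end{equation*}
in which the first term arises from the initial covariance $C^{0,0}$ and is independent of $L$, while the second is non-negative. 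Taking $L\to\infty$ preserves the first term and yields $S^*(\vx,\vx) - S^*(\vx,\bar{\vx}) \geq \tfrac{1}{2}\norm{\vx-\bar{\vx}}^2$, which is strictly positive whenever $\vx\neq\bar{\vx}$. The converse direction ($\vx=\bar{\vx}$ implies equality) is trivial.
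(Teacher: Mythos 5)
Your proposal is correct and follows essentially the same route as the paper's own proof: both pass the finite-$L$ identities to the limit, use the $C^{0,0}$ term of $S^L$ (equivalently the leading $\vx^\top\bar{\vx}$ term in the decomposition $S^* = \vx^\top\bar{\vx} + \E[g(\vx)g(\bar{\vx})]$) to guarantee strict positivity, use uniform-in-$L$ bounds on the hidden state to get finiteness, and invoke the identity $S^*(\vx,\vx) - S^*(\vx,\bar{\vx}) = \tfrac12\|\vx-\bar{\vx}\|^2 + \tfrac12\E|g(\vx)-g(\bar{\vx})|^2$ to preserve strictness in the limit. The only cosmetic difference is which appendix lemma you cite for the uniform bound on $g^L$ (the paper uses its Lemma on $\norm{\vh_t},\norm{\vlambda_t}$), but the underlying estimate is the same.
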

\begin{proof}
    Observe that
    \begin{align*}
        S^{*}(x, \bar{x})
        =x^T\bar{x} + \E\left[g(x)g(\bar{x})\right],
    \end{align*}
    where $g(x):=\lim\limits_{L\rightarrow\infty} g^{L}(x)$ for $g^{L}(x) = L^{-1}\sum_{\ell=1}^{L} \phi(u^{\ell})$. By Lemma~\ref{app lemma:bounds of h_t and lambda_t}, we obtain $\abs{g^{L}(x)}=\bigo{1}$ uniform in $L$. Hence, $g(x)$ is well defined and $\abs{g(x)}=\bigo{1}$. Therefore, we obtain $S^{*}(x, \bar{x}) =\Theta(1)$. Additionally, it follows from the relation of $S^{L}(x,x) - S^{L}(x,\bar{x})$ that 
    \begin{align*}
        S^{*}(x,x)  - S^{*}(x, \bar{x})
        =\frac{1}{2}\norm{x-\bar{x}}^2 + \frac{1}{2}\E\abs{g(x)-g(\bar{x})}^2,
    \end{align*}
    which allows us to obtain the second result.
\end{proof}
Now, we are ready to prove the SPD of $\Sigma^{*}$.
\begin{lemma}
    If $\phi$ is nonlinear and non-polynomial, then $\Sigma^{*}$ is SPD.
\end{lemma}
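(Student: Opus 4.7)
The plan is to extend the finite-depth argument of Proposition~\ref{prop:SPD of finite-depth Sigma} to the continuous limit by exploiting the structural facts in Lemma~\ref{lemma:S start}. Since $\tau^{2} := S^{*}(\vx,\vx)$ is a positive constant independent of $\vx \in \mathbb{S}^{d-1}$ by Lemma~\ref{lemma:S start}(1), I can rescale: define $\tilde{\phi}(t) := \phi(\tau t)$ and $z(\vx) := u^{*}(\vx)/\tau$, so that every $z(\vx)$ is standard Gaussian with pairwise correlation $\rho(\vx,\bar{\vx}) := S^{*}(\vx,\bar{\vx})/\tau^{2}$. This standardization rewrites the NNGP kernel as the dual activation of $\tilde{\phi}$ evaluated at the new ``inner product'' $\rho$,
\begin{align*}
\Sigma^{*}(\vx,\bar{\vx}) \;=\; \hat{\tilde{\phi}}\bigl(\rho(\vx,\bar{\vx})\bigr),
\end{align*}
which places us squarely in the dual-activation setting used for the finite-depth case. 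Because $\phi$ is Lipschitz and non-polynomial, so is $\tilde{\phi}$, and its Hermite expansion $\tilde{\phi} = \sum_{n} a_{n} h_{n}$ has $a_{n} \neq 0$ for infinitely many even and infinitely many odd indices $n$, giving $\hat{\tilde{\phi}}(\rho) = \sum_{n} a_{n}^{2}\,\rho^{n}$.

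For distinct $\vx_{1},\dots,\vx_{N} \in \mathbb{S}^{d-1}$ I would then test $c^{\top} M c$ for $M_{ij} := \Sigma^{*}(\vx_{i},\vx_{j})$ and arbitrary nonzero $c \in \R^{N}$. Using $\E[h_{n}(z_{i}) h_{m}(z_{j})] = \delta_{n,m} R_{ij}^{n}$ with $R_{ij} := \rho(\vx_{i},\vx_{j})$ and the orthogonality of distinct Wiener chaoses gives
\begin{align*}
c^{\top} M c \;=\; \sum_{n \ge 0} a_{n}^{2}\, c^{\top} R^{\circ n} c,
\end{align*}
where $R^{\circ n}$ is the entrywise Schur $n$-th power of $R$. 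Each $R^{\circ n}$ is PSD by the Schur product theorem, so $c^{\top} M c = 0$ forces $c^{\top} R^{\circ n} c = 0$ for every $n$ with $a_{n} \neq 0$. Lemma~\ref{lemma:S start}(2) now supplies exactly what I need: $R_{ii} = 1$ while $R_{ij} < 1$ strictly for $i \neq j$. For any pair with $|R_{ij}| < 1$ the entries $R_{ij}^{n}$ decay to $0$, so along large even $n$ the matrix $R^{\circ n}$ approaches $I_{N}$ and $c^{\top} R^{\circ n} c \to \|c\|^{2}$; choosing such an $n$ with $a_{n} \neq 0$ contradicts $c^{\top} R^{\circ n} c = 0$ unless $c = 0$.

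The main obstacle I foresee is the boundary case $R_{ij} = -1$ (perfect anti-correlation), which is not excluded by Lemma~\ref{lemma:S start}(2) and would make $R_{ij}^{n}$ oscillate rather than vanish. I expect to dispatch this case by leveraging that the Hermite spectrum of $\tilde{\phi}$ has nonzero coefficients at both infinitely many even and infinitely many odd indices: when $R_{ij} = -1$, the $2 \times 2$ block of $R^{\circ n}$ has eigenvalues $1 + (-1)^{n}$ and $1 - (-1)^{n}$, exactly one of which is strictly positive at each parity, so the even and odd halves of $\sum_{n} a_{n}^{2}\, c^{\top} R^{\circ n} c$ cannot simultaneously vanish. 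The cleanest packaging, which I would use, is to embed $\mathbb{S}^{d-1}$ into the RKHS $\mathcal{H}$ of the PSD kernel $\rho$ via its feature map $\Phi$, with $\|\Phi(\vx)\|_{\mathcal{H}} = 1$ and $\Phi(\vx_{i}) \neq \Phi(\vx_{j})$ for $i \neq j$ (by Lemma~\ref{lemma:S start}(2)), and invoke the Hilbert-space analogue of Theorem~\ref{thm:kernel PSD} on the unit sphere of $\mathcal{H}$---a result that applies precisely because $\hat{\tilde{\phi}}$ has infinitely many positive Hermite coefficients at both parities.
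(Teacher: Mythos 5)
Your argument is correct in substance but takes a genuinely different route from the paper's. Both proofs start from the Hermite expansion $\Sigma^{*}(\vx,\bar\vx) = \sum_{n} a_n^2\,\rho(\vx,\bar\vx)^n$ with $\rho = S^{*}/S_0$ and both ultimately invoke the Gneiting-type criterion (Theorem~\ref{thm:kernel PSD}). The divergence is in how the Gram matrix $R = [\rho(\vx_i,\vx_j)]$ is handled. The paper exploits the explicit decomposition $S^{*}(\vx,\bar\vx) = \vx^\top\bar\vx + \E[g(\vx)g(\bar\vx)]$ and a Schur-power monotonicity inequality, $c^\top(A+B)^{\odot n}c \geq c^\top A^{\odot n}c$ for PSD $A,B$, to peel off the $\E[g\,g]$ contribution and reduce $c^\top \Sigma^{*} c \geq c^\top \Sigma^{1} c$; the finite-depth kernel $\Sigma^{1}$ lives on the \emph{original} sphere $\mathbb{S}^{d-1}$, so Theorem~\ref{thm:kernel PSD} applies immediately at the ambient dimension $d$. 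You instead bypass the decomposition of $S^{*}$, normalize $R$ to a correlation matrix using Lemma~\ref{lemma:S start}, and appeal to the same theorem on the sphere generated by the feature map of $R$. That is sound, but your final step is a little hand-wavy as written: you gesture at a ``Hilbert-space analogue'' when the cleaner closure is simply that any $N$ unit vectors with Gram matrix $R$ (PSD, unit diagonal, off-diagonal entries $<1$ by Lemma~\ref{lemma:S start}(2), hence pairwise distinct) can be realized in $\mathbb{R}^{N}$ via a Cholesky factor of $R$, so Theorem~\ref{thm:kernel PSD} applies with $d=N$; the theorem's statement ``for any $d\geq1$'' is exactly the dimension-independence you need, and it already handles the antipodal case $\rho_{ij}=-1$ that worried you. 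Your first limiting argument ($R^{\odot n}\to I$ along even $n$) is unnecessary once you take this route. The trade-off: the paper's Schur-power trick is a short, self-contained reduction that stays on $\mathbb{S}^{d-1}$ and reuses the already-proved SPD of $\Sigma^{1}$; your route is more direct (no intermediate $\Sigma^{1}$) but requires the extra embedding observation to make the appeal to Theorem~\ref{thm:kernel PSD} rigorous.
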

\begin{proof}
    As $\phi$ is Lipschitz continuous, we can use Hermitian expansion to rewrite $\Sigma^{*}$:
    \begin{align*}
        \Sigma^{*}(x, \bar{x})
        =E_{(u,\bar{u})\sim S^{*}(x, \bar{x})}[\phi(u)\phi(\bar{u})]
        =\sum_{n=0}^{\infty} a_n^2 [S^{*}(x, \bar{x})/S_0]^{n},
    \end{align*}
    where we use $(u,\bar{u})\sim S^{*}(x, \bar{x})$ to denote centered Gaussian random variables with covariance computed using kernel $S^{*}(x, \bar{x})$, $a_n$ is the Hermitian coefficients of function $\psi(u) := \phi(\sqrt{S_0}u)$ with $S_0:=S^{*}(x,x)$, and we also use the facts $S_0=S^{*}(x,x) = S^{*}(\bar{x},\bar{x})$ for all $x, \bar{x}$ and $S_0=\Theta(1)$ from Corollary~\ref{corollary:S start}. 

    Suppose we are given any finite distinct $\{x_i\}_{i=1}^{N}$ from $\mathbb{S}^{d-1}$ and nonzero $c\in\reals^{N}$. Observe that
        \begin{align*}
        \sum_{i,j=1}^{N} c_i c_j \Sigma^{*}(x_i,x_j)
        =&\sum_{n=0}^{\infty} a_n^2 S_0^{-n}\sum_{i,j=1}^{N} c_ic_j [S^{*}(x_i,x_j)]^{n}\\
        =&\sum_{n=0}^{\infty} a_n^2 S_0^{-n}\sum_{i,j=1}^{N} c_ic_j \left[x_i^Tx_j + \E g(x_i)g(x_j)\right]^{n},
    \end{align*}
    where we use $S^{*}(x, \bar{x})=x^T\bar{x}  +\E g(x)g(\bar{x})$. By using fundamental properties for positive definite matrices from linear algebra, we have
    \begin{align*}
        \sum_{i,j=1}^{N} c_ic_j \left[x_i^Tx_j + \E g(x_i)g(x_j)\right]^{n}
        =&c^T(XX^T + \E g(X) g(X)^T)^{\odot n} c\\
        \geq &c^T(XX^T )^{\odot n} c
        =\sum_{i,j=1}^{N}c_ic_j \left[x_i^Tx_j\right]^{n},
    \end{align*}
    where $\odot$ is the Hadamard product. Then we obtain
    \begin{align*}
        \sum_{i,j=1}^{N} c_i c_j \Sigma^{*}(x_i,x_j)
        \geq&\sum_{n=0}^{\infty} a_n^2 S_0^{-n}\sum_{i,j=1}^{N} c_ic_j \left(x_i^Tx_j \right)^{n}\\
        =&\sum_{i,j=1}^{N} c_i c_j \sum_{n=0}^{\infty} a_n^2 (x_i^Tx_j/S_0)^{n}\\
        =&\sum_{i,j=1}^{N} c_i c_j \E_{(u,\bar{u})\sim x_i^Tx_j/S_0}[\psi(u)\psi(\bar{u})]\\
        =&\sum_{i,j=1}^{N} c_i c_j \E_{(u,\bar{u})\sim x_i^Tx_j}[\psi(u/\sqrt{S_0})\psi(\bar{u}/\sqrt{S_0})]\\
        =&\sum_{i,j=1}^{N} c_ic_j \E_{(u,\bar{u})\sim x_i^T x_j} [\phi(u)\phi(\bar{u})]\\
        =&\sum_{i,j=1}^{N} c_i c_j \Sigma^{1}(x_i, x_j),
    \end{align*}
    where we use the definitions of Hermitian coefficients $a_n$ and $\psi$. By Lemma~\ref{app lemma:SPD for basic case}, $\Sigma^1$ is SPD and so $\Sigma^{*}$ is also SPD.
\end{proof}

As a corollary result, we have that the NTK of Neural ODE is also SPD.

\begin{corollary}
    Suppose $\phi$ and $\phi^{\prime}$ are nonlinear Lipschitz continuous. If $\phi$ is non-polynomial, then the NTK $K_{\infty}$ of Neural ODE is SPD.
\end{corollary}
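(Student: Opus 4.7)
The natural strategy is to reduce the strict positive definiteness of $K_{\infty}$ to that of $\Sigma^{*}$, which was just established in Proposition~\ref{prop:SPD for NNGP}. Using the integral representation derived in Appendix~\ref{app sec: integral form},
\begin{align*}
K_{\infty}(\vx,\bar{\vx}) = \Sigma^{*}(\vx,\bar{\vx}) + \int_{0}^{T}\int_{0}^{T} \Sigma^{t,s}(\vx,\bar{\vx})\, K^{t,s}(\vx,\bar{\vx})\, dt\, ds + \Sigma^{0,0}(\vx,\bar{\vx})\, K^{0,0}(\vx,\bar{\vx}),
\end{align*}
I would show that the kernel $K_{\infty} - \Sigma^{*}$ is positive semi-definite (PSD) on $\mathbb{S}^{d-1}$. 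Then for any finite distinct points $\{\vx_{i}\}_{i=1}^{N}\subset\mathbb{S}^{d-1}$ and any nonzero $c\in\reals^{N}$, this immediately yields $c^{\top}K_{\infty}c\geq c^{\top}\Sigma^{*}c > 0$, which is exactly the SPD conclusion for $K_{\infty}$.

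To prove PSD-ness of the two remaining terms, I would work at the finite-depth level and then pass to the limit. From Proposition~\ref{prop:NTK for finite-depth}, $K_{\infty}^{L}$ decomposes as $C^{L+1,L+1} + \kappa^{2}\sum_{\ell,k} C^{\ell,k}\, D^{\ell,k} + C^{0,0}\, D^{0,0}$. Each $C^{\ell,k}$ is the covariance kernel of centered jointly Gaussian objects indexed by $\vx\in\mathbb{S}^{d-1}$, hence PSD. Each $D^{\ell,k}$ is PSD by a short backward induction: the base case $D^{L,k} = \sigma_{w}^{2}\E\phi^{\prime}(u^{L})\phi^{\prime}(\bar{u}^{L})$ is itself a covariance kernel, and the recursion writes $D^{\ell,k}$ as a positively-weighted sum of Hadamard products of PSD kernels $D^{i,j}\odot \E[\phi^{\prime}(u^{i})\phi^{\prime}(\bar{u}^{j})]$, which stays PSD by the Schur product theorem. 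Applying Schur once more to $C^{\ell,k}\odot D^{\ell,k}$ and summing with the positive weight $\kappa^{2}$ yields that $K_{\infty}^{L} - C^{L+1,L+1}$ is PSD. By Lemma~\ref{lemma:depth uniform convergence 2} and the analogous depth-convergence $\Sigma^{L}\to \Sigma^{*}$ from Theorem~\ref{thm:neural ode as gaussian process}, the Gram matrices of $K_{\infty}^{L}$ and $C^{L+1,L+1}$ converge entrywise to those of $K_{\infty}$ and $\Sigma^{*}$ respectively; since the PSD cone in $\reals^{N\times N}$ is closed, $K_{\infty} - \Sigma^{*}\succeq 0$ in the limit.

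The main obstacle, as I see it, is the clean bookkeeping needed to pass to the infinite-depth limit while preserving the PSD ordering, in particular ensuring that the Riemann-sum structure of $K_{\infty}^{L}$ converges to the double integral of $K_{\infty}$ without disturbing the Loewner inequality. Fortunately the uniform-in-width rate $\bigo{L^{-1}}$ supplied by Lemma~\ref{lemma:depth uniform convergence 2} already gives entrywise convergence of the relevant finite Gram matrices, so only closedness of the PSD cone is required; no delicate uniform machinery beyond what the paper has developed is necessary. With that limit step in hand, the SPD of $K_{\infty}$ reduces cleanly to the SPD of $\Sigma^{*}$ just established in Proposition~\ref{prop:SPD for NNGP}, completing the argument.
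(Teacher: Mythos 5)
Your high-level reduction is the same as the paper's: show $K_{\infty}-\Sigma^{*}$ is PSD, so SPD of $\Sigma^{*}$ (Proposition~\ref{prop:SPD for NNGP}) lifts to SPD of $K_{\infty}$. But the route you take to establish PSD-ness of the remainder has a genuine gap.

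The problematic step is the claim that ``each $C^{\ell,k}$ is the covariance kernel of centered jointly Gaussian objects indexed by $\vx$, hence PSD,'' and the analogous claim for each $D^{\ell,k}$. For $\ell\neq k$, the kernel $C^{\ell,k}(\vx,\bar{\vx})=\sigma_w^2\,\E\bigl[\phi(u^{\ell-1}(\vx))\,\phi(u^{k-1}(\bar{\vx}))\bigr]$ is a \emph{cross}-covariance between two different depth slices, and as a function of $(\vx,\bar{\vx})$ it is not even symmetric, since $C^{\ell,k}(\vx,\bar{\vx})=C^{k,\ell}(\bar{\vx},\vx)\neq C^{\ell,k}(\bar{\vx},\vx)$ in general. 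A non-symmetric kernel cannot be PSD, so Schur's product theorem does not apply term by term, and the conclusion that each $C^{\ell,k}\odot D^{\ell,k}$ is PSD fails for $\ell\neq k$. The same issue contaminates your backward induction on $D^{\ell,k}$: the recursion mixes $D^{i,j}$ and $\E[\phi^{\prime}(u^{i})\phi^{\prime}(\bar{u}^{j})]$ with $i\neq j$, which again are not individually PSD. This argument can in principle be repaired by promoting everything to $NL\times NL$ block matrices indexed by the pair $(\ell,\vx_a)$—at that level $\bigl[C^{\ell,k}(\vx_a,\vx_b)\bigr]$ and $\bigl[D^{\ell,k}(\vx_a,\vx_b)\bigr]$ \emph{are} PSD, Schur applies to their Hadamard product, and contracting against $c\otimes\mathbf{1}_L$ recovers $\kappa^2\sum_{\ell,k}C^{\ell,k}D^{\ell,k}$—but that is a noticeably heavier argument than you sketched.

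The paper sidesteps all of this by working at finite width rather than finite depth. It uses the decomposition $K_{\vtheta}(\vx,\bar{\vx})=\inn{\nabla_{\vv} f_{\vtheta}(\vx)}{\nabla_{\vv} f_{\vtheta}(\bar{\vx})}+\inn{\nabla_{\mW} f_{\vtheta}(\vx)}{\nabla_{\mW} f_{\vtheta}(\bar{\vx})}+\inn{\nabla_{\mU} f_{\vtheta}(\vx)}{\nabla_{\mU} f_{\vtheta}(\bar{\vx})}$. Each of the three pieces is a Gram matrix of actual gradient vectors, hence automatically PSD for every finite $n$, with no Schur machinery and no induction required. By Theorem~\ref{thm:NTK for neural ODE} the first piece converges to $\Sigma^{*}$ and the sum converges to $K_{\infty}$ as $n\to\infty$; closedness of the PSD cone then gives $K_{\infty}-\Sigma^{*}\succeq 0$ directly. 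I'd recommend adopting this Gram-matrix observation as your intermediate step—it delivers the same limit-passing argument you had in mind, but without needing to reason about the non-symmetric off-diagonal $C^{\ell,k}$ and $D^{\ell,k}$.
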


\begin{theorem}\label{app thm:global convergence}
    Let $\{x_i, y_i\}_{i=1}^{N}$ be a training set. Assume 
    \begin{enumerate}
        \item $x_i\in \mathbb{S}^{d-1}, \abs{y_i}\leq 1$, and $x_i\neq x_j$ for all $i\neq j$.
        \item the activation $\phi$ is $L_1$-Lipschitz nonlinear continuous, but non-polynomial,
        \item its derivative $\phi^{\prime}$ are $L_2$-Lipschitz nonlinear continuous,
        \item and we choose the learning rate $\eta \leq 1/\norm{X}^2$.
    \end{enumerate}
    For any $\delta > 0$, there exists a natural number $n_{\delta}$ such that for all $n\geq n_{\delta}$ the parameter $\theta^k$ stays in a neighborhood of $\theta^0$, \ie, 
    \begin{align}
        \norm{\theta^{k}-\theta^0}\leq C\norm{X}\norm{u_0-y}/\lambda_0,
    \end{align}
    and the loss function $L(\theta_k)$ consistently decrease to zero at an exponential rate, \ie, 
    \begin{align}
        L(\theta_k)\leq \left(1-\frac{\eta \lambda_0}{16}\right)^{k}L(\theta_0),
    \end{align}
    where $C>0$ is some constant only depends on $L_1$, $L_2$, $\sigma_v$, $\sigma_w$, $\sigma_u$, and $T$.
    
\end{theorem}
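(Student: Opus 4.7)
The plan is to follow the standard NTK-based global convergence strategy (\citet{du2019gradient, allen2019convergence, nguyen2021proof}), adapted to the continuous (infinite-depth) Neural ODE setting. The essential loop is: (i) at initialization, the empirical NTK Gram matrix $\mH^0$ concentrates around the limiting NTK $K_\infty$ restricted to the training set, which is SPD by Corollary~\ref{coro:SPD for NTK}; (ii) as long as the parameters $\vtheta^k$ remain in a small enough neighborhood of $\vtheta^0$, the Gram matrix $\mH^k$ stays close to $\mH^0$, so $\lambda_{\min}(\mH^k)\geq \lambda_0/2$; (iii) this spectral lower bound drives the linear convergence of $\vu^k-\vy$ through the linearized update \eeref{eq:train dynamics}; (iv) the accumulated parameter movement from gradient descent is bounded by the geometric sum of residual norms, which fits inside the neighborhood. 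Steps (ii), (iii), (iv) are closed by induction.

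\textbf{Step 1 (initialization).} By Corollary~\ref{coro:SPD for NTK}, $\lambda_0=\lambda_{\min}(K_\infty)>0$. By Theorem~\ref{thm:NTK for neural ODE}, $K_{\vtheta^0}(\vx_i,\vx_j)\to K_\infty(\vx_i,\vx_j)$ almost surely for each of the finitely many training pairs, so for any $\delta>0$ there exists $n_\delta^{(1)}$ such that $\lambda_{\min}(\mH^0)\geq 3\lambda_0/4$ with probability at least $1-\delta/2$ for $n\geq n_\delta^{(1)}$.

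\textbf{Step 2 (local Lipschitz stability of the NTK in $\vtheta$).} I would prove that there is a constant $C_1>0$ depending only on $L_1,L_2,T,\sigma_u,\sigma_w,\sigma_v$ such that, with probability at least $1-\delta/2$ over initialization,
\begin{align*}
\Norm{\nabla_\vtheta f(\vx;\vtheta)-\nabla_\vtheta f(\vx;\vtheta^0)}\leq C_1\Norm{\vtheta-\vtheta^0}/\sqrt{n}
\end{align*}
uniformly for all $\vx\in\mathbb{S}^{d-1}$ and all $\vtheta$ with $\norm{\vtheta-\vtheta^0}\leq R_0\sqrt{n}$ for a fixed small $R_0$. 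The proof analyzes how a perturbation $\delta\vtheta$ propagates through the forward ODE \eeref{eq:forward ode}, giving $\norm{\vh_t(\vtheta)-\vh_t(\vtheta^0)}\leq C\norm{\delta\vtheta}$ by Gronwall's inequality (using Lipschitz continuity of $\phi$ and $\norm{\mW^0}\sim\sqrt n$ from Theorem~\ref{thm:Bai-Yin law}); a second Gronwall application on the backward ODE \eeref{eq:backward ode} gives the analogous bound for $\vlambda_t$, now requiring the Lipschitz continuity of $\phi'$ (Assumption~\ref{assume:main assumptions}.2). Taking inner products yields $\abs{K_\vtheta(\vx_i,\vx_j)-K_{\vtheta^0}(\vx_i,\vx_j)}\leq C_1\norm{\vtheta-\vtheta^0}/\sqrt n$, hence $\norm{\mH^k-\mH^0}_{op}\leq NC_1\norm{\vtheta^k-\vtheta^0}/\sqrt n$.

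\textbf{Step 3 (induction).} Set $R:=8C_2\norm{\mX}\sqrt{L(\vtheta^0)}/\lambda_0$ for a constant $C_2$ determined below, and choose $n_\delta\geq n_\delta^{(1)}$ so that $NC_1 R/\sqrt n\leq \lambda_0/4$. Inductively assume for all $0\leq k'\leq k$: (a) $\norm{\vtheta^{k'}-\vtheta^0}\leq R$, and (b) $L(\vtheta_{k'})\leq (1-\eta\lambda_0/16)^{k'}L(\vtheta^0)$. Under (a) and Step 2, $\lambda_{\min}(\mH^{k'})\geq \lambda_0/2$. Writing $\vu^{k+1}-\vy=(\mI-\eta\mH^k)(\vu^k-\vy)+\vepsilon^k$ with second-order residual $\vepsilon^k$, and using the Jacobian bound $\norm{\vu^{k+1}-\vu^k-(-\eta\mH^k(\vu^k-\vy))}\leq \eta^2 C_3\norm{\mX}^2\norm{\vu^k-\vy}^2/\lambda_0$ from Step 2, together with $\eta\leq 1/\norm{\mX}^2\leq 1/\lambda_{\max}(\mH^k)$, standard algebra yields $\norm{\vu^{k+1}-\vy}^2\leq (1-\eta\lambda_0/16)\norm{\vu^k-\vy}^2$, giving (b) at $k+1$. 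For (a), bound
\begin{align*}
\Norm{\vtheta^{k+1}-\vtheta^0}\leq \eta\sum_{k'=0}^{k}\Norm{\nabla_\vtheta L(\vtheta^{k'})}\leq \eta\sum_{k'=0}^{k}\Norm{J^{k'}}_{op}\Norm{\vu^{k'}-\vy},
\end{align*}
where by Step 2, $\norm{J^{k'}}_{op}\leq C_2\norm{\mX}$; the geometric decay from (b) makes the sum at most $16C_2\norm{\mX}\sqrt{L(\vtheta^0)}/\lambda_0\leq R$ for the chosen $C_2$.

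\textbf{Main obstacle.} The essential difficulty is Step 2: unlike discrete networks, the NTK of a Neural ODE is built from $\vh_t$ and $\vlambda_t$ that are implicit functions of $\vtheta$ defined by ODEs of size $\sqrt{n}$. Controlling $\norm{\vh_t(\vtheta)-\vh_t(\vtheta^0)}$ and $\norm{\vlambda_t(\vtheta)-\vlambda_t(\vtheta^0)}$ uniformly in $t\in[0,T]$ and with the correct $1/\sqrt{n}$ scaling requires careful Gronwall bookkeeping that preserves width-independence, and, crucially, leans on the Lipschitz continuity of \emph{both} $\phi$ and $\phi'$ — the same regularity that powers Lemma~\ref{lemma:depth uniform convergence 2} and Proposition~\ref{prop:discretize-then-optimize}. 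Once this local Lipschitz estimate is in hand, Steps 1, 3, 4 are tightly parallel to the finite-depth NTK proofs.
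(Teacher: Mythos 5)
Your proposal follows essentially the same strategy as the paper: at initialization the empirical NTK concentrates around the SPD limiting kernel via Theorem~\ref{thm:NTK for neural ODE} (the paper additionally invokes Lemma~\ref{app lemma:bound on initial residual} to bound the initial residual); a Gronwall-based estimate on the forward/backward ODE solutions gives the $\norm{\nabla_\vtheta f(\vtheta)-\nabla_\vtheta f(\vtheta^0)}\lesssim\norm{\vtheta-\vtheta^0}/\sqrt n$ Lipschitz bound (the paper proves precisely this in the helper lemma on $\norm{h_t-\bar h_t}$ and $\norm{\lambda_t-\bar\lambda_t}$); and Weyl's inequality plus the mean-value decomposition of $\vu^{k+1}-\vy$ close the induction in Lemma~\ref{app lemma:global convergence}. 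The argument, the key constants, the $1/\sqrt n$ scaling, and the overparameterization requirement all match the paper's proof.
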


\begin{proof}
    Given a distinct $\{x_i\}_{i=1}^{N}$, we consider the limiting NTK matrix $H^{\infty}\in \reals^{N\times N}$ defined as $H_{ij}^{\infty}=K_{\infty}(x_i, x_j)$. As $\phi$ is non-polynomial, we have $\lambda_0:=\lambda_{\min}\{H^{\infty}\} > 0$. Let $\theta_0$ denote the parameters at initialization and $H(0)\in \reals^{N\times N}$ be the corresponding NTK computed by $\theta_0$ at initialization. 
    
    By Theorem~\ref{thm:NTK for neural ODE}, we have $H(0)$ converges a.s. to $H^{\infty}$, as the width $n\rightarrow \infty$. Then for any $\delta_0>0$, there exists a natural number $n_{0}$ such that with probability at least $(1-\delta_0)$ over random initialization $\lambda_{\min}\{H(0)\}\geq \lambda_0/2$ for all $n\geq n_0$. By Lemma~\ref{app lemma:bound on initial residual}, there exists another natural number $n_1$ such that with probability at least $(1-\delta_0)$, the initial residual $\norm{u_0-y}\leq \sigma_*\sqrt{2N\log N/\delta}$ for all $n\geq n_1$. Therefore, for any $\delta>0$, we choose $\delta_0 = \delta/2$, and it follows from Lemma~\ref{app lemma:global convergence} that, with probability at least $(1-\delta)$ over random initialization, we have
    \begin{align*}
        &\norm{v^{k}-v^0}, \quad\norm{W^{k}-W^0},\quad\norm{U^{k}-U^0}\leq C\norm{X}\norm{u_0-y}/\lambda_0,
    \end{align*}
    and 
    \begin{align*}
        &\norm{u^{k}-y}\leq \left(1-\frac{\eta \lambda_0}{16}\right)^{k}\norm{u^{0}-y},
    \end{align*}
    for all 
    \begin{align*}
        n\geq \max\left\{n_0, n_1, C_0 N^3\log(N/\delta)/\lambda_0^3\right\}.
    \end{align*}
\end{proof}

\section{Global Convergence of Neural ODEs}\label{app sec:convergence}
In this section, we provide the convergence analysis of Neural ODEs defined \eqref{eq:neural ode} under gradient descent. 

As we use square loss, the loss function is given by
\begin{align}
    L(\theta):=\sum_{i=1}^{N}\frac{1}{2}(f_{\theta}(x_i) - y_i)^2.
\end{align}
By using the vectorization form \eqref{eq:gradients vect form} and the chain rule, the gradients are given by
\begin{align}
    \frac{\partial L(\theta)}{\partial v} &= \sum_{i=1}^{N} \frac{\sigma_v}{\sqrt{n}}\phi(h_T(x_i)) (f_{\theta}(x_i) - y_i),\\
    \frac{\partial L(\theta)}{\partial W} &= \sum_{i=1}^{N} \left[\int_0^T\frac{\sigma_w}{\sqrt{n}}\phi(h_t(x_i)) \otimes \lambda_t(x_i)dt\right](f_{\theta}(x_i) - y_i),\\
    \frac{\partial L(\theta)}{\partial U} &= \sum_{i=1}^{N} \frac{\sigma_u}{\sqrt{d}}\left[x_i\otimes \lambda_0(x_i)\right] (f_{\theta}(x_i) - y_i).
\end{align}
Consider the gradient descent
\begin{align}
    \theta^{k+1}=\theta^{k} -\eta\frac{\partial L(\theta^{k})}{\partial \theta}.
\end{align}

Assume the inductive hypothesis: For all $i\leq k$, there exist some constants $\alpha_v, \alpha_w,\alpha_u>0$ such that 
\begin{enumerate}
    \item $\norm{v_i},\norm{W_i},\norm{U_i}\leq  C\sqrt{n}$, 
    \item $\norm{u^{i}-y}\leq (1-\eta \alpha_0^2)^{i}\norm{u^{0}-y}$, 
\end{enumerate}
where $C>0$ is a constant and $\alpha_0:=\sigma_{\min}\left(\frac{\sigma_v}{\sqrt{n}}\Phi^{0}\right)$.

Without loss generality, we assume $\sigma_v = 1$, $\sigma_w = \sigma$, $\sigma_u/\sqrt{d}=1$ and $L_1=L_2=1$.

Observe that 
\begin{align*}
    \norm{\frac{\partial f_{\theta}}{\partial v}}
    =\norm{\frac{1}{\sqrt{n}}\phi(h_T)}
    \leq \frac{1}{\sqrt{n}}\norm{U} \norm{x} e^{\sigma T\norm{W}/\sqrt{n}}.
\end{align*}

Note that
\begin{align*}
    \norm{\frac{\partial f_{\theta}}{\partial W}}
    \leq &\frac{\sigma}{\sqrt{n}}\int_0^T \norm{\phi(h_t)} \norm{\lambda_t} dt\\
    \leq &\frac{\sigma}{\sqrt{n}}\int_0^T \norm{U}\norm{x}e^{\sigma t\norm{W}/\sqrt{n}}
    \cdot \frac{\norm{v}}{\sqrt{n}} e^{\sigma (T-t)\norm{W}/\sqrt{n}}dt\\
    =& (\sigma T)\frac{\norm{U}}{\sqrt{n}}\frac{\norm{v}}{\sqrt{n}} \norm{x} e^{\sigma T \norm{W}/\sqrt{n}}.
\end{align*}

Observe that
\begin{align*}
    \norm{\frac{\partial f_{\theta}}{\partial U}}
    \leq \norm{x}\norm{\lambda_0}
    \leq \norm{x} \cdot \frac{\norm{v}}{\sqrt{n}} \exp\left\{\sigma T \norm{W}/\sqrt{n}\right\}
\end{align*}
By using the inductive hypothesis, we obtain
\begin{align}
    \norm{\frac{\partial f_{\theta}}{\partial v}}&\leq Ce^{C\sigma T } \norm{x},\\
    \norm{\frac{\partial f_{\theta}}{\partial W}}&
    \leq (\sigma T) C e^{ C\sigma T } \norm{x},\\
    \norm{\frac{\partial f_{\theta}}{\partial U}}&\leq C e^{C \sigma T} \norm{x}.
\end{align}
Then we obtain
\begin{align*}
    \norm{v^{k+1} - v^0}
    \leq& \eta\sum_{i=0}^{k}\norm{\frac{\partial L(\theta^{i})}{\partial v}}\\
    \leq& \eta \sum_{i=0}^{k} C e^{C\sigma T} \norm{X} \norm{u^{i}-y}\\
    \leq& \eta Ce^{C \sigma T} \norm{X} \sum_{i=0}^{k} (1-\eta \alpha_0^2)^{i}\norm{u^0-y}\\
    \leq & C e^{C\sigma T} \norm{X} \norm{u^0-y}/\alpha_0^2
\end{align*}
\textbf{Note} that the RHS is a constant after initialization. If we assume $\norm{x}=1$ and $\abs{y}=1$, then we need to ensure 
\begin{align}
    C e^{C\sigma T} \norm{X} \norm{u^0-y}/\alpha_0^2\leq C\sqrt{n}.
\end{align}
And as a result, we have
\begin{align*}
    \norm{v^{k+1}}\leq \norm{v^{k+1} - v^0} + \norm{v^0}\leq C\sqrt{n}.
\end{align*}
Similarly, we have 
\begin{align*}
    \norm{W^{k+1}-W^0}
    \leq& \eta \sum_{i=0}^{k} \norm{\frac{\partial L(\theta^{i})}{\partial W}}\\
    \leq & \eta \sum_{i=0}^{k} (\sigma T) C e^{C \sigma T} \norm{X} \norm{u^{i}-y}\\
    \leq & \eta  (\sigma T) C e^{C \sigma T} \norm{X} \sum_{i=0}^{k}(1-\eta \alpha_0^2)\norm{u^{0}-y}\\
    \leq & (\sigma T) C e^{C \sigma T} \norm{X} \norm{u^{0}-y}/\alpha_0^2.
\end{align*}
Then we need to ensure
\begin{align}
    (\sigma T) Ce^{C \sigma T} \norm{X} \norm{u^0 - y}/\alpha_0^2 \leq C \sqrt{n}.
\end{align}
Then we obtain
\begin{align*}
    \norm{W^{k+1}}\leq \norm{W^{k+1}-W^0} + \norm{W^0}\leq C \sqrt{n}.
\end{align*}
Observe that
\begin{align*}
    \norm{U^{k+1} - U^0}\leq& \eta\sum_{i=0}^{k}\norm{\frac{\partial L(\theta^{i})}{\partial U}}\\
    \leq& \eta \sum_{i=0}^{k} C e^{C\sigma T} \norm{X} \norm{u^{i}-y}\\
    \leq& \eta Ce^{C \sigma T} \norm{X} \sum_{i=0}^{k} (1-\eta \alpha_0^2)^{i}\norm{u^0-y}\\
    \leq & C e^{C\sigma T} \norm{X} \norm{u^0-y}/\alpha_0^2.
\end{align*}
Hence, we obtain
\begin{align*}
    \norm{U^{k+1}}\leq \norm{U^{k+1}-U^0}+\norm{U^0}\leq C\sqrt{n}.
\end{align*}

Next, observe that
\begin{align*}
    u^{k+1} - y
    =&u^{k+1} - u^k + (u^k-y)\\
    =& \left(\frac{\partial \tilde{u}}{\partial \theta}\right)^{\top}(\theta^{k+1}-\theta^{k})
    +(u^{k}-y)\\
    =& \left(\frac{\partial \tilde{u}}{\partial \theta}\right)^{\top}\left(-\eta \frac{\partial u^{k}}{\partial \theta}\right)(u^{k}-y) + (u^{k}-y)\\
    =& \left[I - \eta\left(\frac{\partial \tilde{u}}{\partial \theta}\right)^{\top}\left( \frac{\partial u^{k}}{\partial \theta}\right)\right](u^{k}-y)\\
    =&\left[I - \eta\left(\frac{\partial u^{k}}{\partial \theta}\right)^{\top}\left( \frac{\partial u^{k}}{\partial \theta}\right)\right](u^{k}-y)
    +\eta \left(\frac{\partial u^{k}}{\partial \theta}-\frac{\partial \tilde{u}}{\partial \theta}\right) ^{\top}\left(\frac{\partial u^{k}}{\partial \theta}\right)(u^{k}-y)
\end{align*}
where $\tilde{u} = u(\tilde{\theta})$ and $\tilde{\theta}$ is an interpolation in between $\theta^{k}$ and $\theta^{k+1}$.

Note that
\begin{align*}
    \norm{\frac{\partial f}{\partial v}  -\frac{\partial \bar{f}}{\partial v}}
    =&\norm{\frac{1}{\sqrt{n}} \phi(h_T) - \frac{1}{\sqrt{n}}\phi(\bar{h}_T)}\\
    \leq& \frac{1}{\sqrt{n}}\norm{h_T-\bar{h}_T}\\
    \leq & \frac{C}{\sqrt{n}} \norm{\theta-\bar{\theta}} e^{C\sigma T } \norm{x}
\end{align*}
where we use the Lemma and the inductive hypotheses.

Similarly, note that
\begin{align*}
    \norm{\frac{\partial f}{\partial W} - \frac{\partial \bar{f}}{\partial W}}
    \leq& \frac{\sigma}{\sqrt{n}}\norm{\int_0^T \phi(h_t)\otimes \lambda_t  - \phi(\bar{h}_t)\otimes \bar{\lambda}_t dt}\\
    \leq& \frac{\sigma}{\sqrt{n}}
    \int_0^T \Big(\norm{h_t-\bar{h}_t}\norm{\lambda_t} + \norm{\bar{h}_t}\norm{\lambda_t - \bar{\lambda}_t}\Big)dt\\
    \leq & C\frac{\sigma}{\sqrt{n}}\int_0^T \norm{\theta-\bar{\theta}} e^{C\sigma t } \norm{x}
    \cdot e^{C\sigma(T-t)} dt\\
    \leq & (\sigma T) \frac{C}{\sqrt{n}} \norm{\theta-\bar{\theta}} e^{C\sigma T} \norm{x}.
\end{align*}
and
\begin{align*}
    \norm{\frac{\partial f}{\partial U} - \frac{\partial \bar{f}}{\partial U}}
    \leq \norm{x} \norm{\lambda_0 - \bar{\lambda}_0}
    \leq \frac{C}{\sqrt{n}} \norm{\theta-\bar{\theta}} e^{C \sigma T } \norm{x}.
\end{align*}
Hence, we have
\begin{align*}
    \norm{\frac{\partial f}{\partial \theta} - \frac{\partial \bar{f}}{\partial \theta}}
    =\norm{\frac{\partial f}{\partial v} - \frac{\partial \bar{f}}{\partial v}}
    +\norm{\frac{\partial f}{\partial W} - \frac{\partial \bar{f}}{\partial W}}
    +\norm{\frac{\partial f}{\partial U} - \frac{\partial \bar{f}}{\partial U}}
    \leq  (\sigma T) \frac{C}{\sqrt{n}} \norm{\theta-\bar{\theta}} e^{C\sigma T} \norm{x}.
\end{align*}

Then
\begin{align*}
    \norm{\frac{\partial u^{k}}{\partial \theta}-\frac{\partial \tilde{u}}{\partial \theta}}
    \leq (\sigma T) \frac{C}{\sqrt{n}}\norm{\theta^{k}-\tilde{\theta}} e^{C\sigma T} \norm{X}
    \leq (\sigma T) \frac{C}{\sqrt{n}}\norm{\theta^{k}-\theta^{k+1}} e^{C\sigma T} \norm{X},
\end{align*}
where we use the fact $\tilde{\theta} = \alpha \theta^{k} + (1-\alpha)\theta^{k+1}$ for some $\alpha\in [0,1]$.

Observe that
\begin{align*}
    \norm{\theta^{k+1}-\theta^{k}}
    =\eta\norm{\frac{\partial L(\theta^{k})}{\partial \theta}}
    =\eta\norm{\left(\frac{\partial u^{k}}{\partial \theta}\right)^{\top}(u^{k}-y)}
    \leq \eta (\sigma T) Ce^{C\sigma T} \norm{X} \norm{u^{k}-y}.
\end{align*}
Hence, we obtain
\begin{align*}
    \norm{\frac{\partial u^{k}}{\partial \theta}-\frac{\partial \tilde{u}}{\partial \theta}}
    \leq \eta (\sigma T)^2 \frac{C}{\sqrt{n}} e^{C\sigma T} \norm{X}^2 \norm{u^{k}-y},
\end{align*}
and
\begin{align*}
    \norm{\frac{\partial u^{k}}{\partial \theta} - \frac{\partial u^{0}}{\partial \theta}}
    \leq &(\sigma T) \frac{C}{\sqrt{n}}\norm{\theta^{k}-\theta^{0}}e^{C\sigma T} \norm{X}\\
    \leq &(\sigma T) \frac{C}{\sqrt{n}}e^{C\sigma T} \norm{X}
    \sum_{i=0}^{k-1}\norm{\theta^{i+1}-\theta^{i}}\\
    \leq & \eta (\sigma T)^2 \frac{C}{\sqrt{n}} e^{C\sigma T} \norm{X}^2
    \sum_{i=0}^{k-1} \norm{u^{i}-y}\\
    \leq &\eta (\sigma T)^2 \frac{C}{\sqrt{n}} e^{C\sigma T} \norm{X}^2
    \sum_{i=0}^{k-1} (1-\eta \alpha_0^2)^{i}\norm{u^{0}-y}\\
    \leq & (\sigma T)^2 \frac{C}{\sqrt{n}} e^{C\sigma T} \norm{X}^2 \norm{u^{0}-y}/\alpha_0^2\\
    \leq &\alpha_0/2,
\end{align*}
where we use the assumption
\begin{align}
    \sqrt{n} \geq C(\sigma T)^2 e^{C\sigma T} \norm{X}^2 \norm{u^0-y}/\alpha_0^3.
\end{align}
It follows from Weyl's inequality that
\begin{align*}
    \sigma_{\min}\left(\frac{\partial u^{k}}{\partial \theta}\right)
    \geq \sigma_{\min}\left(\frac{\partial u^{0}}{\partial \theta}\right)-\norm{\frac{\partial u^{k}}{\partial \theta}  -\frac{\partial u^{0}}{\partial \theta}}
    \geq \alpha_0/2.
\end{align*}
and so
\begin{align*}
    \lambda_{\min}\left[\left(\frac{\partial u^{k}}{\partial \theta}\right)^T\left(\frac{\partial u^{k}}{\partial \theta}\right)\right]\geq \alpha_0^2/4 .
\end{align*}
Therefore, we obtain
\begin{align*}
    \norm{u^{k+1}-y}
    \leq &
    \left[1 - \eta \alpha_0^2/4\right]\norm{u^{k}-y}
    +\eta^2 (\sigma T)^3 \frac{C}{\sqrt{n}} e^{C\sigma T} \norm{X}^3 
    \norm{u^{k}-y}^2\\
    \leq & \left[1-\eta \alpha_0^2/4 + \eta^2 (\sigma T)^3  \frac{C}{\sqrt{n}} e^{C\sigma T} \norm{X}^3 \norm{u^{0}-y}\right]\norm{u^{k}-y}\\
    =&\left[1-\eta\left(\alpha_0^2/4 - \eta(\sigma T)^3  \frac{C}{\sqrt{n}} e^{C\sigma T} \norm{X}^3 \norm{u^{0}-y}\right)\right]\norm{u^{k}-y}\\
    \leq & \left[1-\eta \alpha_0^2/8\right]\norm{u^{k}-y},
\end{align*}
where we assume
\begin{align*}
    \sqrt{n}\geq 8C\eta(\sigma T)^3 e^{C\sigma T} \norm{X}^3 \norm{u^{0}-y}/\alpha_0^2.
\end{align*}
This finishes proving Lemma~\ref{app lemma:global convergence}.
\begin{lemma}\label{app lemma:global convergence}
    Assume $\phi$ and $\phi^{\prime}$ are $L_1$- and $L_2$-Lipschitz continuous and $\lambda_0:=\lambda_{\min}(K_{\theta_0}) > 0$. Suppose  we choose the width $n=\Omega(\norm{X}^4\norm{u^0-y}^2/\lambda_0^3)$ and the learning rate $\eta \leq \frac{1}{\norm{X}^2}$.
    Then the parameters $\theta^{k}$ stays in the neighborhood of $\theta^{0}$, \ie, 
    \begin{align}
        &\norm{v^{k}-v^0}, \quad\norm{W^{k}-W^0},\quad\norm{U^{k}-U^0}\leq C\norm{X}\norm{u_0-y}/\lambda_0,
    \end{align}
    and the residual $\norm{u_k-y}$ consistently decreases, \ie, 
    \begin{align}
        &\norm{u^{k}-y}\leq \left(1-\frac{\eta \lambda_0}{8}\right)^{k}\norm{u^{0}-y},
    \end{align}
    where $C>0$ is some constant only depends on $L_1$, $L_2$, $\sigma_v$, $\sigma_w$, $\sigma_u$, and $T$.
\end{lemma}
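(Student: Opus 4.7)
\textbf{Proof proposal for Lemma~\ref{app lemma:global convergence}.} The plan is a standard inductive argument over the iteration index $k$, carrying simultaneously two statements: (i) the parameter blocks $v^k, W^k, U^k$ remain inside a bounded $\bigo{\sqrt{n}}$-neighborhood of initialization (so that all the a.s.\ estimates from Lemma~\ref{app lemma:Euler rate in forward ode} and Lemma~\ref{app lemma:Euler rate in backward ode} continue to apply), and (ii) the residual $\|u^k - y\|$ contracts by a factor $(1 - \eta\lambda_0/8)$ per step. The base case $k=0$ follows from $\|v^0\|,\|W^0\|,\|U^0\|\leq C\sqrt{n}$ a.s.\ via Theorem~\ref{thm:Bai-Yin law}, together with the positivity of $\lambda_{\min}(K_{\theta_0})\geq \lambda_0$ (so that $\alpha_0^2 := \sigma_{\min}^2(\partial u^0/\partial\theta) \geq \lambda_0$ up to constants).

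First, under the inductive hypothesis, I would derive upper bounds on the per-sample gradient norms $\|\partial f/\partial v\|$, $\|\partial f/\partial W\|$, $\|\partial f/\partial U\|$ using Gr\"onwall bounds on $h_t$ and $\lambda_t$ from Lemma~\ref{app lemma:Euler rate in forward ode}--\ref{app lemma:Euler rate in backward ode}. Telescoping the gradient updates gives
\begin{align*}
\|\theta^{k+1} - \theta^0\| \leq \eta\sum_{i=0}^{k}\Norm{\frac{\partial L(\theta^i)}{\partial\theta}} \leq \eta C e^{C\sigma T}\|X\|\sum_{i=0}^{k}\|u^i - y\|,
\end{align*}
and the geometric contraction of $\|u^i-y\|$ collapses the sum to $C\|X\|\|u^0-y\|/\lambda_0$, yielding the claimed neighborhood bound. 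Imposing $\sqrt{n}\gtrsim C\|X\|\|u^0-y\|/\lambda_0$ ensures $\|v^{k+1}\|,\|W^{k+1}\|,\|U^{k+1}\|\leq C\sqrt{n}$, closing the parameter part of the induction.

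Next, for the residual contraction I would write $u^{k+1}-y$ using the mean value theorem:
\begin{align*}
u^{k+1}-y = \left[I - \eta\left(\tfrac{\partial u^k}{\partial\theta}\right)^{\!\top}\!\tfrac{\partial u^k}{\partial\theta}\right](u^k-y) + \eta\left(\tfrac{\partial u^k}{\partial\theta} - \tfrac{\partial \tilde u}{\partial\theta}\right)^{\!\top}\!\tfrac{\partial u^k}{\partial\theta}(u^k-y),
\end{align*}
where $\tilde\theta$ is an interpolation between $\theta^k$ and $\theta^{k+1}$. The critical ingredient is a local Lipschitz estimate $\|\partial f/\partial\theta - \partial\bar f/\partial\theta\|\leq C\sigma T e^{C\sigma T}\|\theta-\bar\theta\|/\sqrt{n}$, obtained by differentiating the bounds in Lemma~\ref{app lemma:Euler rate in backward ode} and using Lipschitzness of $\phi$ and $\phi'$. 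Combining this with the telescoped parameter bound yields $\|\partial u^k/\partial\theta - \partial u^0/\partial\theta\| \leq \alpha_0/2$ under the width assumption $n = \Omega(\|X\|^4\|u^0-y\|^2/\lambda_0^3)$; Weyl's inequality then gives $\sigma_{\min}(\partial u^k/\partial\theta)\geq \alpha_0/2$ throughout training.

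Finally, plugging this NTK lower bound and the gradient-Lipschitz error back into the residual recursion gives
\begin{align*}
\|u^{k+1}-y\| \leq \bigl[1 - \eta\alpha_0^2/4 + \eta^2\cdot C(\sigma T)^3 e^{C\sigma T}\|X\|^3\|u^0-y\|/\sqrt n\bigr]\|u^k-y\|,
\end{align*}
and choosing $n$ large enough (the same scaling $n\gtrsim \|X\|^4\|u^0-y\|^2/\lambda_0^3$ absorbs the quadratic error term) produces the $(1-\eta\lambda_0/8)$ contraction. I expect the main obstacle to be the circularity between (i) and (ii): the NTK lower bound needs the parameter bound, the parameter bound needs the geometric contraction, and the contraction needs the NTK lower bound. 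Threading this through a single induction and carefully matching the width requirement $n_\delta$ so every constant lines up is the delicate part; all other estimates reduce to applications of Bai--Yin, Gr\"onwall, and the Euler-type bounds already established in Appendix~\ref{app sec:NTK for Neural ODE}.
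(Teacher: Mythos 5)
Your proposal follows essentially the same approach as the paper's proof: a single induction carrying both the $\bigo{\sqrt{n}}$ parameter bound and the geometric residual contraction, using Gr\"onwall-type bounds on $h_t,\lambda_t$ for the gradient norms, the mean-value decomposition of $u^{k+1}-y$ with the Lipschitz estimate $\|\partial f/\partial\theta-\partial\bar f/\partial\theta\|\lesssim\|\theta-\bar\theta\|/\sqrt n$, and Weyl's inequality to propagate the NTK lower bound. The circularity you flag is resolved exactly as you anticipate, and the width requirement you give matches the one the paper imposes.
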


\begin{lemma}\label{app lemma:bounds of h_t and lambda_t}
    Given $\theta$, we have 
    \begin{align}
        \norm{h_t}&\leq \norm{U}\norm{x} \exp\left\{\frac{\sigma t}{\sqrt{n}}\norm{W}\right\},\\
        \norm{\lambda_t}&\leq \frac{\norm{v}}{\sqrt{n}}\exp\left\{ \frac{\sigma(T-t)}{\sqrt{n}}\norm{W}\right\},
    \end{align}
    for all $t\in[0,T]$
\end{lemma}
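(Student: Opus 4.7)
The plan is to derive both bounds as direct applications of Gronwall's inequality to the forward and backward ODEs, using the standing simplification $L_1=L_2=1$ and $\sigma_u/\sqrt{d}=\sigma_v=1$, $\sigma_w=\sigma$ adopted right before the statement. The activation is $L_1$-Lipschitz with $\phi(0)=0$, so $\|\phi(z)\|\leq \|z\|$, which is the only nonlinearity fact needed.

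For the forward bound, I would write the integral equation
\begin{align*}
\vh_t = \vh_0 + \int_0^t \frac{\sigma}{\sqrt{n}} \mW\,\phi(\vh_s)\,ds,
\end{align*}
take norms, and use $\|\phi(\vh_s)\|\leq \|\vh_s\|$ together with the operator-norm bound $\|\mW\phi(\vh_s)\|\leq \|\mW\|\cdot\|\vh_s\|$ to obtain
\begin{align*}
\|\vh_t\|\leq \|\vh_0\| + \frac{\sigma}{\sqrt{n}}\|\mW\|\int_0^t \|\vh_s\|\,ds.
\end{align*}
Gronwall's inequality (in the form stated in Appendix~\ref{app:Useful Mathematical Results}, with constant $\alpha(t)\equiv\|\vh_0\|$ and $\beta(t)\equiv \sigma\|\mW\|/\sqrt{n}$) then yields $\|\vh_t\|\leq \|\vh_0\|\exp\{\sigma t\|\mW\|/\sqrt{n}\}$. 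The initial condition satisfies $\|\vh_0\|\leq \|\mU\|\|\vx\|$, giving the claimed bound.

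For the adjoint bound, the plan is symmetric but run backwards in time. Using the backward ODE \eeref{eq:backward ode} and the terminal condition $\vlambda_T=\diag(\phi^{\prime}(\vh_T))\vv/\sqrt{n}$, together with $|\phi^{\prime}|\leq L_1=1$, we obtain $\|\vlambda_T\|\leq \|\vv\|/\sqrt{n}$ and
\begin{align*}
\|\vlambda_t\|\leq \|\vlambda_T\| + \frac{\sigma}{\sqrt{n}}\|\mW\|\int_t^T \|\vlambda_s\|\,ds.
\end{align*}
Applying the time-reversed version of Gronwall's inequality (equivalently, substituting $\tau=T-t$ and invoking the standard forward form) produces $\|\vlambda_t\|\leq \frac{\|\vv\|}{\sqrt{n}}\exp\{\sigma(T-t)\|\mW\|/\sqrt{n}\}$.

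This lemma is essentially bookkeeping, so there is no genuine obstacle; the only subtlety is being careful that the Lipschitz assumption on $\phi$ together with $\phi(0)=0$ is what converts $\phi(\vh_s)$ into a linear-growth bound, and that the $\diag(\phi^{\prime}(\vh_s))$ factor in the backward ODE is absorbed by the uniform bound $|\phi^{\prime}|\leq L_1$ (taken equal to $1$ here). Because well-posedness of both ODEs is already guaranteed by Proposition~\ref{prop:well posed of forward ode}, the manipulations above are justified for all $t\in[0,T]$ and the bounds are deterministic given $\vtheta$, so no random-matrix input is required beyond the existence of the norms $\|\mU\|,\|\mW\|,\|\vv\|$.
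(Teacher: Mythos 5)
Your proof is correct and matches the paper's argument step for step: integral form of the ODE, Lipschitz bound on $\phi$ (with $\phi(0)=0$ for the forward state and $|\phi'|\leq L_1$ for the adjoint), submultiplicativity of the operator norm, and Gronwall's inequality applied forward in $t$ for $\vh_t$ and backward (via $\tau=T-t$) for $\vlambda_t$, followed by bounding $\|\vh_0\|$ and $\|\vlambda_T\|$. The only cosmetic difference is that you state the time-reversal substitution explicitly whereas the paper invokes Gronwall directly, and you cite the standing normalization $L_1=1$ that the paper also uses implicitly.
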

\begin{proof}
Observe that 
\begin{align*}
    h_t = h_0 + \int_0^t \frac{\sigma}{\sqrt{n}}W \phi(h_s) ds
\end{align*}
and so 
\begin{align*}
    \norm{h_t}\leq \norm{h_0} + \frac{\sigma}{\sqrt{n}}\norm{W}\int_0^t \norm{h_s} ds
\end{align*}
Then it follows from Gronwall's inequality that
\begin{align}
    \norm{h_t}\leq \norm{U}\norm{x} \exp\left\{\frac{\sigma t}{\sqrt{n}}\norm{W}\right\},\quad\forall t\in[0,T].
\end{align}

Similarly, we have
    \begin{align*}
        \lambda_t = \lambda_T+\int_t^{T}-\frac{\sigma}{\sqrt{n}}\diag[\phi^{\prime}(h_t)]W^{\top}\lambda_s ds
    \end{align*}
    implies
    \begin{align*}
        \norm{\lambda_t}
        \leq \norm{\lambda_T} + \frac{\sigma}{\sqrt{n}}L_1 \norm{W}\int_t^{T}\norm{\lambda_s}ds.
    \end{align*}
    By the Gronwall’s inequality, we obtain
    \begin{align*}
        \norm{\lambda_t}\leq& \norm{\lambda_T}\exp\left\{\int_t^{T}\sigma\norm{W}/\sqrt{n} ds\right\}\\
        \leq & \norm{\lambda_T}\exp\left\{\sigma \norm{W}/\sqrt{n} (T-t)\right\}.
    \end{align*}
    By $\lambda_T = \frac{1}{\sqrt{n}}\diag[\phi^{\prime}(h_T)]v$, we obtain the final result.
    
\end{proof}

\begin{lemma}
    Given $\theta,\bar{\theta}$, we have
    \begin{align}
        \norm{h_t-\bar{h}_t}
        \leq &\norm{\theta-\bar{\theta}} \frac{\norm{U}}{\norm{W}} e^{\sigma t (\norm{W}+\norm{\bar{W}})/\sqrt{n}} \norm{x}\\
        \norm{\lambda_t - \bar{\lambda}_t}
        \leq &\norm{\theta-\bar{\theta}}
        \frac{\norm{v}}{\norm{W}} e^{\sigma(T-t)(\norm{W}+\norm{\bar{W}})/\sqrt{n}}/\sqrt{n}
    \end{align}
    for all $t\in[0,T]$
\end{lemma}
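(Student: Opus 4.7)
The plan is to derive both estimates in parallel using the same template: write each state through its integral (Volterra) form, subtract, isolate perturbations via the triangle inequality, invoke the Lipschitz continuity of $\phi$ (respectively $\phi^{\prime}$) together with the a priori bounds on $\|h_t\|$ and $\|\lambda_t\|$ proved in Lemma~\ref{app lemma:bounds of h_t and lambda_t}, and close the loop with Gronwall's inequality. Throughout I use the simplifying normalization adopted in the paper ($\sigma_v=1$, $\sigma_u/\sqrt{d}=1$, $\sigma_w=\sigma$, $L_1=L_2=1$, and $\phi(0)=0$ so that $\|\phi(z)\|\leq\|z\|$), and I rely on the elementary bounds $\|U-\bar U\|,\|W-\bar W\|,\|v-\bar v\|\leq\|\theta-\bar\theta\|$.

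\textbf{Forward bound.} Starting from $h_t=Ux+\int_0^t\tfrac{\sigma}{\sqrt n}W\phi(h_s)\,ds$ and the analogous equation for $\bar h_t$, I first split
\[
W\phi(h_s)-\bar W\phi(\bar h_s)=(W-\bar W)\phi(h_s)+\bar W\bigl(\phi(h_s)-\phi(\bar h_s)\bigr),
\]
then apply the triangle inequality together with the Lipschitz bound on $\phi$ and the estimate $\|\phi(h_s)\|\leq\|h_s\|\leq\|U\|\|x\|e^{\sigma s\|W\|/\sqrt n}$ supplied by Lemma~\ref{app lemma:bounds of h_t and lambda_t}. This produces an integral inequality of the form $u(t)\leq\alpha(t)+\beta\int_0^t u(s)\,ds$ with $\alpha(t)=\|U-\bar U\|\|x\|+\|W-\bar W\|\tfrac{\|U\|}{\|W\|}\|x\|e^{\sigma t\|W\|/\sqrt n}$ (non-decreasing in $t$) and $\beta=\sigma\|\bar W\|/\sqrt n$. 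Gronwall's inequality then yields $u(t)\leq\alpha(t)e^{\beta t}$, and consolidating the two pieces of $\alpha$ using $\|U-\bar U\|\leq\|\theta-\bar\theta\|$ and $\|W-\bar W\|\leq\|\theta-\bar\theta\|$, together with $1\leq C\|U\|/\|W\|$ (which holds a.s.\ by Bai--Yin), gives the claimed bound with combined exponent $e^{\sigma t(\|W\|+\|\bar W\|)/\sqrt n}$.

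\textbf{Backward bound.} Here I work in reverse time using $\lambda_t=\lambda_T-\int_t^T\tfrac{\sigma}{\sqrt n}\diag[\phi^{\prime}(h_s)]W^{\top}\lambda_s\,ds$ and its bar-version. The initial (terminal) discrepancy $\|\lambda_T-\bar\lambda_T\|$ is handled by splitting $\diag[\phi^{\prime}(h_T)]v-\diag[\phi^{\prime}(\bar h_T)]\bar v$ into a state-perturbation term (controlled via the Lipschitz bound on $\phi^{\prime}$ and the forward estimate just established) and a parameter-perturbation term bounded by $\|v-\bar v\|/\sqrt n$. Inside the integral I split the integrand into three pieces,
\[
\bigl(\diag[\phi^{\prime}(h_s)]-\diag[\phi^{\prime}(\bar h_s)]\bigr)W^{\top}\lambda_s+\diag[\phi^{\prime}(\bar h_s)](W-\bar W)^{\top}\lambda_s+\diag[\phi^{\prime}(\bar h_s)]\bar W^{\top}(\lambda_s-\bar\lambda_s),
\]
controlling the first two by the Lipschitz bounds, the forward perturbation estimate, and the a priori bound $\|\lambda_s\|\leq\|v\|e^{\sigma(T-s)\|W\|/\sqrt n}/\sqrt n$. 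The third piece produces the Gronwall multiplier $\sigma\|\bar W\|/\sqrt n$ in reverse time. Applying Gronwall's inequality on $[t,T]$ and again using $\|U\|/\|W\|=\Theta(1)$ to consolidate constants yields the stated exponential envelope $e^{\sigma(T-t)(\|W\|+\|\bar W\|)/\sqrt n}$ with prefactor $\|v\|/(\sqrt n\|W\|)$.

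\textbf{Anticipated obstacle.} The routine steps (splitting, Lipschitz, Gronwall) are straightforward; the delicate bookkeeping lies in ensuring that the \emph{combined} exponential factor $e^{\sigma t(\|W\|+\|\bar W\|)/\sqrt n}$ (and its reverse-time analogue) emerges cleanly, rather than a less symmetric expression. This requires choosing the Gronwall coefficient to involve $\|\bar W\|$ while the forcing term carries $e^{\sigma s\|W\|/\sqrt n}$ from the a priori bound on $\|h_s\|$ (or $\|\lambda_s\|$), so that the product absorbs into a single exponential of $\|W\|+\|\bar W\|$. The backward estimate is the harder of the two because the terminal perturbation $\|\lambda_T-\bar\lambda_T\|$ already depends on the forward bound for $\|h_T-\bar h_T\|$, so one must chain the two Gronwall applications without losing the $\|v\|/\|W\|$ prefactor, which is achieved by factoring $e^{\sigma T\|W\|/\sqrt n}$ out of the forward contribution before invoking the reverse-time Gronwall.
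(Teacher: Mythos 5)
Your high-level route is the same as the paper's: write each state in its Volterra integral form, split the integrand by the triangle inequality into a parameter-perturbation piece and a state-perturbation piece, invoke the Lipschitz bounds and the a priori magnitude estimates from Lemma~\ref{app lemma:bounds of h_t and lambda_t}, and close with Gronwall. The forward bound in particular matches the paper's derivation step for step.

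The interesting divergence is in the backward bound. Your three-way decomposition of $\diag[\phi'(h_s)]W^\top\lambda_s - \diag[\phi'(\bar h_s)]\bar W^\top\bar\lambda_s$ and your two-way split of the terminal discrepancy $\lambda_T-\bar\lambda_T$ are actually \emph{more} careful than what the paper displays. The paper's displayed inequality jumps straight from the Volterra form to a two-term integrand involving only $\|W-\bar W\|\,\|\lambda_s\|$ and $\|\bar W\|\,\|\lambda_s-\bar\lambda_s\|$, so the $\bigl(\diag[\phi'(h_s)]-\diag[\phi'(\bar h_s)]\bigr)$ contribution is silently dropped; similarly the terminal term is taken as if it depended on $v-\bar v$ alone. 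You have in effect spotted a gap in the paper's own argument.

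That said, there is a concrete issue with your sketch that you should resolve before calling it done. The two terms you add in relative to the paper are both controlled through $\|h_s-\bar h_s\|$, and the forward estimate gives $\|h_s-\bar h_s\|\lesssim\|\theta-\bar\theta\|\,\tfrac{\|U\|}{\|W\|}\,e^{\sigma s(\|W\|+\|\bar W\|)/\sqrt n}\,\|x\|$. When you multiply by the a priori $\|\lambda_s\|\lesssim\tfrac{\|v\|}{\sqrt n}e^{\sigma(T-s)\|W\|/\sqrt n}$ and integrate, you generate a contribution whose prefactor carries $\|U\|\,\|x\|\,\|v\|/(\|W\|\sqrt n)$ rather than the $\|v\|/(\|W\|\sqrt n)$ of the claimed inequality. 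Your remark that one should ``factor $e^{\sigma T\|W\|/\sqrt n}$ out of the forward contribution'' sorts out the exponential envelope but does nothing to cancel the extra $\|U\|\,\|x\|$ factor, which in the regime $\|U\|\sim\sqrt n$, $\|x\|=1$ is certainly not negligible. So your proposal, carried through honestly, proves a bound of the same exponential structure but with a modified prefactor; it does not yield the inequality with the exact constants stated in the lemma. You should either (i) acknowledge that the extra cross-term is there and restate the backward bound with the combined prefactor, treating it as $O(1)$ in the later application (which is how the lemma is actually used downstream), or (ii) note explicitly that you are adopting the paper's coarser two-piece decomposition, which reproduces the stated constants but sweeps the $\phi'$-perturbation and terminal $h_T$-perturbation under the rug.
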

\begin{proof}
    Observer that
    \begin{align*}
        h_t - \bar{h}_t = (h_0 - \bar{h}_0) + \frac{\sigma}{\sqrt{n}}\int_0^t \left[W \phi(h_s)  - \bar{W} \phi(\bar{h}_s)\right]ds
    \end{align*}
    Then we have
    \begin{align*}
        \norm{h_t - \bar{h}_t}
        \leq &\norm{h_0 - \bar{h}_0}
        +\frac{\sigma}{\sqrt{n}} \int_0^t\left[ \norm{W-\bar{W}}\norm{h_s}
        +\norm{\bar{W}}\norm{h_s-\bar{h}_s}\right]ds\\
        \leq &\norm{h_0 - \bar{h}_0}
        +\frac{\sigma}{\sqrt{n}}\norm{W-\bar{W}}\int_0^t \norm{Ux}\exp\left\{\sigma s\norm{W}/\sqrt{n} \right\}ds
        +\frac{\sigma}{\sqrt{n}}\norm{\bar{W}}\int_0^t \norm{h_s-\bar{h}_s} ds
    \end{align*}
    Using the bound of $\norm{h_s}$, we have 
    \begin{align*}
        &\frac{\sigma}{\sqrt{n}}\norm{Ux}\norm{W-\bar{W}}\int_0^t \exp\left\{\sigma s\norm{W}/\sqrt{n} \right\}ds\\
        =&\frac{\sigma}{\sqrt{n}}\norm{Ux}\norm{W-\bar{W}}\cdot \left( \frac{\sigma}{\sqrt{n} } \norm{W}\right)^{-1} \left(e^{\sigma t\norm{W} /\sqrt{n}}-1\right)\\
        =&\frac{\norm{U}}{\norm{W}}\norm{W-\bar{W}} \left(e^{\sigma t \norm{W} /\sqrt{n}}-1\right) \norm{x}.
    \end{align*}
    Then by Grownwall's inequality, we obtain
    \begin{align*}
        \norm{h_t-\bar{h}_t}
        \leq& \left(\norm{h_0 - \bar{h}_0} + \norm{W-\bar{W}}\frac{\norm{U}}{\norm{W}} \left(e^{\sigma \norm{W} t/\sqrt{n}}-1\right) \norm{x}\right)
        e^{\sigma\norm{\bar{W}}t/\sqrt{n}}\\
        \leq &\left(\norm{U-\bar{U}}+ \norm{W-\bar{W}}\right) \frac{\norm{U}}{\norm{W}} e^{\sigma t (\norm{W}+\norm{\bar{W}) }/\sqrt{n}} \norm{x}.
    \end{align*}
    Then we obtain the result.

    Observe that
    \begin{align*}
        \lambda_t - \bar{\lambda}_t
        =(\lambda_T - \bar{\lambda}_T)
        +\frac{\sigma}{\sqrt{n}}\int_t^T \diag[\phi^{\prime}(h_s)] W^{\top}\lambda_s -\diag[\phi^{\prime}(\bar{h}_s)] \bar{W}^{\top}\bar{\lambda}_s ds.
    \end{align*}
    Then we obtain
    \begin{align*}
        \norm{\lambda_t - \bar{\lambda}_t}
        \leq \norm{\lambda_T-\bar{\lambda}_T}
        +\frac{\sigma}{\sqrt{n}}
        \int_t^T \Big(\norm{W-\bar{W}}\norm{\lambda_s} + \norm{\bar{W}}\norm{\lambda_s-\bar{\lambda}_s} \Big)ds
    \end{align*}
    By using the bound of $\norm{\lambda_s}$, we obtain
    \begin{align*}
        &\frac{\sigma}{\sqrt{n}}\norm{W-\bar{W}}\frac{\norm{v}}{\sqrt{n}}\int_t^T \exp\left\{ \frac{\sigma(T-s)}{\sqrt{n}}\norm{W}\right\} ds\\
        \leq &\frac{\sigma}{\sqrt{n}} \norm{W-\bar{W}} 
        \frac{\norm{v}}{\sqrt{n}}
        \left(\frac{\sigma}{\sqrt{n}}\norm{W}\right)^{-1} 
        \left( e^{\sigma (T-t)\norm{W}/\sqrt{n}}-1\right)\\
        =&\frac{1}{\sqrt{n}}\norm{W-\bar{W}} \frac{\norm{v}}{\norm{W}}
        \left( e^{\sigma (T-t)\norm{W}/\sqrt{n}}-1\right).
    \end{align*}
    Then by Grownwall's inequality, we have
    \begin{align*}
        \norm{\lambda_t - \bar{\lambda}_t}
        \leq& \left(\norm{\lambda_T-\bar{\lambda}_T}
        +\frac{1}{\sqrt{n}}\norm{W-\bar{W}} \frac{\norm{v}}{\norm{W}}
        \left( e^{\sigma (T-t)\norm{W}/\sqrt{n}}-1\right)\right)
        e^{\sigma (T-t)\norm{\bar{W}}/\sqrt{n}}\\
        \leq &\frac{1}{\sqrt{n}}\left(\norm{v-\bar{v}} + \norm{W-\bar{W}}\right)
        \frac{\norm{v}}{\norm{W}} e^{\sigma(T-t)(\norm{W}+\norm{\bar{W}})/\sqrt{n}}
    \end{align*}
\end{proof}

\begin{lemma}\label{app lemma:bound on initial residual}
    Given $\delta>0$, there exists a natural number $n_{\delta}$ such that for all $n\geq n_{\delta}$, with probability at least $1-\delta$ over random initialization, we have
    \begin{align}
        \norm{u}\leq \sigma \sqrt{2N\log(N/\delta)},
    \end{align}
    where $\sigma^2:=\Sigma^{*}(x,x)$ for $x\in \mathbb{S}^{d-1}$.
\end{lemma}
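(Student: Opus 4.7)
The strategy is to exploit the two-layer conditional Gaussian structure of $u$ together with the NNGP correspondence (Theorem~\ref{thm:neural ode as gaussian process}). Since the output parameter $\vv$ is independent of $(\mU,\mW)$, conditioning on the $\sigma$-algebra $\gB$ generated by $\mU,\mW$ makes $u_i = (\sigma_v/\sqrt{n})\vv^\top \phi(\vh_T(\vx_i))$ a centered jointly Gaussian vector in $\vv$ with conditional covariance matrix $\mG\in\reals^{N\times N}$ whose entries are
\begin{align*}
    \mG_{ij} = \frac{\sigma_v^2}{n}\,\phi(\vh_T(\vx_i))^\top \phi(\vh_T(\vx_j)).
\end{align*}
By the proof of Theorem~\ref{thm:neural ode as gaussian process} (specifically the uniform-in-$n$ convergence $\frac{1}{n}\inn{\phi(\vh_T(\vx))}{\phi(\vh_T(\bar{\vx}))}\to S^*(\vx,\bar{\vx})$ obtained via Lemma~\ref{app lemma:depth uniform convergence}), each entry $\mG_{ij}$ converges almost surely to $\Sigma^*(\vx_i,\vx_j)$ as $n\to\infty$; in particular $\mG_{ii}\to \sigma^2$.

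The plan is to first choose $n_\delta^{(1)}$ large enough that the event $\gE := \{\max_i \mG_{ii} \leq 2\sigma^2\}$ occurs with probability at least $1-\delta/2$ for every $n\geq n_\delta^{(1)}$; this is possible by the almost-sure convergence just stated together with a union bound over the $N$ training points. On the event $\gE$, each coordinate $u_i\mid \gB$ is $\mathcal{N}(0,\mG_{ii})$ with $\mG_{ii}\leq 2\sigma^2$, so the standard Gaussian tail bound gives
\begin{align*}
    \Pr\bigl(|u_i|\geq t\,\bigm|\,\gB\bigr)\leq 2\exp\!\left(-\tfrac{t^2}{4\sigma^2}\right)\quad\text{on }\gE.
\end{align*}
Applying a union bound over $i\in\{1,\dots,N\}$ and choosing $t = \sigma\sqrt{2\log(N/\delta)}$ up to absolute constants produces $\Pr(\max_i|u_i|\geq t\mid \gB)\leq \delta/2$ on $\gE$. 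Combining with $\Pr(\gE^c)\leq \delta/2$ and using $\Norm{\vu}\leq \sqrt{N}\max_i|u_i|$ yields $\Norm{\vu}\leq \sigma\sqrt{2N\log(N/\delta)}$ with probability at least $1-\delta$, as desired, for all $n\geq n_\delta := n_\delta^{(1)}$.

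The main obstacle is ensuring the right quantification of the almost-sure convergence $\mG_{ii}\to\sigma^2$ into a genuinely finite-$n$ high-probability statement; handled naively this would not give a uniform $n_\delta$ working simultaneously for all $i$. The clean way around it is to apply Theorem~\ref{thm:neural ode as gaussian process} to the finite collection $\{\vx_i\}_{i=1}^N$ (so that convergence of each $\mG_{ii}$ follows from a single finite-dimensional weak-convergence statement, or alternatively from the a.s. convergence of the NNGP covariance entries), and then invoke Egorov's theorem or a direct moment bound on $\mG_{ii}-\sigma^2$ to turn almost-sure into probabilistic for all $n$ beyond a threshold. Absolute constants (the factor $2$ in the variance bound, the constant in front of $\log$) are absorbed into the already generous $\log(N/\delta)$ in the target bound, so no delicate tracking is required beyond keeping the two error budgets $\delta/2$ separate.
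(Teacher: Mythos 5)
Your proof is correct, and it reaches the target bound by a genuinely different route than the paper's. The paper works at the unconditional level: Theorem~\ref{thm:neural ode as gaussian process} gives convergence in distribution of each scalar $u_i$ to $z\sim\mathcal{N}(0,\sigma^2)$, so for the chosen threshold and for $n$ large, $P(|u_i|\geq\varepsilon)$ is within $\delta/2$ of the Gaussian tail; the rest is the Gaussian tail estimate followed by the same union bound over the $N$ coordinates that you apply. You instead condition on the $\sigma$-algebra generated by $(\mU,\mW)$ so that $\vu\mid\gB$ is \emph{exactly} Gaussian with covariance $\mG$, and you replace the weak-convergence step by the almost-sure convergence $\mG_{ii}\to\sigma^2$, restricting to the high-probability event $\{\max_i\mG_{ii}\leq 2\sigma^2\}$. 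This buys you a finite-$n$ exact conditional distribution rather than an asymptotic approximation of a tail probability, at the cost of a slightly worse variance constant ($2\sigma^2$ in place of $\sigma^2$), which, together with the one-sided versus two-sided tail, makes your constants a little loose against the stated bound $\sigma\sqrt{2N\log(N/\delta)}$ --- but the paper's own constant bookkeeping is equally loose at the corresponding step, so this is not a gap. Both routes ultimately draw on the same NNGP machinery: the paper invokes Theorem~\ref{thm:neural ode as gaussian process} as a black box, while you use the underlying a.s. convergence of $\frac{1}{n}\norm{\phi(\vh_T(\vx_i))}^2$ established via Lemma~\ref{app lemma:depth uniform convergence} and the Moore--Osgood argument. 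Your closing concern about turning a.s. convergence into a single $n_\delta$ uniformly over $i$ is a non-issue: $N$ is fixed, so a.s. convergence gives convergence in probability for each $i$ separately, and taking the maximum of the $N$ resulting thresholds already yields a common $n_\delta^{(1)}$; no Egorov-type argument is required, although it would also work.
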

\begin{proof}
    Fix $x$, denote $u:=f_{\theta}(x) = v^T\phi(h_T(x))/\sqrt{n}$. By Theorem~\ref{thm:neural ode as gaussian process}, we have $u$ converges in distribution to a centered Gaussian random variable with variance $\sigma^2:=\Sigma^{*}(x,x)$. Hence, given $\delta>0$, we have there exists $n_{\delta}$ such that $n\geq n_{\delta}$ implies
    \begin{align*}
        \abs{P(u\geq \varepsilon) - P(z\geq \varepsilon)}
        \leq \delta/2,
    \end{align*}
    where $z\sim \Gaus(0, \sigma^2)$. Then we have
    \begin{align*}
        P(u\geq \varepsilon)
        \leq \delta/2 + P(z\geq \varepsilon)
        \leq \delta/2 + e^{-\varepsilon^2/2\sigma^2}
        \leq \delta,
    \end{align*}
    where the last inequality is due to $\varepsilon:=\sigma \sqrt{2\log (2/\delta)}$. Similarly, we obtain two two-tailed bounds, \ie, 
    \begin{align*}
        P(\abs{u}\geq \varepsilon)\leq \delta.
    \end{align*}
    Now, denote $u=f_{\theta}(X)\in\reals^{N}$ as a vector. We have 
    \begin{align*}
        P(\norm{u}\geq \varepsilon_0)
        =&P(\norm{u}^2\geq \varepsilon_0^2)
        =P(\sum_{i=1}^{N} \abs{u_i}^2 \geq \varepsilon_0^2)\\
        \leq&  \sum_{i=1}^{N} P(\abs{u_i}^2\geq \varepsilon_0^2/N)
        =\sum_{i=1}^{N} P(\abs{u_i}\geq \varepsilon_0/\sqrt{N})\\
        \leq &\delta,
    \end{align*}
    where we use the fact $P(\sum_{i=1}^{N}x_i\geq \varepsilon)\leq \sum_{i=1}^{N} P(x_i\geq \varepsilon/N)$ and $\varepsilon_0 := \sigma \sqrt{2N\log(N/\delta)}$.
\end{proof}

\section{Additional Experiments}\label{app sec:experiments}
In this appendix, we provide supplementary experiments that complement the results in the main paper. These experiments explore the impact of different activation functions, scaling for long time horizons, and the behavior of Neural ODEs when approximated by Gaussian processes. Additionally, we examine the behavior of the NTK when using polynomial activations.

\subsection{Scaling for Long-Time Horizons}
As discussed in Proposition~\ref{prop:well posed of forward ode} and Proposition~\ref{prop:discretize-then-optimize}, smooth activations ensure that the forward and backward dynamics of Neural ODEs have globally unique solutions. However, extending the time range or working with long-time horizons in the dynamics can introduce difficulties for numerical solvers, leading to higher numerical errors. To understand how Neural ODEs behave over extended time horizons, we investigated their behavior at initialization as the time horizon increases, focusing on how output magnitudes and variance are affected. The objective was to understand how extending the time horizon impacts the model’s outputs and the subsequent training process.

At initialization, as the time horizon $T$ increases, the output magnitudes grow larger, resulting in increased variance, as shown in Figure~\ref{fig:Scaling for Long-Time Horizons}(a). This increased variance negatively impacts the training of Neural ODEs with gradient descent, leading to damping in the early stages of training as illustrated in Figure~\ref{fig:Scaling for Long-Time Horizons}(b). We observed that increasing the width of neural networks reduced the output variance, as shown in Figure~\ref{fig:Scaling for Long-Time Horizons}(a). Additionally, for long-time horizons, where $T$ is large, we suggest scaling the dynamics by setting the weight variance $\sigma_w\sim 1/T$. This approach effectively mitigates the growth in output magnitudes, as shown in Figure~\ref{fig:Scaling for Long-Time Horizons}(c), and reduces early-stage damping during training, as illustrated in Figure~\ref{fig:Scaling for Long-Time Horizons}(d).

\begin{figure}[ht]
    \centering
    \begin{minipage}{0.24\linewidth}
        \centering
        \includegraphics[width=\linewidth]{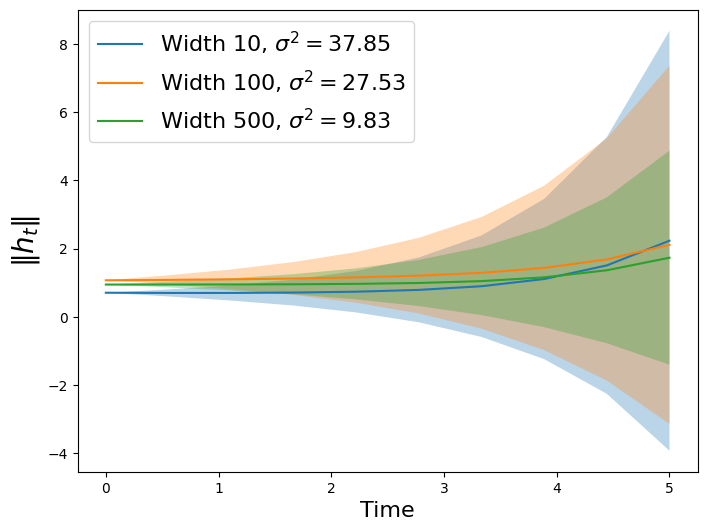}
        (a)
    \end{minipage}%
    \hfill
    \begin{minipage}{0.24\linewidth}
        \centering
        \includegraphics[width=\linewidth]{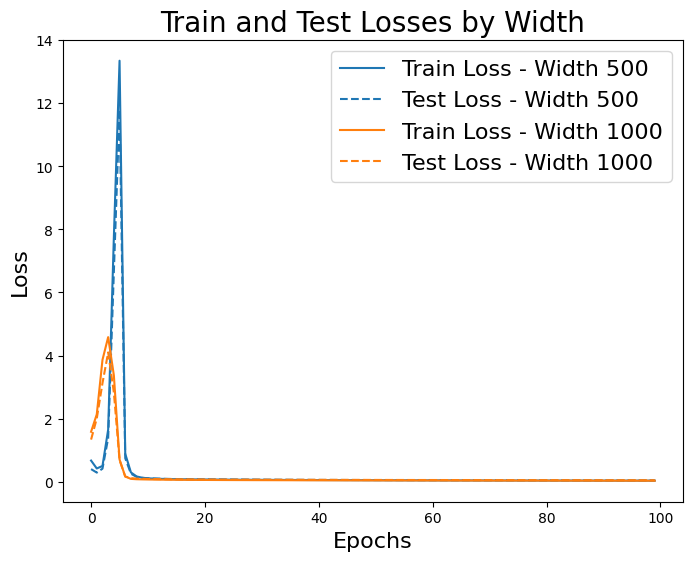}
        (b)
    \end{minipage}
    \hfill
    \begin{minipage}{0.24\linewidth}
        \centering
        \includegraphics[width=\linewidth]{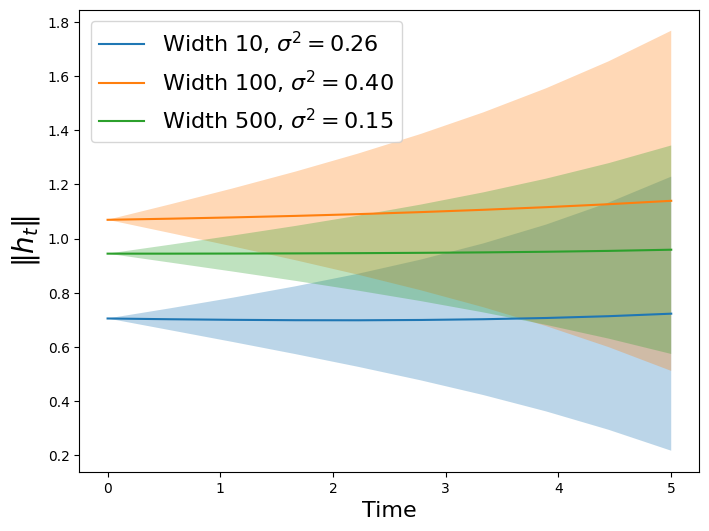}
        (c)
    \end{minipage}%
    \begin{minipage}{0.24\linewidth}
        \centering
        \includegraphics[width=\linewidth]{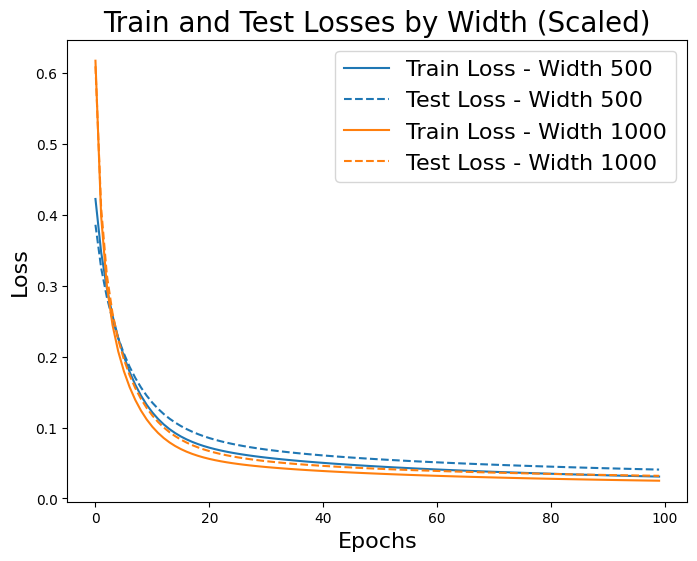}
        (d)
    \end{minipage}%
    \caption{Effects of increasing time horizons on Neural ODE outputs and training. (a) Output variance increases as the time horizon $T$ becomes large at initialization. (b) This leads to damping during the early stages of training with gradient descent. (c) Scaling the dynamics by setting the weight variance $\sigma_w \sim 1/T$ reduces the output variance. (d) This scaling also mitigates early-stage damping, improving training stability.}
    \label{fig:Scaling for Long-Time Horizons}
\end{figure}

\subsection{Gaussian Process Approximation}
In Section 4, we established that Neural ODEs tend toward a Gaussian Process (GP) as their width increases, as demonstrated in Theorem~\ref{thm:neural ode as gaussian process}. The associated NNGP kernel of this Gaussian process is non-degenerate, as stated in Lemma~\ref{prop:SPD for NNGP}. To empirically verify these theoretical findings, we conducted a series of experiments.

First, we fixed an input \( \vx \) and initialized 10,000 random Neural ODEs. We then plotted the output histograms for various network widths and fitted the distributions with a Gaussian model, as shown in Figure~\ref{fig:Gaussian fit}. Additionally, we ran statistical tests to confirm whether the output distributions followed a Gaussian distribution. The Kolmogorov-Smirnov (KS) test statistics and p-values indicated that as long as the width exceeds 100, the outputs closely follow a Gaussian distribution.

\begin{figure}[ht]
\centering
\begin{minipage}{0.33\linewidth}
\centering
\includegraphics[width=\linewidth]{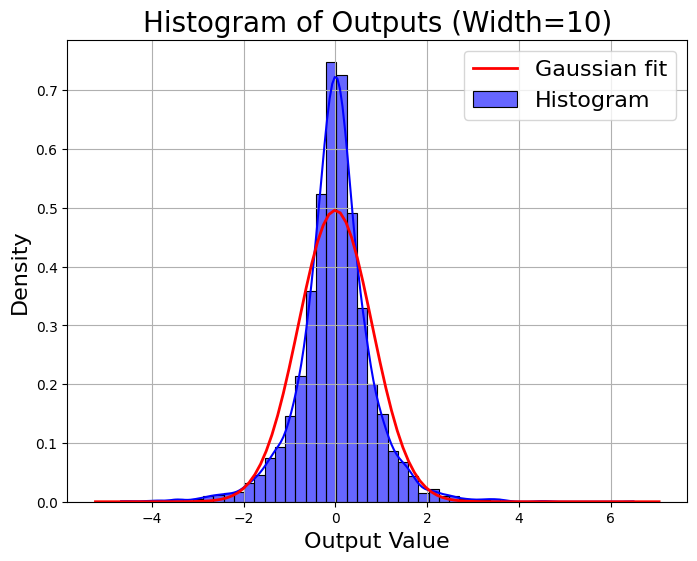}
(a) KS Statistic: 0.0730, P-value: 0.0000.
\end{minipage}%
\hfill
\begin{minipage}{0.33\linewidth}
\centering
\includegraphics[width=\linewidth]{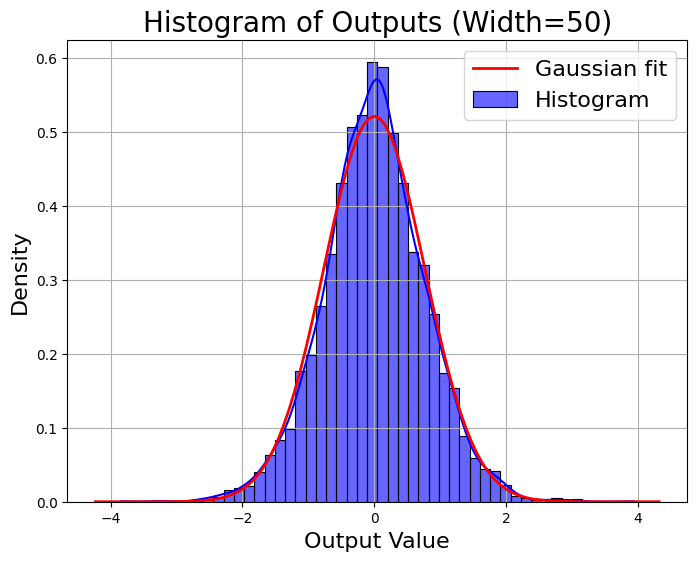}
(b) KS Statistic: 0.0200, P-value: 0.0366.
\end{minipage}
\hfill
\begin{minipage}{0.33\linewidth}
\centering
\includegraphics[width=\linewidth]{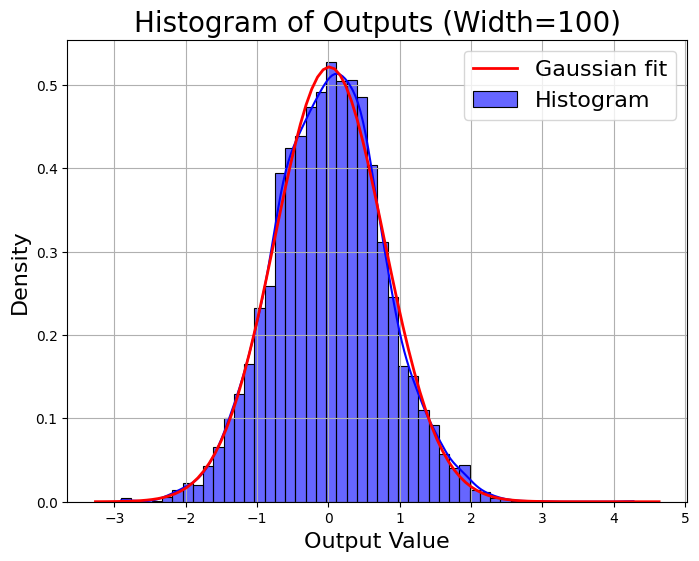}
(c) KS Statistic: 0.0130, P-value: 0.3609.
\end{minipage}%
\newline
\begin{minipage}{0.33\linewidth}
\centering
\includegraphics[width=\linewidth]{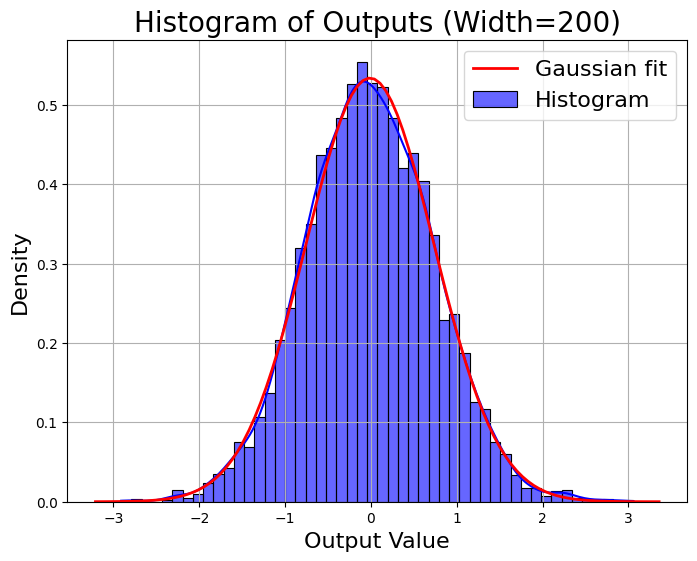}
(d) KS Statistic: 0.0085, P-value: 0.8600.
\end{minipage}
\hfill
\begin{minipage}{0.32\linewidth}
\centering
\includegraphics[width=\linewidth]{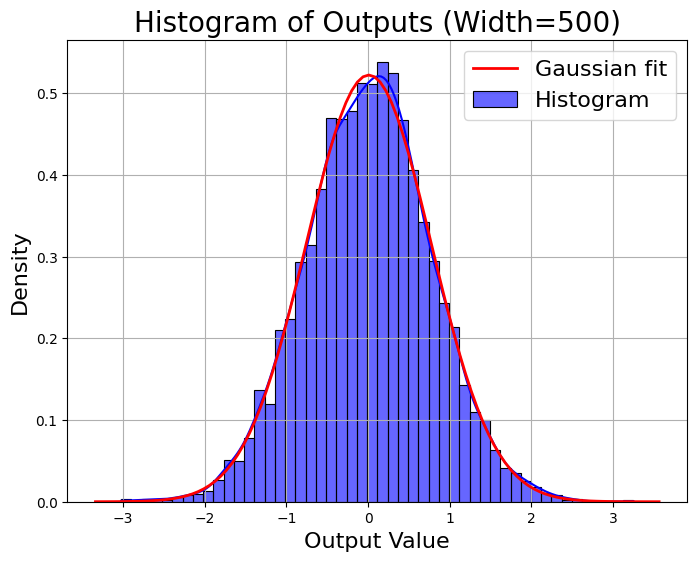}
(e) KS Statistic: 0.0079, P-value: 0.9084.
\end{minipage}
\hfill
\begin{minipage}{0.33\linewidth}
\centering
\includegraphics[width=\linewidth]{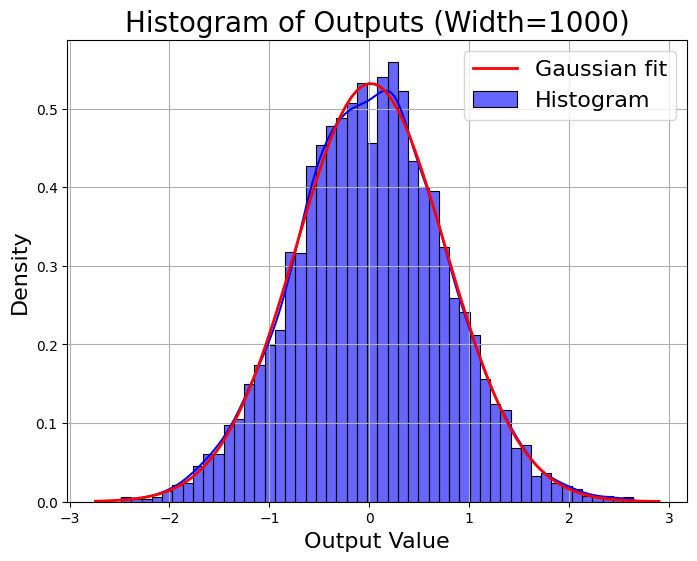}
(a) KS Statistic: 0.0069, P-value: 0.9688.
\end{minipage}
\caption{Gaussian fit of the sample distribution from 10,000 randomly initialized Neural ODEs across widths 10, 50, 100, 200, 500, and 1000. The corresponding KS statistics and p-values are displayed, showing improved Gaussian fit as width increases.}
\label{fig:Gaussian fit}
\end{figure}

Next, we analyzed the independence of the output neurons by plotting pairwise outputs across two coordinates. According to Theorem~\ref{thm:neural ode as gaussian process}, the output neurons should become independent as the width increases. Figure~\ref{fig:pairplot} confirms this: while the diagonal plots show Gaussian bell shapes, the off-diagonal plots resemble random ball shapes, indicating that the neurons are uncorrelated and, therefore, independent as the width increases.
\begin{figure}[ht]
\centering
\begin{minipage}{0.33\linewidth}
\centering
\includegraphics[width=\linewidth]{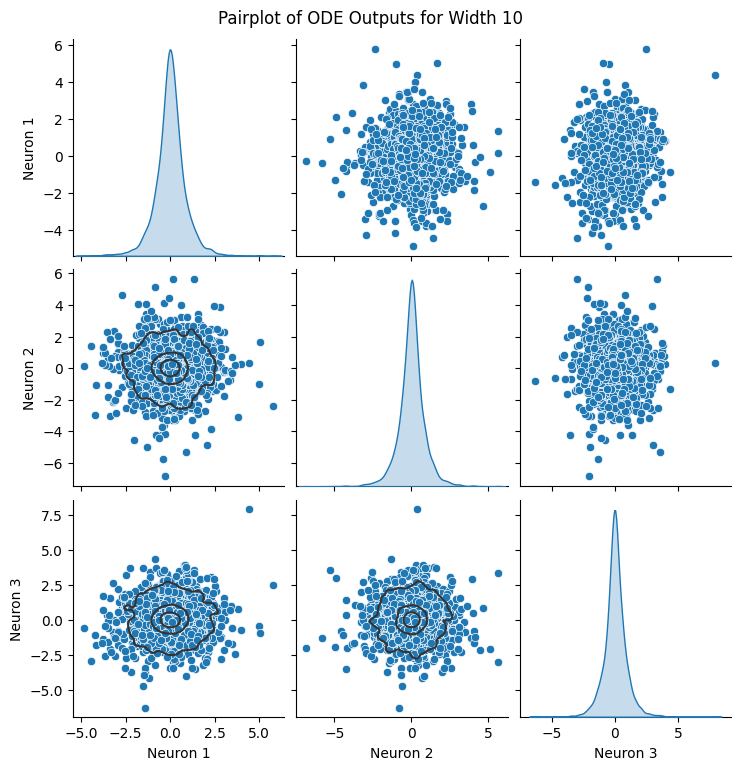}
\end{minipage}%
\hfill
\begin{minipage}{0.33\linewidth}
\centering
\includegraphics[width=\linewidth]{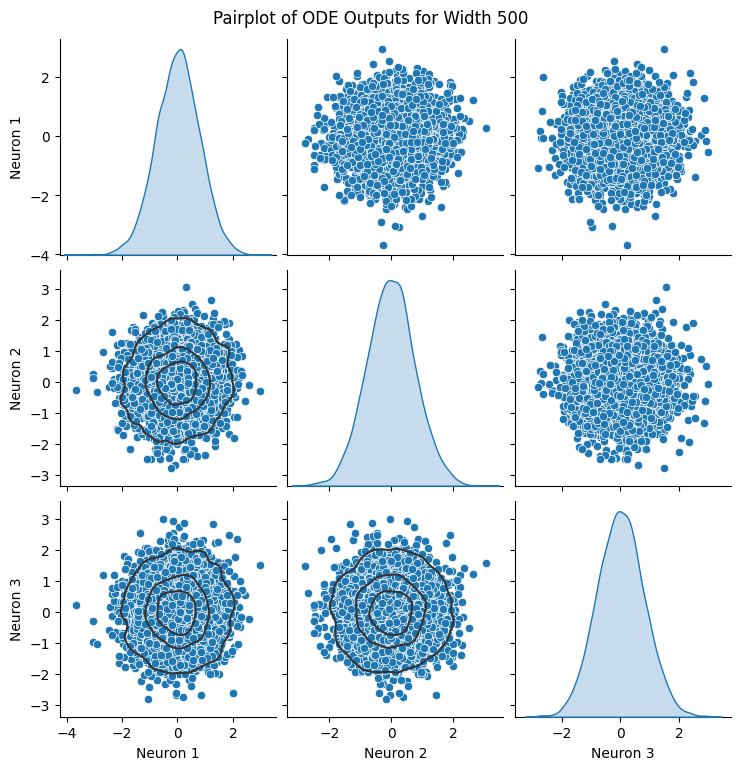}
\end{minipage}
\hfill
\begin{minipage}{0.33\linewidth}
\centering
\includegraphics[width=\linewidth]{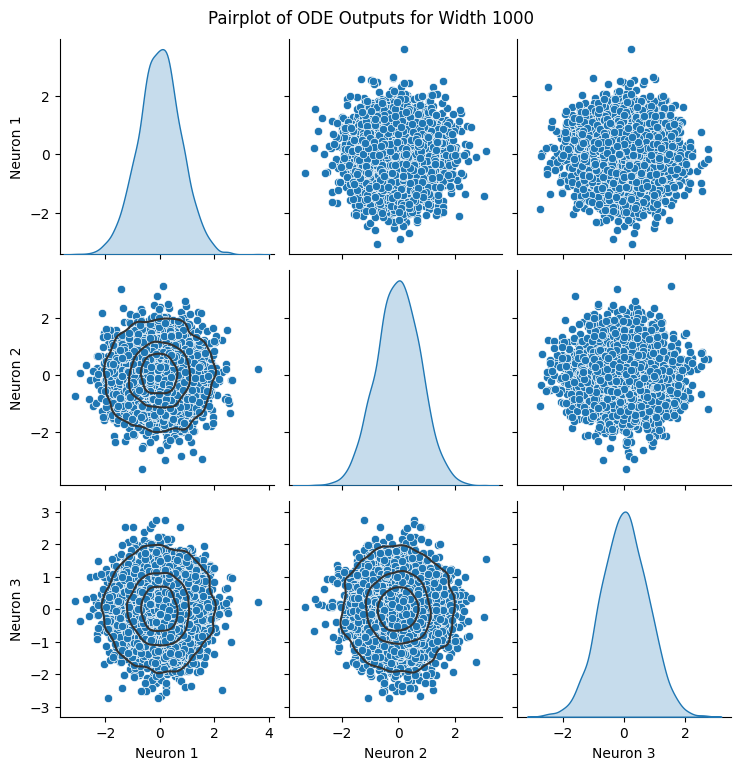}
\end{minipage}%
\caption{Pairplots of output neurons given the same input data, showing that output neurons become independent as network width increases.}
\label{fig:pairplot}
\end{figure}

Finally, we investigated whether Neural ODEs preserve the structure of input data at the output. We constructed a matrix \( \mX \) of 10 samples and calculated the input covariance matrix \( \mX \mX^\top \). Then, we initialized 10,000 Neural ODEs with random weights and evaluated them on the input \( \mX \), computing the output covariance matrix. As shown in Figure~\ref{fig:covariance matrix}, the output covariance matrix retained the correlation patterns of the input matrix but with reduced magnitudes, indicating that Neural ODEs act as structure-preserving smoothers, reducing the spread of the data while maintaining its underlying relationships.

\begin{figure}[ht]
\centering
\begin{minipage}{0.33\linewidth}
\centering
\includegraphics[width=\linewidth]{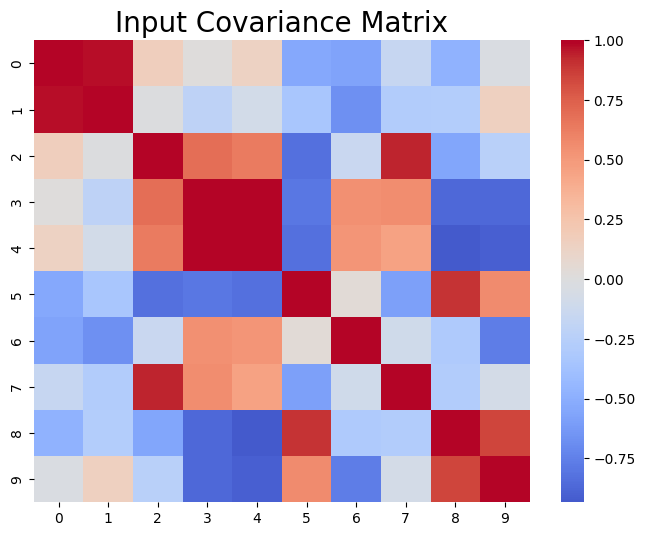}
(a) Input Covariance
\end{minipage}%
\hfill
\begin{minipage}{0.33\linewidth}
\centering
\includegraphics[width=\linewidth]{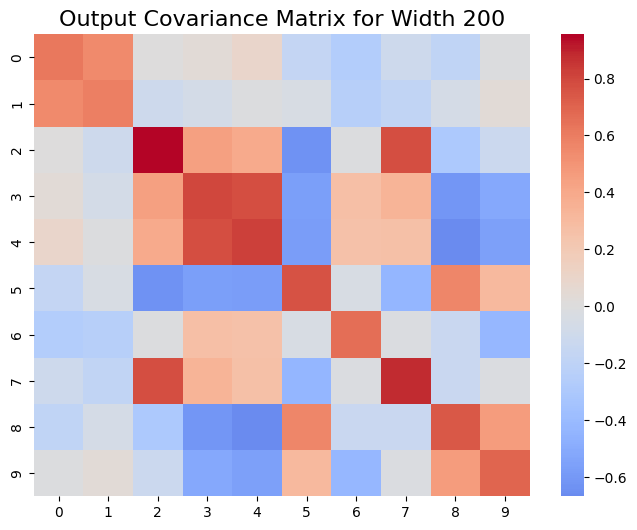}
(b) Output Covariance
\end{minipage}
\hfill
\begin{minipage}{0.33\linewidth}
\centering
\includegraphics[width=\linewidth]{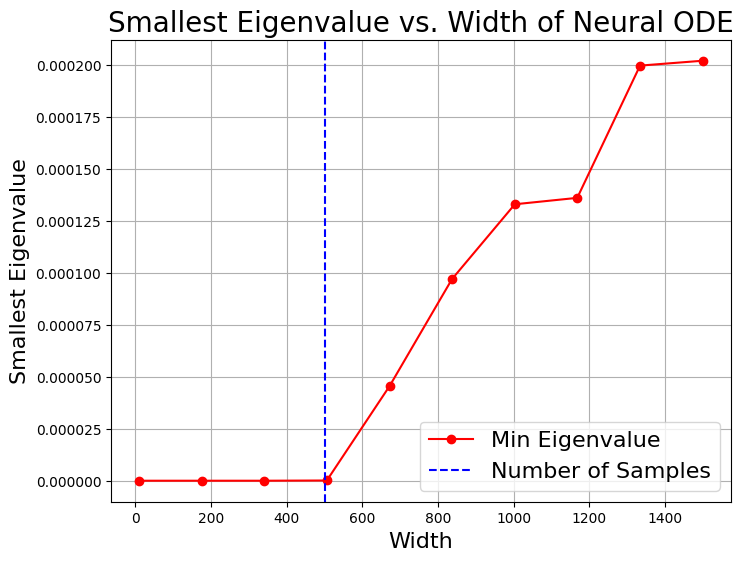}
(c) Least Eigenvalues
\end{minipage}%
\caption{Comparison of input and output covariance matrices. (a) Input covariance matrix. (b) Output covariance matrix from Neural ODEs, showing similar structure but reduced variance. (c) Least eigenvalues of the covariance matrices, confirming positive definiteness as width increases.}
\label{fig:covariance matrix}
\end{figure}
Additionally, we computed the smallest eigenvalues of the output covariance matrix. By sampling 500 examples from the MNIST training set and computing the covariance matrix for 10,000 independently initialized Neural ODEs, we found that the smallest eigenvalues became strictly positive when the width exceeded the number of samples. This result indicates that the NNGP and NTK kernels are strictly positive definite, aligning with our theoretical findings.

\subsection{Smooth vs. Non-Smooth Activations}
To compare the performance of smooth and non-smooth activation functions, we evaluated both Softplus (smooth) and ReLU (non-smooth) across different widths, measuring their training and test losses, parameter distances, and NTK least eigenvalues. The results highlight several key differences between these activations.

\textbf{Training and Test Loss Behavior:} As illustrated in Figure~\ref{fig:smooth_vs_nonsmooth}(a)-(b), the log-scale plots show that Softplus consistently converges faster than ReLU. ReLU experiences slower convergence, particularly during the early stages of training, while Softplus benefits from smoother and faster optimization, especially at larger widths.

\textbf{Parameter Distance:} We also measured the distance between the model parameters and their initial values throughout training. As shown in Figure~\ref{fig:smooth_vs_nonsmooth}(c), parameters remained relatively closer to their initialization for Softplus, while the parameter distance for ReLU was significantly higher. This suggests that Softplus’s smoother nature results in more stable parameter updates during training, contributing to its faster convergence.

\textbf{NTK Least Eigenvalues:} Regarding the NTK least eigenvalues, the results shown in Figure~\ref{fig:smooth_vs_nonsmooth}(d) indicate that both activations exhibited strictly positive eigenvalues, with ReLU’s slightly larger than Softplus’s. However, despite this, Softplus converged more rapidly based on both training and test loss results. We hypothesize that Softplus’s smoothness allows for more accurate gradient computation by the numerical solver, leading to more efficient optimization compared to ReLU, which may suffer from less precise gradient calculations due to its non-smooth nature.

In summary, Softplus not only converges faster in terms of loss, but it also leads to more stable parameter behavior during training, despite the slightly smaller NTK least eigenvalues compared to ReLU. These findings suggest that the smoother nature of Softplus provides significant advantages for Neural ODE training.
\begin{figure}[ht]
\centering
\begin{minipage}{0.24\linewidth}
\centering
\includegraphics[width=\linewidth]{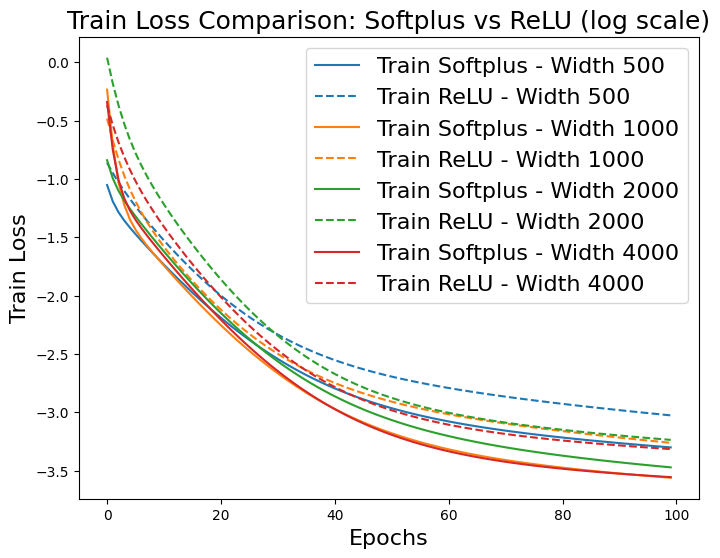}
(a) 
\end{minipage}%
\hfill
\begin{minipage}{0.24\linewidth}
\centering
\includegraphics[width=\linewidth]{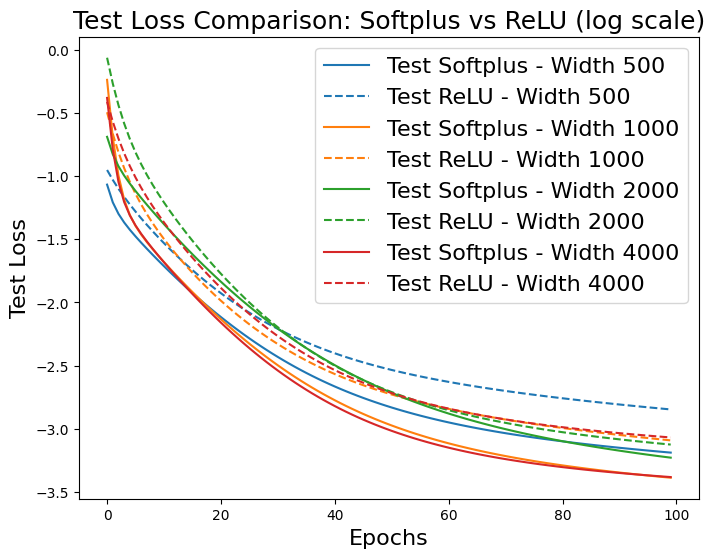}
(b) 
\end{minipage}
\hfill
\begin{minipage}{0.24\linewidth}
\centering
\includegraphics[width=\linewidth]{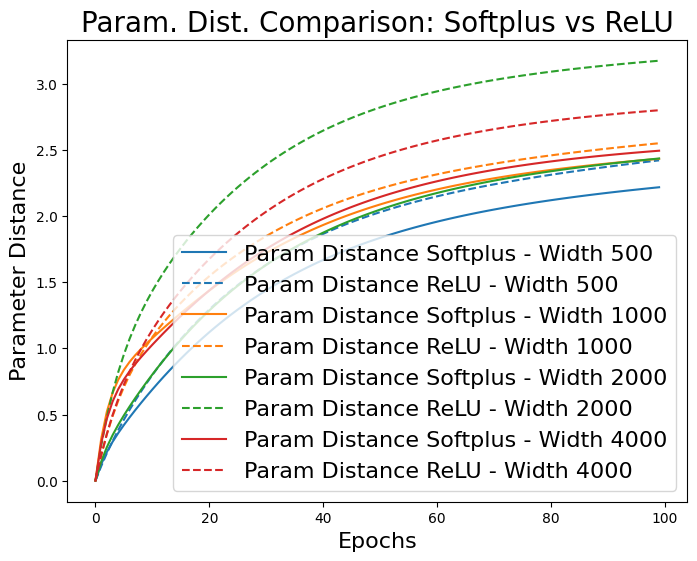}
(c) 
\end{minipage}%
\hfill
\begin{minipage}{0.24\linewidth}
\centering
\includegraphics[width=\linewidth]{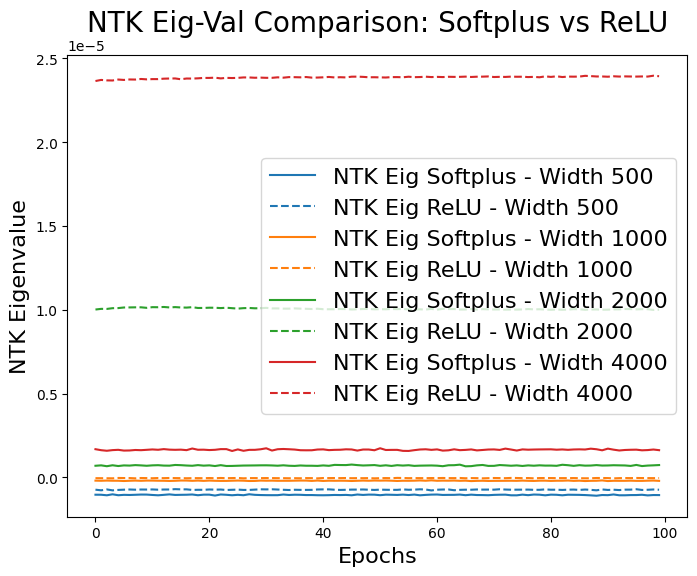}
(d) 
\end{minipage}%
\caption{Comparison of Smooth (Softplus) and Non-Smooth (ReLU) Activation Functions across Neural ODE Training. (a) Log-scale plot of training loss shows faster convergence for Softplus compared to ReLU, especially at larger widths. (b) Log-scale plot of test loss similarly demonstrates faster convergence for Softplus, with ReLU exhibiting slower progress in the early stages. (c) Parameter distance from initialization, showing that Softplus keeps parameters closer to their initial values, suggesting more stable updates. (d) NTK least eigenvalues for both activations, where ReLU’s eigenvalues are slightly larger, though Softplus achieves faster overall convergence.}
\label{fig:smooth_vs_nonsmooth}
\end{figure}

\subsection{Polynomial Activations for NTK and Global Convergence}
In this experiment, we tested quadratic activation functions to assess their impact on NTK behavior and convergence. While previous results suggested that nonlinearity but non-polynomiality is a sufficient condition for the strict positive definiteness (SPD) of the NTK, our experiments reveal that it is not a necessary condition.

We observed that the NTK of Neural ODEs using quadratic activation is also strictly positive definite, with the smallest eigenvalue slightly higher than that of Softplus, as shown in Figure~\ref{fig:polynomial_activations}(d). Despite this, the quadratic Neural ODE converged much more slowly than Softplus, as illustrated in the training and test losses (Figure~\ref{fig:polynomial_activations}(a)-(b)).

In terms of parameter behavior (Figure~\ref{fig:polynomial_activations}(c)), the parameter differences for the quadratic activation were slightly larger than those for Softplus, meaning the parameters drifted further from their initial values. However, these differences remained within the same order of magnitude, indicating that the model still satisfies the conditions for global convergence, even though it does not meet the sufficient condition of being non-polynomial.

In summary, while quadratic activation functions result in strictly positive definite NTKs similar to non-polynomial activations, they lead to slower convergence and slightly less stable parameter behavior compared to smoother activations like Softplus. This suggests that while non-polynomiality is not strictly necessary for SPD and convergence, smoother activations may offer practical benefits for faster and more stable training.

\begin{figure}
\centering
\begin{minipage}{0.24\linewidth}
\centering
\includegraphics[width=\linewidth]{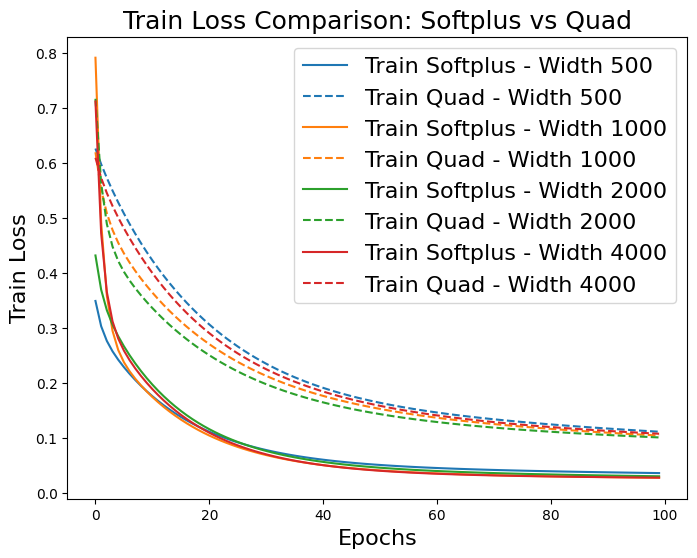}
(a) 
\end{minipage}%
\hfill
\begin{minipage}{0.24\linewidth}
\centering
\includegraphics[width=\linewidth]{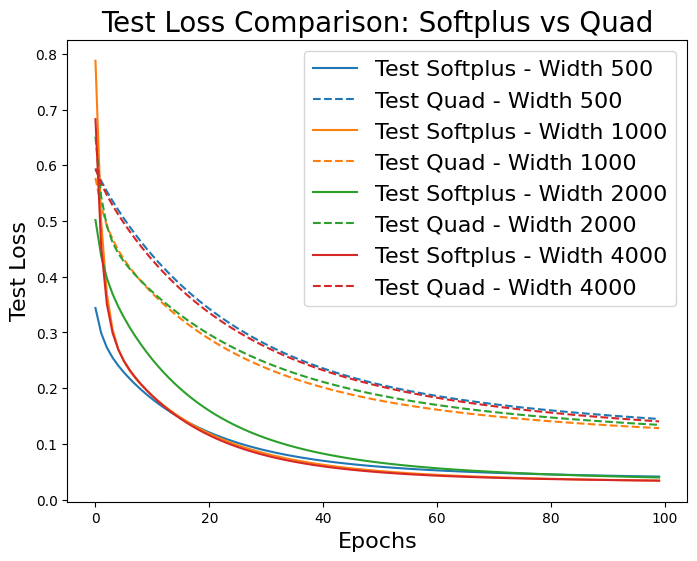}
(b) 
\end{minipage}
\hfill
\begin{minipage}{0.24\linewidth}
\centering
\includegraphics[width=\linewidth]{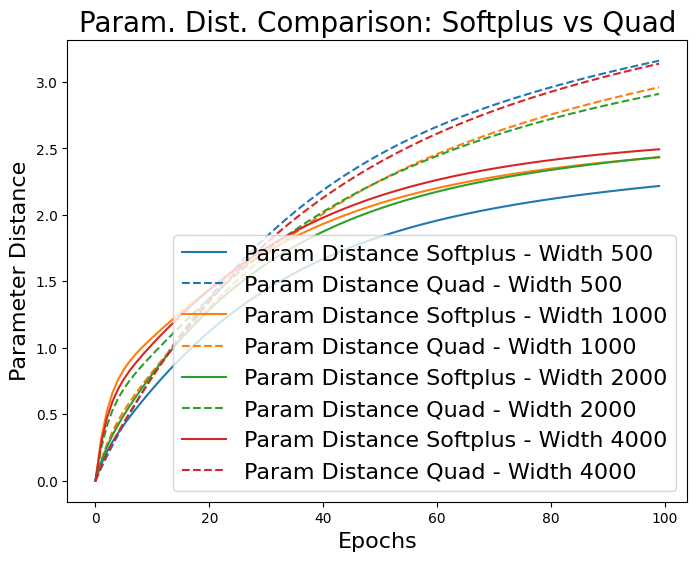}
(c) 
\end{minipage}%
\hfill
\begin{minipage}{0.24\linewidth}
\centering
\includegraphics[width=\linewidth]{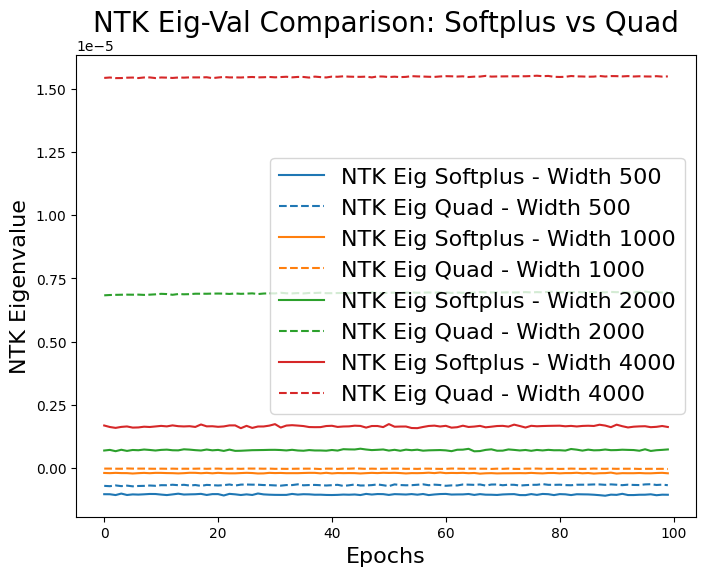}
(d) 
\end{minipage}%
\caption{Comparison of Neural ODEs with Softplus (non-polynomial) and Quadratic (polynomial) activation functions. (a) Training loss for Softplus and Quadratic activations, showing slower convergence for the Quadratic case. (b) Test loss comparison, further illustrating the slower convergence for Quadratic activations. (c) Parameter differences from initialization, showing that Quadratic activations lead to slightly larger parameter drift compared to Softplus. (d) NTK least eigenvalues, where both activations show strictly positive eigenvalues, with Quadratic’s being slightly larger than Softplus’s.}
\label{fig:polynomial_activations}
\end{figure}

\subsection{Convergence Analysis on Diverse Datasets}
In the main paper, we focused on the convergence properties of Neural ODEs using different activation functions on the MNIST dataset. To ensure that these findings generalize across different types of data and tasks, we extended our experiments to three additional datasets: CIFAR-10 (image classification), AG News (text classification), and Daily Climate (time series forecasting). This section details the performance of three key activation functions—Softplus, ReLU, and GELU—on these datasets, highlighting their effects on convergence speed, stability, and generalization.

For each dataset, we trained Neural ODE models with different widths (i.e., $500$, $1000$, $2000$, $3000$) using Softplus, ReLU, and GELU activations. We monitored the training loss and test loss, comparing how different activations influence convergence behavior across datasets. The optimizer used was gradient descent with a learning rate of 0.1, and models were trained for $100$ epochs.

For CIFAR-10, the results showed minimal differences between the activation functions. 
\begin{itemize}[leftmargin=*]
    \item Softplus, ReLU, and GELU all exhibited similar convergence patterns, with larger widths leading to faster convergence across the board. 
    \item Larger widths consistently resulted in lower training and test losses, but the specific choice of activation did not have a significant impact on the overall performance or convergence speed. 
\end{itemize}

These results suggest that for CIFAR-10, the activation function choice is less critical, particularly when the network is sufficiently wide, \ie, see Figure~\ref{fig:cifar10}.

\begin{figure}[ht]
\centering
\begin{minipage}{0.33\linewidth}
\centering
\includegraphics[width=\linewidth]{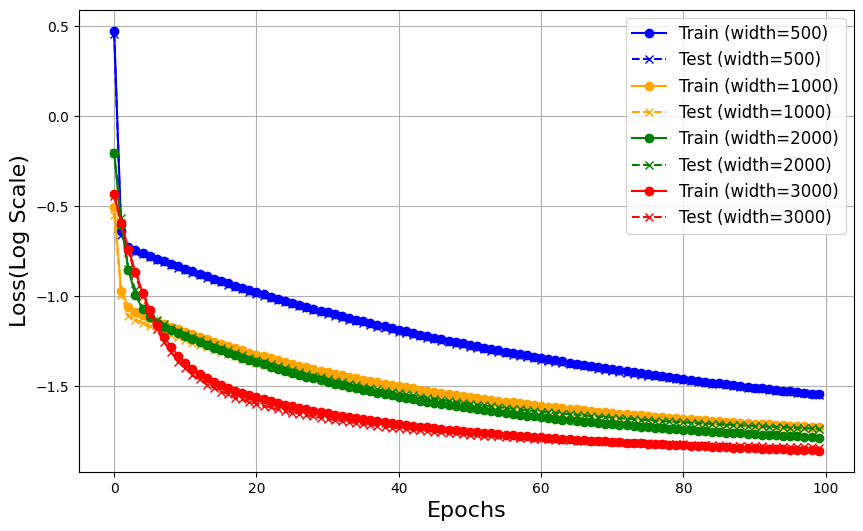}
(a) Softplus
\end{minipage}%
\hfill
\begin{minipage}{0.33\linewidth}
\centering
\includegraphics[width=\linewidth]{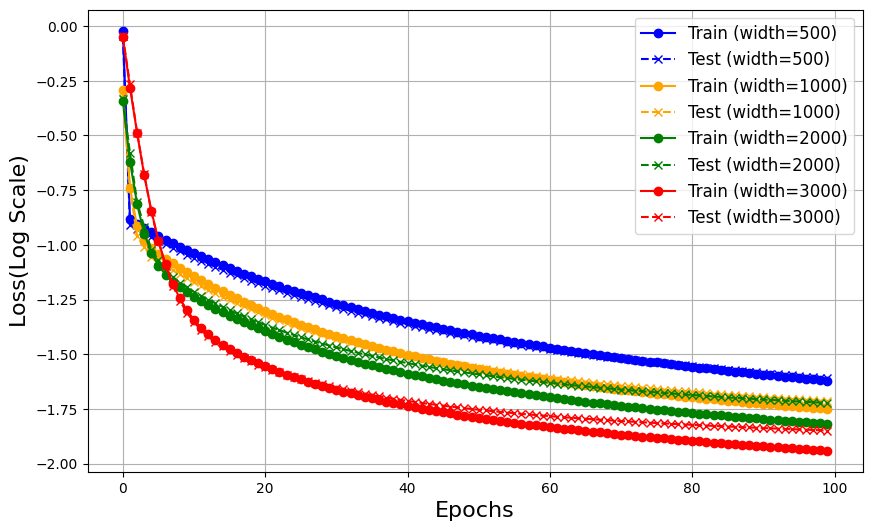}
(b) ReLU
\end{minipage}
\hfill
\begin{minipage}{0.33\linewidth}
\centering
\includegraphics[width=\linewidth]{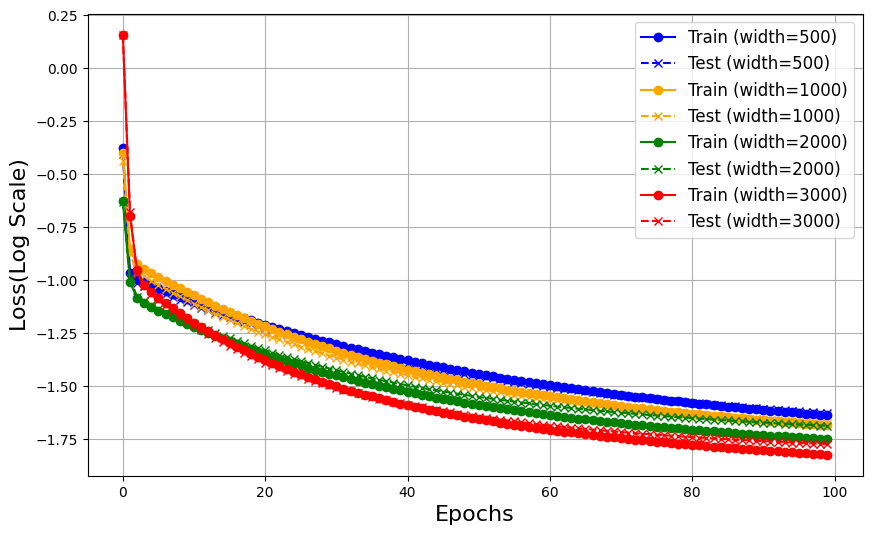}
(c) GELU
\end{minipage}%
\caption{Training and test loss behavior for CIFAR-10 across different activations: (a) Softplus, (b) ReLU, and (c) GELU. All activations show similar convergence patterns, with larger widths leading to faster convergence.}
\label{fig:cifar10}
\end{figure}

For AG News, we observed distinct convergence patterns across the activation functions:
\begin{itemize}[leftmargin=*]
    \item Softplus converged the fastest, followed by ReLU, with GELU converging the slowest. Despite GELU being a smooth activation, its derivative differs significantly compared to the other activations, which may explain the slower convergence rate.
    \item All three activations shared the same trend: larger widths led to faster convergence and lower test losses. However, the differences between activation functions were more pronounced at smaller widths, where GELU lagged behind (Figure~\ref{fig:ag_news}).
\end{itemize}
This suggests that while GELU’s smoothness offers theoretical benefits, in practice, its derivative may cause slower optimization dynamics, particularly for text-based tasks like AG News.

\begin{figure}[ht]
\centering
\begin{minipage}{0.33\linewidth}
\centering
\includegraphics[width=\linewidth]{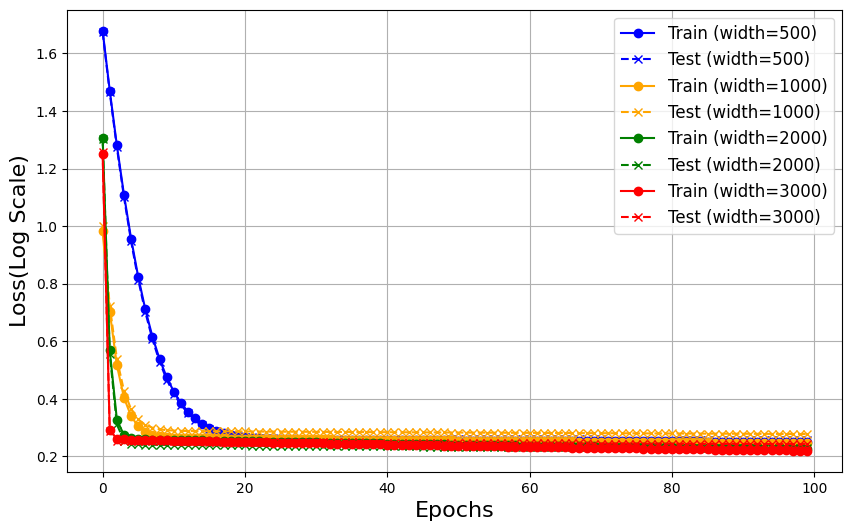}
(a) Softplus
\end{minipage}%
\hfill
\begin{minipage}{0.33\linewidth}
\centering
\includegraphics[width=\linewidth]{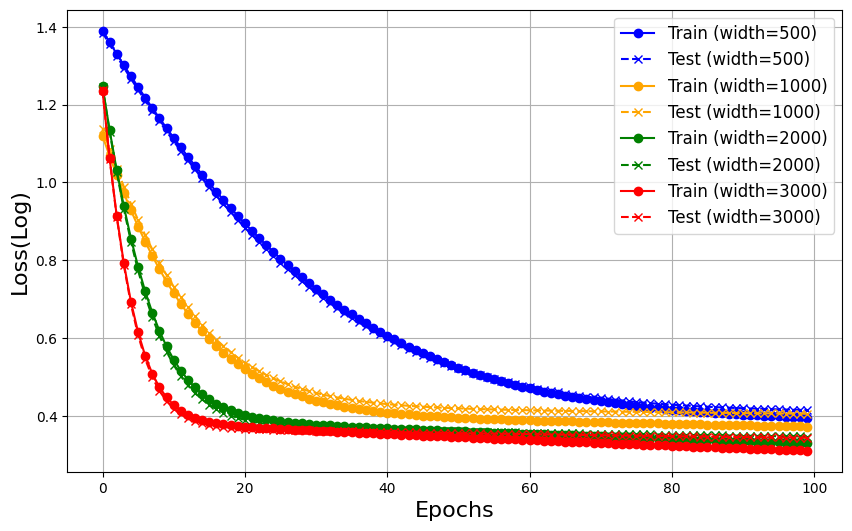}
(b) ReLU
\end{minipage}
\hfill
\begin{minipage}{0.33\linewidth}
\centering
\includegraphics[width=\linewidth]{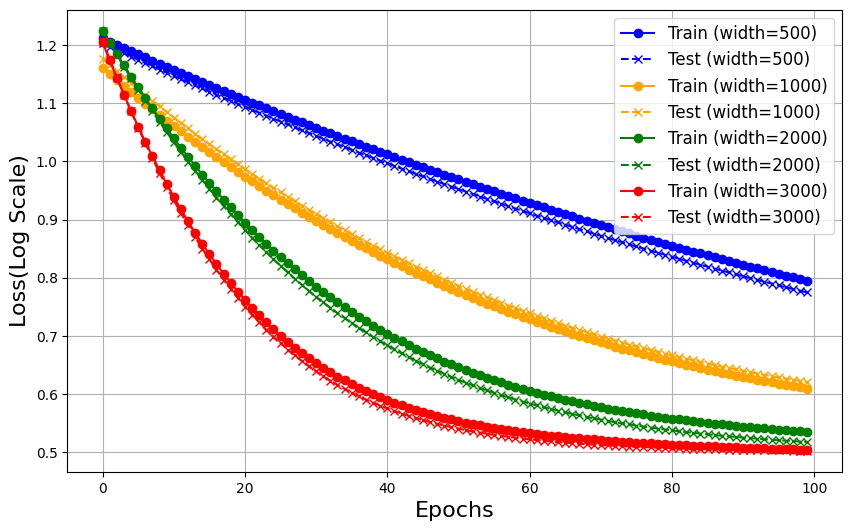}
(c) GELU
\end{minipage}%
\caption{Training and test loss behavior for AG News across different activations: (a) Softplus, (b) ReLU, and (c) GELU. Softplus converges fastest, while GELU lags due to its derivative behavior.}
\label{fig:ag_news}
\end{figure}

\begin{figure}[ht]
\centering
\begin{minipage}{0.33\linewidth}
\centering
\includegraphics[width=\linewidth]{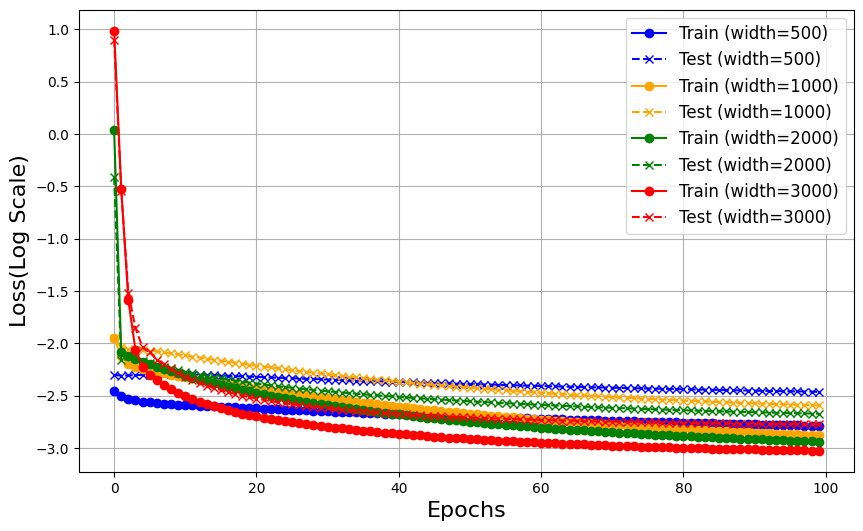}
(a) Softplus
\end{minipage}%
\hfill
\begin{minipage}{0.33\linewidth}
\centering
\includegraphics[width=\linewidth]{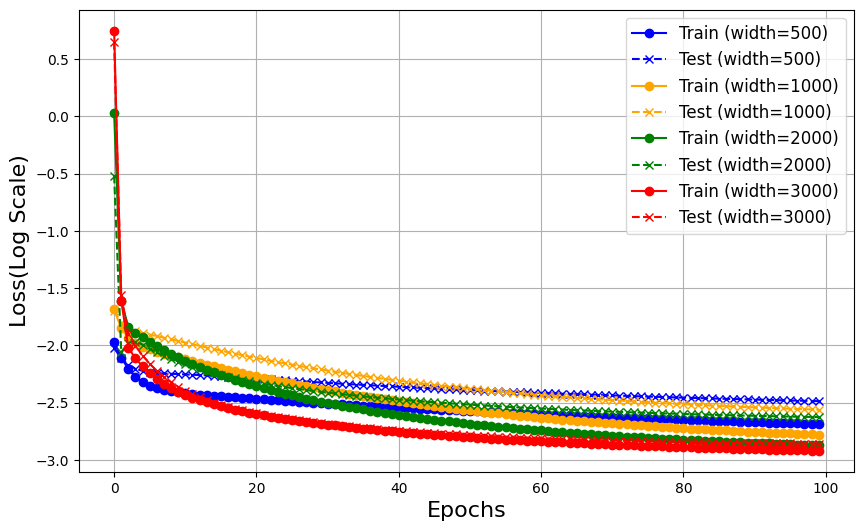}
(b) ReLU
\end{minipage}
\hfill
\begin{minipage}{0.33\linewidth}
\centering
\includegraphics[width=\linewidth]{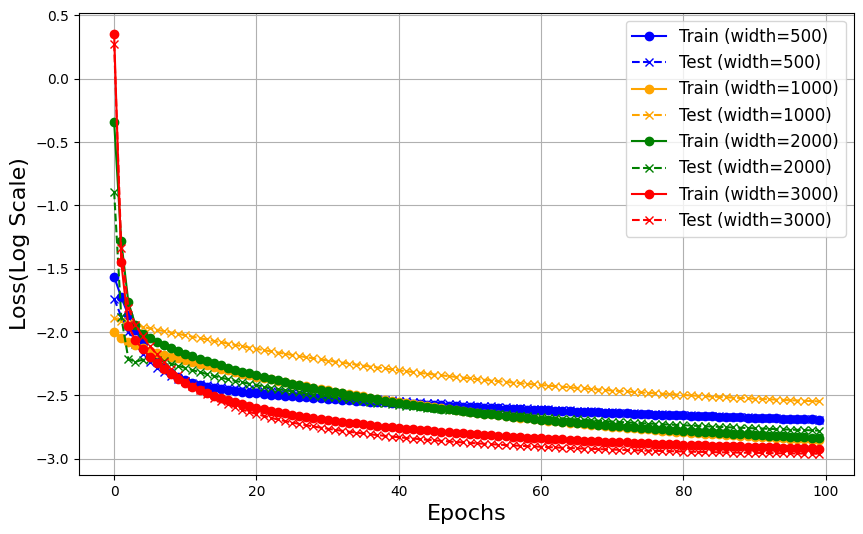}
(c) GELU
\end{minipage}%
\caption{Training and test loss behavior for Daily Climate time series forecasting: (a) Softplus, (b) ReLU, and (c) GELU. All activations show similar convergence, with larger widths leading to faster results.}
\label{fig:time_series}
\end{figure}

\subsection{Non-Smooth Activation Functions: Comparing ``Discretize-Then-Optimize” and “Optimize-Then-Discretize"}
\begin{figure}[h]
\centering
\begin{minipage}{0.24\linewidth}
\centering
\includegraphics[width=\linewidth]{figures/resnet_softplus_out_diffs_loglogscale.png}
(a) 
\end{minipage}%
\hfill
\begin{minipage}{0.24\linewidth}
\centering
\includegraphics[width=\linewidth]{figures/resnet_softplus_grads_diffs_loglogscale.png}
(b) 
\end{minipage}
\hfill
\begin{minipage}{0.24\linewidth}
\centering
\includegraphics[width=\linewidth]{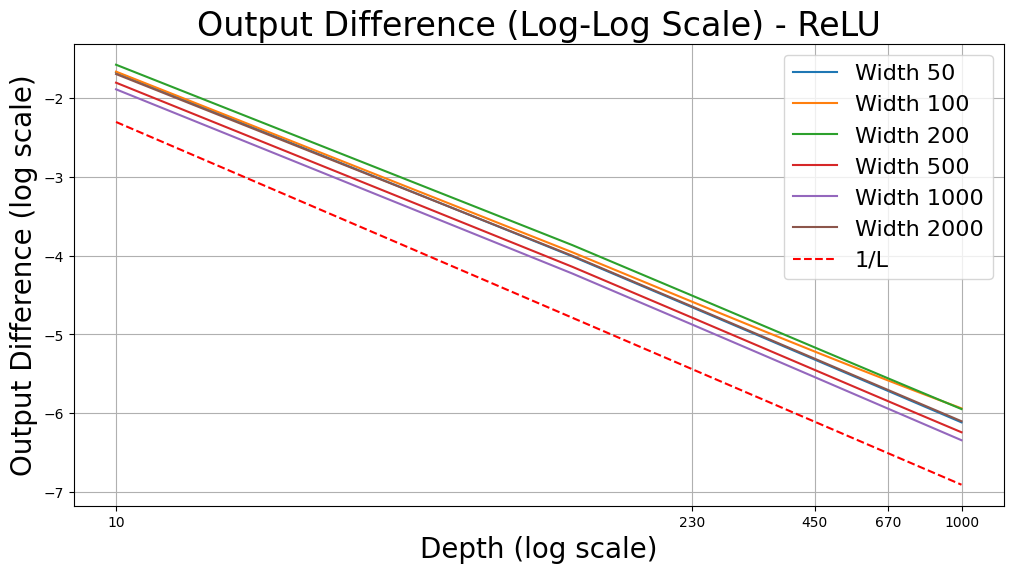}
(c) 
\end{minipage}%
\hfill
\begin{minipage}{0.24\linewidth}
\centering
\includegraphics[width=\linewidth]{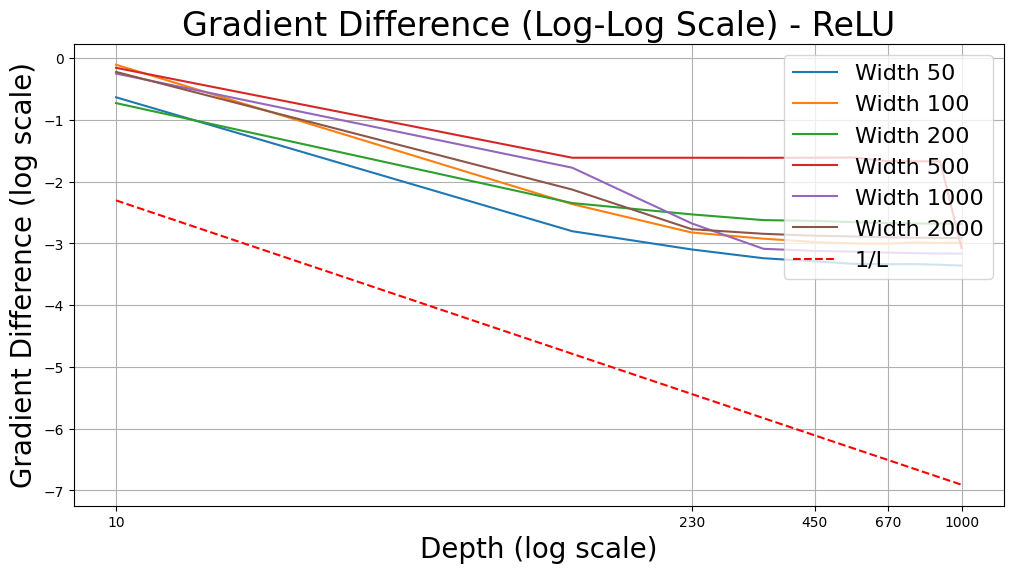}
(d) 
\end{minipage}%
\caption{Comparison of Neural ODEs and ResNets with Softplus (smooth) and ReLU (non-smooth) activations. (a, b) Softplus: Output and gradient differences in log-log scale, both showing $1/L$ convergence. (c, d) ReLU: Output difference shows $1/L$ convergence, while gradient difference remains constant as depth $L$ increases.}
\label{fig:discrete_then_opt convergence issue}
\end{figure}

\begin{figure}[ht]
\centering
\begin{minipage}{0.24\linewidth}
\centering
\includegraphics[width=\linewidth]{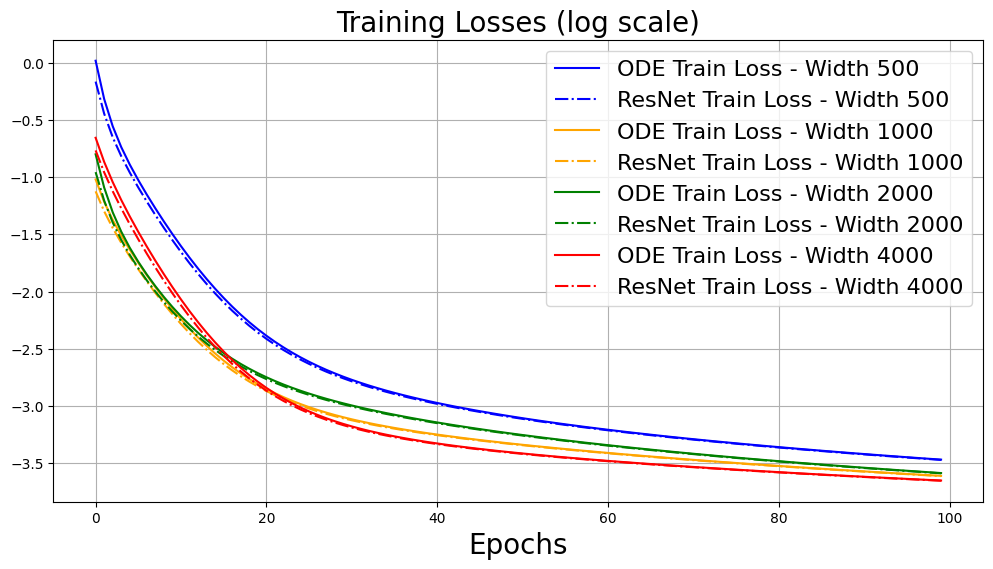}
(a) 
\end{minipage}%
\hfill
\begin{minipage}{0.24\linewidth}
\centering
\includegraphics[width=\linewidth]{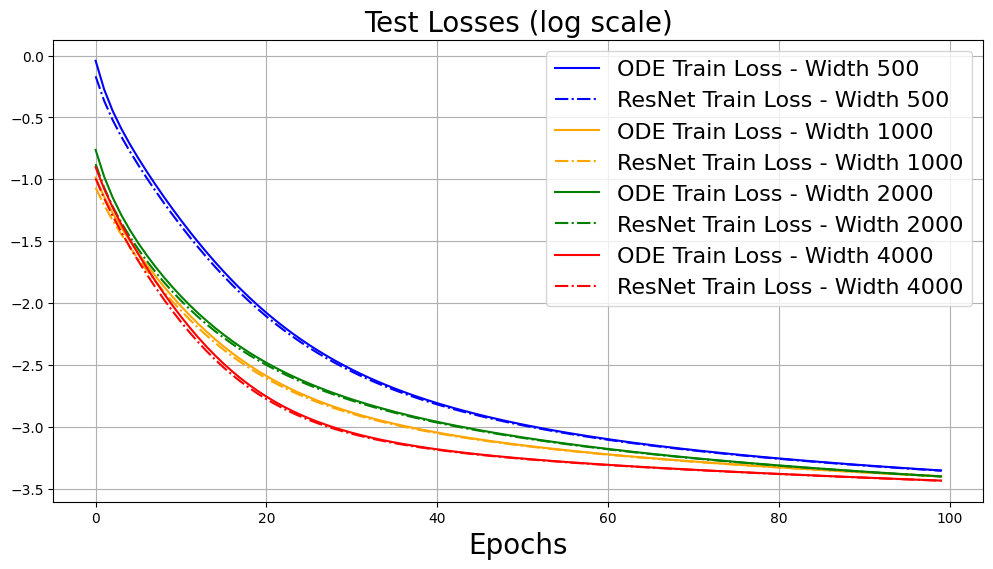}
(b) 
\end{minipage}
\hfill
\begin{minipage}{0.24\linewidth}
\centering
\includegraphics[width=\linewidth]{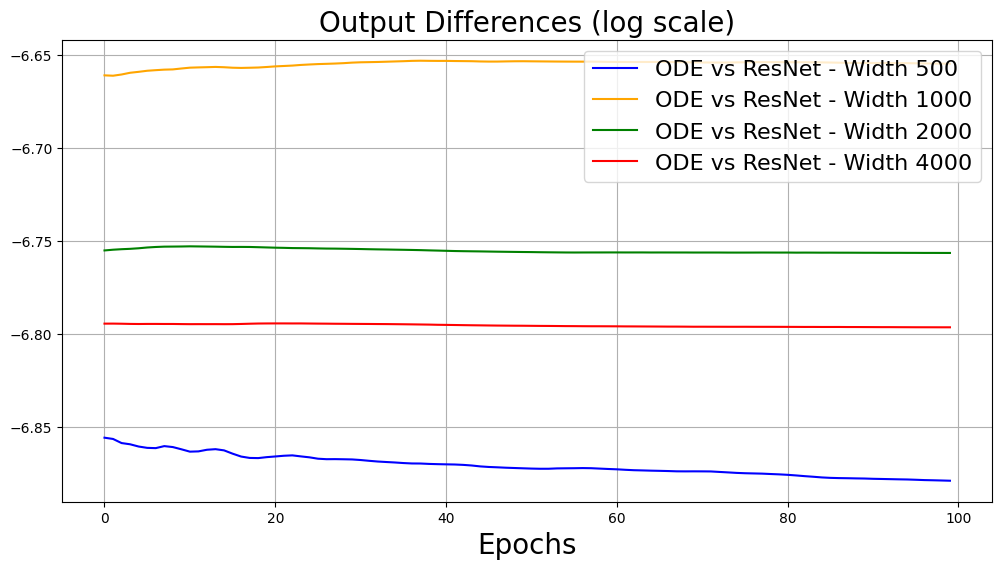}
(c) 
\end{minipage}%
\hfill
\begin{minipage}{0.24\linewidth}
\centering
\includegraphics[width=\linewidth]{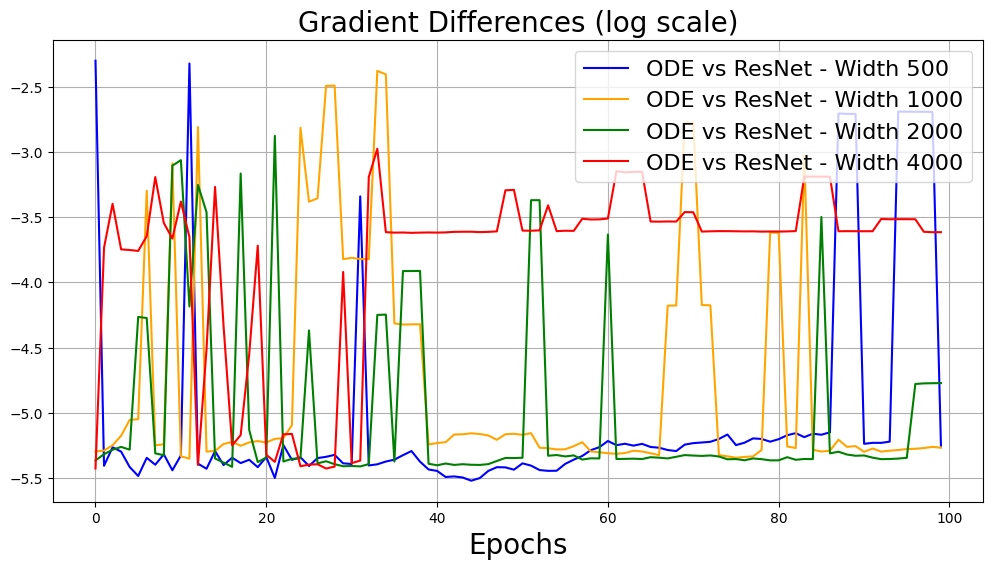}
(d) 
\end{minipage}%
\caption{Training comparison of Neural ODEs and ResNets with ReLU (non-smooth) activation. (a) Training loss decreases consistently and aligns closely for both Neural ODEs and ResNets. (b) Test loss shows a similar trend, consistently decreasing for both models. (c) Output difference remains consistently small throughout training. (d) Gradient difference oscillates during training.}
\label{fig:discrete_then_opt in training}
\end{figure}

In this experiment, we investigate the impact of non-smooth activation functions, specifically ReLU, on the performance of Neural ODEs and their ResNet approximations under the “Discretize-Then-Optimize” and “Optimize-Then-Discretize” frameworks. While the output differences between the two frameworks decrease as the depth $L$ increases, our results reveal that the backward gradients fail to converge due to the non-smooth nature of ReLU’s derivative.

\paragraph{Smooth Activation Functions (Softplus).} For smooth activation functions like Softplus, both the output difference and gradient difference between the two frameworks decrease at a rate of $1/L$ as the depth $L$ increases. This behavior aligns with Proposition~\ref{prop:discretize-then-optimize} and is illustrated in Figure~\ref{fig:discrete_then_opt convergence issue}(a)-(b).

\paragraph{Non-Smooth Activation Functions (ReLU).} In contrast, for ReLU, the output difference still decreases at a rate of $1/L$, as shown in Figure~\ref{fig:discrete_then_opt convergence issue}(c). However, the gradient difference fails to converge, as illustrated in Figure~\ref{fig:discrete_then_opt convergence issue}(d). Initially, the gradient difference reduces as depth increases, but it eventually stagnates at a fixed error level. Increasing the network width does not resolve this issue. Notably, the largest gradient difference is observed at width $500$, whereas smaller errors are achieved for both smaller and larger widths, such as width $200$ and $1000$. These results confirm that the lack of a continuous derivative in ReLU introduces inconsistencies in gradient computations between the two frameworks.

\paragraph{Training Dynamics.} Despite this mismatch in gradient computation, we did not observe significant differences in the training dynamics between Neural ODEs and ResNets. We trained Neural ODEs and their finite-depth ResNet approximations (fixed at depth $200$, as further depth increases did not reduce errors, as shown in Figure~\ref{fig:discrete_then_opt convergence issue}(d)) on a subset of MNIST. As illustrated in Figure~\ref{fig:discrete_then_opt in training}, both models exhibit similar training and test losses. While the output differences remain consistently small during training, the gradient differences oscillate, as shown in Figure~\ref{fig:discrete_then_opt in training}. ResNets, as finite-depth networks, are known to exhibit global convergence guarantees under gradient descent in overparameterized regimes \citep{du2019gradient}, so their convergence is unsurprising. What is unexpected, however, is the near-identical training dynamics between Neural ODEs and ResNets despite the gradient mismatch caused by ReLU’s non-smoothness. Our hypothesis is that while gradient differences oscillate during training, they remain within small magnitudes because MNIST is a simple dataset and ReLU’s derivative is almost continuous everywhere except at the origin. This partial smoothness may mitigate the adverse effects of the gradient mismatch. However, we anticipate that in realistic applications involving more complex datasets, these differences could lead to divergent training trajectories and dynamics for Neural ODEs and ResNets using non-smooth activations.

\subsection{Sensitivity of ``Optimize-then-discretize” to ODE Solvers}

\begin{figure}[ht]
\centering
\begin{minipage}{0.33\linewidth}
\centering
\includegraphics[width=\linewidth]{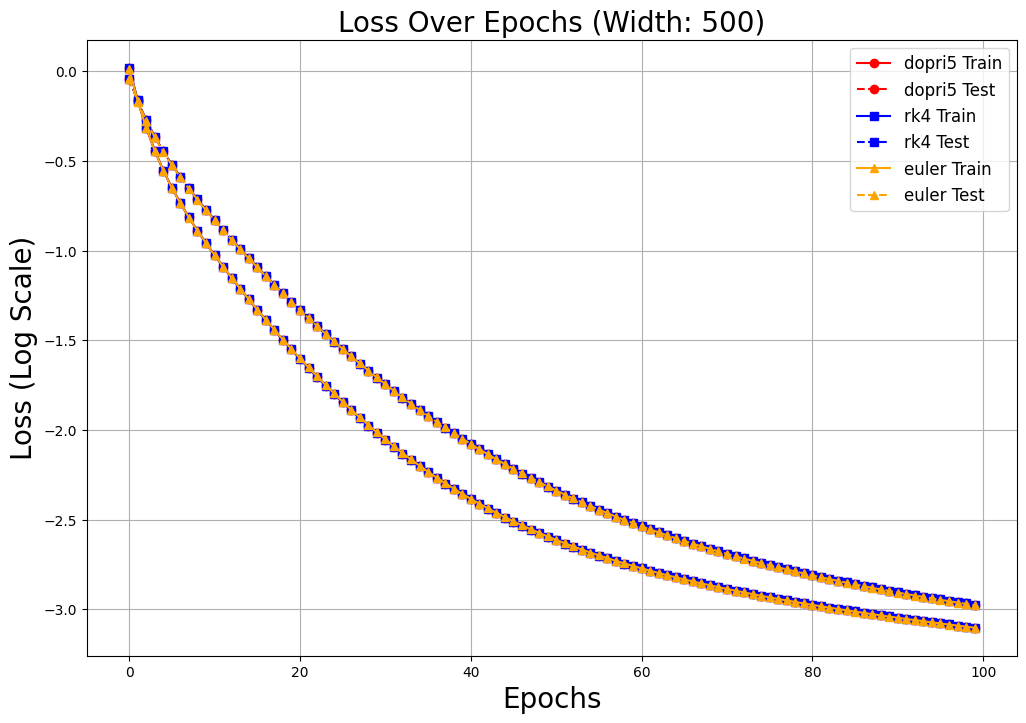}
(a) 
\end{minipage}%
\hfill
\begin{minipage}{0.33\linewidth}
\centering
\includegraphics[width=\linewidth]{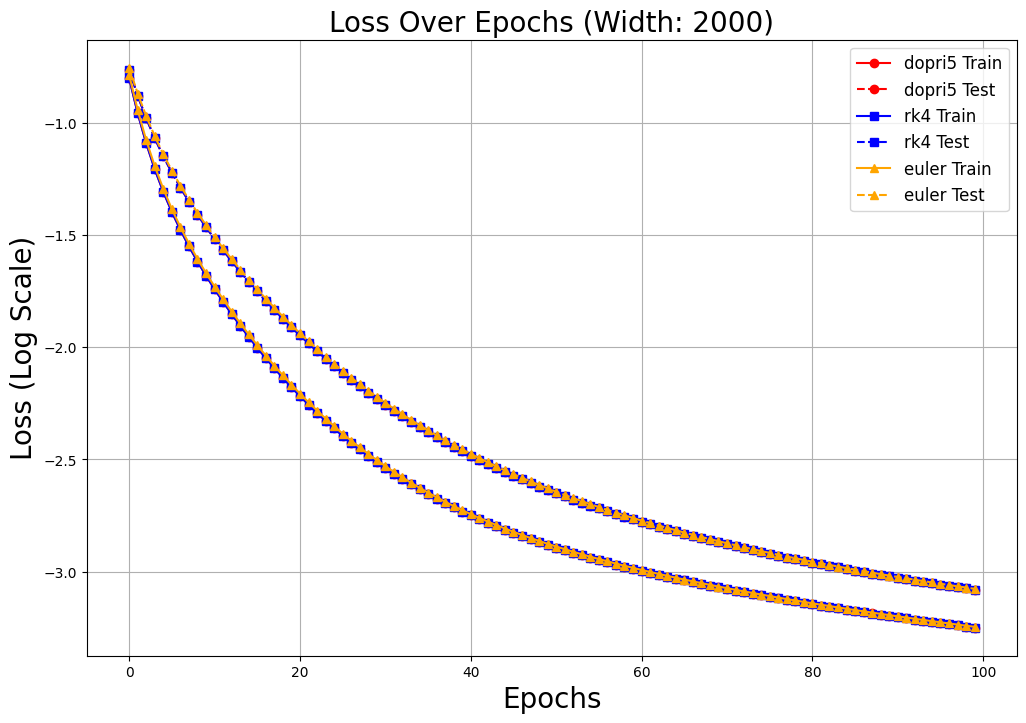}
(b) 
\end{minipage}
\hfill
\begin{minipage}{0.33\linewidth}
\centering
\includegraphics[width=\linewidth]{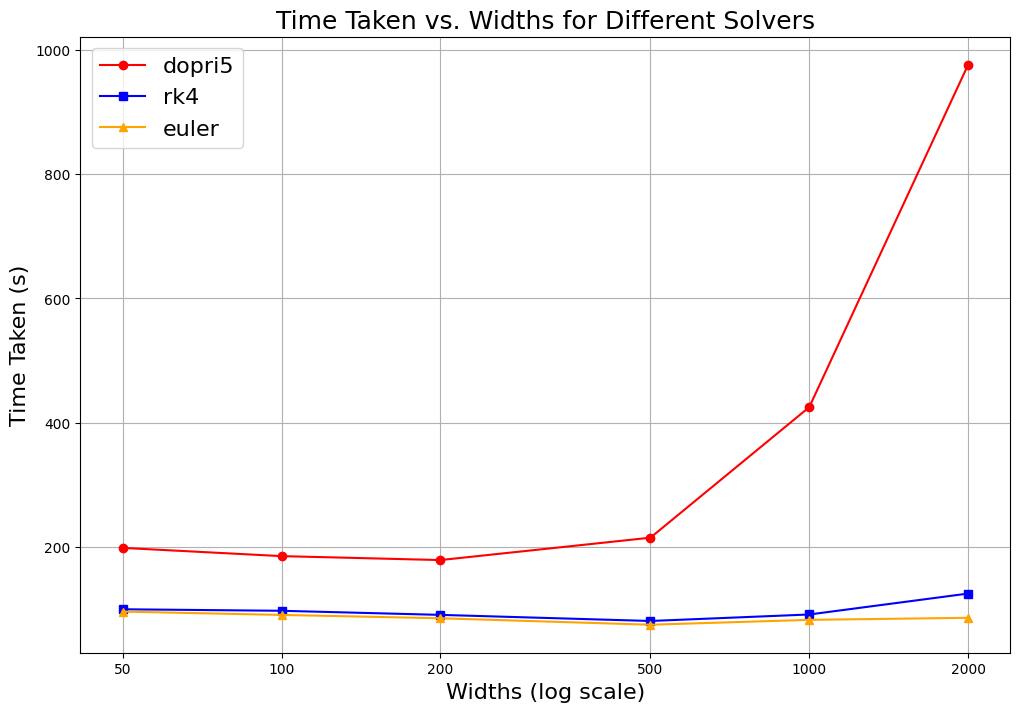}
(c) 
\end{minipage}%
\hfill
\caption{Sensitivity of ``Optimize-then-discretize” to ODE solvers. (a) Training and test losses decrease consistently for all three solvers at width 500. (b) Training and test losses decrease consistently for all three solvers at width 2000. (c) Time taken by each ODE solver across different widths, highlighting the scalability advantage of fixed-step solvers.}
\label{fig:opt_then_discrete sensitivity}
\end{figure}

In this subsection, we investigate the impact of different numerical ODE solvers on the accuracy of gradient computation and overall training dynamics in the ``Optimize-then-discretize” framework. The solvers considered in our experiments are Euler, rk4, and dopri5.

As illustrated in Figure~\ref{fig:opt_then_discrete sensitivity}(a)-(b), the choice of ODE solver does not significantly affect the accuracy of gradient computation or the overall training dynamics in our specific setting. This is consistent with the theoretical guarantees established in Proposition~\ref{prop:well posed of forward ode}, where we demonstrated that the ODE dynamics in Equations \eeref{eq:forward ode} and \eeref{eq:backward ode} possess globally unique solutions under the smoothness conditions on activation functions. Given the relatively simple nature of the system studied, the numerical errors introduced by the solvers appear to be negligible in this context. However, this observation may not generalize to more complex systems or practical applications where numerical errors can be influenced by other factors such as stiffness or stability in the dynamics, which are beyond the scope of this paper.

An interesting observation from our experiments is the computational efficiency of the solvers. While adaptive solvers like dopri5 provide high accuracy, they require significantly more computation time as the neural network width increases. In contrast, fixed-step methods such as Euler and rk4 scale more efficiently with width, making them preferable in scenarios where computational cost is a concern. This is illustrated in Figure~\ref{fig:opt_then_discrete sensitivity}(c), where we compare the time taken by the solvers across different widths.

\section{Discussion on General Dynamic Form in Neural ODEs}\label{app sec:discussion}
In this section, we discuss extending our results from the specific form \eqref{eq:neural ode} and \eqref{eq:forward ode} to a more general dynamic formulation. Specifically, we first consider a generalized nonlinear transformation:
\begin{align*}
    \dot{\vh}_t = \frac{\sigma_w}{\sqrt{n}} \mW \vf(\vh_t, t),\quad \forall t \in [0, T],
\end{align*}
where the original nonlinear activation function $\phi$ in \eqref{eq:forward ode} is replaced by a general nonlinear mapping $ \vf: \mathbb{R}^n \times \mathbb{R} \to \mathbb{R}^n $, defined as $\vf:(\vh, t)\mapsto \vf(\vh, t) $. This generalization introduces explicit time dependence, transforming the system from an \textit{autonomous} to a \textit{non-autonomous} system. Non-autonomous systems are prevalent in applications such as diffusion models \citep{song2020score} and physics-informed neural networks (PINNs) \citep{sholokhov2023physics}. The function $\vf$ can represent another shallow neural network or more complex operations, such as convolution layers \citep{lecun1998gradient}, gating mechanisms \citep{hochreiter1997long}, attention mechanisms \citep{vaswani2017attention}, or batch normalization \citep{ioffe2015batch}.

For this generalized form, the backward dynamics take the form:
\begin{align*}
    \dot{\vlambda}_t = -\frac{\sigma_w}{\sqrt{n}} \mJ(\vh_t, t) \mW^\top \vlambda_t,
\end{align*}
where $ \mJ=\partial \vf/\partial \vh \in \mathbb{R}^{n \times n} $ is the Jacobian matrix of $ \vf $ with respect to $ \vh $. By Theorem~\ref{thm:Picard-Lindelöf theorem}, the forward ODE has unique global solutions if $ \vf $ is continuous in $ t $ and Lipschitz continuous in $ \vh $, with a Lipschitz constant independent of $ t $. This generalizes the continuity requirement for the activation function $\phi$ to $ \vf $. Additionally, if the Jacobian matrix $ \mJ $ is globally bounded, the backward ODE also admits a unique global solution. Since $ \vf $ is Lipschitz continuous in $ \vh $, the boundedness of $ \mJ $ is naturally satisfied (on a compact set). Therefore, appropriate smoothness conditions on $ \vf $ ensure well-posed forward and backward dynamics with unique solutions.

Using Euler's method, we discretize the forward and backward dynamics as follows:
\begin{align*}
    \vh^{\ell+1} &= \vh^\ell + \kappa \cdot \frac{\sigma_w}{\sqrt{n}} \mW \vf(\vh^\ell, t_{n_\ell}), \\
    \vlambda^{\ell+1} &= \vlambda^\ell - \kappa \cdot \frac{\sigma_w}{\sqrt{n}} \mJ(\vh^\ell, t_{n_\ell}) \mW^\top \vlambda^\ell,
\end{align*}
where $ \kappa = T / L $. Ensuring convergence of $(\vh^{\ell},\vlambda^{\ell})$ to $(\vh_t,\vlambda_t)$ is critical to aligning the gradients obtained from the ``discretize-then-optimize" and ``optimize-then-discretize" methods. As discussed in Proposition~\ref{prop:discretize-then-optimize}, additional smoothness of the backward ODE is required for gradient equivalence. By Theorem~\ref{thm:Euler method}, the mapping $t \mapsto \mJ(\vh_t, t)$ must be continuous in $t$, which implies that $\mJ$ is Lipschitz continuous in $\vh$ with a Lipschitz constant independent of $t$. The smoothness of $\mJ$ with respect to $\vh$ can be guaranteed by imposing second-order regularity conditions on $\vf$. Specifically, bounding the Jacobian tensor $\partial \mJ / \partial \vh$ under suitable norms, such as the operator norm or Frobenius norm, ensures the required regularity. Although $\partial \mJ / \partial \vh$ represents a higher-order tensor, these regularity conditions allow the gradient consistency results from Proposition~\ref{prop:discretize-then-optimize} to extend seamlessly to this generalized formulation.

Theorem~\ref{thm:Euler method} provides not only convergence guarantees but also a uniform convergence rate under globally uniform smoothness conditions. Consequently, by Theorem~\ref{thm:Moore-Osgood Theorem}, the iterated limits in Lemma~\ref{lemma:depth uniform convergence} and Lemma~\ref{lemma:depth uniform convergence 2} converge to the same double limit. As a result, the NNGP and NTK of the generalized Neural ODE remain well-defined. If the limiting NNGP or NTK is strictly positive definite (SPD), global convergence under gradient descent can also be established.

Finally, we discuss extending the dynamics to a post-activation formulation:
\begin{align*}
    \dot{\vh}_t = \vf\left(\frac{\sigma_w}{\sqrt{n}} \mW \vh_t, t\right),
\end{align*}
where the linear transformation $ \vh \mapsto \frac{\sigma_w}{\sqrt{n}} \mW \vh $ is applied before the nonlinear mapping $ \vf $. The analysis remains analogous because the linear transformation is globally $ 1 $-Lipschitz continuous under Theorem~\ref{thm:Bai-Yin law}. However, we focus primarily on the pre-activation form, as it consistently achieves superior empirical performance compared to the post-activation formulation \citep{he2016identity}.

\section{Related Works}
Neural Ordinary Differential Equations (Neural ODEs) \citep{chen2018neural} introduced a continuous-depth framework for modeling dynamics by replacing discrete-layer transformations with parameterized differential equations. This innovative framework has since inspired extensive research, leading to both theoretical advancements and practical applications.

Neural ODEs are distinguished by their continuous-time representation and memory efficiency through parameter sharing, setting them apart from traditional architectures like ResNet \citep{he2016deep}. Building on this foundation, several extensions have been proposed to address more complex systems. Notable examples include Neural Stochastic Differential Equations (SDEs) for stochastic dynamics \citep{li2020scalable}, Neural Partial Differential Equations (PDEs) for spatiotemporal systems \citep{sirignano2018dgm, raissi2019physics}, Neural Controlled Differential Equations (CDEs) for irregular time-series data \citep{kidger2020neural}, and Neural Variational and Hamiltonian Systems for capturing conserved quantities in physical dynamics \citep{greydanus2019hamiltonian}. These advanced formulations have broadened the applicability of Neural ODEs to diverse domains, such as time-series modeling \citep{rubanova2019latent}, computer vision \citep{chen2018neural, park2021vid}, and reinforcement learning \citep{du2020model}. In generative modeling, Neural SDEs underpin approaches like FFJORD \citep{grathwohl2018ffjord}, score-based methods \citep{song2020score}, and diffusion models \citep{ho2020denoising}. Similarly, in physics-informed machine learning, Neural PDEs and Physics-Informed Neural Networks (PINNs) have proven critical for solving physical systems while incorporating domain-specific knowledge \citep{sholokhov2023physics, karniadakis2021physics, raissi2019physics}. However, while these features offer flexibility and efficiency, they also introduce significant challenges during training.

A key challenge in training Neural ODEs lies in gradient computation. The original adjoint method introduced by \citet{chen2018neural} computes gradients with minimal memory overhead. However, this approach can suffer from numerical instabilities, as observed in \citet{gholaminejad2019anode}. To address these issues, advanced methods have been developed. For instance, \citet{zhuang2020adaptive} integrates adjoint techniques with checkpointing to balance memory usage and computational cost, while \citet{matsubara2021symplectic} employs symplectic integrators to preserve ODE structure, ensuring stability in long-time horizons and oscillatory systems. \citet{finlay2020train} regularizes the Jacobian norm of the dynamics to improve stability and generalization. \citet{ko2023homotopy} introduces a homotopy-based approach, starting with simplified dynamics and gradually transitioning to target dynamics. These methods generally follow an ``optimize-then-discretize" approach, where (augmented) backward ODEs are solved numerically to compute gradients. Conversely, the ``discretize-then-optimize” approach, which discretizes the forward ODE into a finite-depth network for gradient computation via backpropagation, has been explored by \citet{massaroli2020dissecting}. However, as noted in \citet{zhuang2020adaptive,zhuang2020mali}, this method often results in deeper computational graphs, raising concerns about gradient accuracy.

To address the challenge in gradient computation, several theoretical studies have been conducted, focusing on well-posedness and stability. For instance, \citet{gholaminejad2019anode} highlighted significant numerical instabilities when using ReLU activations in Neural ODEs. Meanwhile, \citet{rodriguez2022lyanet} investigated the stability of Neural ODEs through a Lyapunov framework derived from control theory. Despite these advancements, none of these works address when and how the ``discretize-then-optimize" and ``optimize-then-discretize" methods can yield equivalent gradients. Moreover, the question of whether simple first-order optimization methods, such as stochastic gradient descent, can reliably train Neural ODEs to convergence remains unexplored.

Another essential challenge lies in analyzing the training dynamics of Neural ODEs due to the inherent nonconvexity of neural network optimization. A significant breakthrough in this area came from the Neural Tangent Kernel (NTK) framework introduced by \citet{jacot2018neural}, which demonstrated that the NTK governs the training dynamics of feedforward networks (FFNs) under gradient descent and converges to a deterministic limit as network width increases. This convergence facilitates global convergence guarantees for gradient-based optimization in overparameterized regimes, provided the NTK remains strictly positive definite (SPD) \citep{du2019gradient, allen2019convergence, nguyen2021proof}. The strict positive definiteness of the NTK has been extensively studied, beginning with dual activation analysis for two-layer networks \citep{daniely2016toward} and later extended to finite-depth FFNs \citep{jacot2018neural, du2019gradient}. Recent work has further applied NTK theory to diverse architectures, including convolutional neural networks (CNNs) \citep{arora2019exact}, recurrent neural networks (RNNs) \citep{yang2020tensor}, transformers \citep{hron2020infinite}, physics-informed neural networks (PINNs) \citep{wang2022and}, and graph neural networks (GNNs) \citep{du2019graph}. NTK analysis has also been explored for various optimization methods, such as stochastic gradient descent (SGD) \citep{zou2020gradient} and adaptive gradient algorithms \citep{chen2018closing}. A few recent works start studying large-depth neural networks \citep{gao2022optimization,gao2022gradient,gao2024mastering}. However, applying NTK theory to continuous-depth models like Neural ODEs and determining whether similar SPD and convergence properties hold remains an open and active area of research.

Despite significant advancements, challenges persist in understanding the training dynamics of Neural ODEs and ensuring gradient consistency between the ``discretize-then-optimize" and ``optimize-then-discretize" approaches. Our work addresses these gaps by:
\begin{enumerate}[leftmargin=*]
    \item \textbf{Gradient Equivalence}: Establishing conditions under which the gradients computed by the two methods are equivalent, as demonstrated in Proposition~\ref{prop:well posed of forward ode} and Proposition~\ref{prop:discretize-then-optimize}, emphasizing the role of smooth activations.
    \item \textbf{NTK Analysis}: Providing rigorous conditions for the well-definedness of the Neural ODE NTK, demonstrating its strict positive definiteness (SPD) under suitable activation function properties, as stated in Theorem~\ref{thm:NTK for neural ODE} and Corollary~\ref{coro:SPD for NTK}.
    \item \textbf{Global Convergence}: Extending global convergence guarantees for gradient descent in overparameterized Neural ODEs, bridging the gap between discrete and continuous-depth models, as outlined in Theorem~\ref{thm:global convergence}.
\end{enumerate}

\end{document}